\newcommand{\cf}{cf.~}
\newcommand{\eg}{e.g.~}
\newcommand{\ie}{i.e.~}
\renewcommand*{\backref}[1]{}
\renewcommand*{\backrefalt}[4]{
  \ifcase #1 (Broken backref)
  \or        (page~#2)
  \else      (pages~#2)
  \fi
}
\newcommand{\E}[2]{\mathbb{E}_{#1} \left[ #2 \right]} %
\newcommand{\interior}[1]{\mathrm{interior}(#1)}
\newcommand{\norm}[1]{\|#1\|} %
\newcommand{\prob}[1]{\mathrm{Pr}\{#1\}}
\newcommand{\real}[1]{\mathbb{R}^{#1}}
\newcommand{\setname}[1]{\mathcal{#1}}
\newcommand{\setsize}[1]{|\mathcal{#1}|}
\newcommand{\eqdef}{\coloneqq}
\newcommand{\eqdefr}{\eqqcolon}
\DeclareMathOperator*{\argmax}{\arg\!\max}
\newcommand{\vecb}[1]{\boldsymbol{#1}}
\newcommand{\mat}[1]{\boldsymbol{#1}}
\let\originalleft\left
\let\originalright\right
\renewcommand{\left}{\mathopen{}\mathclose\bgroup\originalleft}
\renewcommand{\right}{\aftergroup\egroup\originalright}
\newtheorem{theorem}{Theorem}[section]
\newtheorem{definition}{Definition}[section]
\newtheorem{assumption}{Assumption}[section]
\newcommand{\strans}{s_{\mathrm{tr}}}
\newcommand{\srecur}{s_{\mathrm{re}}}
\newcommand{\ppimat}{\mat{P}_{\!\!\pi}} %
\newcommand{\dpimat}{\mat{H}_{\!\!\pi}} %
\newcommand{\rpivec}{\vecb{r}_{\!\!\pi}}
\newcommand{\rpi}{r_{\!\pi}}
\newcommand{\piset}[1]{\Pi_{\mathrm{#1}}}
\newcommand{\settrans}{\mathcal{S}_{\mathrm{tr}}}
\newcommand{\setrecur}{\mathcal{S}_{\mathrm{re}}}
\newcommand{\isd}{\mathring{p}} %
\newcommand{\trajd}{p_{\!\xi}} %
\newcommand{\tmax}{t_{\mathrm{max}}}
\newcommand{\txepmax}{\hat{t}_{\mathrm{max}}^{\mathrm{xep}}} %
\newcommand{\tmix}{t_{ \mathrm{mix}} } %
\newcommand{\tabsrv}{T_{\mathrm{abs}}}
\newcommand{\tabsmaxc}{ t_{\mathrm{abs:max}} }
\newcommand{\tabsmin}{ t_{\mathrm{abs:min}} }
\newcommand{\tabsminpiso}{ t_{\mathrm{abs:min}}^{\pi,s_0} }
\newcommand{\tabsminhat}{ \hat{t}_{\mathrm{abs:min}} }
\newcommand{\tabsminhatb}{ \hat{t}_{\mathrm{abs:min}} }
\newcommand{\nxep}{n_{\mathrm{xep}}}
\newcommand{\gammabw}{\gamma_{\mathrm{Bw}}} %
\newcommand{\pistarbw}{\pi^*_{\mathrm{Bw}}}
\newcommand{\fsatheta}{ \vecb{f}_{\!\!\!\vecb{\theta}} } %
\newcommand{\famat}{\mat{\Phi}_{\!a}} %
\newcommand{\fsmat}{\mat{\Phi}_{\!s}} %
\newcommand{\fbmat}{\mat{\Phi}_{\!b}} %
\newcommand{\fgamat}{\mat{\Phi}_{\!ga}} %
\newcommand{\fgamathat}{\hat{\mat{\Phi}}_{\!ga}}
\newcommand{\fgamatbar}{\bar{\mat{\Phi}}_{\!ga}}
\newcommand{\fgsamat}{\mat{\Phi}_{\!gsa}} %
\newcommand{\fbamat}{\mat{\Phi}_{\!ba}} %
\newcommand{\fbamathat}{\hat{\mat{\Phi}}_{\!ba}}
\newcommand{\fbamatbar}{\bar{\mat{\Phi}}_{\!ba}}
\newcommand{\fbamatsam}{\tilde{\mat{\Phi}}_{\!ba}} %
\newcommand{\fbsamatsam}{\tilde{\mat{\Phi}}_{\!bsa}} %
\newcommand{\ftimat}{\mat{\Phi}_{\!\xi(\infty)}} %
\newcommand{\ftlmat}{\mat{\Phi}_{\!\xi(\tau)}} %
\newcommand{\fdamat}{\mat{\Phi}_{\!\gamma a}} %
\newcommand{\bigO}[1]{\mathcal{O}\left(#1\right)}
\newcommand{\algref}[1]{Algo~\ref{#1}}
\newcommand{\algrefand}[2]{Algo~\ref{#1} and~\ref{#2}}
\newcommand{\assref}[1]{Assumption~\ref{#1}}
\newcommand{\defref}[1]{Def~\ref{#1}}
\newcommand{\eqrefand}[2]{(\ref{#1},~\ref{#2})}
\newcommand{\figref}[1]{Fig~\ref{#1}}
\newcommand{\figrefand}[2]{Figs~\ref{#1} and~\ref{#2}}
\newcommand{\lineref}[1]{Line~\ref{#1}}
\newcommand{\secref}[1]{Sec~\ref{#1}}
\newcommand{\thmref}[1]{Thm~\ref{#1}}
\newcommand{\tblref}[1]{Table~\ref{#1}}
\newcommand{\alg}[1]{Algo~#1}
\newcommand{\ch}[1]{Ch~#1}
\newcommand{\cor}[1]{Corollary~#1} %
\newcommand{\deff}[1]{Def~#1}
\newcommand{\fig}[1]{Fig~#1}
\newcommand{\page}[1]{p#1}
\newcommand{\secc}[1]{Sec~#1}
\newcommand{\thm}[1]{Thm~#1}
\setlist{nosep}
\colorlet{darkgreen}{green!50!black}
\definecolor{stateblue}{cmyk}{0.96,0,0,0}
\definecolor{lightgray}{gray}{0.85}
\definecolor{halfgreen}{rgb}{0,0.5,0}
\newcommand{\manus}{paper} %
\title{A nearly Blackwell-optimal policy gradient method}
\author{%
  Vektor Dewanto, Marcus Gallagher \\
  School of Information Technology and Electrical Engineering \\
  University of Queensland, Australia \\
  \texttt{v.dewanto@uqconnect.edu.au, marcusg@uq.edu.au} \\
}
\begin{document}
\maketitle

\begin{abstract}
For continuing environments, reinforcement learning (RL) methods commonly maximize
the discounted reward criterion with discount factor close to 1 in order to
approximate the average reward (the gain).
However, such a criterion only considers the long-run steady-state performance,
ignoring the transient behaviour in transient states.
In this work, we develop a policy gradient method that optimizes the gain,
then the bias (which indicates the transient performance and
is important to capably select from policies with equal gain).
We derive expressions that enable sampling for the gradient of the bias
and its preconditioning Fisher matrix.
We further devise an algorithm that solves the gain-then-bias (bi-level) optimization.
Its key ingredient is an RL-specific logarithmic barrier function.
Experimental results provide insights into the fundamental mechanisms of our proposal.
\end{abstract}

\section{Introduction} \label{sec:intro}

\subsection{A motivating example}
Consider an environment in \figref{fig:symbolic_diagram_a1},
for which any stationary policy $\pi \in \piset{S}$ has equal gain values
(the expected long-run average reward in steady-state), namely $v_g(\pi) = -1$.
Therefore, all stationary policies are gain optimal, satisfying
$v_g(\pi) \ge v_g(\pi'),\ \forall \pi, \pi' \in \piset{S}$.
In this case, the gain optimality (which is equivalent to $(n=-1)$-discount optimality)
is underselective because there is no way to discriminate policies based
on their gain values.
Nonetheless, there exists a more selective optimality criterion that can be obtained
by increasing the value of $n$ from $-1$ to $0$ (or greater, if needed)
in the family of $n$-discount optimality \citep[\ch{10}]{puterman_1994_mdp}.
\begin{wrapfigure}[15]{r}{0.35\textwidth} %
\centering

\resizebox{0.35\textwidth}{!}{ %
\begin{tikzpicture}[ %
node distance = 3cm and 3cm, on grid,
-latex, %
semithick, %
state/.style={circle, top color = white,  bottom color = stateblue!20,
draw, stateblue, text=black, minimum width = 1cm},
]
\node[state](A)[] {$s^0$};
\node[state](B)[right=of A] {$s^1$};
\path (A) edge [out=45, in=135, red, loop] node[above] {$(5, 0.5)$} (A);
\path (A) edge [out=45, in=135, red] node[above] {$(5, 0.5)$} (B);
\path (A) edge [blue, bend right] node[below] {\small{$(r=10, p=1)$}} (B);
\path (B) edge [loop above] node[above] {$(-1, 1)$} (B);
\end{tikzpicture}
} %

\caption{A two-state environment with two policies (red and blue actions)
whose gains are all equally $-1$ \citep[\page{500}]{puterman_1994_mdp}.
Each edge label indicates a tuple of reward~$r$ and transition probability~$p$.
}

\label{fig:symbolic_diagram_a1}

\end{wrapfigure}
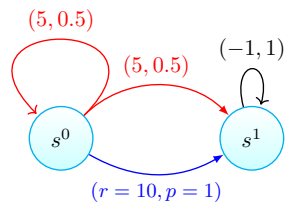 %

For the environment in \figref{fig:symbolic_diagram_a1},
$(n=0)$-discount optimality is the most selective because
its optimal policy, \ie selecting the red action,
is also optimal with respect to all other $n$-discount optimality criteria for all $n > 0$.
This selectiveness is achieved by considering not only steady-state (gain)
but also transient rewards.
This selectiveness can also be achieved by $\gamma$-discounted optimality
\emph{only if} the discount factor $\gamma > 10/11 \approx 0.909$.
Such a critical lower threshold for $\gamma$ is difficult to estimate for
environments with unknown dynamics.
Moreover, finite-precision computation induces an upper threshold
such that the discounted value $v_\gamma(\pi)$ is numerically
not too close to the gain $v_g(\pi)$.
Otherwise, $(\gamma \approx 1)$-discounted optimality becomes underselective
as well since $\lim_{\gamma \to 1} (1 - \gamma) v_\gamma(\pi) = v_g(\pi)$
\citep[\cor{8.2.5}]{puterman_1994_mdp}.

\subsection{An overview of our proposed method}
We believe that one always desires the most selective criterion.
This can be obtained by solving ($n=\setsize{S} - 2$, at most)-discount optimality
for some type of environments with a finite state set~$\setname{S}$,
as shown by \citet[\thm{10.3.6}]{puterman_1994_mdp}.
As a first step in leveraging this fact in reinforcement learning (RL),
we focus on $(n=0)$-discount optimality,
for which we have the notion of nearly-Blackwell optimality as
in \defref{def:bias_optim} below.
\begin{definition} \label{def:bias_optim}
A stationary policy $\pi_b^* \in \piset{S}$ is nearly-Blackwell optimal
(nBw-optimal, also termed bias optimal) if it is gain optimal, and in addition
\begin{align}
& v_b(\pi_b^*, s) \ge v_b(\pi_g^*, s),
    \quad \forall s \in \setname{S}, \forall \pi_g^* \in \Pi_g^*,
    \quad \text{with}\ \Pi_g^* \eqdef \argmax_{\pi \in \piset{S}} v_g(\pi),\
    \text{and the bias value} \notag \\
& v_b(\pi, s)
    \eqdef \lim_{t_{\mathrm{max}} \to \infty}
        \E{A_t \sim \pi(\cdot|s_t), S_{t+1} \sim p(\cdot|s_t, a_t)}{
            \sum_{t=0}^{t_{\mathrm{max}} - 1}
            \big( r(S_t, A_t) - v_g(\pi) \big) \Big| S_0 = s, \pi},
    \label{equ:bias_def}
\end{align}
where $v_g$ is the gain value and $r$ is a reward function.
The above limit is assumed to exist.
See \citetext{\citealp[Sec 4]{blackwell_1962_ddp}; \citealp[\deff{3.1}]{feinberg_2002_hmdp}}.
\end{definition}

To our knowledge, \citet{mahadevan_1996_sensitivedo} proposed
the first (and the only) tabular Q-learning that can obtain nBw-optimal policies
through optimizing the family of $n$-discount optimality.
It relies on stochastic approximation (SA) to estimate the optimal values of
gain $v_g^* (=v_{-1}^*)$, bias $v_b^* (=v_{0}^*)$, and bias-offset $v_1^*$.
Since those values are in two \emph{nested} equations,
standard SA is not guaranteed to converge, as noted by the author.
For other related works in a broader scope, refer to \secref{sec:rwork_nbwpg}.

The main contribution of this \manus~is the development of
a policy gradient method capable of learning nBw-optimal policies in unichain MDPs.
A policy gradient method
works by maximizing the expected value of a policy with respect to
some initial state distribution, \ie $\E{\isd}{v(\pi, S_0)}$.
This generally requires \emph{stochastic} optimization on a \emph{non-concave} landscape.
The gradient ascent update for iteratively solving the optimization is given by
\begin{equation}
\vecb{\theta} \gets \vecb{\theta} + \alpha\ \mat{C}^{-1}\ \nabla v(\vecb{\theta}, s_0),
\quad \text{
    for a parameterized policy $\pi(\vecb{\theta}),
    \forall \vecb{\theta} \in \Theta = \real{\dim(\vecb{\theta})}$,}
\label{equ:polparam_update}
\end{equation}
where $\alpha$ is a positive step length,
$\mat{C} \in \real{\dim(\vecb{\theta}) \times \dim(\vecb{\theta})}$
is a positive-definite preconditioning matrix,
$s_0$~is a~deterministic initial state (without loss of generality),
$\nabla \eqdef \partial/\partial \vecb{\theta}$, and
$v(\vecb{\theta}, s_0) \eqdef v(\pi(\vecb{\theta}), s_0)$.
Such optimization needs $\pi(\vecb{\theta})$ that is
twice continuously differentiable with bounded first and second derivatives,
for which $\pi(\vecb{\theta})$ is necessarily a randomized (stochastic) policy.

In essense, our nBw-optimal policy gradient method switches the optimization objective
from gain~$v_g$ to bias~$v_b$ once the gain optimization is deemed to have converged.
This necessitates three novel ingredients, to which we contribute.
\textbf{First} is a sampling-based approximator for
the (standard, vanilla) gradient of bias (\secref{sec:grad}).
\textbf{Second} is a suitable preconditioning matrix~$\mat{C}$,
for which we define the Fisher (information) matrix of bias,
leading to natural gradients of bias (\secref{sec:natgrad}).
\textbf{Third} is an algorithm that utilizes the gradient and Fisher matrix estimates
to carry out gain then bias optimization (\secref{sec:bwoptim_algo}).
For solving the induced bi-level optimization, we formulate
a logarithmic barrier function that ensures the optimization iterates stay
in the gain-optimal region (as prescribed in \defref{def:bias_optim}).
The experiment results in \secref{sec:nbwpg_xprmtresult} provide insights into
the fundamental mechanisms of our proposal, as well as
comparison with the discounted reward method.

\section{Preliminaries}

The interaction between
an RL agent and its environment is modelled as a Markov decision process (MDP).
An MDP induces Markov chains (MCs), one for each stationary policy~$\pi$.
Each induced MC has a stochastic one-step transition matrix under~$\pi$,
denoted as $\ppimat$ and of size $\setsize{S}$-by-$\setsize{S}$.
Raising it to the $t$-th power gives $\ppimat^t$, whose $[s_0, s_t]$-th component
indicates the $t$-step transition probability
$p_\pi^t(s_t | s_0) \eqdef \prob{S_t = s_t | s_0; \pi}$.
The limiting matrix is defined as
$\ppimat^\star \eqdef \lim_{t \to \infty} \ppimat^t$,
which exists whenever the MC is aperiodic.
Since $\ppimat^\star \ppimat = \ppimat^\star$, its component $p_\pi^\star(s|s_0)$
can be interpreted as the stationary (time-invariant) probability of
visiting state $s$ when starting in an initial state $s_0$.
The time $t$ at which such stationarity
(equivalently, the limiting state distribution $p^\star$)
is achieved, at least approximately, is of special interest.
For that, the notion of mixing time is defined in \defref{def:tmix} below.

\begin{definition} \label{def:tmix}
The mixing time, denoted as $\tmix^\pi$, is the time required by
an MC induced by a policy $\pi$ such that the distance to the stationarity is
small \citep[\ch{4.5}]{levin_2009_mvmt}.
That is,
\begin{equation*}
\tmix^\pi(\varepsilon) \eqdef \min \Big\{
  t: \max_{s_0 \in \setname{S}}
      \Big[ \frac{1}{2} \sum_{s \in \setname{S}}
      \big| \ppimat^t[s_0, s] - \ppimat^\star[s_0, s] \big| \Big]
  \le \varepsilon
\Big\},
\end{equation*}
for an arbitrarily small $\varepsilon < 1/2$.
\end{definition}

We consider an agent-environment interaction whose MDP model
satisfies \assref{ass:unichain_aperiodic} below.
We remark that the mixing time (\defref{def:tmix}) is commonly specified
with a positive $\varepsilon$ in order to quantify some approximation.
To suppress such approximation (at least notationally), we idealize
the concept of mixing by having $\tmix^\pi \eqdef \tmix^\pi(\varepsilon = 0)$,
\ie completely mixing.
Note that in some MCs, the stationarity is indeed exactly achieved in finite time,
hence $\tmix^\pi < \infty$.
\figref{fig:timestep_line} depicts this $\tmix$ milestone on the timestep line
of an infinite-horizon MDP.

\begin{assumption} \label{ass:unichain_aperiodic}
The modelling MDP is unichain and aperiodic.
Every induced MC has a finite mixing time (\defref{def:tmix}),
that is $\tmix^\pi(\varepsilon) < \infty$.
\end{assumption}

\begin{figure}[t]
\centering

\resizebox{\textwidth}{!}{
\begin{tikzpicture}
\draw (0,0) -- (11,0);
\foreach \i in {0, 1,...,11}
  \draw (\i,0.1) -- + (0,-0.2);

\node[rectangle,
draw = green,
fill = green,
minimum width = 9cm,
minimum height = 0.4cm] (e)
at (4.5,-0.35) {};
\node at (4.5,-1.25) [align=center, fill=green]
    {\small pre-mixing \\ (transient phase)};

\node[rectangle,
draw = blue!65,
fill = blue!65,
minimum width = 5cm,
minimum height = 0.4cm] (e)
at (2.5,0.35) {};
\node at (2.5,0.9) [align=center, fill=blue!65]
    {\small transient states};

\node[rectangle,
draw = yellow,
fill = yellow,
minimum width = 6cm,
minimum height = 0.4cm] (e)
at (8,0.35) {};
\node at (8,0.9) [align=center, fill=yellow]
    {\small recurrent states};

\node[rectangle,
draw = red!85,
fill = red!85,
minimum width = 2cm,
minimum height = 0.4cm] (e)
at (10,-0.35) {};
\node at (10,-1.25) [align=center, fill=red!85]
    {\small post-mixing \\ (steady-state phase)};

\node at (0,-0.35){$t\!\!=\!0$};
\node at (1,-0.35){$\ldots$};
\node at (2,-0.35){$\ldots$};
\node at (3,-0.35){$\tabsmin^{\pi,s_0}$};
\node at (4,-0.35){$\ldots$};
\node at (5,-0.35){$\tabsrv^{\pi,s_0}$};
\node at (6,-0.35){$\ldots$};
\node at (7,-0.35){$\tabsmaxc^{\pi}$};
\node at (8,-0.35){$\ldots$};
\node at (9,-0.35){$\tmix^\pi$};
\node at (10,-0.35){$\ldots$};
\node at (11,-0.35){$\infty$};

\node at (0,0.35){$s_0$};
\node at (1,0.35){$\ldots$};
\node at (2,0.35){$\ldots$};
\node at (3,0.35){$\ldots$};
\node at (4,0.35){$\ldots$};
\node at (5,0.45){$s_{\tabsrv}^{\pi,s_0}$};
\node at (6,0.35){$\ldots$};
\node at (7,0.35){$\ldots$};
\node at (8,0.35){$\ldots$};
\node at (9,0.35){$s_{\tmix^\pi}$};
\node at (10,0.35){$\ldots$};
\node at (11,0.35){$s_{\infty}$};

\end{tikzpicture}
} %

\caption{A diagram for pre-mixing (green) and post-mixing (red) phases,
as well as transient (blue) and recurrent (yellow) state visitations
on the timestep line of an infinite-horizon MDP.
The absorption time $\tabsrv^{\pi,s_0}$ is a random variable, whose superscript
indicates the dependency on a policy $\pi$ and an initial state $s_0$.
Transient states are visited only before absorption, whereas recurrent states
are visited at and after absorption.
The other important milestones are
the mixing time $\tmix^\pi$ (\defref{def:tmix}),
the minimum absorption time $\tabsmin^{\pi,s_0}$ (\defref{def:tabsmin}),
and the maximum absorption time $\tabsmaxc^{\pi}$.
}
\label{fig:timestep_line}
\end{figure}
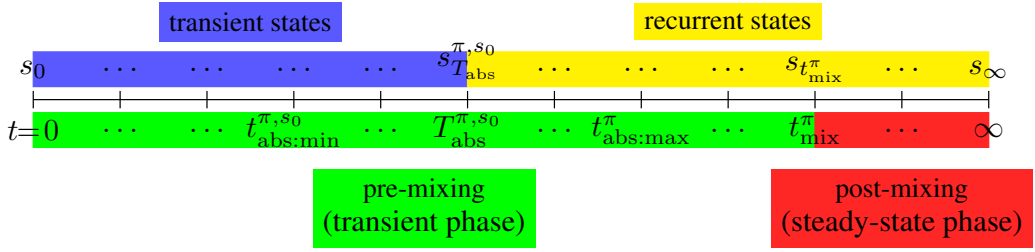

\section{Gradients of the bias} \label{sec:grad}

In matrix forms, the bias is equal to
$\vecb{v}_b(\pi) = (\mat{I} - \ppimat + \ppimat^\star)^{-1}
(\mat{I} - \ppimat^\star) \vecb{r}_\pi
= \mat{K}_{\!\pi} (\mat{I} - \ppimat^\star) \vecb{r}_\pi$
for  a non-stochastic matrix
$\mat{K}_{\!\pi} \eqdef (\mat{I} - \ppimat + \ppimat^\star)^{-1}$.
This can be written in state-wise form
(from every initial state $s_0 \in \setname{S}$) as follows.
\begin{align*}
v_b(\pi, s_0)
&= \sum_{s \in \setname{S}} k_\pi(s|s_0) \Big(
    \sum_{a \in \setname{A}} \pi(a|s) r(s, a)- v_g(\pi) \Big)
    \tag{Equivalent to \eqref{equ:bias_def}} \\
& = \sum_{s \in \setname{S}} \sum_{a \in \setname{A}} k_\pi(s|s_0) \pi(a|s) r(s, a)
    - \sum_{s \in \setname{S}} k_\pi(s|s_0) v_g(\pi),
\end{align*}
where $k_\pi(s|s_0)$ is the $[s_0, s]$-component of $\mat{K}_{\!\pi}$.
The gradient of the bias therefore is given by
\begin{align}
\nabla v_b(\pi, s_0)
& = \sum_{s \in \setname{S}} \sum_{a \in \setname{A}} k_\pi(s|s_0) \pi(a|s) r(s, a)
    \big\{ \nabla \log k_\pi(s|s_0) + \nabla \log \pi(a|s) \big\} \notag \\
& \quad - \sum_{s \in \setname{S}} k_\pi(s|s_0)
    \big\{ v_g(\pi) \nabla \log k_\pi(s|s_0) + \nabla v_g(\pi) \big\}.
\label{equ:gradbias_naive}
\end{align}
Because $k_\pi$ is \emph{not} a probability distribution, the above expression
does not enable sampling-based approximation.\footnote{
    This is in contrast to the gradient of the gain $\nabla v_g(\pi)$,
    which involves the stationary state distribution $p_\pi^\star$.
    That is, $\nabla v_g(\pi)
    = \sum_{s \in \setname{S}} \sum_{a \in \setname{A}}
    p_{\pi}^\star(s)\ \pi(a|s)\ r(s, a)\
    \{ \nabla \log p_{\pi}^\star(s) + \nabla \log \pi(a|s) \}$,
    equivalent to \eqref{equ:gaingrad_simple}.
}
Therefore, we derive the following \thmref{thm:gradbias} that gives us
a score-function gradient estimator for the bias.
Its proof is provided in \secref{sec:grad_thm_proof}.

\begin{theorem} \label{thm:gradbias}
The gradient of the bias of a randomized policy $\pi(\vecb{\theta}) \in \piset{SR}$
is given by
\begin{align}
\nabla v_b(\vecb{\theta}, s_0)
& = \textcolor{green}{\Big\{} \underbrace{
    \sum_{t=0}^{\tmix^\pi - 1} \Big\{
    \mathbb{E}_{S_t \sim p_\pi^t(\cdot | s_0), A_t \sim \pi(\cdot|s_t; \vecb{\theta})}
    \Big[ q_b^\pi(S_t, A_t) \nabla \log \pi(A_t|S_t; \vecb{\theta}) \Big]
    - \nabla v_g(\vecb{\theta}) \Big\}
}_\text{The pre-mixing part} \textcolor{green}{\Big\}} \notag \\
& \quad + \textcolor{red}{\Big\{} \underbrace{
    \mathbb{E}_{S_{\tmix^\pi} \sim p_\pi^\star,
        A_{\tmix^\pi} \sim \pi(\cdot|s_{\tmix^\pi}; \vecb{\theta})}
    \Big[ \nabla q_b^\pi(S_{\tmix^\pi}, A_{\tmix^\pi}) \Big]
    +  \nabla v_g(\vecb{\theta})
}_\text{The post-mixing part} \textcolor{red}{\Big\}},
\label{equ:grad_bias}
\end{align}
for all $s_0 \in \setname{S}$, and for all $\vecb{\theta} \in \Theta$,
where $\nabla \eqdef \partial/\partial \vecb{\theta}$ and
$\tmix^\pi \eqdef \tmix^\pi(\varepsilon=0)$.
Here, $q_b^\pi$ denotes the bias state-action value of a policy $\pi$,
which is defined in a similar fashion as the bias state value $v_b^\pi$
in \eqref{equ:bias_def}.
That is,
\begin{equation}
q_b^\pi(s, a)
\eqdef \lim_{\tmax \to \infty} \E{S_t, A_t}{\sum_{t=0}^{\tmax - 1} \left( r(S_t, A_t)
    - v_g^\pi \right) \Big| S_0 = s, A_0 = a, \pi},
\quad \forall (s, a) \in \setname{S} \times \setname{A}.
\label{equ:bias_q}
\end{equation}
For pre- and post-mixing parts in \eqref{equ:grad_bias}, see the timestep line diagram
in \figref{fig:timestep_line}.
\end{theorem}

The bias gradient expression in \eqref{equ:grad_bias} has pre- and post-mixing parts.
Both involve the gain gradient $\nabla v_g(\vecb{\theta})$,
which is $\vecb{0}$ everywhere in the gain-optimal region
(in which the bias optimization should be carried out as per \defref{def:bias_optim}).
We explain each part of \eqref{equ:grad_bias} further,
beginning with the post-mixing part (\secref{sec:grad_postmix}) as
it shares similarities with the existing gain gradient expression
\citep[\thm{1}]{sutton_2000_pgfnapprox}.
That is, both require state samples from the stationary state distribution $p_\pi^\star$
and the same policy evaluation quantity, \ie $q_b^\pi$, evaluated from recurrent states.
Afterward, we explain the pre-mixing part of \eqref{equ:grad_bias}
in \secref{sec:grad_premix}.

\subsection{The post-mixing part} \label{sec:grad_postmix}
A single i.i.d state sample from $p_\pi^\star$ can be obtained at or after
the mixing time $\tmix^\pi$.\footnote{
    Note that multiple state samples are not independent (they are Markovian)
    and after mixing, they are identically distributed by $p_\pi^\star$.
    These non-i.i.d samples lead to \emph{biased} sample-mean gradient estimates.
}
Assuming that the length of an experiment-episode (trial) is equal or larger than $\tmix^\pi + 1$,
such an $S_{\tmix^\pi} \sim p_\pi^\star$ is sampled at the last timestep of
an experiment-episode.\footnote{
    The term ``experiment-episode~($\mathrm{xep}$)'' is used because
    the environments of interest do not have any inherent notion of episodes.
    They are non-episodic (continuing), hence modelled as MDPs with
    infinite horizons $\tmax = \infty$.
    In experimental practice however, such infinite horizons are implemented as
    finite length experiment-episodes with maximum timesteps
    $\txepmax < \infty$.
}
Thus, an unbiased sampling-based estimate of $\nabla v_b(\vecb{\theta})$ is available
after every experiment-episode.
This leads to experiment-episode-wise policy parameter update \eqref{equ:polparam_update},
as shown in \algref{alg:gainbiasbarrier_optim}.

The post-mixing part in \eqref{equ:grad_bias} involves the derivative of
the action value $q_b^\pi$ with respect to the policy parameter $\vecb{\theta}$.
It therefore requires that the action-value estimator
(termed the critic) depends on~$\vecb{\theta}$.
This can be achieved for example, by using
a policy-compatible state-action feature for the critic, \ie
$\fsatheta(s, a) \eqdef \nabla \log \pi(a|s; \vecb{\theta}),
\forall (s, a) \in \setname{S} \times \setname{A}$.
There also exists a class of critics that takes as input the policy parameter
\citep{faccio_2021_pvf, harb_2020_pen}.

Alternatively, our \thmref{thm:postmixing_identity} presents
an equivalent form of the post-mixing part that is free from
$\nabla q_b^\pi(S_{\tmix^\pi}, A_{\tmix^\pi})$.
It is beneficial whenever the critic is independent of $\vecb{\theta}$.
This is the case for instance, when policy (actor) and critic neural-networks
do not share any parameters.

\begin{theorem}
The post-mixing part of \eqref{equ:grad_bias},
denoted as $\nabla v_{b, post}(\vecb{\theta}, s_0)$, has the following identity,
\begin{equation*}
\nabla v_{b, post}(\vecb{\theta}, s_0)
= \mathbb{E}_{S_{\tmix^\pi} \sim p_\pi^\star,
        A_{\tmix^\pi} \sim \pi(\cdot|s_{\tmix^\pi}; \vecb{\theta})}
    \Big[ q_1^\pi(S_{\tmix^\pi}, A_{\tmix^\pi})
    \nabla \log \pi(A_{\tmix^\pi}|S_{\tmix^\pi}; \vecb{\theta}) \Big],
\end{equation*}
where $q_1^\pi$ denotes the state-action value of a policy $\pi$ in terms of
the ($n=1$)-discount optimality.
\label{thm:postmixing_identity}
\end{theorem}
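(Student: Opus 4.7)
The plan is to reduce the post-mixing expression to $\mathbb{E}_{S \sim p_\pi^\star}[\nabla v_b^\pi(S)]$ via stationarity, and then to unfold $\nabla v_b^\pi$ using a Bellman-type recursion for the bias-offset value $v_1^\pi$ supplied by the $n$-discount / Laurent-series theory in \ch{10} of Puterman.

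First, I would differentiate the defining Bellman equation $q_b^\pi(s,a) = r(s,a) - v_g(\pi) + \sum_{s'} p(s'|s,a) v_b^\pi(s')$, using the fact that $r$ and $p$ do not depend on $\vecb{\theta}$, to obtain $\nabla q_b^\pi(s,a) = -\nabla v_g(\vecb{\theta}) + \sum_{s'} p(s'|s,a) \nabla v_b^\pi(s')$. Substituting this into the post-mixing part kills the $+\nabla v_g(\vecb{\theta})$ correction, leaving $\mathbb{E}_{S \sim p_\pi^\star,\, A \sim \pi(\cdot|S;\vecb{\theta})}\big[\sum_{s'} p(s'|S,A) \nabla v_b^\pi(s')\big]$. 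Because $p_\pi^\star \mat{P}_\pi = p_\pi^\star$, this collapses to $\mathbb{E}_{S \sim p_\pi^\star}[\nabla v_b^\pi(S)]$; note that the initial state $s_0$ has disappeared, consistent with the ``post-mixing'' interpretation.

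Next, I would invoke the Poisson-like equation linking the first two Laurent coefficients in a unichain MDP, $v_b^\pi = \mat{P}_\pi v_1^\pi - v_1^\pi$, written with actions as $v_1^\pi(s) = \sum_a \pi(a|s;\vecb{\theta})\, q_1^\pi(s,a)$ with $q_1^\pi(s,a) = -v_b^\pi(s) + \sum_{s'} p(s'|s,a) v_1^\pi(s')$. Differentiating both sides w.r.t. $\vecb{\theta}$ and isolating $\nabla v_b^\pi(s)$ via the score-function trick $\nabla \pi = \pi \nabla \log \pi$ yields $\nabla v_b^\pi(s) = \sum_a \pi(a|s;\vecb{\theta})\, q_1^\pi(s,a)\, \nabla \log \pi(a|s;\vecb{\theta}) + \sum_{s'} \mat{P}_\pi(s'|s)\, \nabla v_1^\pi(s') - \nabla v_1^\pi(s)$. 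Taking $\mathbb{E}_{S \sim p_\pi^\star}[\cdot]$ of this identity, a second application of stationarity annihilates the two residual $\nabla v_1^\pi$ terms and leaves precisely the claimed right-hand side.

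The main obstacle is not the algebra but the correct sign/normalisation of the Poisson equation for $v_1^\pi$; rather than re-deriving the Laurent expansion of $v_\gamma^\pi$ from scratch, I would simply quote Puterman's $n$-discount chain $(\mat{I} - \mat{P}_\pi)\, y_{n+1} = -y_n$ for $n \ge 0$ in the unichain case, with $y_{-1} = v_g$, $y_0 = v_b$, $y_1 = v_1$, and read off the action-wise form used in Step~2. A secondary, lower-stakes technicality is the interchange of $\nabla$ with the infinite-horizon limit hidden inside $v_b^\pi$ and $v_1^\pi$, which is already absorbed by the $C^2$-with-bounded-derivatives smoothness assumption on $\pi(\vecb{\theta})$ from the preliminaries, so no separate dominated-convergence argument is needed.
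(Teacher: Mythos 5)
Your proposal is correct and follows essentially the same route as the paper's proof: differentiate the $(n=1)$-discount Bellman relation $q_1^\pi(s,a) = -v_b^\pi(s) + \sum_{s'}p(s'|s,a)\,v_1^\pi(s')$, isolate $\nabla v_b^\pi$ via the score-function trick, average against $p_\pi^\star$, and cancel the residual $\nabla v_1^\pi$ terms using $\mat{P}_\pi^\star \mat{P}_\pi = \mat{P}_\pi^\star$. The only cosmetic difference is that you re-derive the identification of the post-mixing part with $\sum_{s} p_\pi^\star(s)\,\nabla v_b^\pi(s)$ by differentiating the bias Bellman equation and invoking stationarity, whereas the paper reads this off directly from the expansion already established in the proof of Theorem~\ref{thm:gradbias}.
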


\begin{proof}

One key ingredient is the relationship between state values $v_1^\pi$ and
state-action values $q_1^\pi$ in $(n=1)$-discount optimality.
That is,
\begin{equation}
q_1^\pi(s, a) = - v_b^\pi(s) + \sum_{s' \in \setname{S}} p(s'|s, a)
    \underbrace{\sum_{a' \in \setname{A}} \pi(a'|s') q_1^\pi(s', a')}_{v_1^\pi(s')},
\quad \quad \forall (s, a) \in \setname{S} \times \setname{A},
\label{equ:q1_v1}
\end{equation}
which is derived from a similar identity in terms of state values, \ie
\begin{equation*}
v_1^\pi(s) = - v_b^\pi(s) + \sum_{s' \in \setname{S}} p_\pi(s'|s) v_1^\pi(s'),
\quad \forall s \in \setname{S}.
\tag{\citet[\thm{8.2.8}]{puterman_1994_mdp}}
\end{equation*}
After having \eqref{equ:q1_v1}, we take $\nabla v_1^\pi(s)$,
substitute $\E{A \sim \pi}{q_1^\pi(s, A)}$ for $v_1^\pi(s)$, and finally
sum over the stationary state distribution $p_\pi^\star$.
We essentially follow the technique used by \cite{sutton_2000_pgfnapprox} for
deriving the randomized policy gradient theorem for gain gradients $\nabla v_g^\pi$.

For all $s \in \setname{S}$ and $\pi(\vecb{\theta}) \in \piset{SR}$, we have
\begin{align*}
\nabla v_1^\pi(s)
& = \nabla \Big\{ \sum_{a \in \setname{A}} \pi(a|s) q_1^\pi(s, a) \Big\}
= \sum_{a \in \setname{A}} \Big\{ q_1^\pi(s, a) \nabla \pi(a|s) + \pi(a|s) \nabla q_1^\pi(s, a) \Big\} \\
& = \sum_{a \in \setname{A}} \Big\{
    q_1^\pi(s, a) \nabla \pi(a|s)
    + \pi(a|s) \nabla \{ - v_b^\pi(s) + \sum_{s' \in \setname{S}} p(s'|s, a) v_1^\pi(s') \}
\Big\} \tag{Plug-in \eqref{equ:q1_v1}} \\
& = \sum_{a \in \setname{A}} \Big\{
    q_1^\pi(s, a) \nabla \pi(a|s)
    - \pi(a|s) \nabla  v_b^\pi(s)
    + \pi(a|s) \sum_{s' \in \setname{S}} p(s'|s, a) \nabla v_1^\pi(s') \}
\Big\} \\
& = \sum_{a \in \setname{A}} \Big\{
    q_1^\pi(s, a) \nabla \pi(a|s)
    + \pi(a|s) \sum_{s' \in \setname{S}} p(s'|s, a) \nabla v_1^\pi(s') \Big\}
    - \nabla  v_b^\pi(s) \cancelto{1}{\sum_{a \in \setname{A}} \pi(a|s)}.
\end{align*}
Consequently,
\begin{equation}
\nabla  v_b^\pi(s)
= \sum_{a \in \setname{A}} \Big\{
    q_1^\pi(s, a) \nabla \pi(a|s)
    + \pi(a|s) \sum_{s' \in \setname{S}} p(s'|s, a) \nabla v_1^\pi(s') \Big\}
    - \nabla  v_1^\pi(s).
\label{equ:grad_vb_equal_q1_v1}
\end{equation}

For each side in \eqref{equ:grad_vb_equal_q1_v1} above, we can sum across
all states $s \in \setname{S}$,
and weight each state by its stationary probability $p_\pi^\star(s)$,
while maintaining the equality (between both sides).
That is,
\begin{align*}
& \sum_{s \in \setname{S}} p_\pi^\star(s) \nabla v_b^\pi(s) \\
& = \sum_{s \in \setname{S}} p_\pi^\star(s)
    \sum_{a \in \setname{A}} q_1^\pi(s, a) \nabla \pi(a|s)
    + \sum_{s \in \setname{S}} p_\pi^\star(s)
    \sum_{s' \in \setname{S}} \underbrace{
        \sum_{a \in \setname{A}} \pi(a|s)  p(s'|s, a)
    }_{p_\pi(s'|s)}
    \nabla v_1^\pi(s')
    - \sum_{s \in \setname{S}} p_\pi^\star(s) \nabla  v_1^\pi(s) \\
& = \sum_{s \in \setname{S}} p_\pi^\star(s)
    \sum_{a \in \setname{A}} q_1^\pi(s, a) \nabla \pi(a|s)
    + \sum_{s' \in \setname{S}}
    \underbrace{\sum_{s \in \setname{S}} p_\pi^\star(s) p_\pi(s'|s)}_{p_\pi^\star(s')}
    \nabla v_1^\pi(s')
    - \sum_{s \in \setname{S}} p_\pi^\star(s) \nabla  v_1^\pi(s)
    \tag{Here, $p_\pi^\star(s') = \sum_{s \in \setname{S}} p_\pi^\star(s) p_\pi(s'|s)$
        due to the stationarity of $\ppimat^\star$, \ie
        $\ppimat^\star \ppimat = \ppimat^\star$} \\
& = \sum_{s \in \setname{S}} p_\pi^\star(s)
    \sum_{a \in \setname{A}} q_1^\pi(s, a) \nabla \pi(a|s) + \vecb{0}.
\end{align*}

Because the policy is randomized (and never degenerates to a deterministic policy),
the score function or the likelihood ratio,
\ie $\nabla \log \pi(a|s) = \nabla \pi(a|s) / \pi(a|s)$, is always well-defined.
Hence, we have
\begin{equation}
\sum_{s \in \setname{S}} p_\pi^\star(s) \nabla v_b^\pi(s)
= \sum_{s \in \setname{S}} \sum_{a \in \setname{A}}
    p_\pi^\star(s) \pi(a|s) \Big[ q_1^\pi(s, a) \nabla \log \pi(a|s) \Big].
\label{equ:postmix_identity_match}
\end{equation}

The LHS of \eqref{equ:postmix_identity_match} above is exactly
the last term in the decomposition of $\nabla v_b^\pi$ in \eqref{equ:grad_bias_expansion},
which is equivalent to the post-mixing term in \thmref{thm:gradbias}.
This concludes the proof.

\end{proof}

\subsection{The pre-mixing part} \label{sec:grad_premix}

The state samples for the pre-mixing part of \eqref{equ:grad_bias} are
all but the last state of an experiment-episode whenever $\txepmax = \tmix^\pi$.
That is, $S_0 \sim p_\pi^0$, $S_1 \sim p_\pi^1$, $\ldots$,
$S_{\txepmax - 1} \sim p_\pi^{\txepmax - 1}$.
This is because the state distribution from which states are sampled at every timestep,
\ie $p_\pi^t(\cdot|s_0)$, is the correct distribution of its corresponding term
in the pre-mixing part.

\algref{alg:gradfisher_sampling} shows a way to obtain pre-mixing estimates,
which are unbiased whenever $\txepmax \ge \tmix^\pi$.
If an experiment-episode runs longer than $\tmix^\pi$,
the sum of pre-mixing terms from $t = \tmix^\pi$ to $\txepmax - 1$ is
equal to $\vecb{0}$ in exact cases
(in unbiased estimation, it is equal to $\vecb{0}$ in expectation).
That is,
\begin{align*}
& \Big\{
    \underbrace{
        \sum_{s}
        \underbrace{ p_\pi^{\tmix^\pi} (s|s)}_{p_\pi^\star(s)}
        \sum_{a} q_b^\pi(s, a) \nabla \pi(a|s)
    }_\text{$\nabla v_g^\pi$, as shown by \citet[\thm{1}]{sutton_2000_pgfnapprox}}
    - \nabla v_g^\pi \Big\}
+ \Big\{
    \underbrace{
        \sum_{s \in \setname{S}} \underbrace{p_\pi^{\tmix^\pi + 1} (s|s)}_{p_\pi^\star(s)}
        \sum_{a \in \setname{A}} q_b^\pi(s, a) \nabla \pi(a|s)
    }_{\nabla v_g^\pi}
    - \nabla v_g^\pi \Big\} \\
& + \ldots + \Big\{
    \underbrace{
        \sum_{s \in \setname{S}} \underbrace{p_\pi^{\txepmax - 1} (s|s)}_{p_\pi^\star(s)}
        \sum_{a \in \setname{A}} q_b^\pi(s, a) \nabla \pi(a|s)
    }_{\nabla v_g^\pi}
    - \nabla v_g^\pi \Big\}
= \vecb{0}.
\end{align*}
This implies that the estimates of pre-mixing terms from $t = \tmix^\pi$ to $\txepmax - 1$
contribute relatively minimally to the estimate of $\nabla v_b^\pi$ in \eqref{equ:grad_bias}.

To obtain pre-mixing estimates, \algref{alg:gradfisher_sampling} postpones
the substraction of $\nabla v_g(\vecb{\theta})$ in each pre-mixing term until
the estimate of $\nabla v_g(\vecb{\theta})$ is available at
the last step of an experiment-episode.

\subsection{The proof of \thmref{thm:gradbias}} \label{sec:grad_thm_proof}

\begin{proof}
For simplicity, we notationally suppress the dependency of a policy $\pi$ to
its parameter~$\vecb{\theta}$.
Also note that $\nabla \eqdef \partial/\partial \vecb{\theta}$, and
all summations are either over all states $s \in \setname{S}$
or all actions $a \in \setname{A}$.

The proof of \thmref{thm:gradbias} relies on three facts, which
are similar to those used in proving the randomized policy gradient theorem
for the discounted reward optimality by \cite{sutton_2000_pgfnapprox}.
\textbf{First} is the relationship between the state value $v_b^\pi$ and
the (state-)action value $q_b^\pi$.
This includes
\begin{align}
v_b^\pi(s)
    & = \sum_{a \in \setname{A}} \pi(a|s)\ q_b^\pi(s, a)
    \quad \forall s \in \setname{S}, \quad \text{and} \label{equ:vb_qb}\\
q_b^\pi(s, a)
    & = r(s, a) - v_g^\pi + \sum_{s' \in \setname{S}} p(s'|s, a) v_b^\pi(s'),
    \quad \forall (s, a) \in \setname{S} \times \setname{A}.
    \label{equ:qb_vbnext}
\end{align}
\textbf{Second}, the policy $\pi$ is randomized so that its bias gradient involves
$\sum_{a \in \setname{A}} \pi(a|s; \vecb{\theta}) \nabla p(s' | s, a) = \vecb{0}$, and
$\sum_{a \in \setname{A}} \pi(a|s; \vecb{\theta}) \nabla r(s, a) = \vecb{0}$.
Note that if $\pi$ was deterministic, the bias gradient would involve
$\nabla p(s' | s, a = \pi(s; \vecb{\theta}))$ and
$\nabla r(s, a = \pi(s; \vecb{\theta}))$, which are generally not zero.
A similar discussion can be found in \citep[\page{28}]{deisenroth_2013_polsearchrob}.
\textbf{Third}, summation and differentiation can be exchanged, \ie
$\nabla \sum_{x \in \setname{X}} f(x; \vecb{\theta})
=  \sum_{x \in \setname{X}} \nabla f(x; \vecb{\theta})$,
under some regularity conditions \citep[\page{32}]{liero_2011_tsi}, including
that the randomized policy $\pi$ is a smooth function of~$\vecb{\theta}$,
the action set $\setname{A}$ is independent of $\vecb{\theta}$, and
the policy parameter set $\Theta$ is an open interval.

For every state $s \in \setname{S}$ and
any randomized stationary policy $\pi(\vecb{\theta}) \in \piset{SR}$,
we proceed as follows.
\begin{align}
& \nabla v_b^\pi(s)
= \nabla \Big\{ \sum_{a \in \setname{A}} \pi(a|s) q_b^\pi(s, a) \Big\}
    \tag{Expand $v_b^\pi(s)$ based on \eqref{equ:vb_qb}}\\
& = \sum_{a \in \setname{A}} \Big\{ q_b^\pi(s, a) \nabla \pi(a|s)
    + \Big( \pi(a|s) \nabla q_b^\pi(s, a) \Big) \Big\}
    \tag{\small Exchange $\nabla$ and $\sum_a$, then take the derivative} \\
& = \sum_{a \in \setname{A}} \Big\{ q_b^\pi(s, a) \nabla \pi(a|s)
    + \Big( \pi(a|s) \nabla \Big[ r(s, a) - v_g^\pi
        + \sum_{s' \in \setname{S}} p(s'|s, a) v_b^\pi(s') \Big] \Big)
\Big\} \tag{Plug-in \eqref{equ:qb_vbnext}} \\
& = \sum_{a \in \setname{A}} \Big\{ q_b^\pi(s, a) \nabla \pi(a|s) - \pi(a|s) \nabla v_g^\pi
    + \Big\{ \pi(a|s) \sum_{s' \in \setname{S}} p(s'|s, a) \nabla v_b^\pi(s') \Big\} \Big\},
    \label{equ:grad_vb_1}
\end{align}

In a similar fashion as above, we essentially expand $v_b^\pi(s')$ in \eqref{equ:grad_vb_1}
based on \eqref{equ:vb_qb}, take the derivative, then plug-in \eqref{equ:qb_vbnext}.
That is,
\begin{align}
& \sum_{a} \Big\{ q_b^\pi(s, a) \nabla \pi(a|s) - \pi(a|s) \nabla v_g^\pi
+ \Big\{
    \pi(a|s) \sum_{s'} p(s'|s, a) \nabla \Big[ \sum_{a'} \pi(a'|s') q_b^\pi(s', a') \Big]
    \Big\} \Big\} \notag \\
& = \sum_{a} \Big\{ q_b^\pi(s, a) \nabla \pi(a|s) - \pi(a|s) \nabla v_g^\pi \notag \\
& \quad + \Big\{
    \pi(a|s) \sum_{s'} p(s'|s, a) \sum_{a'} \Big[
    q_b^\pi(s', a') \nabla \pi(a'|s') + \pi(a'|s') \nabla q_b^\pi(s', a') \Big]
    \Big\} \Big\} \notag \\
& = \sum_{a} \Big\{ q_b^\pi(s, a) \nabla \pi(a|s) - \pi(a|s) \nabla v_g^\pi \notag \\
& \quad + \Big\{
\pi(a|s) \sum_{s'} p(s'|s, a) \sum_{a'} \Big[ q_b^\pi(s', a') \nabla \pi(a'|s')
    - \pi(a'|s') \nabla v_g^\pi + \pi(a'|s') \sum_{s''} p(s''|s', a') \nabla v_b^\pi(s'') \Big]
    \Big\} \Big\} \notag \\
& = \sum_{a} \Big\{ q_b^\pi(s, a) \nabla \pi(a|s) - \pi(a|s) \nabla v_g^\pi \notag \\
& \quad +  \Big\{
    \pi(a|s) \sum_{s'} p(s'|s, a) \sum_{a'} q_b^\pi(s', a') \nabla \pi(a'|s')
    - \pi(a|s) \sum_{s'} p(s'|s, a) \sum_{a'} \pi(a'|s') \nabla v_g^\pi \notag \\
& \quad + \pi(a|s) \sum_{s'} p(s'|s, a) \sum_{a'} \pi(a'|s') \sum_{s''} p(s''|s', a') \nabla v_b^\pi(s'')
    \Big\} \Big\}, \label{equ:grad_vb_2}
\end{align}
which can be re-arranged to obtain
\begin{align}
& \Big\{ \sum_{a} q_b^\pi(s, a) \nabla \pi(a|s) - \nabla v_g^\pi \Big\}
    + \Big\{ \sum_{s'} \underbrace{\sum_{a} \pi(a|s)  p(s'|s, a)}_{p_\pi^1(s'|s)}
        \sum_{a'} q_b^\pi(s', a') \nabla \pi(a'|s') - \nabla v_g^\pi \Big\}
    \notag \\
& + \Big\{ \sum_{s''} \underbrace{
        \sum_{a} \pi(a|s) \sum_{s'} p(s'|s, a) \sum_{a'} \pi(a'|s') p(s''|s', a')
    }_{p_\pi^2(s''|s)}
    \nabla v_b^\pi(s'') \Big\} \notag \\
& = \Big\{ \sum_{\mathring{s}} p_\pi^0(\mathring{s}|s) \sum_{a} q_b^\pi(s, a) \nabla \pi(a|s)
        - \nabla v_g^\pi \Big\}
    + \Big\{ \sum_{s'} p_\pi^1(s'|s) \sum_{a'} q_b^\pi(s', a') \nabla \pi(a'|s')
        - \nabla v_g^\pi \Big\} \notag \\
    & \quad + \Big\{ \sum_{s''} p_\pi^2(s''|s) \nabla v_b^\pi(s'') \Big\},
    \label{equ:grad_vb_3}
\end{align}
where $p_\pi^0(\mathring{s}|s) = 1$ if $\mathring{s} = s$ and 0 otherwise.
Here, $\sum_{s}$ and $\sum_{a}$ (and the like) denote the summation over
all states $s \in \setname{S}$ and over all actions $a \in \setname{A}$, respectively.

If we keep expanding the factor of $\nabla v_b^\pi$
(starting from $\nabla v_b^\pi(s'')$ in \eqref{equ:grad_vb_2}) and
applying the same procedure as for obtaining \eqref{equ:grad_vb_3}
several times until the mixing time $\tmix^\pi$ (\defref{def:tmix}), we obtain
\begin{align}
& \nabla v_b^\pi(s)
= \Big\{ \sum_{\tilde{s} \in \setname{S}} p_\pi^0(\tilde{s}|s)
    \sum_{\tilde{a} \in \setname{A}} q_b^\pi(\tilde{s}, \tilde{a})
    \nabla \pi(\tilde{a}|\tilde{s}) - \nabla v_g^\pi \Big\}
+ \Big\{ \sum_{\tilde{s} \in \setname{S}} p_\pi^1(\tilde{s}|s)
    \sum_{\tilde{a} \in \setname{A}} q_b^\pi(\tilde{s}, \tilde{a})
    \nabla \pi(\tilde{a}|\tilde{s}) - \nabla v_g^\pi \Big\}
    \notag \\
& + \Big\{ \sum_{\tilde{s} \in \setname{S}} p_\pi^2(\tilde{s}|s)
    \sum_{\tilde{a} \in \setname{A}} q_b^\pi(\tilde{s}, \tilde{a})
    \nabla \pi(\tilde{a}|\tilde{s}) - \nabla v_g^\pi \Big\}
    + \ldots + \Big\{
    \sum_{\tilde{s} \in \setname{S}}
        \underbrace{p_\pi^{\tmix^\pi}(\tilde{s})}_{p_\pi^\star(\tilde{s})}
        \nabla v_b^\pi(\tilde{s}) \Big\}.
\label{equ:grad_bias_expansion}
\end{align}

Since $\nabla \log \pi(a|s) = \nabla \pi(a|s) / \pi(a|s)$,
the Equation \eqref{equ:grad_bias_expansion} above can be expressed as
\begin{align*}
\nabla v_b^\pi(s)
& = \textcolor{green}{\Big\{} \sum_{t=0}^{\tmix^\pi - 1} \Big\{
    \sum_{s \in \setname{S}} \sum_{a \in \setname{A}}
    p_\pi^t(s|s_0) \pi(a|s) \Big[ q_b(s, a) \nabla \log \pi(a|s) \Big] - \nabla v_g^\pi
    \Big\} \textcolor{green}{\Big\}} \\
& \quad + \textcolor{red}{\Big\{}
    \sum_{s \in \setname{S}} p_\pi^\star(s)
    \sum_{a \in \setname{A}} \Big[
    q_b^\pi(s, a) \nabla \pi(a|s) + \pi(a|s) \nabla q_b^\pi(s, a) \Big]
    \textcolor{red}{\Big\}} \\
& = \textcolor{green}{\Big\{} \sum_{t=0}^{\tmix^\pi - 1} \Big\{
    \sum_{s \in \setname{S}} \sum_{a \in \setname{A}}
    p_\pi^t(s|s_0) \pi(a|s)
        \Big[ q_b(s, a) \nabla \log \pi(a|s) \Big] - \nabla v_g^\pi
    \Big\} \textcolor{green}{\Big\}} \\
& \quad + \textcolor{red}{\Big\{}
    \sum_{s \in \setname{S}} \sum_{a \in \setname{A}}
        p_\pi^\star(s) \pi(a|s) \nabla q_b^\pi(s, a)
    + \underbrace{\sum_{s \in \setname{S}} \sum_{a \in \setname{A}}
        p_\pi^\star(s) \pi(a|s) \Big[ q_b^\pi(s, a) \nabla \log \pi(a|s) \Big]
        }_\text{$\nabla v_g^\pi$, as shown by \citet[\thm{1}]{sutton_2000_pgfnapprox}}
    \textcolor{red}{\Big\}}.
\end{align*}
Here, the green and red curly braces indicate the pre- and post-mixing terms,
respectively, which are related to the transient and steady-state phases
(see \figref{fig:timestep_line}).
This finishes the proof.

\end{proof}

\section{Natural gradients of the bias} \label{sec:natgrad}

The standard gradient ascent of the form \eqref{equ:polparam_update} without preconditioner
is \emph{not} invariant under policy parameterization.
One of possible ways to overcome this is
by utilizing natural (covariant) gradients of the bias,
\ie $\fbmat^{-1} \nabla v_b(\vecb{\theta})$,
where $\fbmat$ denotes a bias Fisher matrix derived in this Section.
The use of natural gradients is also anticipated to increase the optimization convergence rate.

\cite{kakade_2002_npg} firstly proposed a gain Fisher $\fgamat$
of a parameterized policy $\pi(\vecb{\theta})$.
That is,
\begin{equation}
\fgamat(\vecb{\theta})
\eqdef \sum_{s \in \setname{S}} p_\pi^\star(s) \famat(\vecb{\theta}, s),
\quad \text{with}\
\famat(\vecb{\theta}, s)
\eqdef \underbrace{
    \sum_{a \in \setname{A}} \pi(a|s; \vecb{\theta})
    \nabla \log \pi(a | s; \vecb{\theta}) \nabla^\intercal \log \pi(a | s; \vecb{\theta}),
}_\text{The action Fisher, which is independent of optimality criteria}
\label{equ:fisher_gain}
\end{equation}
where $\famat(\vecb{\theta}, s) \in \real{\dim(\vecb{\theta}) \times \dim(\vecb{\theta})}$
is a positive semidefinite Fisher information matrix that defines
a semi-Riemannian metric\footnote{
    A Riemannian metric on a manifold is an assignment of an inner product on
    the tangent space of that manifold.
    Note that the term ``metric'' here is \emph{not} in its typical sense as a distance function.
    Nevertheless, a Riemannian metric induces a natural distance function on
    the corresponding manifold.
}
on the action probability manifold (``surface'') at a state~$s \in \setname{S}$.
This $\famat(\vecb{\theta}, s)$ is related to
the Kullback-Leibler divergence $\mathcal{K}_{\mathrm{div}}$ between two policies
(at state $s$) parameterized by
$\vecb{\theta}$ and $(\vecb{\theta} + \delta_{\vecb{\theta}})$
for an infinitesimal $\delta_{\vecb{\theta}}$.
That is,
$\mathcal{K}_{\mathrm{div}}(\vecb{\theta}, \vecb{\theta} + \delta_{\vecb{\theta}})
= \frac{1}{2}\delta_{\vecb{\theta}}^\intercal \famat(\vecb{\theta}, s) \delta_{\vecb{\theta}}
+ \bigO{\delta_{\vecb{\theta}}^3}$
via the second-order Taylor expansion, where
$\famat(\vecb{\theta}, s)
= \nabla^2 \mathcal{K}_{\mathrm{div}}(\vecb{\theta}, \vecb{\theta} + \delta_{\vecb{\theta}})$.

The gain Fisher \eqref{equ:fisher_gain} is defined as the weighted sum of
the action Fisher $\famat$
whose weights are the components of $\ppimat^\star$, by which
the gain is formulated as $v_g(\pi) \cdot \vecb{1}= \ppimat^\star \rpivec$,
where $\rpivec \in \real{\setsize{S}}$ is a reward vector with elements
$\rpi(s) = \E{A \sim \pi(\cdot|s)}{r(s, A)}, \forall s \in \setname{S}$,
and $\ppimat^\star$ has identical rows in unichain MDPs.
Now, from the definition of bias $v_b$ in \eqref{equ:bias_def},
we can express $v_b$ as
\begin{equation}
v_b(\pi, s_0) = \sum_{s \in \setname{S}} \Big\{
    \underbrace{
        \lim_{\tmax \to \infty} \sum_{t=0}^{\tmax - 1}
            \Big( p_\pi^t(s|s_0) - p_\pi^\star(s) \Big)
    }_\text{is defined as $h_\pi(s|s_0)$}
    \Big\} r_\pi(s)
    = \sum_{s \in \setname{S}} h_\pi(s|s_0) r_\pi(s).
\label{equ:vb_devmat}
\end{equation}
Equivalently in matrix forms, we have
$\vecb{v}_b(\pi) = \dpimat \rpivec \in \real{\setsize{S}}$, where
$\dpimat$ is an $\setsize{S}$-by-$\setsize{S}$ deviation matrix whose
$[s_0, s]$-component is $h_\pi(s|s_0)$.
Whenever the induced MC is aperiodic (\assref{ass:unichain_aperiodic}),
$\lim_{\tmax \to \infty} p_\pi^{\tmax}(s|s_0) =  p_\pi^\star$.
Hence, we conjecture that the limit in \eqref{equ:vb_devmat} exists, and
moreover that the infinite series
$\sum_{t=0}^\infty ( p_\pi^t(s|s_0) - p_\pi^\star(s) ) \eqdefr h_\pi(s|s_0)$
converges absolutely.

In analogy with gain $v_g$ and gain Fisher $\fgamat$ formulation therefore,
we anticipate that $h_\pi(s|s_0)$ can be used to in such a way to
weight $\famat$ to yield a bias Fisher;
serving the same role as $p_\pi^\star$ in \eqref{equ:fisher_gain}.
The main issue is that $h_\pi$ is not a probability distribution,
hence $\dpimat$ is a non-stochastic matrix (whose components may be negative).
As a remedy, we utilize only the deviation magnitude,
\ie the absolute value of $h_\pi(s|s_0)$,
in order to maintain the positive semidefiniteness of $\famat$.
That is,
\begin{equation}
| h_\pi(s|s_0) |
= \Big| \sum_{t=0}^\infty \big( p_\pi^t(s|s_0) - p_\pi^\star(s) \big) \Big|
\le \sum_{t=0}^\infty | p_\pi^t(s|s_0) - p_\pi^\star(s)|,
\qquad \forall s, s_0 \in \setname{S}.
\label{equ:deviation_abs}
\end{equation}
Taking the absolute value per timestep in \eqref{equ:deviation_abs} is beneficial
for trajectory-sampling-based approximation in model-free RL,
as will be explained in the remainder of this Section.
Note that the infinite series in the RHS of the inequality in \eqref{equ:deviation_abs}
converges because its LHS counterpart is absolutely convergent,
as conjectured in the previous paragraph.

We are now ready to define the bias Fisher $\fbamat$ as follows.
For all $\pi(\vecb{\theta}) \in \piset{SR}$ and for all $s_0 \in \setname{S}$,
\begin{align}
& \fbamat(\vecb{\theta}, s_0)
\eqdef \sum_{s \in \setname{S}} \Big\{
    \underbrace{
        \sum_{t=0}^\infty |p_\pi^t(s|s_0) - p_\pi^\star(s)|
    }_\text{$\ge | h_\pi(s|s_0) |$ in \eqref{equ:deviation_abs}}
    \Big\} \famat(\vecb{\theta}, s)
= \sum_{t=0}^\infty \Big\{ \sum_{s \in \setname{S}}
    \underbrace{|p_\pi^t(s|s_0) - p_\pi^\star(s)|}_\text{a weight (not a probability)}
     \famat(\vecb{\theta}, s) \Big\}
    \notag \\
& = \sum_{t=0}^{\tabsminpiso - 1} \Big\{
    \sum_{\strans^\pi \in \settrans^\pi} |p_\pi^t(\strans^\pi | s_0)
    - \cancelto{0}{p_\pi^\star(\strans^\pi)}|\quad \famat(\vecb{\theta}, \strans^\pi)
    + \sum_{\srecur^\pi \in \setrecur^\pi}
        | \cancelto{0}{p_\pi^t(\srecur^\pi | s_0)}
        -\ p_\pi^\star(\srecur^\pi)|\
        \famat(\vecb{\theta}, \srecur^\pi) \Big\}
    \notag \\
& + \underbrace{
    \quad  \sum_{t=\tabsminpiso}^{\tmix^\pi - 1} \Big\{
    \sum_{\strans^\pi \in \settrans^\pi}
        |p_\pi^t(\strans^\pi | s_0) - \cancelto{0}{p_\pi^\star(\strans^\pi)}|
        \quad \famat(\vecb{\theta}, \strans^\pi)
    + \sum_{\srecur^\pi \in \setrecur^\pi}
        |p_\pi^t(\srecur^\pi | s_0) - p_\pi^\star(\srecur^\pi)|\
        \famat(\vecb{\theta}, \srecur^\pi)
    \Big\}
    }_\text{These terms are ignored in the sampling-enabler expression \eqref{equ:fisher_b_sampling}.}
    \notag \\
& \quad + \sum_{t=\tmix^\pi}^{\infty} \Big\{
    \sum_{\strans^\pi \in \settrans^\pi}
    |\cancelto{0}{p_\pi^t(\strans^\pi | s_0)} - \cancelto{0}{p_\pi^\star(\strans^\pi)}|
        \quad \famat(\vecb{\theta}, \strans^\pi)
    + \sum_{\srecur^\pi \in \setrecur^\pi}
    \cancelto{0}{|p_\pi^t(\srecur^\pi | s_0) - p_\pi^\star(\srecur^\pi)|}\
        \famat(\vecb{\theta}, \srecur^\pi)
    \Big\},
\label{equ:fisher_b}
\end{align}
where we decompose the time summation into pre-absorption, absorption-to-mixing, and post-mixing,
as well as the state summation into disjoint transient and recurrent state subsets
under $\pi(\vecb{\theta})$, namely $\setname{S} = \settrans^\pi \cup \setrecur^\pi$.
Here, the stationary probability of any transient state is zero,
\ie $p_\pi^\star(\strans^\pi) = 0$.
The same goes to the probability of visiting recurrent states before
the minimum absorption time (\defref{def:tabsmin}),
\ie $p_\pi^t(\srecur^\pi | s_0) = 0$ for $t < \tabsminpiso$.
Note that bringing $\famat(s)$ inside the absolute operator yields
$\sum_{t=0}^\infty \sum_{s \in \setname{S}}
| p_\pi^t(s|s_0) \famat(\vecb{\theta}, s) - p_\pi^\star(s) \famat(\vecb{\theta}, s)|$,
which may not equal to $\fbamat(\vecb{\theta}, s_0)$
since $\famat(s)$ may have negative off-diagonal entries.

\begin{definition} \label{def:tabsmin}
The minimum absorption time from an initial state $s_0$ and under a policy $\pi$,
denoted by $\tabsminpiso$, is the time required by the induced MC
such that the stepwise state distribution $p_\pi^{\tabsminpiso}$ contains
at least one recurrent state in its support.
That is,
\begin{equation*}
\tabsminpiso \eqdef \min \{ t : p_\pi^t(\srecur|s_0) > 0,\
    \text{for any recurrent state $\srecur \in \setname{S}$} \}.
\end{equation*}
Thus, $\tabsminpiso$ indicates the minimum number of timesteps
before absorption for any given $\pi$ and $s_0$.
\end{definition}

Based on time and state decompositions of the bias Fisher $\fbamat$
in \eqref{equ:fisher_b}, we propose a simplification of it that enables
approximation through sampling.
That is,
\begin{align}
\fbamatsam^{\tabsminpiso}(\vecb{\theta}, s_0)
& \eqdef \Bigg\{ \underbrace{
        \sum_{t=0}^{\tabsminpiso - 1}
        \sum_{s \in \setname{S}} p_\pi^t(s | s_0) \famat(\vecb{\theta}, s)
        }_\text{$\mat{\Phi}_{\!\xi(\tabsminpiso)}(\vecb{\theta}, s_0)$
            in \eqref{equ:finite_traj_fisher}}
    \Bigg\}
    + \Bigg\{ \tabsminpiso \cdot \underbrace{
            \sum_{s \in \setname{S}} p_\pi^\star(s) \famat(\vecb{\theta}, s)
        }_\text{$\ftimat(\vecb{\theta})$ in \eqref{equ:infinite_traj_fisher}}
    \Bigg\} \label{equ:fisher_b_sampling} \\
& \approx \fbamat(\vecb{\theta}, s_0)
    \tag{That is, $\fbamatsam$ approximates $\fbamat$ in \eqref{equ:fisher_b}}.
\end{align}
Recall that before the minimum absorption time, state samples are all transient
since $p_\pi^t(\srecur) = 0$ for $t < \tabsminpiso$, whereas
after the mixing time, state samples are all recurrent since
$p_\pi^t(\strans) = 0$ for $t \ge \tmix^\pi$.
Such samples are obtained by an agent during interaction with its environment.

The simplication from $\fbamat$ to $\fbamatsam^{\tabsminpiso}$ implies that
all terms from $t=\tabsminpiso$ to $\tmix^\pi - 1$ in \eqref{equ:fisher_b} are
not taken into account in the sampling-enabler expression \eqref{equ:fisher_b_sampling}.
There are at least two justifications for this.
\textbf{First}, the terms involving $p_\pi^t(\strans^\pi | s_0)$ and
$|p_\pi^t(\srecur^\pi | s_0) - p_\pi^\star(\srecur^\pi)|$ decrease to $0$ as
$t$ approaches $\tmix^\pi$, specifically $p_\pi^t(\strans^\pi | s_0)$ goes to
$0$ more quickly, generally far before mixing.
\textbf{Second}, the absolute deviation terms,
\ie $|p_\pi^t(\srecur^\pi | s_0) - p_\pi^\star(\srecur^\pi)|$, are not a state probability
(inheriting the non-stochastic characteristic of the deviation matrix $\dpimat$).
Therefore, states cannot be sampled from such terms during
agent-environment interaction in model-free RL.

\subsection{Alternative interpretations}

\paragraph{\textbf{Interpretation 1:}}
The sampling-enabler expression for the bias Fisher can also be interpreted
as the sum of Fisher matrices of finite and infinite trajectories,
as shown in \eqref{equ:fisher_b_sampling}.
Such matrices were introduced by \cite{bagnell_2003_covps, peters_2003_rlhum}
who identified the gain Fisher \eqref{equ:fisher_gain} as
the Fisher matrix on the manifold of the infinite trajectory distribution.
That is, $\fgamat(\vecb{\theta})$ is equal to
\begin{equation}
\ftimat(\vecb{\theta})
\eqdef \lim_{\tau \to \infty} \frac{1}{\tau}
    \underbrace{
        \sum_{t=0}^{\tau  - 1} \sum_{s \in \setname{S}} p_\pi^t(s | s_0; \vecb{\theta})
        \famat(\vecb{\theta}, s)
    }_\text{$\ftlmat(\vecb{\theta} | s_0)$ in \eqref{equ:finite_traj_fisher} below}
= \sum_{s \in \setname{S}} \Big\{
    \underbrace{
        \lim_{\tau \to \infty} \frac{1}{\tau} \sum_{t=0}^{\tau - 1}  p_\pi^t(s | s_0; \vecb{\theta})
    }_{p_\pi^\star(s)}
    \Big\} \famat(\vecb{\theta}, s),
\label{equ:infinite_traj_fisher}
\end{equation}
which involves the trajectory Fisher $\ftlmat$ defined as
\begin{align}
\ftlmat(\vecb{\theta}, s_0)
& \eqdef \E{\Xi \sim \trajd}
    {\big\{ \nabla \log \trajd(\Xi; \vecb{\theta})
        - \E{}{\nabla \log \trajd(\Xi; \vecb{\theta})} \big\}^2}
    \tag{Variance of the score} \notag \\
& = \E{\Xi \sim \trajd}{
        \sum_{t=0}^{\tau - 1} \nabla \log \pi(a_t | s_t; \vecb{\theta})
        \nabla^\intercal \log \pi(a_t | s_t; \vecb{\theta})
    } \tag{See below text} \\
& = \sum_{t=0}^{\tau - 1} \sum_{s_t \in \setname{S}} \sum_{a_t \in \setname{A}}
    \pi(a_t|s_t; \vecb{\theta}) p_\pi^t(s_t | s_0)
    \Big[ \nabla \log \pi(a_t | s_t; \vecb{\theta})
        \nabla^\intercal \log \pi(a_t | s_t; \vecb{\theta}) \Big].
\label{equ:finite_traj_fisher}
\end{align}
Here, $\xi(\tau) \eqdef (s_0, a_0, s_1, a_1, \ldots, s_{\tau - 1}, a_{\tau - 1})$
denotes a trajectory $\xi$ of length $\tau$, where the last action $a_{\tau - 1}$
(at the last state $s_{\tau - 1}$) is included.
There exists a trajectory distribution $\trajd$, from which a trajectory $\xi$
(whose random variable is $\Xi$) is sampled, namely $\Xi \sim \trajd$.
Given a policy $\pi(\vecb{\theta})$, the state and action sequence of $\xi$
is generated following $\pi(\vecb{\theta})$, for which
$\trajd$ is denoted as $\trajd(\cdot; \vecb{\theta})$.
The penultimate equality in \eqref{equ:finite_traj_fisher} is because
the mean of the score is $\E{}{\nabla \log \trajd(\Xi; \vecb{\theta})} = \vecb{0}$, and
\begin{align*}
\nabla \log \trajd(\xi(\tau); \vecb{\theta})
& = \nabla \log \Big[ \isd(s_0) \Big\{
        \prod_{t=0}^{\tau - 2} p(s_{t+1} | s_t, a_t)\ \pi(a_t| s_t; \vecb{\theta})
    \Big\} \pi(a_{\tau - 1}| s_{\tau - 1}; \vecb{\theta}) \Big] \\
& = \sum_{t=0}^{\tau - 1} \nabla \log \pi(a_t | s_t; \vecb{\theta}),
\end{align*}
since $\nabla \isd(s_0) = \vecb{0}$ obviously, and
$\nabla p(s_{t+1}|s_t, a_t) = \vecb{0}$ for randomized policies
$\pi(\vecb{\theta}) \in \piset{SR}$.

\paragraph{\textbf{Interpretation 2:}}
The bias Fisher \eqref{equ:fisher_b_sampling} uses the Fisher of
a policy $\pi(\vecb{\theta})$, that is $\famat$ in \eqref{equ:fisher_gain}.
However, there is another statistical model, \ie the state distribution $p_\pi^t$,
that also changes due to the changes of the policy parameter $\vecb{\theta}$.
The bias Fisher therefore, can also be defined using
the Fisher of the joint state-action distribution
$p_\pi^t(s, a |s_0) = p_\pi^t(s|s_0) \pi(a|s, \cancel{s_0}; \vecb{\theta})$.
This follows \cite{morimura_2008_nnpg} who specified a state-action gain Fisher as
$\fgsamat(\vecb{\theta})
\eqdef \fsmat^\star(\vecb{\theta}) + \fgamat(\vecb{\theta})$
with the stationary state Fisher defined as
$\fsmat^\star(\vecb{\theta}) \eqdef \sum_{s \in \setname{S}} p_\pi^\star(s)
\nabla \log p_\pi^\star(s) \nabla^\intercal \log p_\pi^\star(s)$.

In a similar fashion as the (action) bias Fisher \eqref{equ:fisher_b_sampling},
we define the state-action bias Fisher as follows,
\begin{align*}
& \fbsamatsam^{\tabsminpiso}(\vecb{\theta}, s_0)
\eqdef \sum_{t=0}^{\tabsminpiso - 1}
    \sum_{s \in \setname{S}} \sum_{a \in \setname{A}}
    p_\pi^t(s, a | s_0)
    \nabla \log p_\pi^t(s, a | s_0) \nabla^\intercal \log p_\pi^t(s, a | s_0)
    + \tabsminpiso \cdot \fgsamat(\vecb{\theta}) \\
& = \underbrace{\sum_{t=0}^{\tabsminpiso - 1} \fsmat^t(\vecb{\theta})
        + \tabsminpiso \cdot \fsmat^\star(\vecb{\theta})
    }_\text{These terms are neglected (set to $\vecb{0}$) in \eqref{equ:fisher_b_sampling}.}
    + \underbrace{
        \sum_{t=0}^{\tabsminpiso - 1} \sum_{s \in \setname{S}}
        p_\pi^t(s | s_0) \famat(s, \vecb{\theta})
        + \tabsminpiso \cdot \fgamat(\vecb{\theta})
    }_\text{$\fbamatsam^{\tabsminpiso}(\vecb{\theta} | s_0)$
        in \eqref{equ:fisher_b_sampling}},
\end{align*}
where
$\fsmat^t(\vecb{\theta}) \eqdef
\sum_{s \in \setname{S}} p_\pi^t(s | s_0)
\nabla \log p_\pi^t(s | s_0) \nabla^\intercal \log p_\pi^t(s | s_0)$.
As can be seen, the bias Fisher~\eqref{equ:fisher_b_sampling} does not
take into account the changes of the state distribution under a policy $\pi(\vecb{\theta})$
due to perturbations in its parameter $\vecb{\theta}$,
\ie setting $\nabla p_\pi^t(s|s_0) \gets \vecb{0}$ and $\nabla p_\pi^\star(s) \gets \vecb{0}$.

\subsection{Practical considerations} \label{sec:natgrad_practical}

In model-free RL, we estimate the \emph{minimum} absorption time
$\tabsminpiso \approx \tabsminhat$,
which is held constant for all policies $\pi$ and all initial states $s_0$ for simplicity.
Larger $\tabsminhat$ does not necessarily yield lower approximation error
in \eqref{equ:fisher_b_sampling} because the agent becomes more likely to already
visit recurrent states (\ie being absorbed in the recurrent class).
Although larger $\tabsminhat$ may compensate for the ignored terms of \eqref{equ:fisher_b},
it yields larger variance in sampling-based approximation.
In contrast, smaller $\tabsminhat$ lowers the sample variance with the cost of
missing a number (\ie $\tabsminpiso - \tabsminhat$) of expectation terms
$\E{S \sim p_\pi^t}{\famat(\vecb{\theta}, S)}$ and
$\E{S \sim p_\pi^\star}{\famat(\vecb{\theta}, S)}$ in \eqref{equ:fisher_b_sampling}.

For an agent that begins at a transient state $s_0 \in \settrans^\pi$,
we set $\tabsminhat \gets 2$.
This corresponds to the bias Fisher \eqref{equ:fisher_b_sampling} that consists of
a 1-step trajectory Fisher and twice the infinite trajectory Fisher.
Empirical results (\secref{sec:biasopt}) suggest that this choice is reasonable
for the purpose of conditioning the bias gradient in \eqref{equ:polparam_update}.
Note that setting $\tabsminhat \gets 1$ (the lowest because $s_0$ is transient)
diminishes the involvement of the transition probability as no finite-step trajectory
is taken into account in \eqref{equ:fisher_b_sampling}.

\section{A nearly Blackwell optimal policy gradient algorithm}
\label{sec:bwoptim_algo}

In this Section, we propose an nBw-optimal algorithm through formulating
the objective (\defref{def:bias_optim}) as a bi-level optimization problem.
That is,
\begin{equation}
\pi_b^* \in \argmax_{\pi_g^*} b(\pi_g^*, s_0),
\quad \text{subject to}\
\pi_g^* \in \Pi_g^* \eqdef \argmax_{\pi \in \piset{S}} g(\pi),
\label{equ:bilevel_opt}
\end{equation}
where $b(\pi) \eqdef v_b(\pi)$, $g(\pi) \eqdef v_g(\pi)$, and
the policy parameterization $\pi(\vecb{\theta})$ is notationally hidden.
Here, the lower-level gain optimization yields the feasible set $\Pi_g^*$
for the upper-level bias optimization.
\citet[\thm{10.1.5}]{puterman_1994_mdp} showed that
there exist a stationary deterministic nBw-optimal policy~$\pi_b^*$ for finite MDPs
(as long as the policy parameterization contains such a policy).

Numerically in practice, the lower-level gain optimization in \eqref{equ:bilevel_opt}
is deemed converged once the norm of the gain gradient falls below a small positive~$\epsilon$,
without considering the negative definiteness condition,
\ie $\nabla^2 g(\vecb{\theta}) \prec 0$, for a maximum.
In addition, such gain optimization is carried out using a local optimizer
so there is no guarantee of finding a global maximum.
Practically therefore, we approximate \eqref{equ:bilevel_opt} by imposing
a set constraint of \emph{approximately} gain optimal policies,
namely $\Pi_g^{*_\epsilon}
\eqdef \{ \pi(\vecb{\theta}) : \norm{\nabla g(\vecb{\theta})} \le \epsilon\}$.
Note that one can perform multiple (independent/parallelizable) runs of
local optimization with randomized initial points for
approximate global optimization in non-concave objectives
\citep[\ch{2.1}]{zhigljavsky_1991_tgrs}.

We approximately solve \eqref{equ:bilevel_opt} using a barrier method,
assuming that $\Pi_g^{*_\epsilon}$ has an interior and any boundary point
can be attained by approaching it from the interior
(for comparison with penalty methods, see \secref{sec:barrier_vs_penalty}).
As an interior-point technique, the barrier method must be initiated at
a strictly feasible point~$\vecb{\theta}$ satisfying
$\norm{\nabla g(\vecb{\theta})} < \epsilon$.
This is fulfilled by performing gain optimization.
Let $\pi_g^{*_\epsilon}$ be the resulting gain optimal policy,
whose gain is denoted by $g^*_\epsilon$.
Then, we construct another set
$\Pi_{g+}^{*_\epsilon} = \{\pi\ |\ (g^*_\epsilon - \zeta) - g(\pi) \le 0 \}$,
which can serve the same role as $\Pi_g^{*_\epsilon}$ (as a constraint set)
and is beneficial because it does not involve the gradients of gain.
Here, $\zeta$ is a positive scalar specifying slackness in $g^*_\epsilon$
such that $\pi_g^{*_\epsilon}$ becomes an interior point of $\Pi_{g+}^{*_\epsilon}$.
It functions similarly as~$\epsilon$
(but recall that $\epsilon$ is used while searching for~$\pi_g^{*_\epsilon}$,
whereas $\zeta$ is used after $\pi_g^{*_\epsilon}$ is found).

Given a constraint set $\Pi_{g+}^{*_\epsilon}$ and its interior point $\pi_g^{*_\epsilon}$,
we have the following approximation to \eqref{equ:bilevel_opt},
\begin{equation}
\argmax_{\pi} \big[
\underbrace{\mathbbm{b}(\pi, s_0, g^*_\epsilon, \beta, \zeta)}_\text{Bias-barrier}
\eqdef \underbrace{b(\pi, s_0)}_\text{Bias}
+ \underbrace{\beta \log (g(\pi) - g^*_\epsilon + \zeta)}_\text{Barrier}
\big],
\quad \underbrace{
    \text{subject to}\ \pi \in \interior{\Pi_{g+}^{*_\epsilon}}
    }_\text{No need to consider this in practice},
\label{equ:biasbarrier_opt}
\end{equation}
where $\beta$ is a non-negative barrier parameter, and
$\interior{\Pi_{g+}^{*_\epsilon}}$ denotes the interior of $\Pi_{g+}^{*_\epsilon}$.
The first and second derivatives of the bias-barrier $\mathbbm{b}$
in \eqref{equ:biasbarrier_opt} are given by
\begin{align}
& \nabla \mathbbm{b}(\vecb{\theta}, s_0)
= \nabla b(\vecb{\theta}, s_0) + \tilde{\beta} \nabla g(\vecb{\theta}),
    \quad \text{and} \label{equ:biasbarrier_der_1st} \\
& \nabla^2 \mathbbm{b}(\vecb{\theta}, s_0)
= \nabla^2 b(\vecb{\theta}, s_0) + \tilde{\beta} \nabla^2 g(\vecb{\theta})
    - (\tilde{\beta}^2/\beta) \nabla g(\vecb{\theta}) \nabla^\intercal g(\vecb{\theta}),
    \label{equ:biasbarrier_der}
\end{align}
where $\tilde{\beta} = \beta/(g(\pi) -  g^*_\epsilon + \zeta)$.

We argue setting the slackness parameter
$\zeta \gets 1$ in \eqref{equ:biasbarrier_opt} as follows.
For environments where all policies are gain optimal,
we want to suppress the contribution of the barrier to~$\mathbbm{b}$
so that \eqref{equ:biasbarrier_opt} becomes bias-only optimization.
This can be achieved by $\log (g(\pi) - g^*_\epsilon + (\zeta = 1)) = 0$
since $g^*_\epsilon = g(\pi)$ for any stationary policy $\pi$ in such environments.
For other environments, a distance of one unit gain
(between the interior $\pi_g^{*_\epsilon}$ and the boundary)
can be justified by observing the behaviour of a logarithmic function,
where $\log (g(\pi) - g^*_\epsilon + (\zeta = 1))$ drops (without bound) quickly
as $(g(\pi) - g^*_\epsilon)$ approaches $-1$ from its initial value of~$0$
(since initially $g(\pi) = g^*_\epsilon$).
This prevents the optimization iterate from going to a policy parameter
that induces an undesirable lower gain $g(\pi) < g^*_\epsilon$.

The barrier method proceeds by solving a sequence of \eqref{equ:biasbarrier_opt}
for $k=0, 1, \ldots$ with monotonically decreasing barrier parameter
$\beta \to 0$ as $k \to \infty$, where the solution of the previous $k$-th iteration
is used as the initial point for the next $(k+1)$-th iteration.
It is anticipated that as $\beta$ decreases, the solution of \eqref{equ:biasbarrier_opt}
becomes close to that of \eqref{equ:bilevel_opt}, see \citet[\ch{11.3}]{boyd_2004_cvxopt}.
However, the Hessian matrix  $\nabla^2 \mathbbm{b}(\vecb{\theta}, s_0)$ in
\eqref{equ:biasbarrier_der} tends to be increasingly ill-conditioned as $\beta$ shrinks.
This demands a proper preconditioning matrix
(for gradient-ascent optimization via \eqref{equ:polparam_update}),
which can be obtained by following a 2-step procedure below.
\begin{itemize}
\item Multiply the last term of RHS in \eqref{equ:biasbarrier_der} by $-1$
    so that such a last term contains a positive semidefinite matrix.
    That is, for any $\vecb{x} \in \real{\dim(\vecb{\theta})}$ and $\beta \ge 0$,
    we have
    \begin{equation*}
    \vecb{x}^\intercal \big[ (-1) (- \tilde{\beta}^2/\beta)
     \nabla g(\vecb{\theta}) \nabla^\intercal g(\vecb{\theta}) \big] \vecb{x}
    = (\tilde{\beta}^2/\beta) (\vecb{x}^\intercal \nabla g(\vecb{\theta}))^2
    \ge 0.
    \end{equation*}
\item Substitute the gain and bias Hessian matrices in \eqref{equ:biasbarrier_der}
    with their corresponding Fisher matrices, \ie
    $\fgamat(\vecb{\theta})$ from \eqref{equ:fisher_gain} and
    $\fbamatsam^{\tabsminpiso}(\vecb{\theta}, s_0)$ from \eqref{equ:fisher_b_sampling},
    respectively.
\end{itemize}
Thus, based on \eqref{equ:biasbarrier_der}, we obtain a preconditioning matrix below
for $\nabla \mathbbm{b}(\vecb{\theta}, s_0)$ used in \eqref{equ:polparam_update},
\begin{equation}
\mat{C} \eqdef
\fbamatsam^{\tabsminpiso}(\vecb{\theta}, s_0) + \tilde{\beta} \fgamat(\vecb{\theta})
    + (\tilde{\beta}^2/\beta) \nabla g(\vecb{\theta}) \nabla^\intercal g(\vecb{\theta}).
    \label{equ:biasbarrier_precond}
\end{equation}

\subsection{Pseudocode}

\algref{alg:gainbiasbarrier_optim} implements the barrier method to approximately
obtain nBw-optimal policies (as described in the beginning of \secref{sec:bwoptim_algo}).
It employs \algref{alg:gradfisher_sampling} to estimate
the gradients and Fisher matrices of gain and bias.
These algorithms altogether can be regarded as approximation to
the $n$-discount optimality policy iteration \citep[\ch{10.3}]{puterman_1994_mdp}.

\begin{algorithm}[t]
\caption{An (approximately) nBw-optimal policy gradient method (\secref{sec:bwoptim_algo})}
\label{alg:gainbiasbarrier_optim}
\DontPrintSemicolon

\KwInput{
\begin{itemize}
\item A parameterized policy $\pi(\vecb{\theta})$.
\item A small positive scalar $\epsilon$ for specifying gain-optimization convergence.
\item An initial value of the barrier parameter, $\beta_0$.
\item A barrier divisor $\beta_{\mathrm{div}}$ for shrinking the barrier parameter,
    \eg $\beta_{\mathrm{div}} = 10$.
\item A maximum numbers of inner ($j$-index) optimization iteration, $n_j$.
\item A maximum numbers of outer ($k$-index) optimization iteration, $n_k$.
\end{itemize}
}
\KwOutput{An (approximately) nBw-optimal parameterized policy.}

Initialize the barrier parameter $\beta \gets \beta_0$ and
    a variable \texttt{IsGainOptimizationConverged} to False.  \\
\For{Outer optimization iteration, $k = 0, 1, \ldots, n_k - 1$}{
    \For{Inner optimization iteration, $j = 0, 1, \ldots, n_j - 1$}{
        Evaluate the current policy $\pi(\vecb{\theta})$ for
            its gain $g(\vecb{\theta})$ and bias state-action value $q_b(\vecb{\theta})$. \\
        Approximate the gradients and Fisher matrices of gain and bias, \newline
        namely
        $\hat{\nabla} g(\vecb{\theta})$, $\fgamathat(\vecb{\theta})$,
        $\hat{\nabla} b(\vecb{\theta})$, and $\fbamathat(\vecb{\theta})$
        using \algref{alg:gradfisher_sampling} that takes $q_b(\vecb{\theta})$ as input. \\
        \If{\emph{\texttt{IsGainOptimizationConverged}} is \emph{True}}{
            $\tilde{\beta} \gets \beta/(g(\vecb{\theta}) -  g^*_\epsilon + 1)$. \\
            $\hat{\nabla} v(\vecb{\theta}) \gets \hat{\nabla} b(\vecb{\theta})
                + \tilde{\beta} \hat{\nabla} g(\vecb{\theta})$.
                \Comment*[r]{For bias-barrier gradients \eqref{equ:biasbarrier_der_1st}}
            $\hat{\mat{C}}(\vecb{\theta}) \gets \fbamathat(\vecb{\theta})
            + \tilde{\beta} \fgamathat(\vecb{\theta})
            + (\tilde{\beta}^2/\beta)
                \hat{\nabla} g(\vecb{\theta}) \hat{\nabla}^\intercal g(\vecb{\theta})$.
                \Comment*[r]{Based on \eqref{equ:biasbarrier_precond}}
                \label{line:bias_precond}
        }
        \Else{
            $\hat{\nabla} v(\vecb{\theta}) \gets \hat{\nabla} g(\vecb{\theta})$,
            and $\hat{\mat{C}}(\vecb{\theta}) \gets \fgamathat(\vecb{\theta})$.
            \Comment*[r]{For gain optimization}
        }
        \If{$\norm{\hat{\nabla} v(\vecb{\theta})} < \epsilon$ %
            \Andd \emph{\texttt{IsGainOptimizationConverged}} is \emph{False}} {
            $g^*_\epsilon \gets g(\vecb{\theta})$. \\
            Set \texttt{IsGainOptimizationConverged} to True, then \Continue.
        }
        Calculate the step direction,
        $\delta_{\vecb{\theta}} \gets \hat{\mat{C}}(\vecb{\theta})^{-1} \hat{\nabla} v(\vecb{\theta})$,
            and the step length, $\alpha$. \\
        Update the policy parameter,
            $\vecb{\theta} \gets \vecb{\theta} + \alpha \delta_{\vecb{\theta}}$.
            \Comment*[r]{See \eqref{equ:polparam_update}}
    }
    \If{\emph{\texttt{IsGainOptimizationConverged}} is \emph{True}}{
        Shrink the barrier parameter, $\beta \gets \beta / \beta_{\mathrm{div}}$.
    }
}

\Return $\pi(\vecb{\theta})$.

\end{algorithm}

\begin{algorithm}[t]
\caption{Sampling-based approximation for the gradients and Fisher matrices of gain and bias}
\label{alg:gradfisher_sampling}
\DontPrintSemicolon

\KwInput{
\begin{itemize}
\item A parameterized policy $\pi(\vecb{\theta})$ and
    its bias state-action value $q_b(\pi(\vecb{\theta}))$.
\item A sample size, specified as the number of experiment-episodes (trials) $\nxep$.
\item An experiment-episode length, specified as the maximum timestep $\txepmax$.
\end{itemize}
}
\KwOutput{
    Estimates of the gradients and Fisher matrices of gain and bias.
    The resulting estimates are unbiased whenever $\txepmax \ge \tmix^{\pi(\vecb{\theta})}$,
    see \defref{def:tmix}.
}

Initialize empty lists of the gradients and Fisher matrices of gain and bias,
\ie $\setname{G}_g$, $\setname{G}_b$, $\setname{F}_g$, and $\setname{F}_b$. \\
Set $\tabsminhat \gets 2$.
    \Comment*[r]{See justification in \secref{sec:natgrad_practical}}

\For{Each experiment-episode $i = 0, 1, \ldots, \nxep - 1$}{
    Reset the environment and obtain an initial state $s$. \\
    Reset
        $\hat{\nabla} g(\vecb{\theta}) \gets \vecb{0}$
        and $\fgamathat(\vecb{\theta}) \gets \vecb{0}$.
        \Comment*[r]{\small Estimates for gain gradient and Fisher}
    Reset
        $\hat{\nabla} b(\vecb{\theta}) \gets \vecb{0}$
        and $\fbamathat(\vecb{\theta}) \gets \vecb{0}$.
        \Comment*[r]{\small Estimates for bias gradient and Fisher}

    \For{Each timestep $t = 0, 1, \ldots, \txepmax$}{
        Choose to then execute an action $a$ based on $\pi(\cdot|s; \vecb{\theta})$. \\
        Observe the next state $s'$ (and the reward). \\
        $\vecb{z}(s, a, \vecb{\theta}) \gets q_b^\pi(s, a) \nabla \log \pi(a|s; \vecb{\theta})$.
            \Comment*[r]{A temporary vector $\vecb{z}$}
        $\mat{X}(s, a, \vecb{\theta}) \gets
            \nabla \log \pi(a|s; \vecb{\theta}) \nabla^\intercal \log \pi(a|s; \vecb{\theta})$.
            \Comment*[r]{A temporary matrix $\mat{X}$}
        \If{$t = \txepmax$ (\ie the last timestep in an experiment-episode)}{
            $\hat{\nabla} g(\vecb{\theta}) \gets \vecb{z}(s, a, \vecb{\theta})$.
                \Comment*[r]{Based on \eqref{equ:gaingrad_simple}}
            $\hat{\nabla} b(\vecb{\theta}) \gets
                \hat{\nabla} b(\vecb{\theta}) + \nabla q_b^\pi(s, a)
                - (\txepmax - 1) \hat{\nabla} g(\vecb{\theta})$.
                \Comment*[r]{Based on \eqref{equ:grad_bias}} \label{line:gradbias}
            $\fgamathat(\vecb{\theta}) \gets
                \mat{X}(s, a, \vecb{\theta})$.
                \Comment*[r]{Based on \eqref{equ:fisher_gain}}
            $\fbamathat(\vecb{\theta}) \gets
                \fbamathat(\vecb{\theta})
                + \tabsminhat \mat{X}(s, a, \vecb{\theta})$.
                \Comment*[r]{Based on \eqref{equ:fisher_b_sampling}}
            Append $\hat{\nabla} g$, $\hat{\nabla} b$, $\fgamathat$, and $\fbamathat$
            to their corresponding lists
            $\setname{G}_g$, $\setname{G}_b$, $\setname{F}_g$, and $\setname{F}_b$.
        }
        \Else{
            $\hat{\nabla} b(\vecb{\theta}) \gets
                \hat{\nabla} b(\vecb{\theta}) + \vecb{z}(s, a, \vecb{\theta})$.
                    \Comment*[r]{\small Subtraction of $\hat{\nabla} g$
                        in \eqref{equ:grad_bias} is in \lineref{line:gradbias}}
            \If{$t < \tabsminhat$}{
                $\fbamathat(\vecb{\theta}) \gets
                    \fbamathat(\vecb{\theta}) + \mat{X}(s, a, \vecb{\theta})$.
                    \Comment*[r]{Based on \eqref{equ:fisher_b_sampling}}
            }
        }
        Update the state $s \gets s'$.
    }
}

Compute the sample means of gain and bias gradients
    based on $\setname{G}_g$ and $\setname{G}_b$, \newline
    namely $\bar{\nabla} g(\vecb{\theta}) \gets (\sum_i \setname{G}_g[i])/\nxep$,
    and $\bar{\nabla} b(\vecb{\theta}) \gets (\sum_i \setname{G}_b[i])/\nxep$. \\
Compute the sample means of gain and bias Fisher matrices
    based on $\setname{F}_g$ and $\setname{F}_b$, \newline
    namely $\fgamatbar(\vecb{\theta}) \gets (\sum_i \setname{F}_g[i])/\nxep$,
    and $\fbamatbar(\vecb{\theta}) \gets (\sum_i \setname{F}_b[i])/\nxep$. \\
\vspace{1.5mm}
\Return The sample means
    $\bar{\nabla} g(\vecb{\theta})$, $\bar{\nabla} b(\vecb{\theta})$,
    $\fgamatbar(\vecb{\theta})$, and $\fbamatbar(\vecb{\theta})$.
\end{algorithm}

\subsection{Barrier versus penalty methods} \label{sec:barrier_vs_penalty}

For obtaining nBw-optimal policies, the utilization of a barrier method
(in \algref{alg:gainbiasbarrier_optim}) brings several advantages over
its penalty counterpart (including the augmented Lagrangian variant)
in the form of
\begin{equation}
\pi_b^* \in \argmax_{\pi \in \piset{SR}}\ \Big[
    \mathfrak{b}(\pi) \eqdef b(\pi) - \frac{1}{2}\ \phi \norm{\nabla g(\pi)}_2^2
\Big],
\quad \text{with a positive penalty parameter $\phi$}.
\label{equ:optim_penalty}
\end{equation}
Here, a penalty term is used for transforming
the constrained (bi-level) optimization \eqref{equ:bilevel_opt} into
an unconstrained (single-level) optimization \eqref{equ:optim_penalty}.
Note that this penalty term is different from entropy regularization that is
typically utilized for encouraging exploration.

We identify three advantages of the barrier method for solving \eqref{equ:bilevel_opt}.
\textbf{First}, as an interior-point approach, it begins with a preliminary phase
searching for a feasible point of \eqref{equ:biasbarrier_opt},
which represents a gain-optimal policy.
This means that we can benefit from the state-of-the-art gain optimization
(or alternatively, the discounted-reward optimization with $\gamma$ sufficiently close to~1).
For instance, the work of \citet{yang_2020_sosp, liu_2020_npg, zhang_2020_polgrad}
about global optimality and global convergence of
gain (or discounted-reward) policy gradient methods.
In non-interior point approaches, including penalty methods,
there is no such preliminary phase.

\begin{wrapfigure}[18]{r}{0.5\textwidth}
\centering
\vspace{0mm}
\includegraphics[width=0.5\textwidth]{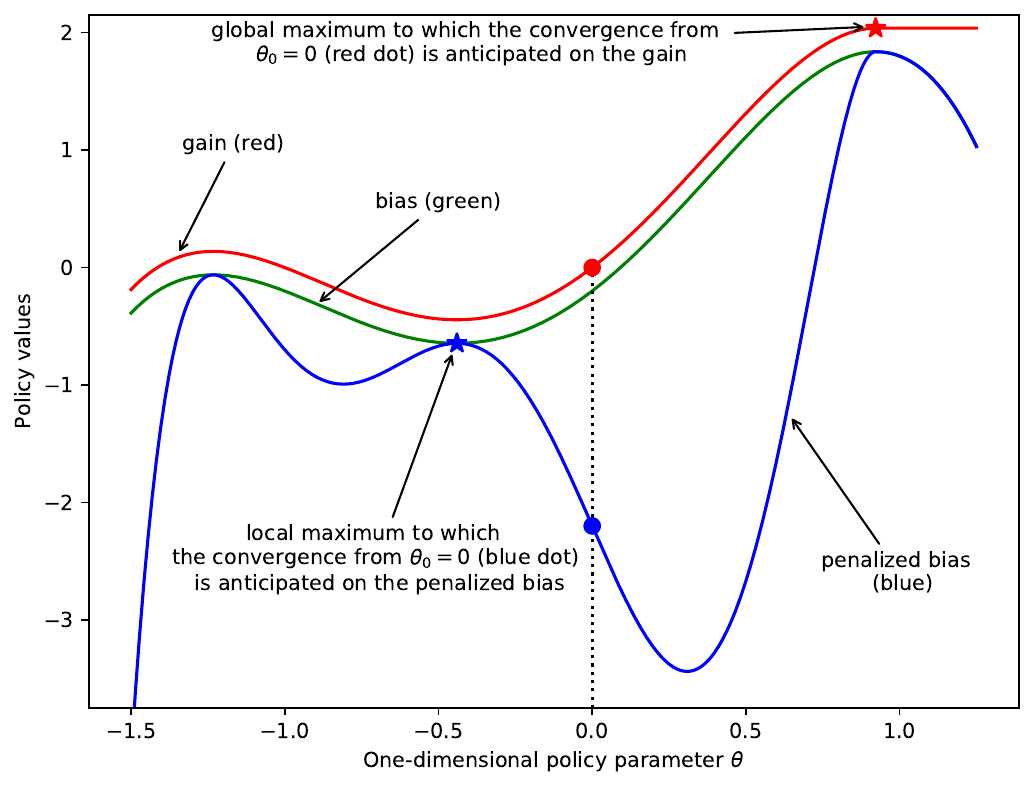}
\caption{An illustration of a deep valley on the penalized bias landscape
along with comparisons to gain and bias landscapes.
The initial $\theta_0$ is at 0.}
\label{fig:penalized_bias_illustration}
\end{wrapfigure}

\textbf{Second}, the barrier function \eqref{equ:biasbarrier_opt} contains only the gain value.
In contrast, a penalty function likely involves the gain gradients,
such as $\norm{\nabla g(\vecb{\theta})}$ in \eqref{equ:optim_penalty}.
This implies that first-order optimization on the penalized objective necessitates
the second derivative
(as in $\nabla \mathfrak{b}(\vecb{\theta}) = \nabla b(\vecb{\theta})
    - \phi \nabla^\intercal g(\vecb{\theta}) \nabla^2 g(\vecb{\theta})$),
whereas any second-order optimization necessitates the third derivative.
Since a penalty method does not necessarily begin at a feasible point,
its penalized objective, \eg \eqref{equ:optim_penalty}, cannot be turned into
a function of solely gain terms similar to~\eqref{equ:biasbarrier_opt}.

\textbf{Third}, we observe that the gain landscape
(on which the barrier method's preliminary operates) is more favourable
than the penalized bias landscape in achieving higher local maxima.
The penalized objective $\mathfrak{b}$ in \eqref{equ:optim_penalty}
may create a deep valley in between its two local maxima,
as illustrated in \figref{fig:penalized_bias_illustration}.
Thus, optimization from some initial point $\vecb{\theta}_0$ in the valley is
attracted to the nearest (instead of highest) local maximum of~$\mathfrak{b}$.
This phenomenon does not occur in the gain landscape when
the gain optimization (part of the preliminary phase of the barrier method) is
started at the same initial point $\vecb{\theta}_0$.
Note that such a valley of $\mathfrak{b}$ is formed partly because
$\norm{\nabla g(\vecb{\theta})}_2^2$ is always zero at the stationary points of bias,
but is likely to be non-zero otherwise.

\section{Experimental setup} \label{sec:xprmt_setup_nbwpg}

In this section, we discuss the implementation and experimental details for
the proposed nBw-optimal policy gradient algorithm (\secref{sec:bwoptim_algo}).
We begin with environment descriptions (\secref{sec:env_nbpwg}), followed by
policy parameterization (\secref{sec:polparam}) and the parameter values of
\algrefand{alg:gainbiasbarrier_optim}{alg:gradfisher_sampling} (\secref{sec:algoparam}).
In \secref{sec:xprmtsetup_disrew}, we provide the settings for
the discounted-reward policy gradient method, which is used for comparison.
Throughout the experiments, $\log$ refers to the natural logarithm, \ie $\log_e$.
The research code is publicly available at \url{https://github.com/tttor/nbwpg}.

\subsection{Environments} \label{sec:env_nbpwg}

There are two types of environments that we consider in this work:
a) environments for which all stationary policies are gain optimal, and
b) those for which some stationary policies are gain optimal and some are not.
Each environment type has three instances.
The first type has Env-A1, Env-A2, and Env-A3, whereas
the second type has Env-B1, Env-B2, and Env-B3.
Both types share three common properties:
i)~all stationary policies induce unichain Markov chains,
ii)~the nBw optimality is the most selective, and
iii)~there is no terminal state,
hence all environments are continuing (non-episodic).

In every environment, all states have the same number of available actions.
That is, two actions per state.
At some states, those two actions are duplicates
(having the same transition probability and reward).
This is simply to ease the implementation of policy parameterization,
which accomodates generalization across 2-action states.
Additionally, every environment's initial state distribution $\isd$ has
a single state support $s^0 \in \setname{S}$, that is $\isd(s^0) = 1$.

The symbolic diagrams and descriptions for Env-A1, A2, B1, B2, and B3 are provided
in \figrefand{fig:symbolic_diagram_a12_b12}{fig:symbolic_diagram_b3}.
Env-A3 is complex and unintuitive to draw therefore we describe it in words below.

\paragraph{\textbf{Env-A3:}}
This environment has 4 states with 2 actions per state,
yielding $16 (=2^4)$ stationary deterministic policies.
It resembles $2$-by-$2$ grid-world navigation, whose
start state $s^0$ is at the bottom-left, whereas goal state $s^3$ at the top-right.
Each state has its own unique action set, namely:
$s^0$ has East and North actions,
$s^1$ (bottom-right) has West and North actions,
$s^2$ (top-left) has East and South actions, and
$s^3$ has 2 self-loop (self-transition) actions.
In all but the goal state, the navigating agent goes to the intended direction
at $90\%$ of the times.
Otherwise, it goes to the other direction.
Every action incurs a cost of~$-1$, except those at the goal state (having a zero cost).

All stationary deterministic policies for this environment have an equal gain of~$0$,
hence all are gain optimal.
There are 6 distinct bias values, ranging from $-17.000$ to $0.778$.
There exist 4 nBw-optimal policies (out of 16, hence $4/16=0.25$) that choose
any action at the start state $s^0$, East action at the top-left state $s^2$,
North action at the bottom-right state $s^1$, and
any action at the goal state $s^3$.

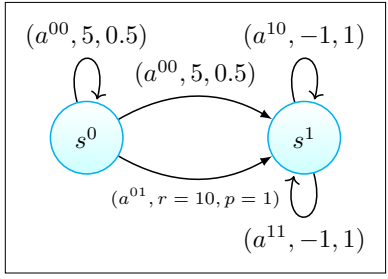
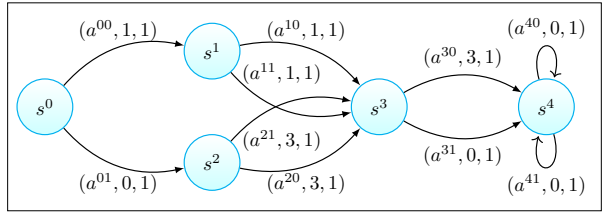
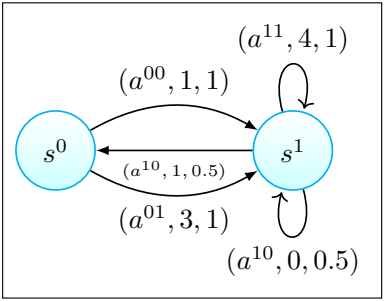
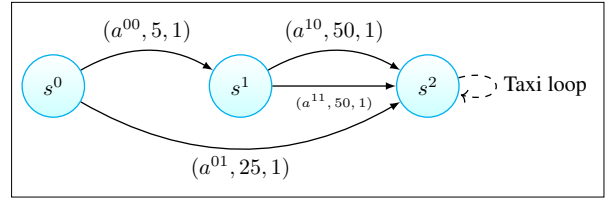
\begin{figure}[t]
\centering

\begin{subfigure}{0.375\textwidth}
\resizebox{\textwidth}{!}{
\begin{tikzpicture}[ %
show background rectangle,
node distance = 3cm and 3cm, on grid,
-latex, %
semithick, %
state/.style={circle, top color = white,  bottom color = stateblue!20,
draw, stateblue, text=black, minimum width = 1cm},
]
\node[state](A)[] {$s^0$};
\node[state](B)[right=of A] {$s^1$};
\path (A) edge [loop above] node[above] {$(a^{00}, 5, 0.5)$} (A);
\path (A) edge [bend left] node[above] {$(a^{00}, 5, 0.5)$} (B);
\path (A) edge [bend right] node[below] {\scriptsize{$(a^{01}, r=10, p=1)$}} (B);
\path (B) edge [loop above] node[above] {$(a^{10}, -1, 1)$} (B);
\path (B) edge [loop below] node[below] {$(a^{11}, -1, 1)$} (B);
\end{tikzpicture}
} %
\subcaption{\textbf{Env-A1}.
There exist four stationary deterministic policies,
whose gains are all $-1$.
Two policies have the bias of $11$, whereas the other two have the bias of $12$.
There are two nBw-optimal policies (out of four, hence $2/4=0.5$) that
choose $a^{00}$ at~$s^0$, and any action at~$s^1$.
}
\end{subfigure}%
\hspace{0.025\textwidth}
\begin{subfigure}{0.575\textwidth}
\resizebox{\textwidth}{!}{
\begin{tikzpicture}[
show background rectangle,
node distance = 1cm and 3cm, on grid,
-latex, %
semithick, %
state/.style={circle, top color = white,  bottom color = stateblue!20,
draw, stateblue, text=black, minimum width = 1cm},
]
\node[state](A)[] {$s^0$};
\node[state](B)[above right=of A] {$s^1$};
\node[state](C)[below right=of A] {$s^2$};
\node[state](D)[below right=of B] {$s^3$};
\node[state](E)[right=of D] {$s^4$};
\path (A) edge [bend left] node[above] {$(a^{00}, 1, 1)$} (B);
\path (A) edge [bend right] node[below] {$(a^{01}, 0, 1)$} (C);
\path (B) edge [bend left] node[above] {$(a^{10}, 1, 1)$} (D);
\path (B) edge [bend right] node[above] {} (D);
\path (C) edge [bend right] node[below] {$(a^{20}, 3, 1)$} (D);
\path (C) edge [bend left] node[below] {} (D);
\path (D) edge [bend left] node[above] {$(a^{30}, 3, 1)$} (E);
\path (D) edge [bend right] node[below] {$(a^{31}, 0, 1)$} (E);
\path (E) edge [loop above] node[above] {$(a^{40}, 0, 1)$} (E);
\path (E) edge [loop below] node[below] {$(a^{41}, 0, 1)$} (E);
\node at (4.25, 0.6){$(a^{11}, 1, 1)$};
\node at (4.25, -0.6){$(a^{21}, 3, 1)$};
\end{tikzpicture}
} %
\subcaption{\textbf{Env-A2}.
There are 5 states (where $s^4$ is absorbing) and 2 actions per state,
yielding $32 (=2^5)$ stationary deterministic policies.
All of those policies have the same gain of~0, hence they are all gain optimal.
There are 4 distinct bias values, namely $2$, $3$, $5$ and $6$.
There exist 8 nBw-optimal policies (out of 32, hence $8/32=0.250$) that choose
$a^{01}$ at $s^0$, $a^{30}$ at $s^3$, and any action at the other states.
}
\end{subfigure}%
\hspace{0.025\textwidth}
\begin{subfigure}{0.375\textwidth}
\resizebox{\textwidth}{!}{
\begin{tikzpicture}[
show background rectangle,
node distance = 3cm and 3cm, on grid,
-latex, %
semithick, %
state/.style={circle, top color = white,  bottom color = stateblue!20,
draw, stateblue, text=black, minimum width = 1cm},
]
\node[state](A)[] {$s^0$};
\node[state](B)[right=of A] {$s^1$};
\path (A) edge [bend left] node[above] {$(a^{00}, 1, 1)$} (B);
\path (A) edge [bend right] node[below] {$(a^{01}, 3, 1)$} (B);
\path (B) edge [left] node[below] {\tiny{$(a^{10}, 1, 0.5)$}} (A);
\path (B) edge [loop above] node[above] {$(a^{11}, 4, 1)$} (B);
\path (B) edge [loop below] node[below] {$(a^{10}, 0, 0.5)$} (B);
\end{tikzpicture}
} %
\subcaption{\textbf{Env-B1}.
There are four stationary deterministic policies, from which
there are 3 distinct gain values (\ie $0.67, 1.33, 4$), and
4 distinct bias values (\ie $-3, -1, 0.22, 1.11$).
There exists a single nBw-optimal policy (out of four, hence $1/4=0.250$)
that chooses $a^{01}$ at $s^0$, and $a^{11}$ at $s^1$.
Its bias is of $-1$.
}
\end{subfigure}
\hspace{0.025\textwidth}
\begin{subfigure}{0.575\textwidth}
\resizebox{\textwidth}{!}{
\begin{tikzpicture}[
show background rectangle,
node distance = 3cm and 3cm, on grid,
-latex, %
semithick, %
state/.style={circle, top color = white,  bottom color = stateblue!20,
draw, stateblue, text=black, minimum width = 1cm},
]
\node[state](A)[] {$s^0$};
\node[state](B)[right=of A] {$s^1$};
\node[state](C)[right=of B] {$s^2$};
\path (A) edge [bend left] node[above] {$(a^{00}, 5, 1)$} (B);
\path (A) edge [bend right] node[below] {$(a^{01}, 25, 1)$} (C);
\path (B) edge [bend left] node[above] {$(a^{10}, 50, 1)$} (C);
\path (B) edge [right] node[below] {\tiny{$(a^{11}, 50, 1)$}} (C);
\path (C) edge [dashed, loop right] node[right] {Taxi loop} (C);
\end{tikzpicture}
} %
\subcaption{\textbf{Env-B2}.
The Taxi loop at $s^2$ refers to the 3-state Taxicab environment of
\citet[\secc{3.5}]{hordijk_1985_disc},
where $s^2$ denotes the state number~1 in the original environment description.
The Taxi loop (including $s^2$) forms a recurrent class.
Thus, this environment has 5 states in total, and 2 actions per state,
yielding $32 (=2^5)$ stationary deterministic policies.
There are 8 distinct gain values, ranging from $8.621$ to $13.345$,
and 16 distinct bias values, ranging from $-1.683$ to $35.907$.
There are 4 policies with the maximum gain, whose bias are of $-0.289$ and $16.367$.
There exist 2 nBw-optimal policies (out of 32, hence $2/32=0.0625$)
that choose $a^{00}$ at $s^0$, any action at $s^1$, and
action number 2 (of the original description) at all states in the Taxi loop.
}
\end{subfigure}

\caption{Symbolic diagrams of Env-A1, A2, B1, and B2.
Each node indicates a state, which is labeled with a superscript state-index,
\ie $\setname{S} = \{ s^0, s^1, \ldots, s^{\setsize{S} - 1} \}$.
Each 3-tuple label of solid edges indicates the corresponding action~$a$,
immediate reward $r(s, a, s')$, and transition probability $p(s'|s, a)$.
The action is labeled with 2-digit superscript action-index:
the first digit denotes the state index in which such an action is available, whereas
the second the action number (either $0$ or $1$) in that state.
Here, all above-mentioned bias values are measured from
a deterministic initial state $s_0 = s^0$.
For Env-A3 and B3, refer to the text in \secref{sec:env_nbpwg} and
\figref{fig:symbolic_diagram_b3}, respectively.
}
\label{fig:symbolic_diagram_a12_b12}

\end{figure}

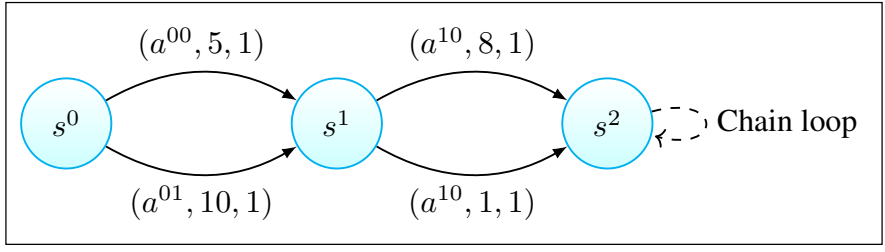
\begin{figure}[t]
\centering

\resizebox{0.85\textwidth}{!}{
\begin{tikzpicture}[
show background rectangle,
node distance = 3cm and 3cm, on grid,
-latex, %
semithick, %
state/.style={circle, top color = white,  bottom color = stateblue!20,
draw, stateblue, text=black, minimum width = 1cm},
]
\node[state](A)[] {$s^0$};
\node[state](B)[right=of A] {$s^1$};
\node[state](C)[right=of B] {$s^2$};
\path (A) edge [bend left] node[above] {$(a^{00}, 5, 1)$} (B);
\path (A) edge [bend right] node[below] {$(a^{01}, 10, 1)$} (B);
\path (B) edge [bend left] node[above] {$(a^{10}, 8, 1)$} (C);
\path (B) edge [bend right] node[below] {$(a^{10}, 1, 1)$} (C);
\path (C) edge [dashed, loop right] node[right] {Chain loop} (C);
\end{tikzpicture}
} %

\caption{\textbf{Env-B3}.
The Chain loop at $s^2$ refers to the 5-state Chain environment
of \citet[\fig{1}]{strens_2000_bfrl},
where $s^2$ denotes the state number~1 in the original environment description.
The Chain loop (including $s^2$) forms a recurrent class.
Thus, there are 7 states in total and 2 actions per state,
yielding $128 (=2^7)$ stationary deterministic policies that induce
32 distinct gain values, ranging from $0.732$ to $3.677$,
and 128 distinct bias values, ranging from $-14.461$ to $15.871$.
There are 4 policies with the maximum gain, whose bias are of
$-14.461, -7.461, -9.461$, and $-2.461$.
There exists a single nBw-optimal policy (out of 128, hence $1/128 \approx 0.008$)
that chooses $a^{01}$ at $s^0$, $a^{10}$ at $s^1$, and
forward actions at all states in the Chain loop.
For more description, refer to the caption of \figref{fig:symbolic_diagram_a12_b12}.
}
\label{fig:symbolic_diagram_b3}

\end{figure}

\subsection{Policy parameterization} \label{sec:polparam}

In order to visualize the parameter space in 2D plane, we limit the number of
policy parameters to two.
That is, $\vecb{\theta} \in \Theta = \real{2}$,
where $\vecb{\theta} = [\theta_0, \theta_1]^\intercal$.
It turns out that this parameterization contains randomized stationary policies
that are very close to the nBw-optimal policy for our target environments
(\secref{sec:env_nbpwg}).

Thus, a randomized stationary policy $\pi \in \piset{S}$ is parameterized as
\begin{equation*}
\pi(a^{s0} | s; \vecb{\theta}) = \sigma(f(s) \theta_0 + \theta_1),
\quad \text{and} \quad
\pi(a^{s1} | s; \vecb{\theta}) = 1 - \pi(a^{s0} | s; \vecb{\theta}),
\quad \forall s \in \setname{S}, \forall \vecb{\theta} \in \Theta,
\end{equation*}
where
\begin{itemize}
\item $a^{s0}$ and $a^{s1}$ are all two actions available at state $s$,
\item $\sigma(x) = 1 / (1 + \exp(-x))$ is the sigmoid function for an input $x$, and
\item $f(s) \in \{ 0, 1, \ldots, \setsize{S} - 1 \}$ is the state feature function
    that returns the state index of $s$.
\end{itemize}
Note that such sigmoid parameterization does not contain stationary policies
that are exactly deterministic (non-randomized),
but contains those very close to being deterministic.

\subsection{Algorithm parameters} \label{sec:algoparam}

Below, we list the default parameter values and settings of
\algrefand{alg:gainbiasbarrier_optim}{alg:gradfisher_sampling}.
\begin{itemize}
\item The policy parameterization is described in \secref{sec:polparam}.
    There are $1681 (=41^2)$ combination of $\theta_0$ and $\theta_1$
    used as initial policy parameter values.
    They come from a discretized version of the parameter space $\Theta$
    in the range of $-10$ to $+10$ with $0.5$ resolution in each dimension.

\item The tolerance for gain-optimization convergence is $\epsilon \gets 10^{-4}$.
    The $\ell_2$-norm is used.

\item The initial barrier parameter is
    $\beta_0 \gets 0.1$ for Env-A, whereas $\beta_0 \gets 100$ for Env-B.
    This is based on tuning experiments described in \secref{sec:barrierparamtuning}.
    The barrier divisor is always $\beta_{\mathrm{div}} \gets 10$.

\item The maximum inner and outer iterations are $n_j \gets 100$ and $n_k \gets 1$,
    respectively.

\item The sample size is $\nxep \gets 16$, which is used in sampling-based approximation.

\item The maximum timestep of an experiment-episode (trial) is set as
    $\txepmax \gets \tmix + 5$, where
    $\tmix \eqdef \max_{\vecb{\theta} \in \tilde{\Theta}} \tmix^{\pi(\vecb{\theta})}$.
    Here, $\tilde{\Theta}$ is a discretized version of $\Theta$
    with $0.1$ resolution in the range of $-10$ to $+10$, and
    $5$ additional timesteps are to partially accomodate the mixing times of
    policies whose parameters are not contained in $\tilde{\Theta}$.
    Each $\tmix^\pi$ is set to the smallest timestep $t$ when
    the entry-wise numerical difference between the transient and stationary
    state distributions satisfies
    $|p_\pi^\star(s) - p_\pi^t(s|s_0)|
    \le \mathrm{tol}_{abs} + \mathrm{tol}_{rel} \times p_\pi^t(s|s_0)$
    for a deterministic initial state $s_0$ and all states $s \in \setname{S}$,
    where $\mathrm{tol}_{abs} = 10^{-6}$ and  $\mathrm{tol}_{rel} = 10^{-5}$.

\item All policy evaluations are exact, hence exact state values~$v$
    and exact state-action values~$q$.
    This can be thought of as having an oracle value function approximator.
    We remark that a naive estimator for $q_b^\pi(s, a)$ in \eqref{equ:bias_q}
    would be a total sum of rewards earned since the action~$a$ is
    executed at state~$s$,
    substracted by the last reward (approximating the gain)
    times the number of timesteps taken while following on a policy $\pi$
    till the end of an experiment-episode.

\item All step lengths are computed using the backtracking linesearch method
    \citep[\alg{9.2}]{boyd_2004_cvxopt}
    with an initial step length of~$1$ and a maximum of~$100$ iterations.

\item The gain optimization is carried out exactly
    since we focus on bias-only and bias-barrier optimization.
    This is to suppress the number of approximation layers, as well as
    to isolate the cause of improvement to solely the proposed expressions of
    the bias gradient \eqref{equ:grad_bias}, the bias Fisher \eqref{equ:fisher_b_sampling},
    and the bias barrier \eqref{equ:biasbarrier_opt}.
    Note that the gain policy gradient is given by
    \begin{equation}
    \nabla v_g(\vecb{\theta})
    = \sum_{s \in \setname{S}} \sum_{a \in \setname{A}}
        p_{\pi}^\star(s)\ \pi(a|s; \vecb{\theta}) q_b^\pi(s, a)
        \nabla \log \pi(a|s; \vecb{\theta}),
    \qquad \forall \vecb{\theta} \in \Theta,
    \label{equ:gaingrad_simple}
    \end{equation}
    as shown by \citet[\thm{1}]{sutton_2000_pgfnapprox}.
\end{itemize}

\subsection{Discounted-reward policy gradient methods}
\label{sec:xprmtsetup_disrew}

Here, we provide the setup for the discounted-reward policy gradient experiments,
to which we compare our proposed nBw-optimal policy gradient method
(\secref{sec:bwoptim_algo}).
Such a comparison is carried out because the nBw-optimal policies are also
attainable via discounted settings whenever the discount factor is proper,
as discussed in \secref{sec:rwork_nbwpg}.
We begin with a brief overview of the discounted method.

The expected total discounted reward with a discount factor $\gamma \in [0, 1)$
is defined as
\begin{equation}
v_\gamma(\pi, s) \eqdef
    \lim_{\tmax \to \infty}
    \E{A_t \sim \pi(\cdot| s_t), S_t \sim p(\cdot|s_t, a_t)}{
        \sum_{t = 0}^{\tmax - 1} \gamma^t r(S_t, A_t) \Big| S_0 = s, \pi},
\quad \forall s \in \setname{S}, \forall \pi \in \piset{S}.
\label{equ:disrew_v}
\end{equation}
Stacking $v_\gamma$ of all states $s \in \setname{S}$ gives us a vector
$\vecb{v}_\gamma(\pi) \in \real{\setsize{S}}$.
Thus, we can write
\begin{equation}
\vecb{v}_\gamma(\pi) = \ppimat^\gamma \rpivec,
\quad \text{where}\
\ppimat^\gamma \eqdef
\underbrace{
    \lim_{\tmax \to \infty} \sum_{t = 0}^{\tmax - 1} (\gamma \ppimat)^t
    = (\mat{I} - \gamma \ppimat)^{-1}
}_\text{See \citet[\cor{C.4}]{puterman_1994_mdp}},\
\text{and}\ \rpivec \in \real{\setsize{S}}.
\label{equ:disrew_v_vector}
\end{equation}
Each $[s_0, s]$-component of $\ppimat^\gamma \in \real{\setname{S} \times \setname{S}}$
indicates the \emph{improper} discounted state probability.
That is,
\begin{equation*}
p_\pi^\gamma(s|s_0)
\eqdef \lim_{\tmax \to \infty} \sum_{t = 0}^{\tmax - 1}
\gamma^t \underbrace{\prob{S_t = s | s_0, \pi}}_{p_\pi^t(s|s_0)},\
\text{which is improper as}\
\sum_{s \in \setname{S}} p_\pi^\gamma(s|s_0) = \frac{1}{1 - \gamma} \ne 1.
\end{equation*}

A policy $\pi_\gamma^* \in \piset{S}$ is $\gamma$-discounted optimal if it satisfies
$v_\gamma(\pi_\gamma^*, s_0) \ge v_\gamma(\pi, s_0),
\forall \pi \in \piset{S}, \forall s_0 \in \setname{S}$.
As an approximation to this condition
(involving partial ordering in all states $s_0 \in \setname{S}$),
a discounted-reward policy gradient method maximizes the discounted value of
a policy with respect to some initial state distribution $\isd$, namely
$\hat{\pi}_\gamma^* \in \argmax_{\pi \in \piset{S}} \E{\isd}{v_\gamma(\pi, S_0)}$.
Since such an objective depends on $\isd$,
the resulting optimal policy is said to be \emph{non}-uniformly optimal
\citep[\deff{2.1}]{altman_1999_cmdp}.

In a similar fashion to \eqref{equ:polparam_update}, the policy parameter update
needs a gradient of $v_\gamma$ along with its preconditioner, \eg a Fisher matrix.
\cite{sutton_2000_pgfnapprox} showed that the gradient is given by
\begin{align}
\nabla v_\gamma(\vecb{\theta}, s_0)
& = \sum_{s \in \setname{S}} p_\pi^\gamma(s|s_0) \sum_{a \in \setname{A}}
    q_\gamma^\pi(s, a) \nabla \pi(a|s; \vecb{\theta}) \notag \\
& = \sum_{t = 0}^{\infty} \sum_{s \in \setname{S}} p_\pi^t(s|s_0) \bigg[
    \gamma^t \underbrace{
        \sum_{a \in \setname{A}} q_\gamma^\pi(s, a) \nabla \pi(a|s; \vecb{\theta})
        }_\text{The so-called immediate gradient} \bigg]
\label{equ:polgrad_disrew2}\\
& = \frac{1}{1 - \gamma}\sum_{s \in \setname{S}}
    \underbrace{(1 - \gamma) p_\pi^\gamma(s|s_0)}_\text{a proper probability}
    \sum_{a \in \setname{A}} q_\gamma^\pi(s, a) \nabla \pi(a|s; \vecb{\theta}),
\qquad \forall \vecb{\theta} \in \Theta, \forall s_0 \in \setname{S},
\label{equ:polgrad_disrew}
\end{align}
where the last expresion enables approximation by sampling from
the (proper, normalized) discounted state distribution,
\ie $(1 - \gamma) p_\pi^\gamma(s|s_0)$.
Here, $q_\gamma^\pi$ denotes the discounted-reward state-action value.
Furthermore, \cite{bagnell_2003_covps, peters_2003_rlhum} introduced
the following discounted-reward Fisher matrix
(for preconditioning the gradient \eqref{equ:polgrad_disrew}),
for all $\vecb{\theta} \in \Theta$ and all $s_0 \in \setname{S}$,
\begin{equation}
\fdamat(\vecb{\theta}, s_0) \eqdef
\sum_{s \in \setname{S}}
\underbrace{(1 - \gamma) p_\pi^\gamma(s|s_0)}_\text{a proper probability}
\underbrace{
    \sum_{a \in \setname{A}} \pi(a|s; \vecb{\theta})
    \nabla \log \pi(a | s; \vecb{\theta}) \nabla^\intercal \log \pi(a | s; \vecb{\theta})
}_\text{The action Fisher $\famat(\vecb{\theta}, s)$ defined in \eqref{equ:fisher_gain}},
\label{equ:fisher_disrew}
\end{equation}
which follows the same pattern as the gain and bias Fisher matrices (\secref{sec:natgrad}).
That is, the Fisher matrix of an optimality criterion is defined as
the action Fisher \eqref{equ:fisher_gain} weighted by
the corresponding state distribution.
This can also be seen as following the pattern of policy value formulas,
\eg \eqref{equ:disrew_v_vector}.

The normalization on $p_\pi^\gamma(s|s_0)$ in \eqrefand{equ:polgrad_disrew}{equ:fisher_disrew}
implies that $\prob{S_t = s | s_0, \pi}$ is weighted by $(1 - \gamma) \gamma^t$
at each timestep $t = 0, 1, \ldots$.
This weight turns out to be equal to the probability that the Markov process
terminates (``first success'') after $t+1$ timesteps (``trials''),
yielding a sequence of $\ell = t+1$ states.
This indeed follows a geometric distribution $Geo(p = 1 - \gamma)$ that gives
\begin{equation*}
\prob{L = \ell} = (1 - p)^{\ell-1} p = (1 - (1 - \gamma))^{(t + 1) - 1} (1 - \gamma)
= \gamma^t (1 - \gamma),
\quad \text{for $\ell=1, 2, \ldots$}.
\end{equation*}
To sample a \emph{single} state from $(1 - \gamma) p_\pi^\gamma$,
one needs to sample a sequence length $\ell$ from $Geo(p = 1 - \gamma)$,
then run the process under $\pi$ from $t=0$ to $\ell - 1$.
Finally, the state $S_{\ell} \sim p(\cdot|s_{\ell -1}, a_{\ell -1})$ is
the desired state sample \citep[\alg{3}]{kumar_2020_zodpg}.
A more sample-efficient technique is proposed by \cite{thomas_2014_biasnac}.
It leverages the expression in \eqref{equ:polgrad_disrew2} so that
$\nabla v_\gamma$ is estimated using state samples
$S_t \sim p_\pi^t$ drawn from undiscounted state distributions from
$t=0$ till some finite timestep (truncating the infinite-horizon model).
Note that in \eqref{equ:polgrad_disrew2}, $\gamma$ is used to discount the immediate gradient.

We consider discounting as an approximation technique towards
maximizing the average reward (gain) objective,
as in $\lim_{\gamma \to 1} (1 - \gamma) v_\gamma(\pi) = v_g(\pi)$,
see \citet[\cor{8.2.5}]{puterman_1994_mdp}.
This is suitable for environments without \emph{inherent} notion of discounting.
Therefore, the \emph{exact} discounted reward experiments in this work
maximize the scaled discounted value, \ie $(1 - \gamma) v_\gamma(\pi)$.
The multiplication by $(1 - \gamma)$ gives expressions that enable sampling for
the gradients \eqref{equ:polgrad_disrew} and the Fisher matrix \eqref{equ:fisher_disrew}.
The factor $(1 - \gamma)$ also helps avoid numerical issues because of
a very large range of $v_\gamma(\pi)$ across all policies $\pi$
(whenever $\gamma$ is very close to~1).

For the exact discounted reward experiments in \secref{sec:nbwpg_xprmtresult},
we applied $\gamma$ values shown in \tblref{tbl:gamma_vals}.
\begin{table}[t]
\centering
\caption{The discount factors $\gamma$ used in three scenarios of
exact discounted reward experiments.
Here, $\gammabw$ denotes the Blackwell's discount factor as in \eqref{equ:bwoptim_v}.
For environment descriptions, refer to \secref{sec:env_nbpwg}.
}
\label{tbl:gamma_vals}

\begin{tabular}{@{}ccccccc@{}}
\toprule
\multirow{2}{*}{Scenarios} & \multicolumn{6}{c}{Environments} \\
\cmidrule(l){2-7} &
  \multicolumn{1}{c}{A1} &
  \multicolumn{1}{c}{A2} &
  \multicolumn{1}{c}{A3} &
  \multicolumn{1}{c}{B1} &
  \multicolumn{1}{c}{B2} &
  \multicolumn{1}{c}{B3} \\
\toprule
\multicolumn{1}{l}{$\gamma < \gammabw$} &
  \multicolumn{1}{l}{$0.50$} &
  \multicolumn{1}{l}{$0.20$} &
  \multicolumn{1}{l}{$0.00$} &
  \multicolumn{1}{l}{$0.00$} &
  \multicolumn{1}{l}{$0.30$} &
  \multicolumn{1}{l}{$0.50$} \\
\midrule
\multicolumn{1}{l}{$\gamma \approx \gammabw$} &
  \multicolumn{1}{l}{$0.95$} &
  \multicolumn{1}{l}{$0.55$} &
  \multicolumn{1}{l}{$0.05$} &
  \multicolumn{1}{l}{$0.05$} &
  \multicolumn{1}{l}{$0.80$} &
  \multicolumn{1}{l}{$0.85$} \\
\midrule
\multicolumn{1}{l}{$\gammabw < \gamma \approx 1$} &
  \multicolumn{6}{c}{$0.99999$} \\
\bottomrule
\end{tabular}
\end{table}

\section{Experimental results}
\label{sec:nbwpg_xprmtresult}

Our primary experimental aim is to validate and gain insights into
the fundamental mechanisms of our algorithms,
rather than demonstrating performance on a large RL problem.
To this end, we consider a small but sufficient setup of benchmark environments
and a policy parameterized as a single neuron with two weights,
\ie $\vecb{\theta} = [\theta_0, \theta_1]^\intercal \in \real{2}$,
as described further in \secref{sec:xprmt_setup_nbwpg}.
This allows us to visualize the algorithm performance across
a large portion of the policy space via enumeration.

\begin{figure}[t!]
\centering

\begin{subfigure}{0.1375\textwidth}
\includegraphics[width=\textwidth]{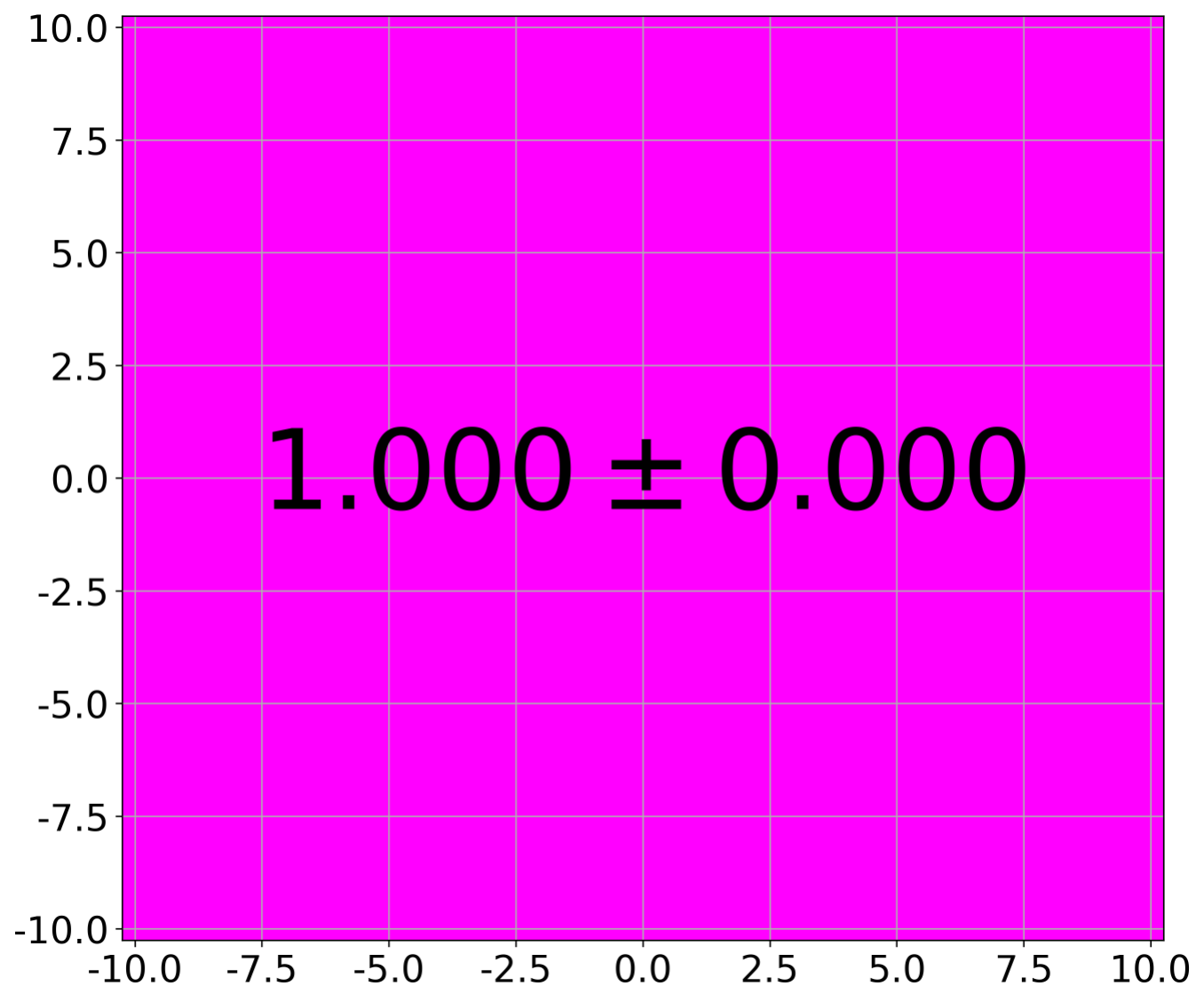}
\end{subfigure}
\begin{subfigure}{0.1375\textwidth}
\includegraphics[width=\textwidth]{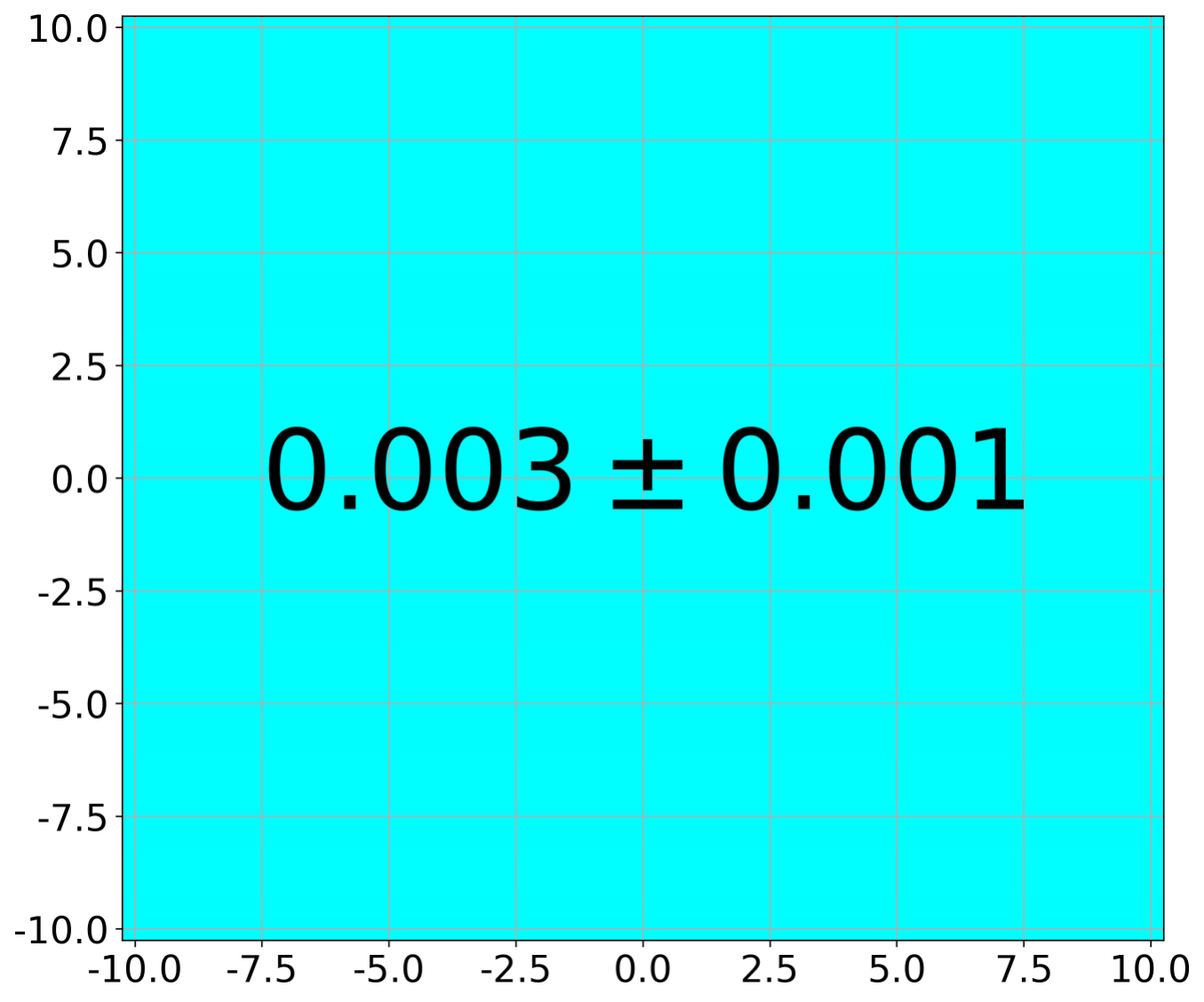}
\end{subfigure}
\begin{subfigure}{0.1375\textwidth}
\includegraphics[width=\textwidth]{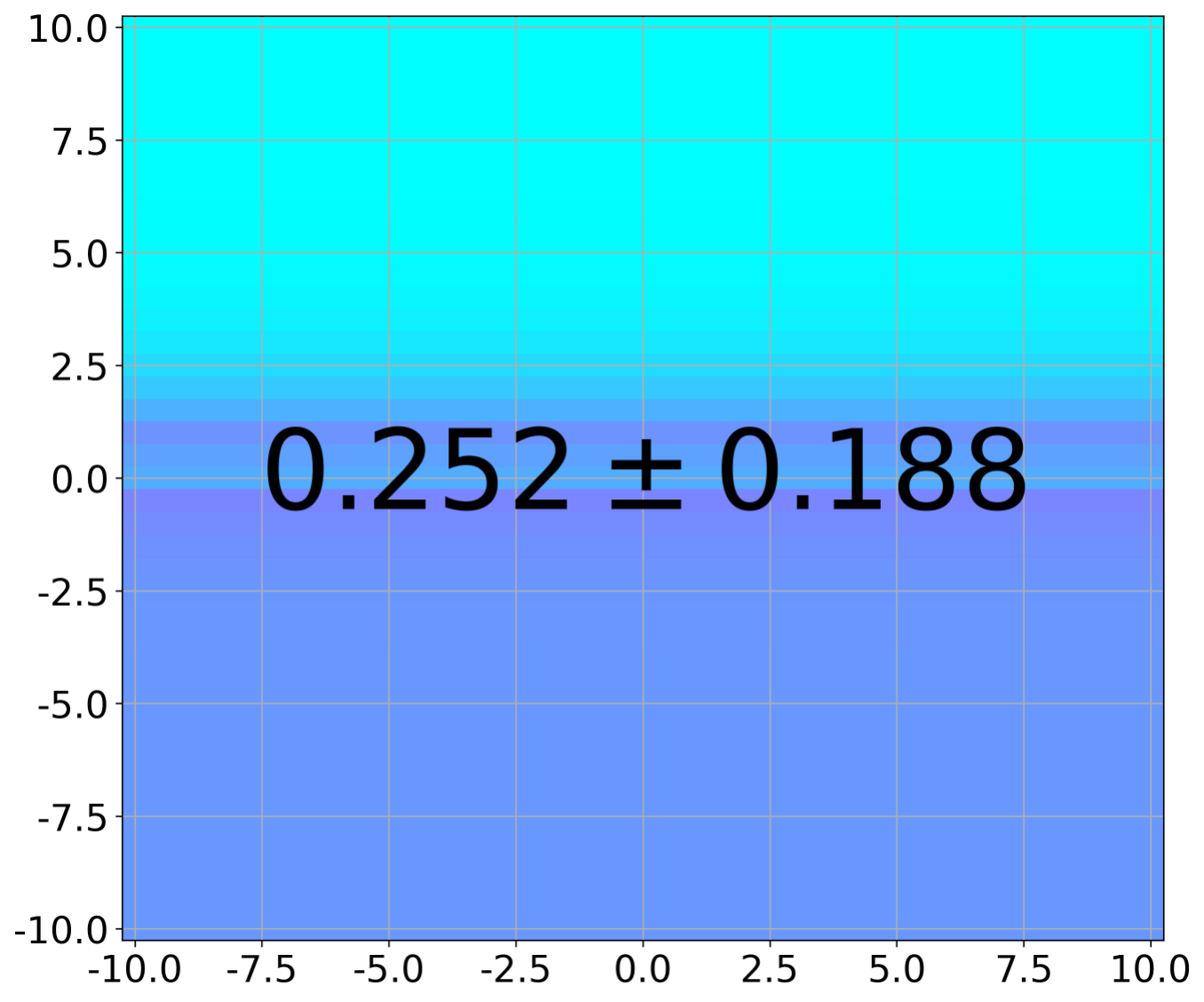}
\end{subfigure}
\begin{subfigure}{0.1375\textwidth}
\includegraphics[width=\textwidth]{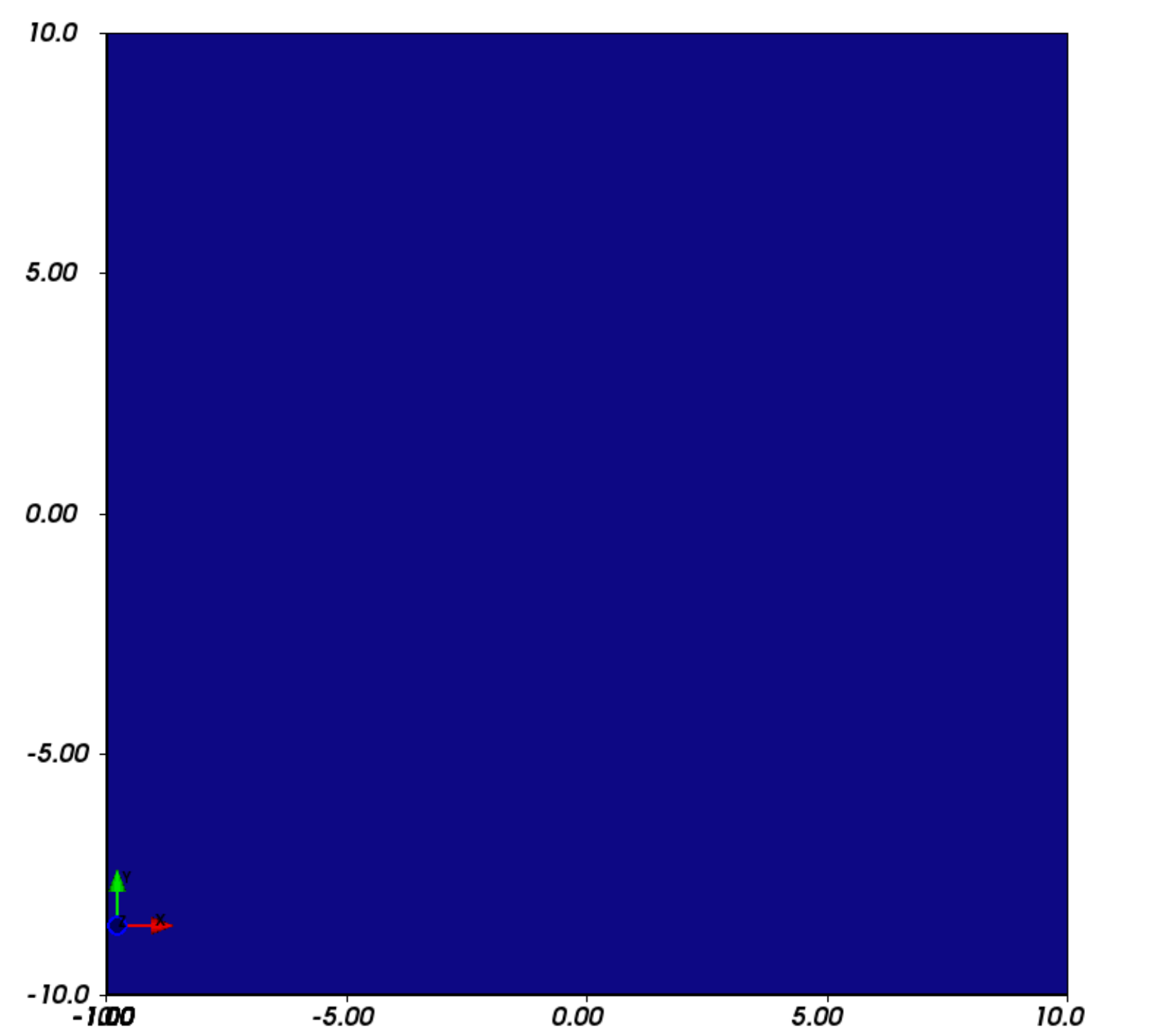}
\end{subfigure}
\begin{subfigure}{0.1375\textwidth}
\includegraphics[width=\textwidth]{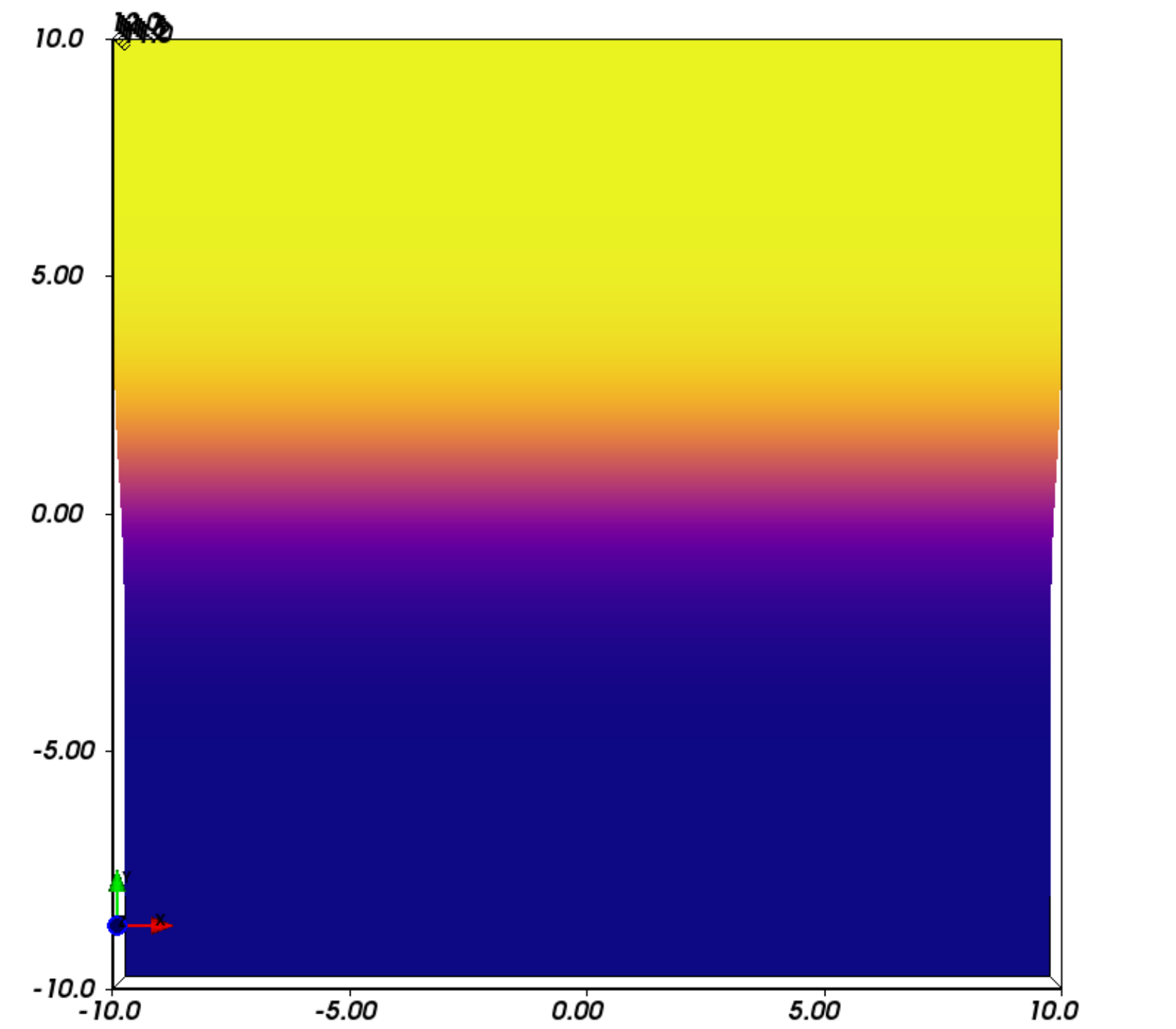}
\end{subfigure}
\begin{subfigure}{0.1375\textwidth}
\includegraphics[width=\textwidth]{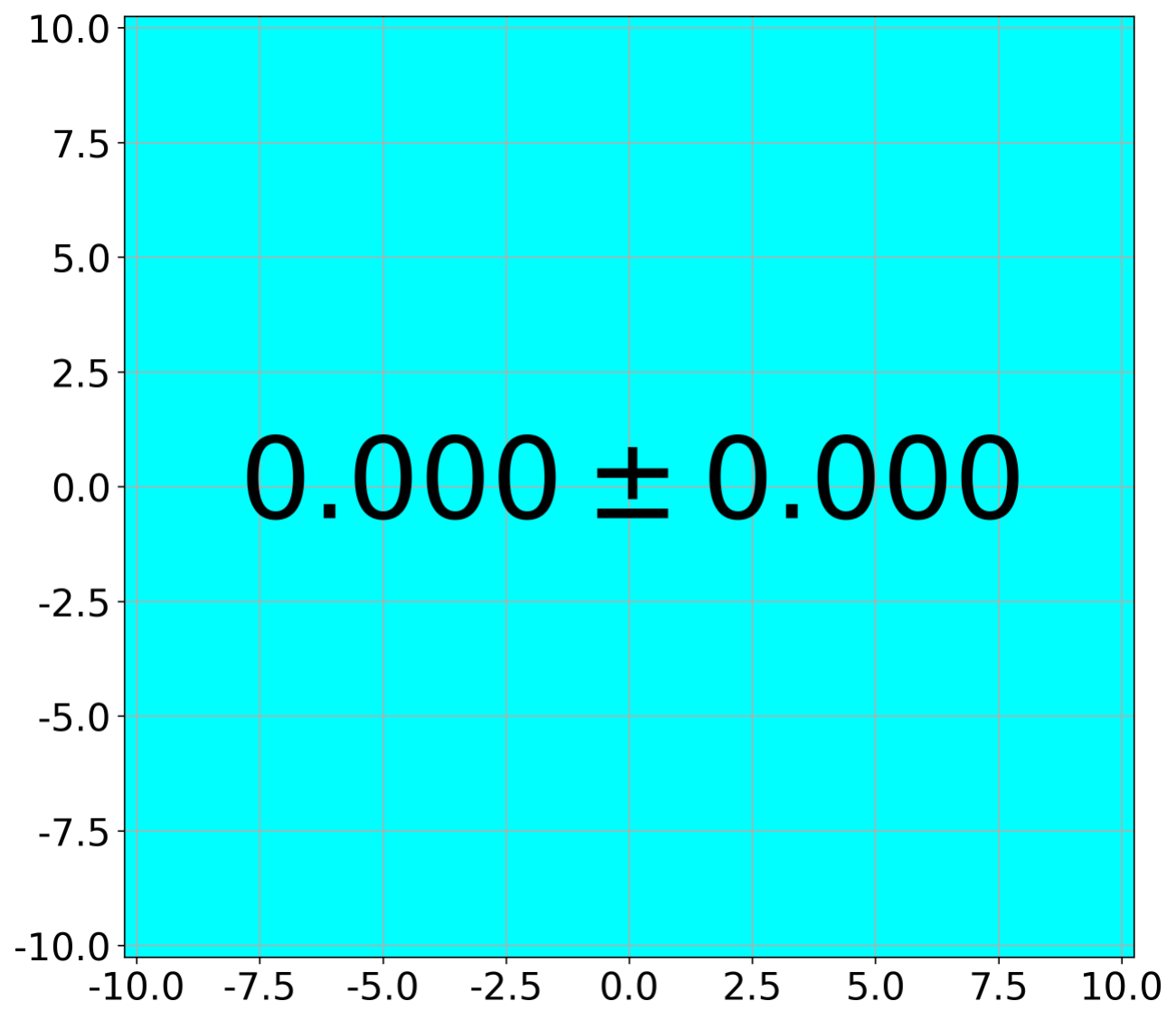}
\end{subfigure}
\begin{subfigure}{0.1375\textwidth}
\includegraphics[width=\textwidth]{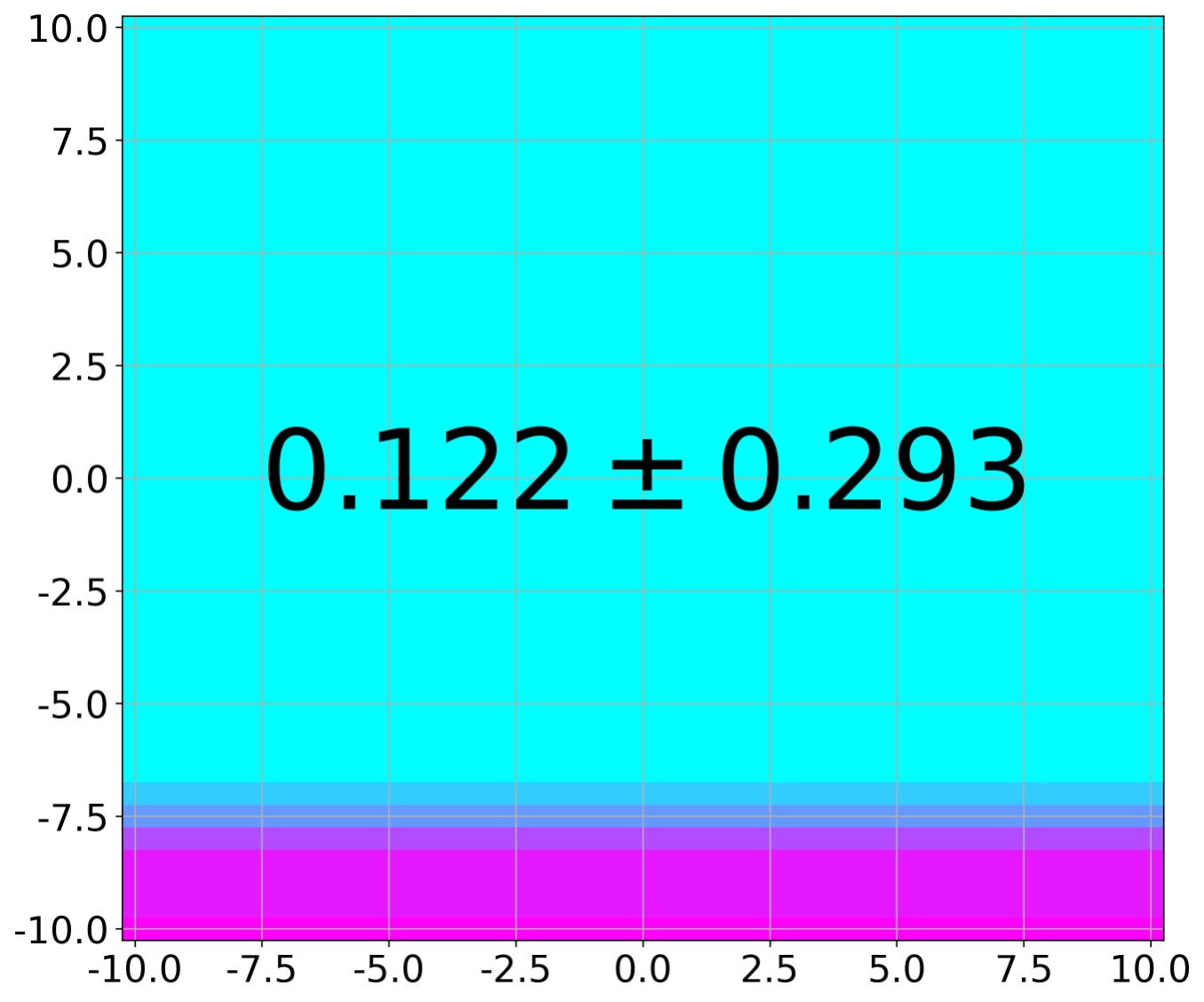}
\end{subfigure}
\begin{subfigure}{0.1375\textwidth}
\includegraphics[width=\textwidth]{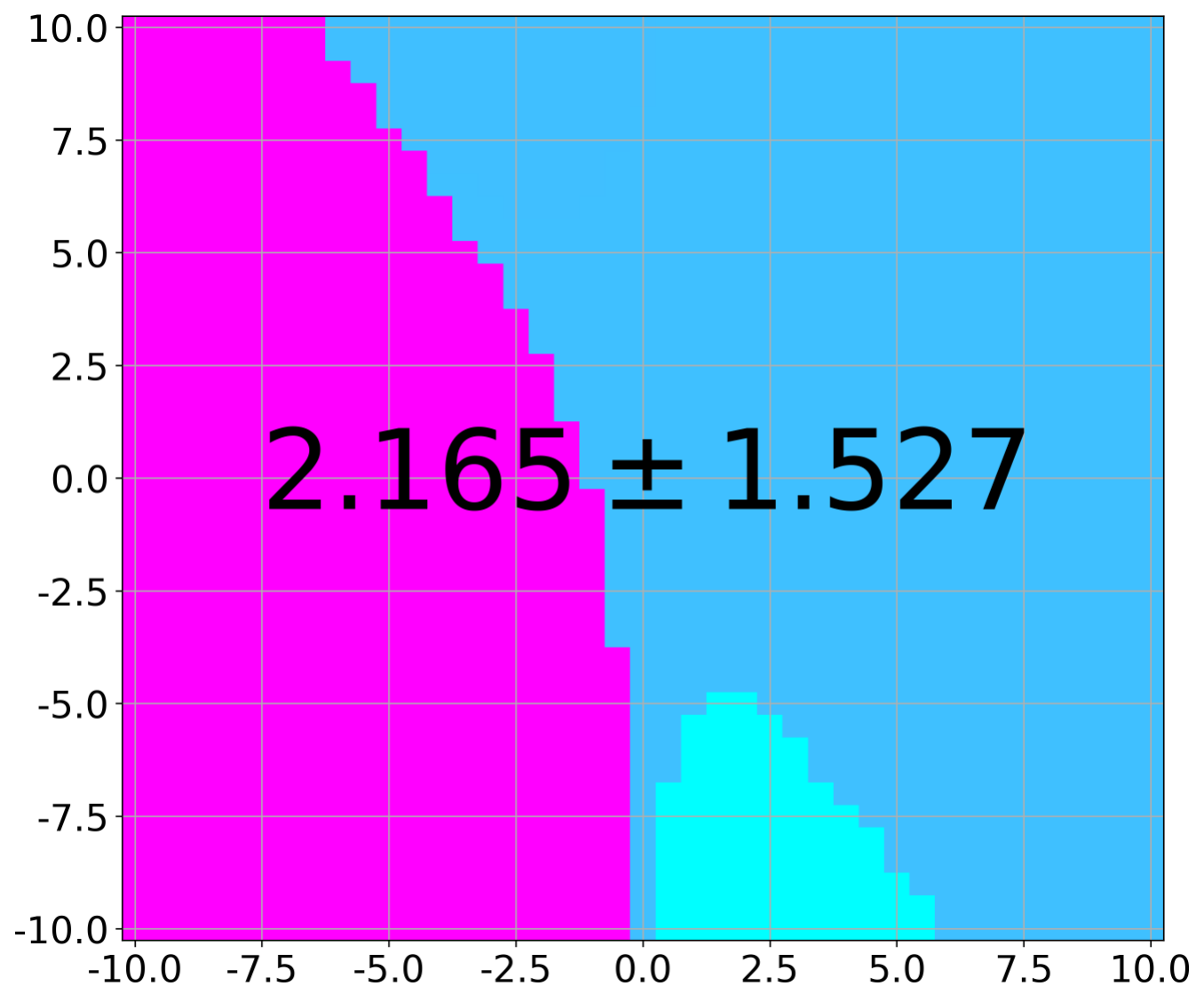}
\end{subfigure}
\begin{subfigure}{0.1375\textwidth}
\includegraphics[width=\textwidth]{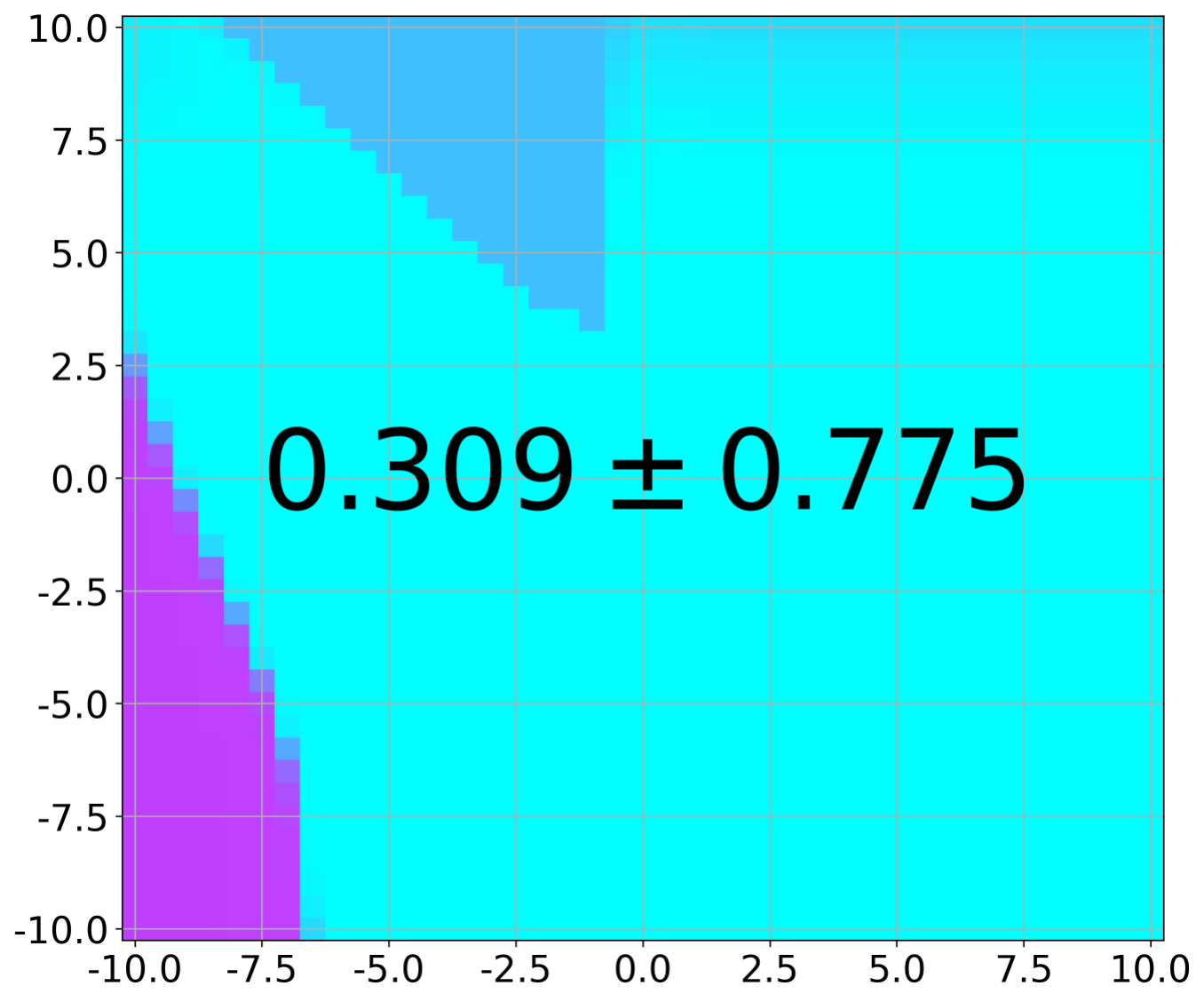}
\end{subfigure}
\begin{subfigure}{0.1375\textwidth}
\includegraphics[width=\textwidth]{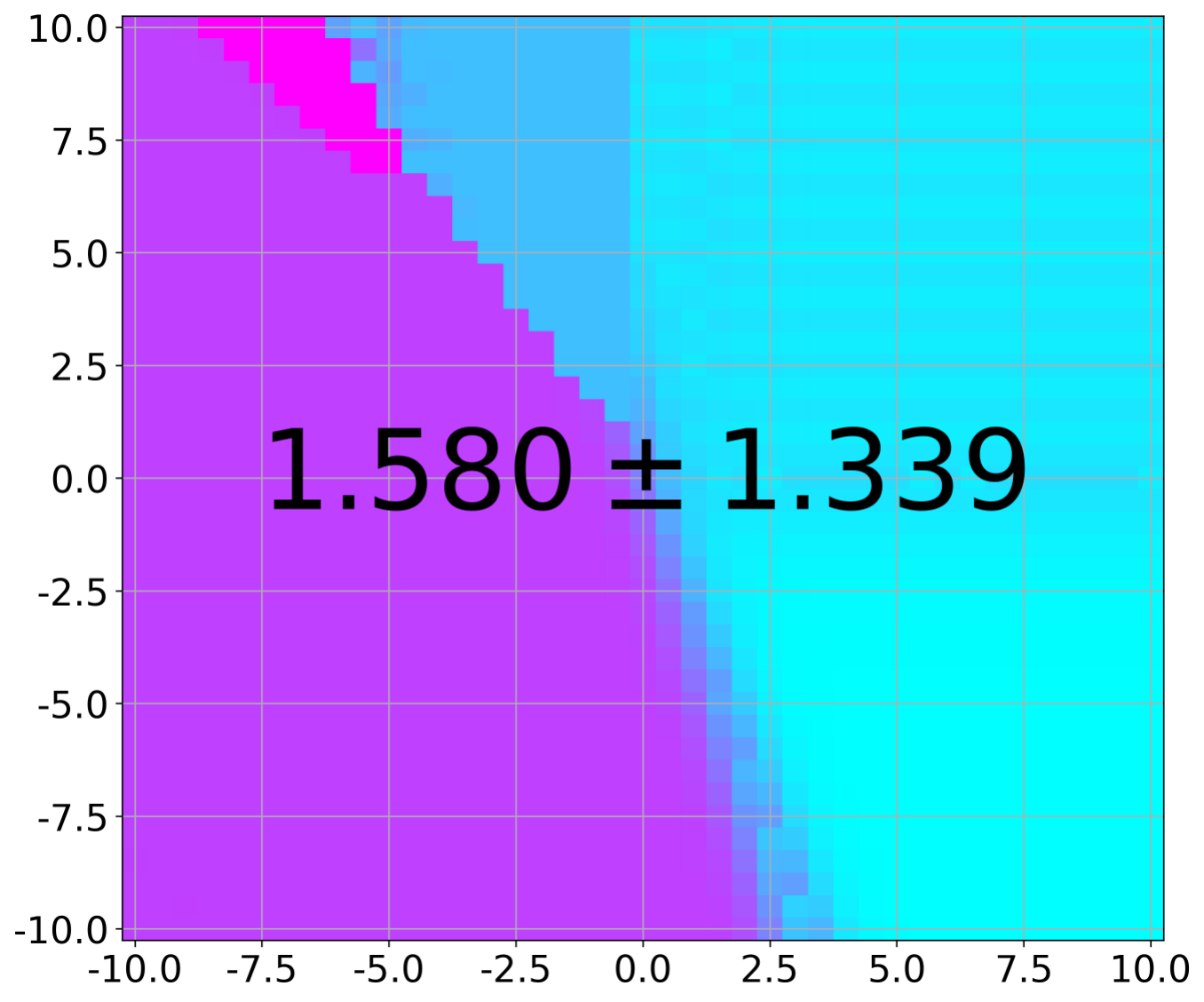}
\end{subfigure}
\begin{subfigure}{0.1375\textwidth}
\includegraphics[width=\textwidth]{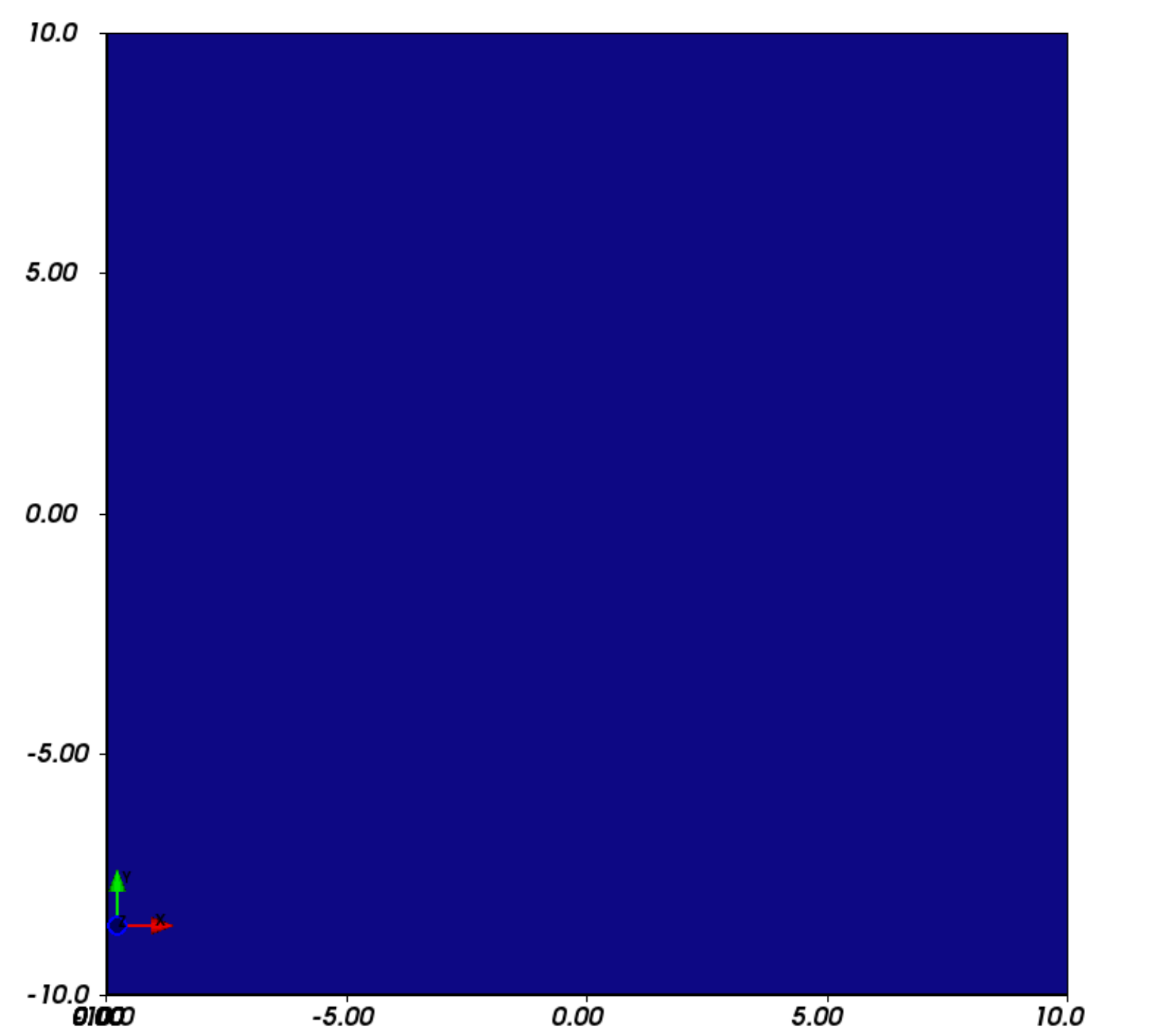}
\end{subfigure}
\begin{subfigure}{0.1375\textwidth}
\includegraphics[width=\textwidth]{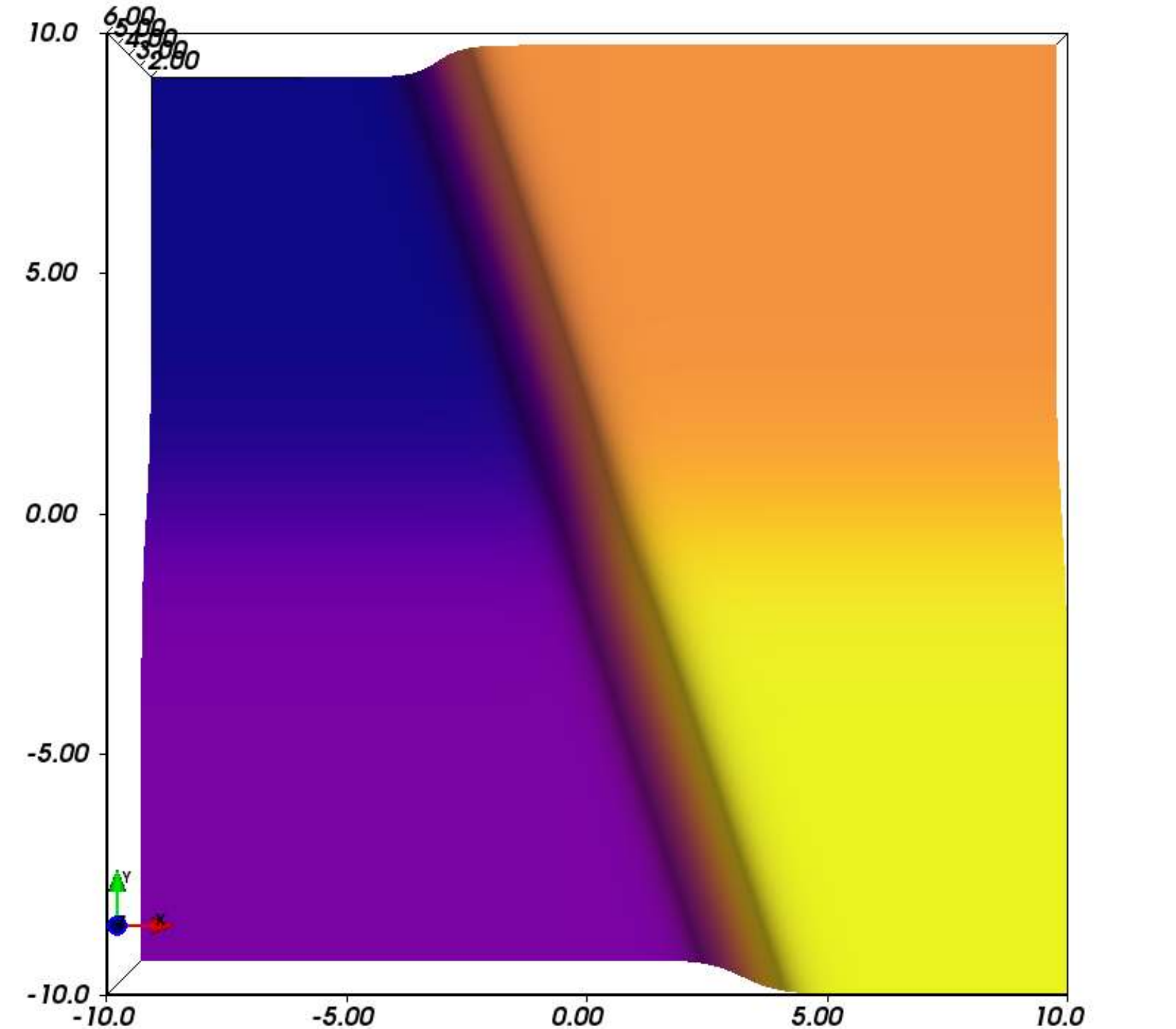}
\end{subfigure}
\begin{subfigure}{0.1375\textwidth}
\includegraphics[width=\textwidth]{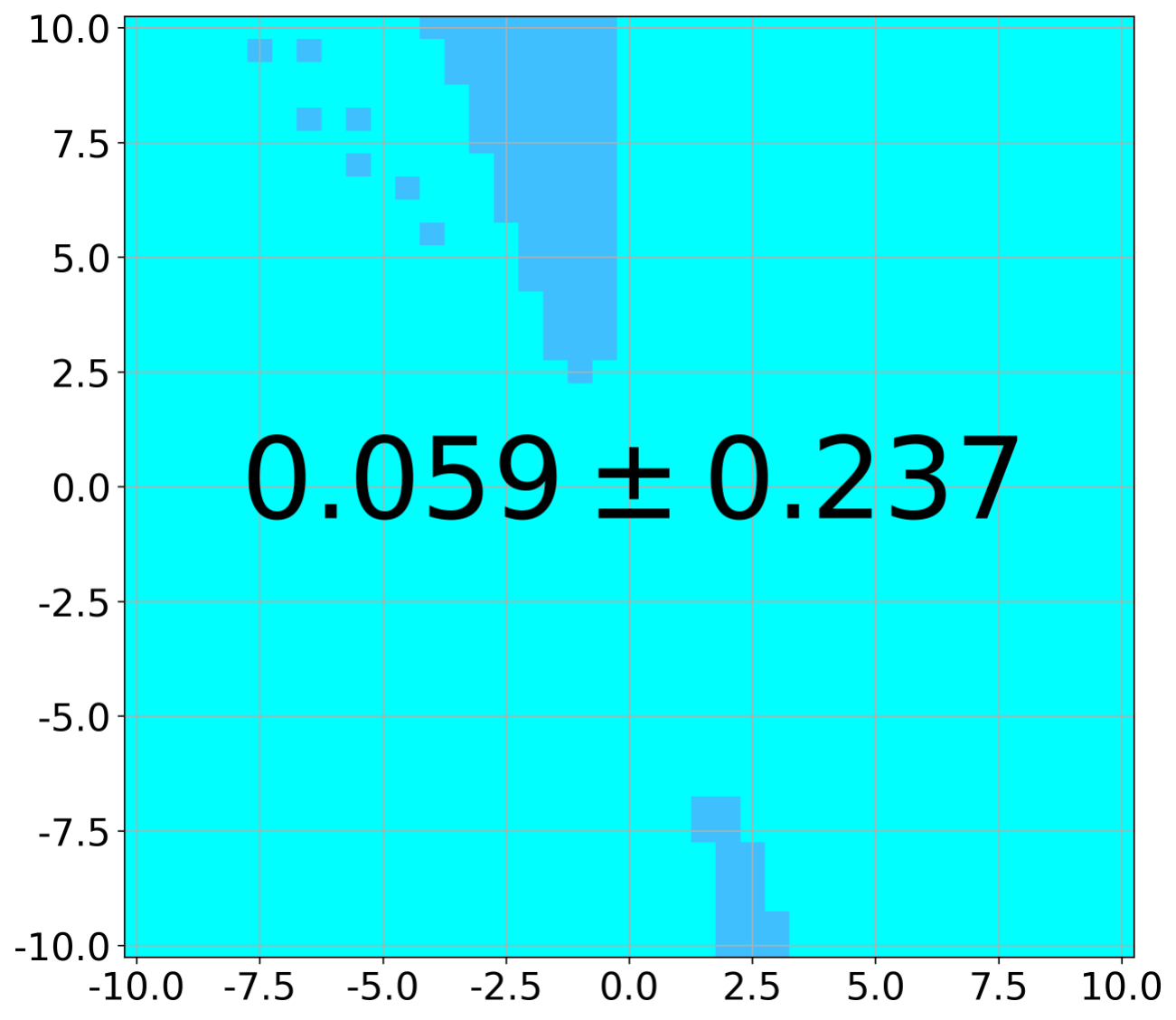}
\end{subfigure}
\begin{subfigure}{0.1375\textwidth}
\includegraphics[width=\textwidth]{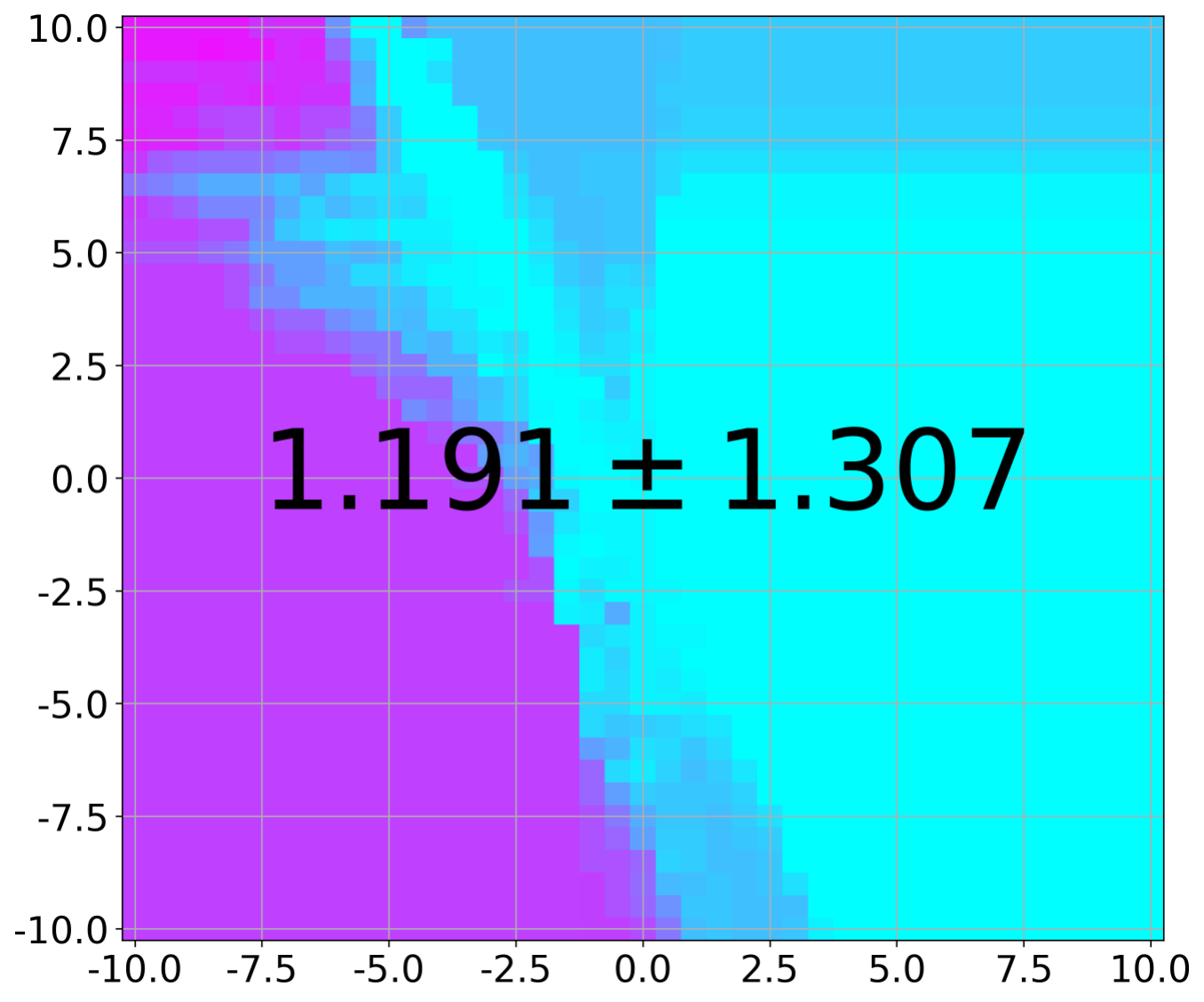}
\end{subfigure}
\begin{subfigure}{0.1375\textwidth}
\includegraphics[width=\textwidth]{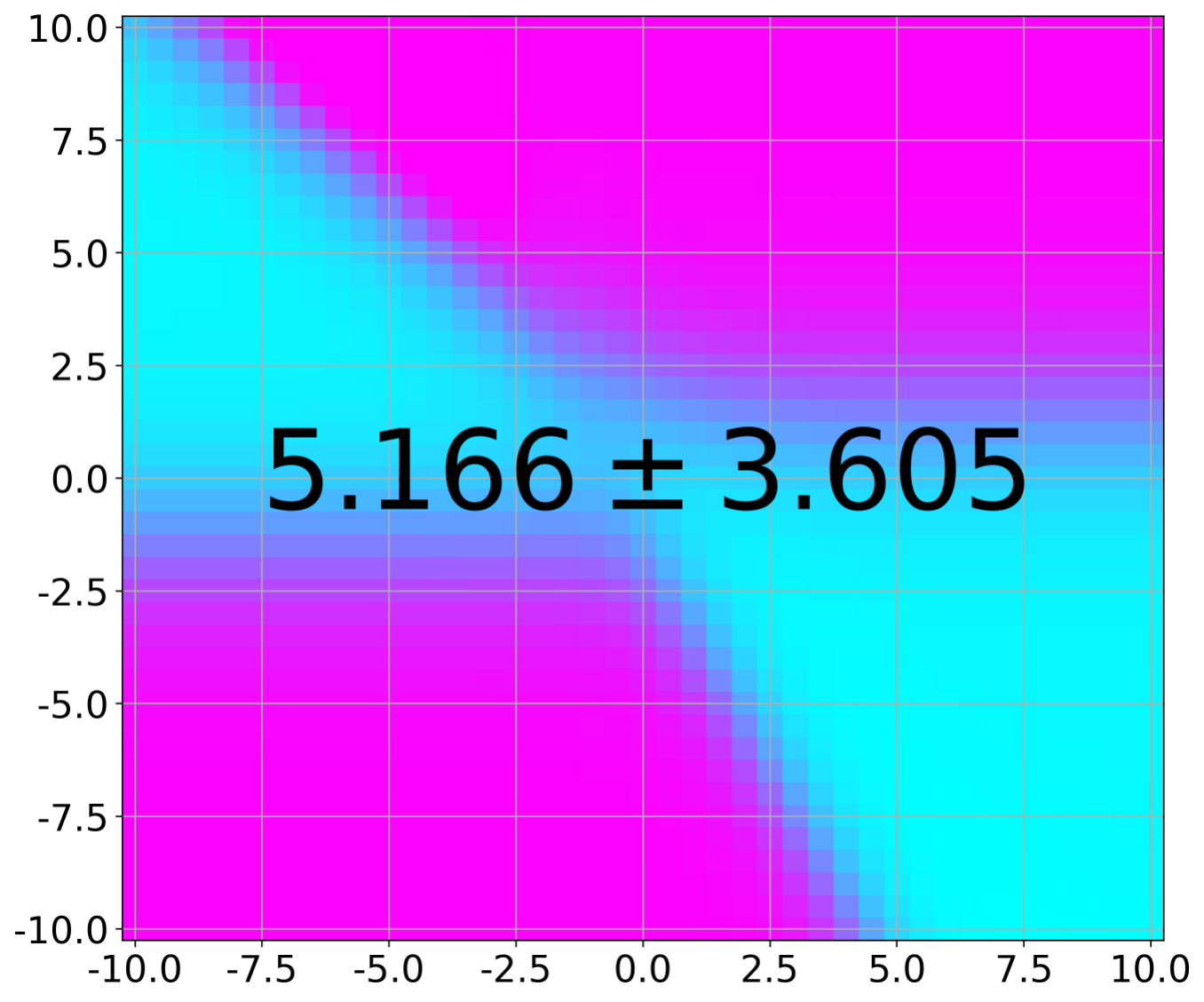}
\end{subfigure}
\begin{subfigure}{0.1375\textwidth}
\includegraphics[width=\textwidth]{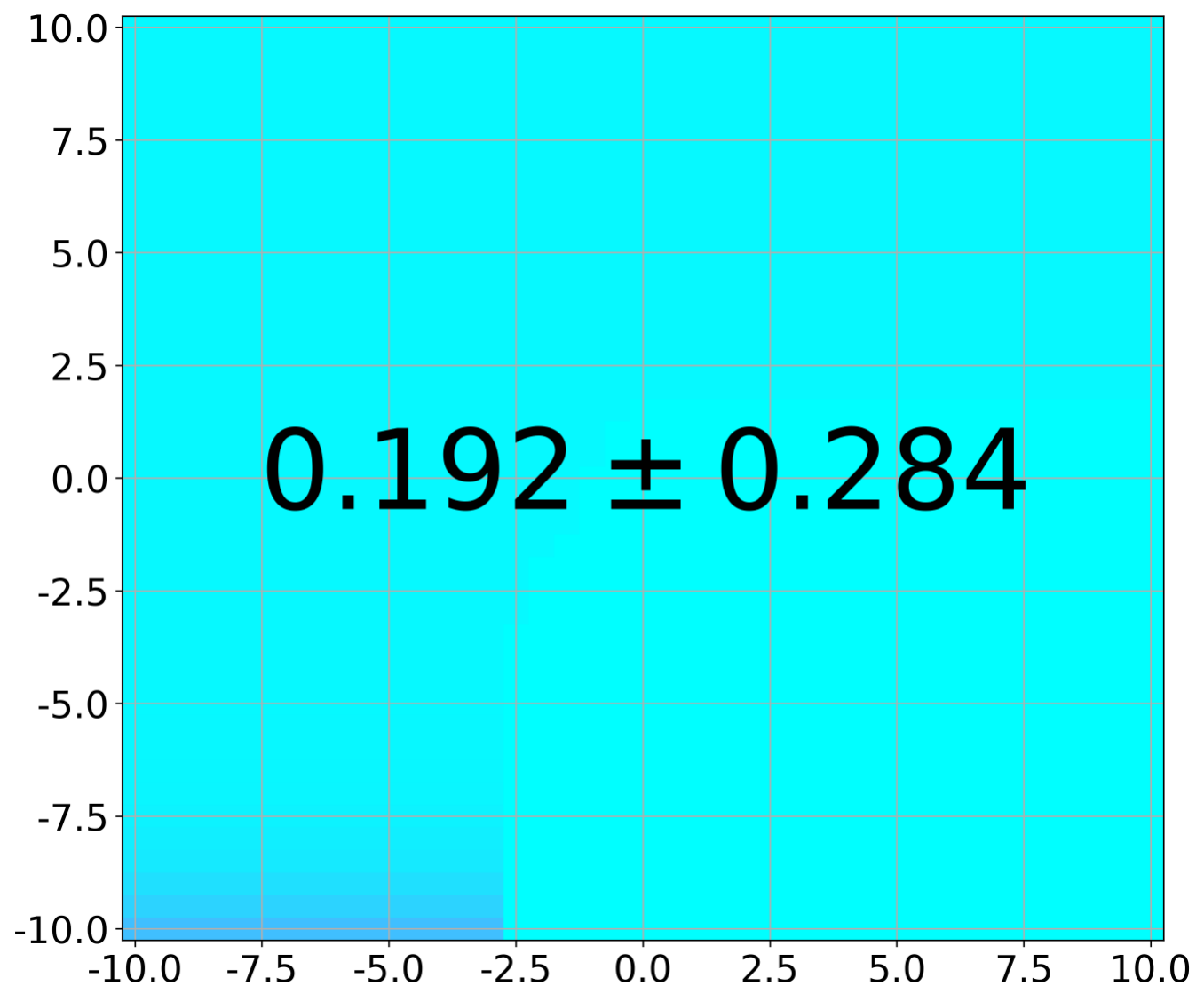}
\end{subfigure}
\begin{subfigure}{0.1375\textwidth}
\includegraphics[width=\textwidth]{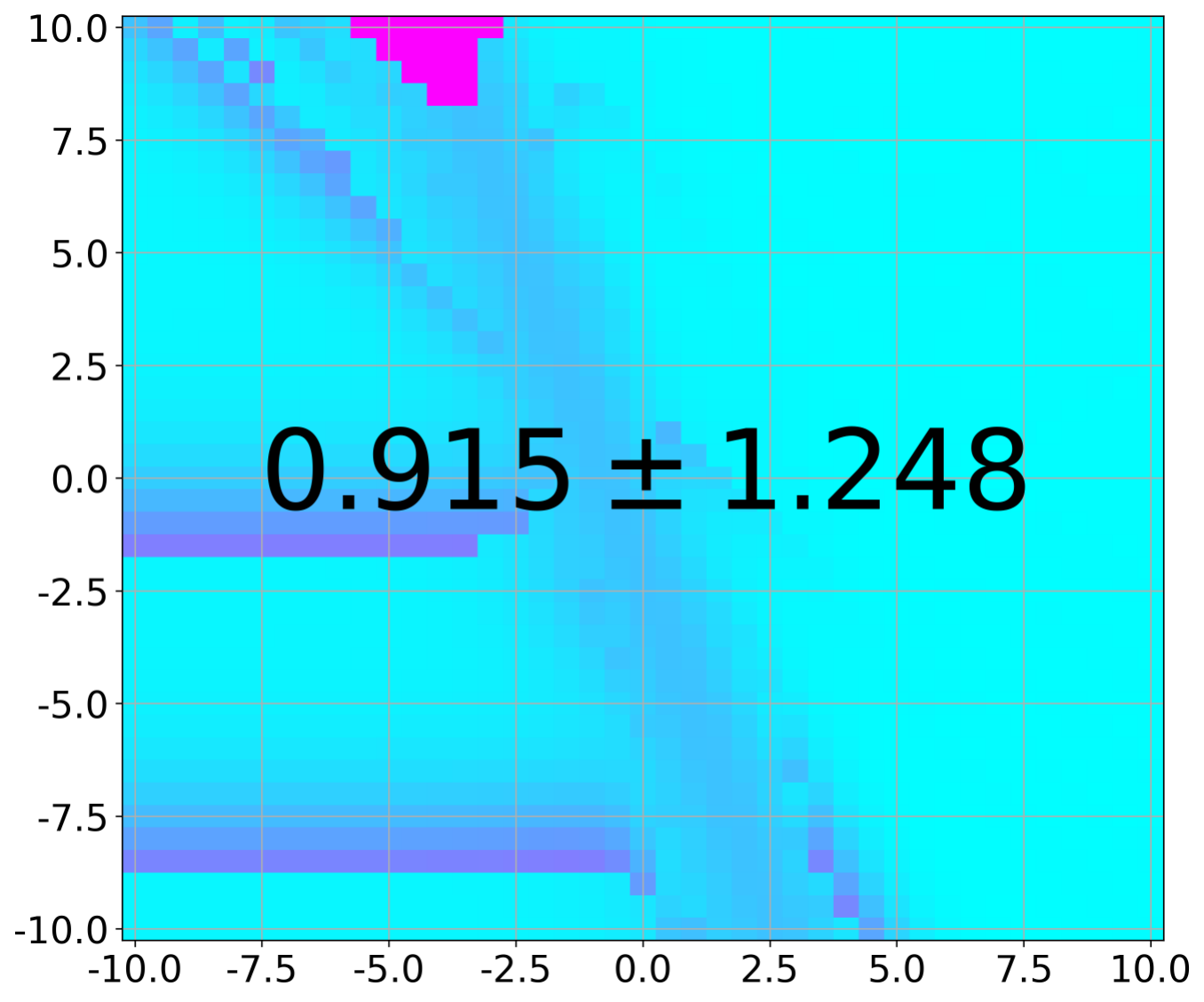}
\end{subfigure}
\begin{subfigure}{0.1375\textwidth}
\includegraphics[width=\textwidth]{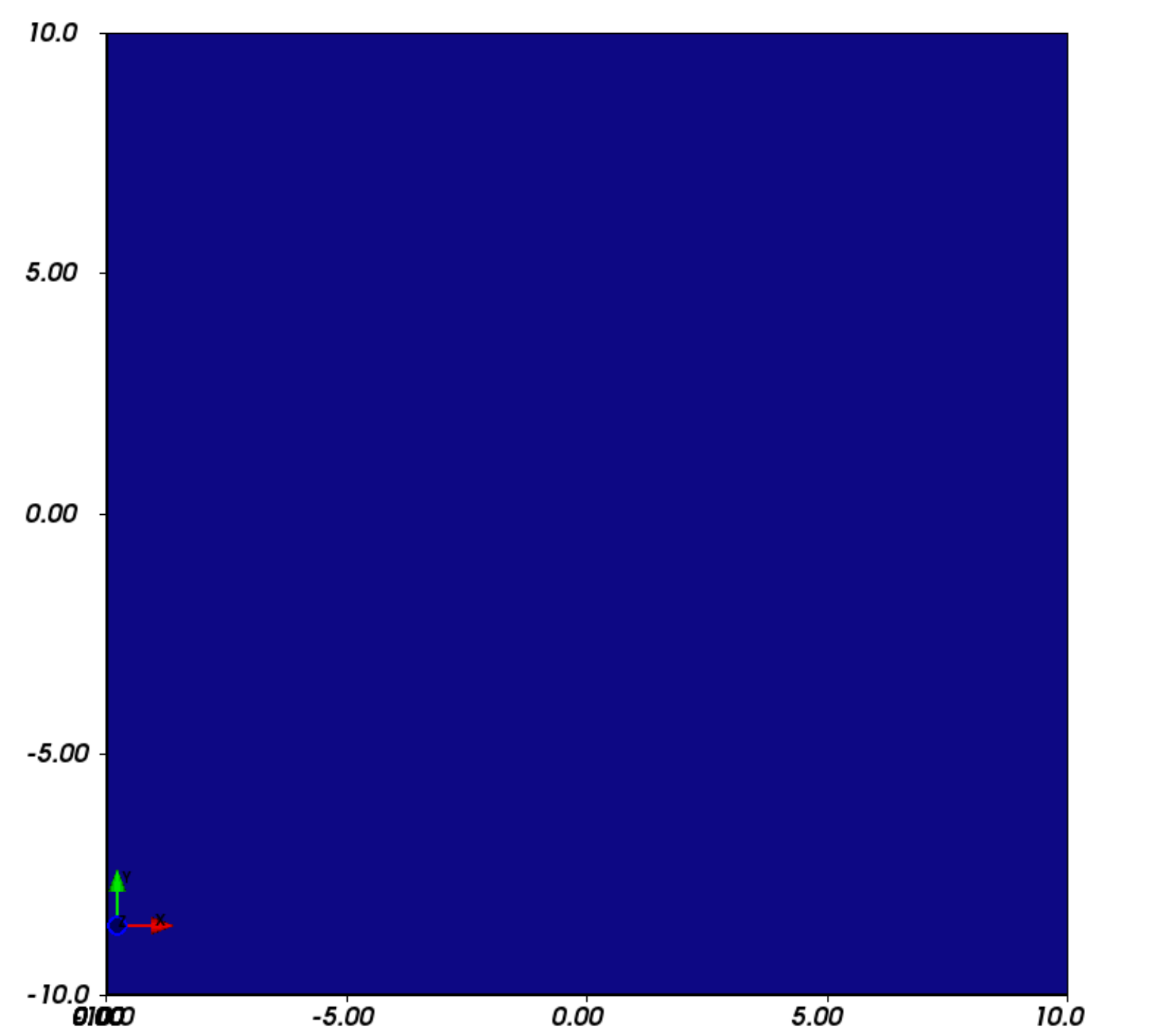}
\end{subfigure}
\begin{subfigure}{0.1375\textwidth}
\includegraphics[width=\textwidth]{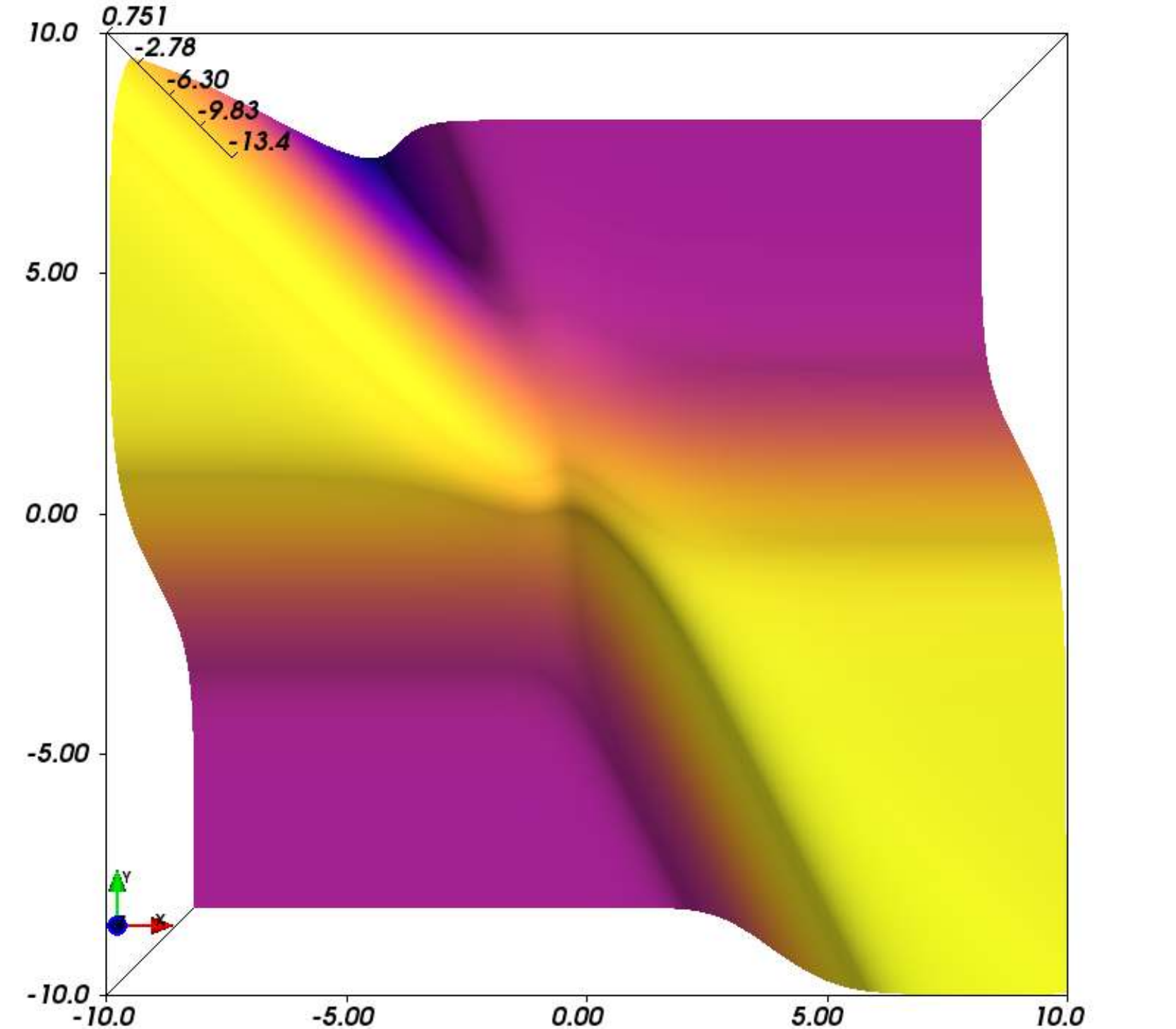}
\end{subfigure}
\begin{subfigure}{0.1375\textwidth}
\includegraphics[width=\textwidth]{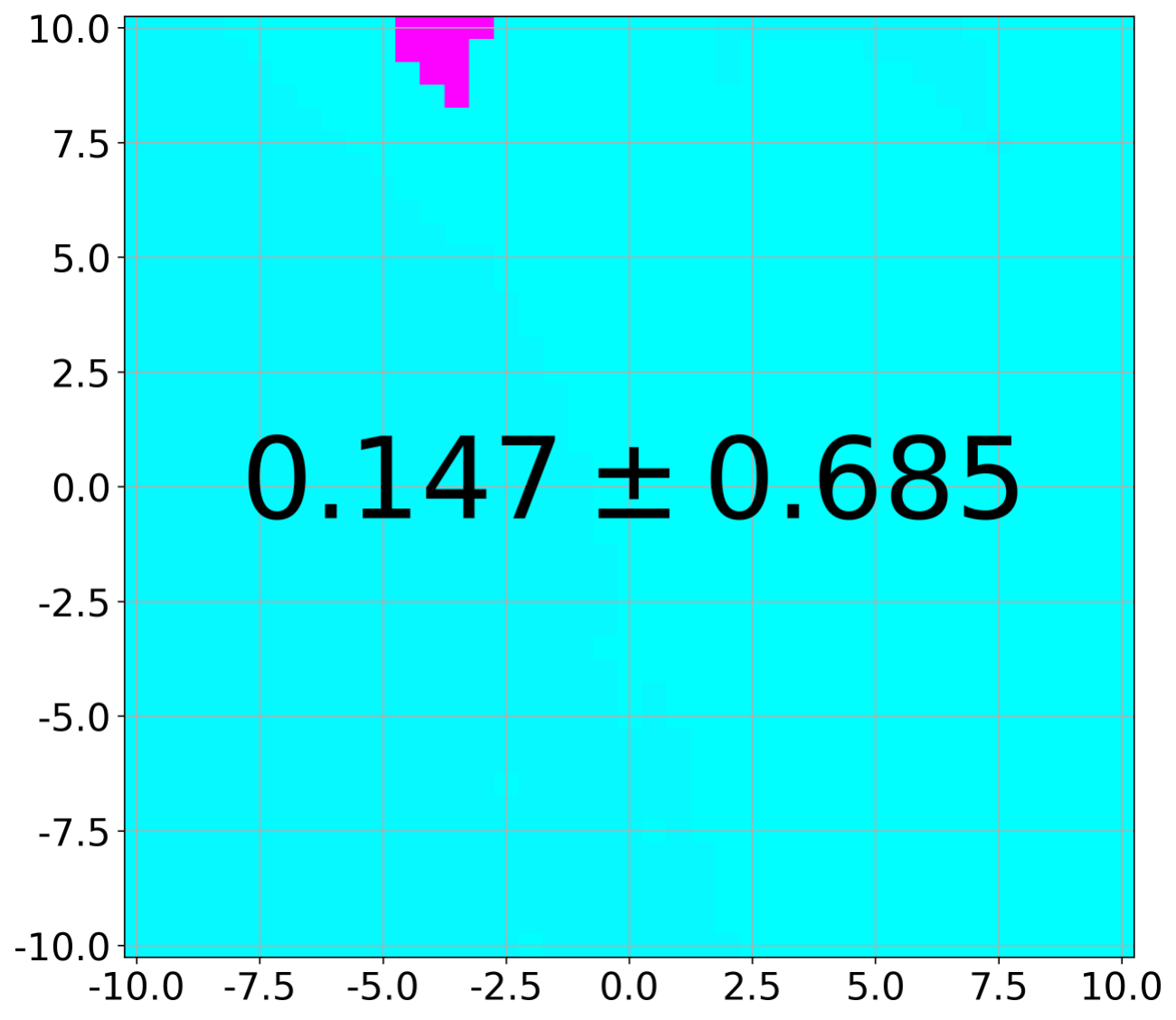}
\end{subfigure}
\begin{subfigure}{0.1375\textwidth}
\includegraphics[width=\textwidth]{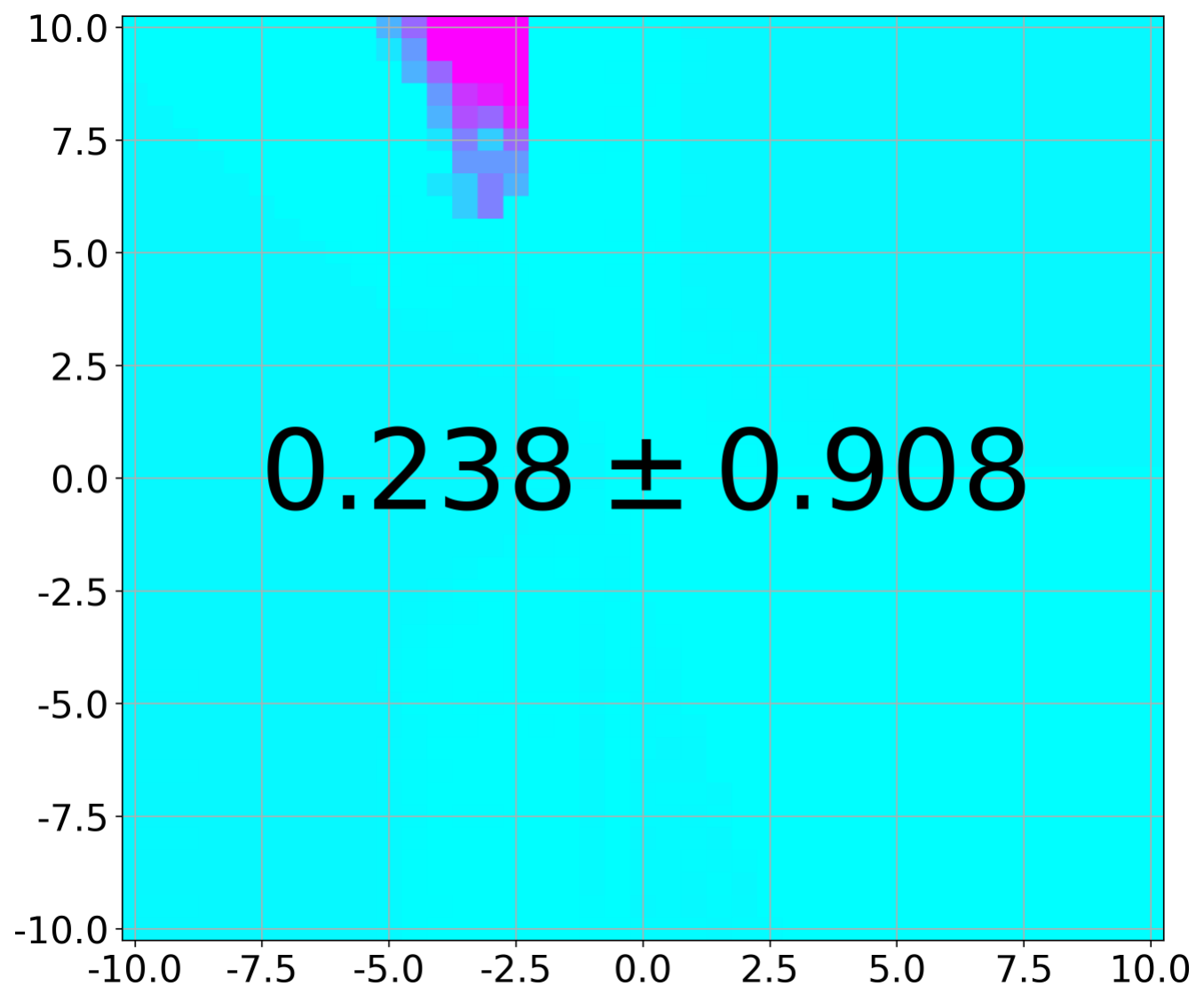}
\end{subfigure}
\begin{subfigure}{0.1375\textwidth}
\includegraphics[width=\textwidth]{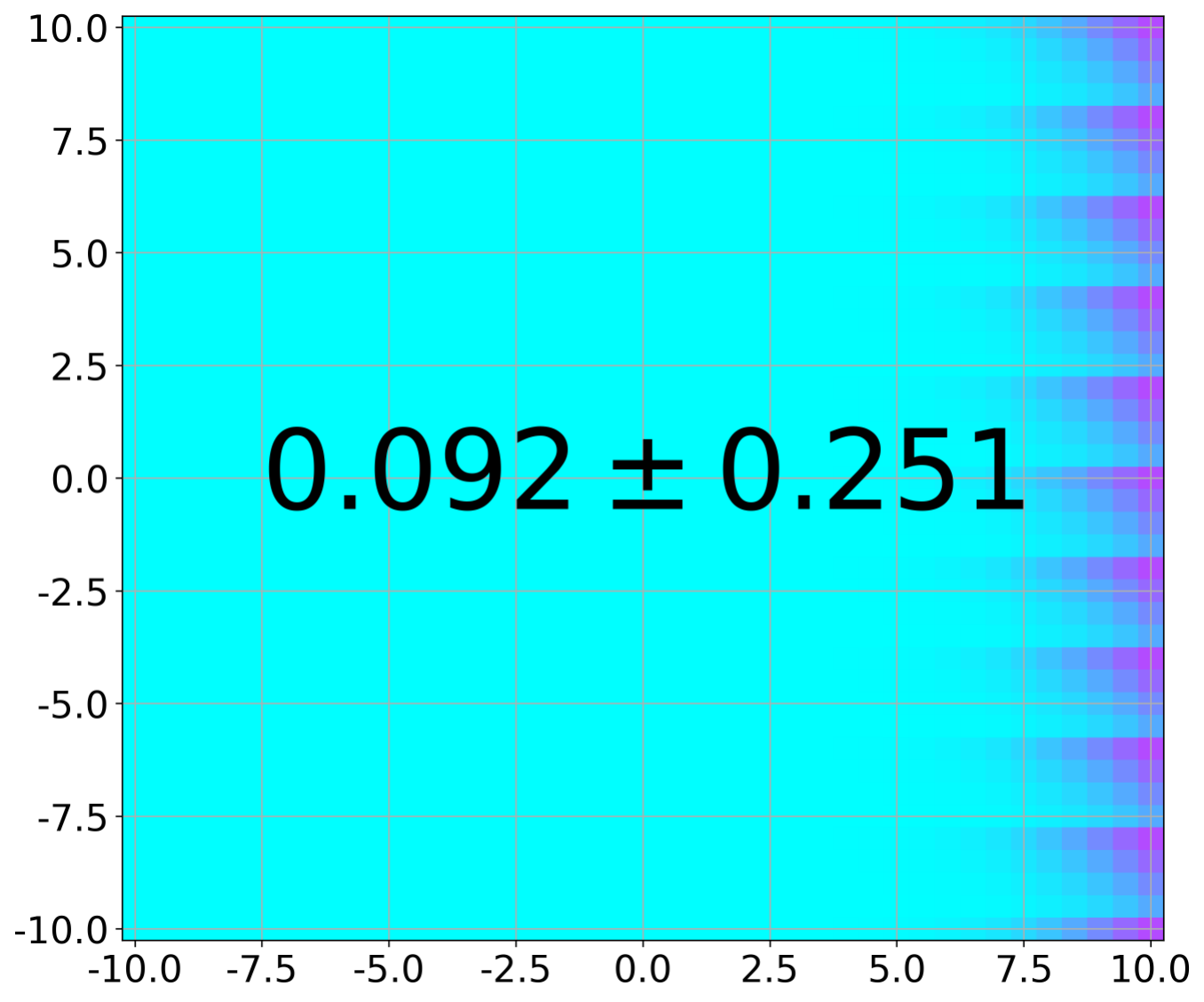}
\end{subfigure}
\begin{subfigure}{0.1375\textwidth}
\includegraphics[width=\textwidth]{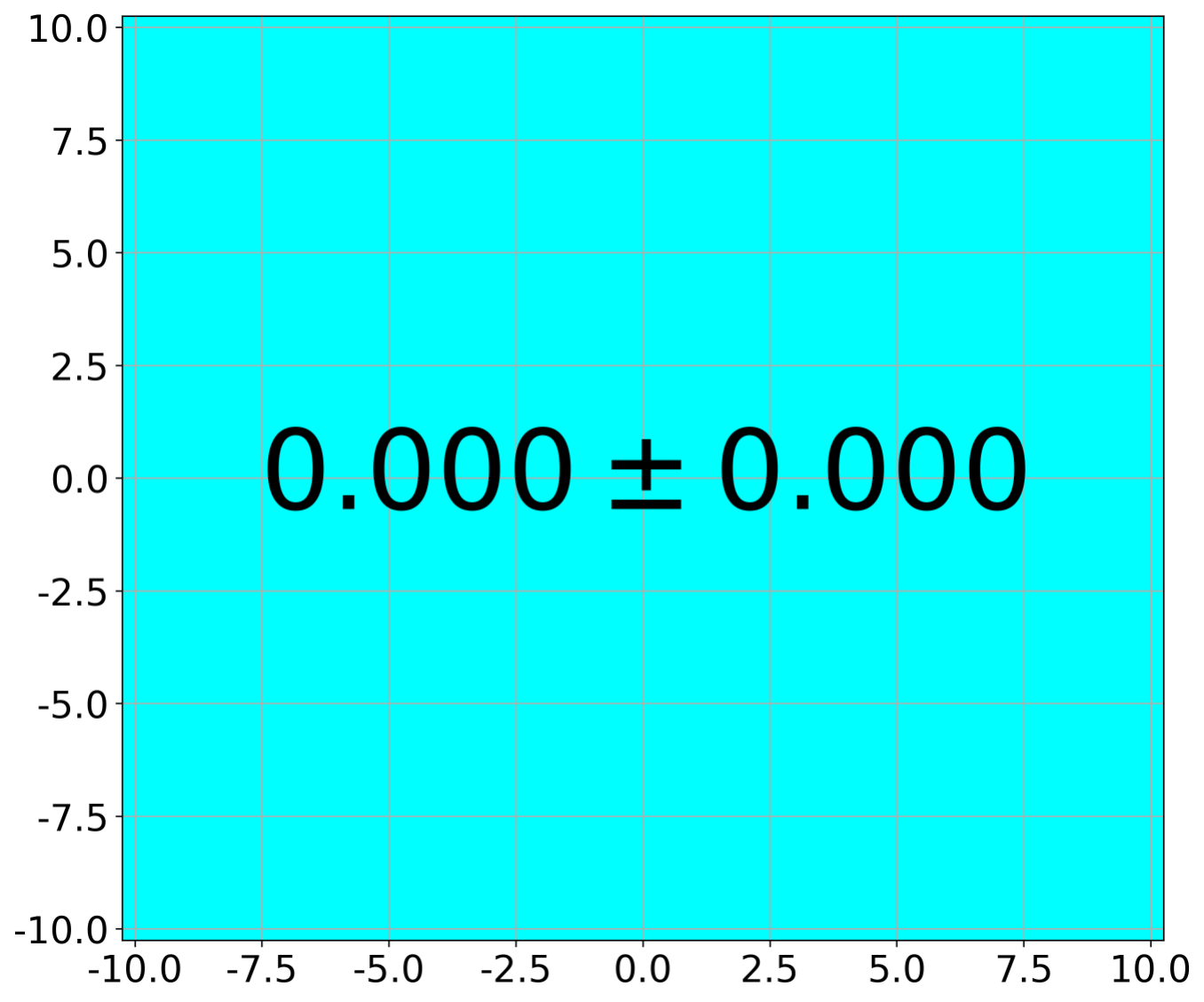}
\end{subfigure}
\begin{subfigure}{0.1375\textwidth}
\includegraphics[width=\textwidth]{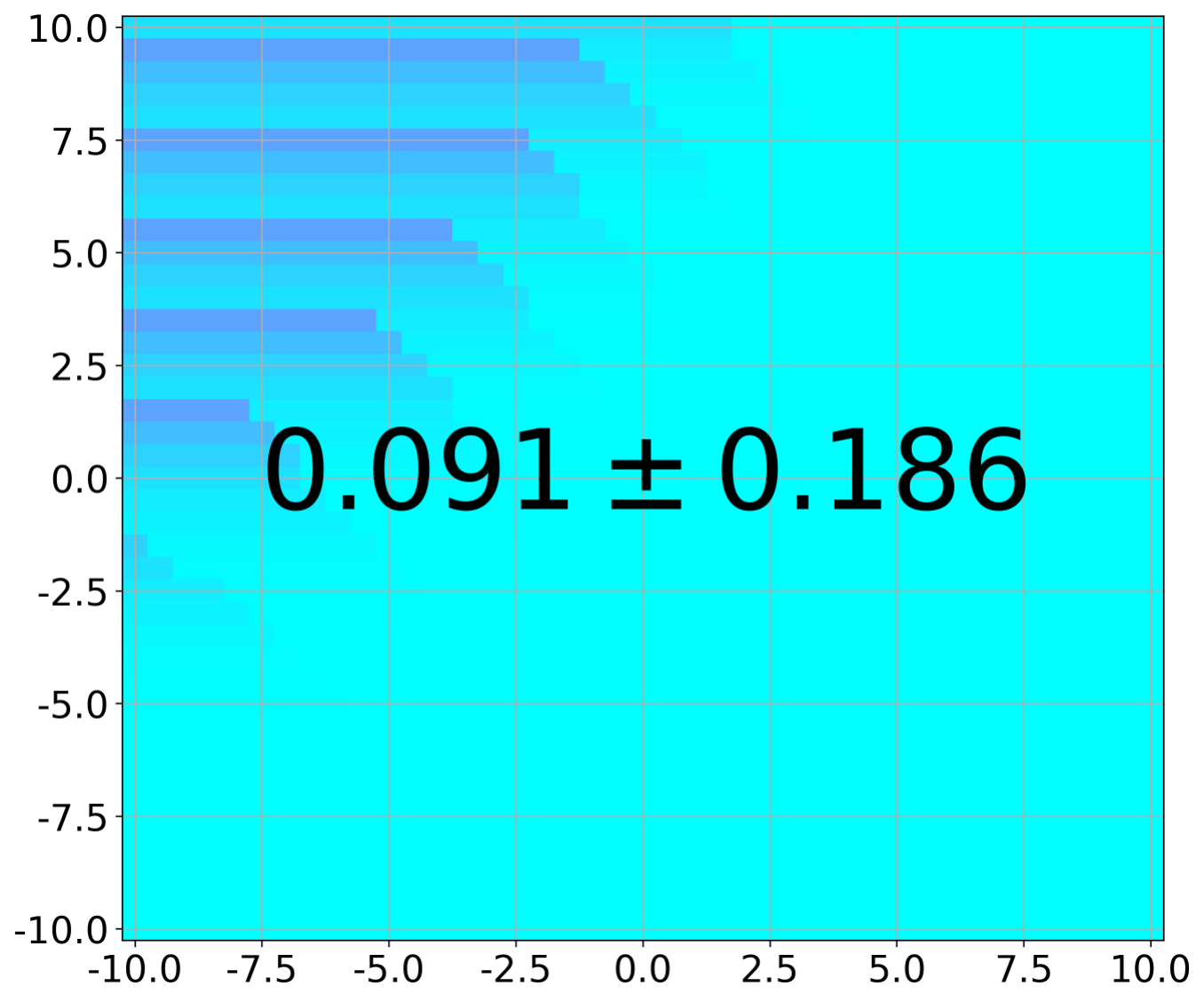}
\end{subfigure}
\begin{subfigure}{0.1375\textwidth}
\includegraphics[width=\textwidth]{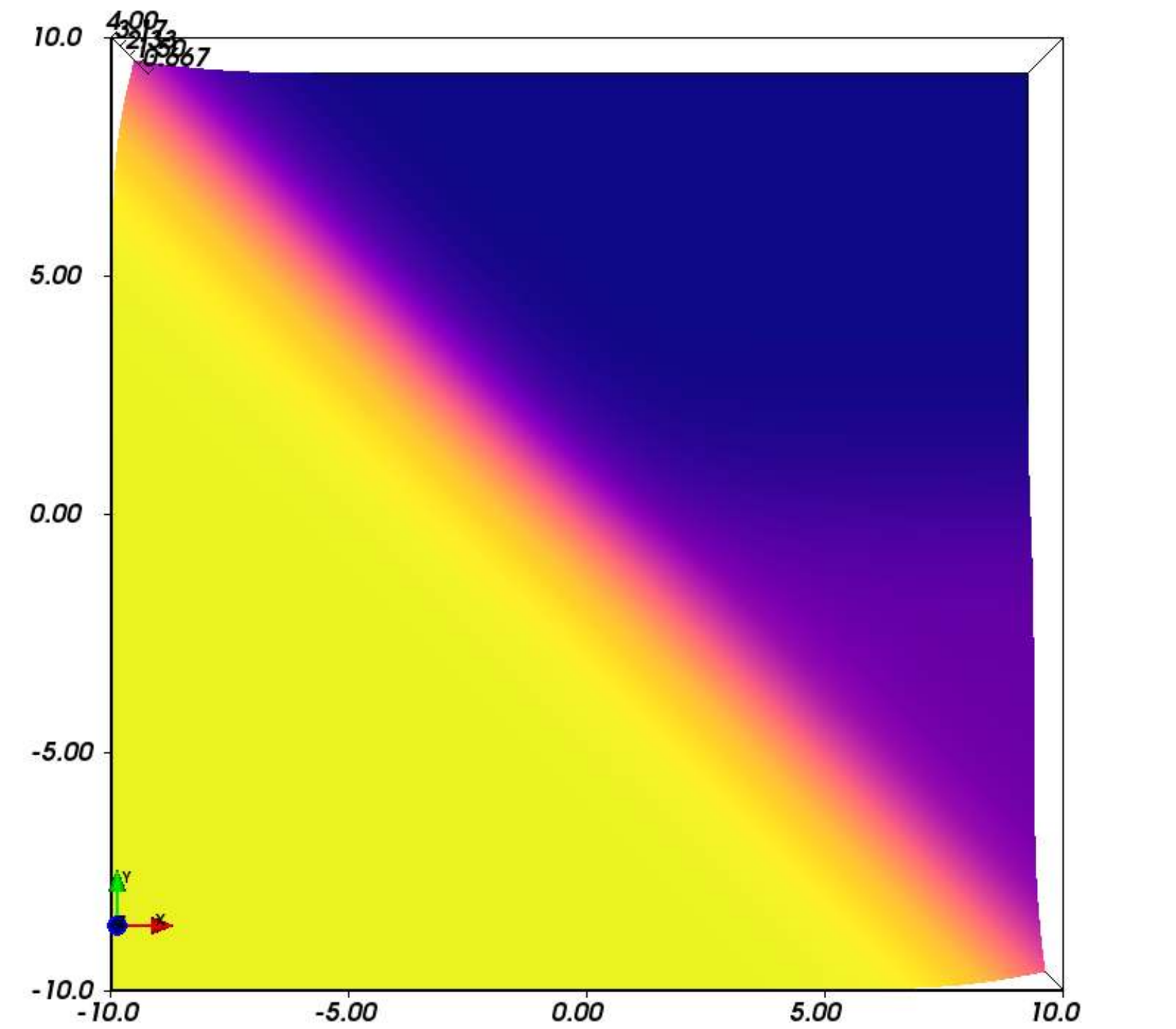}
\end{subfigure}
\begin{subfigure}{0.1375\textwidth}
\includegraphics[width=\textwidth]{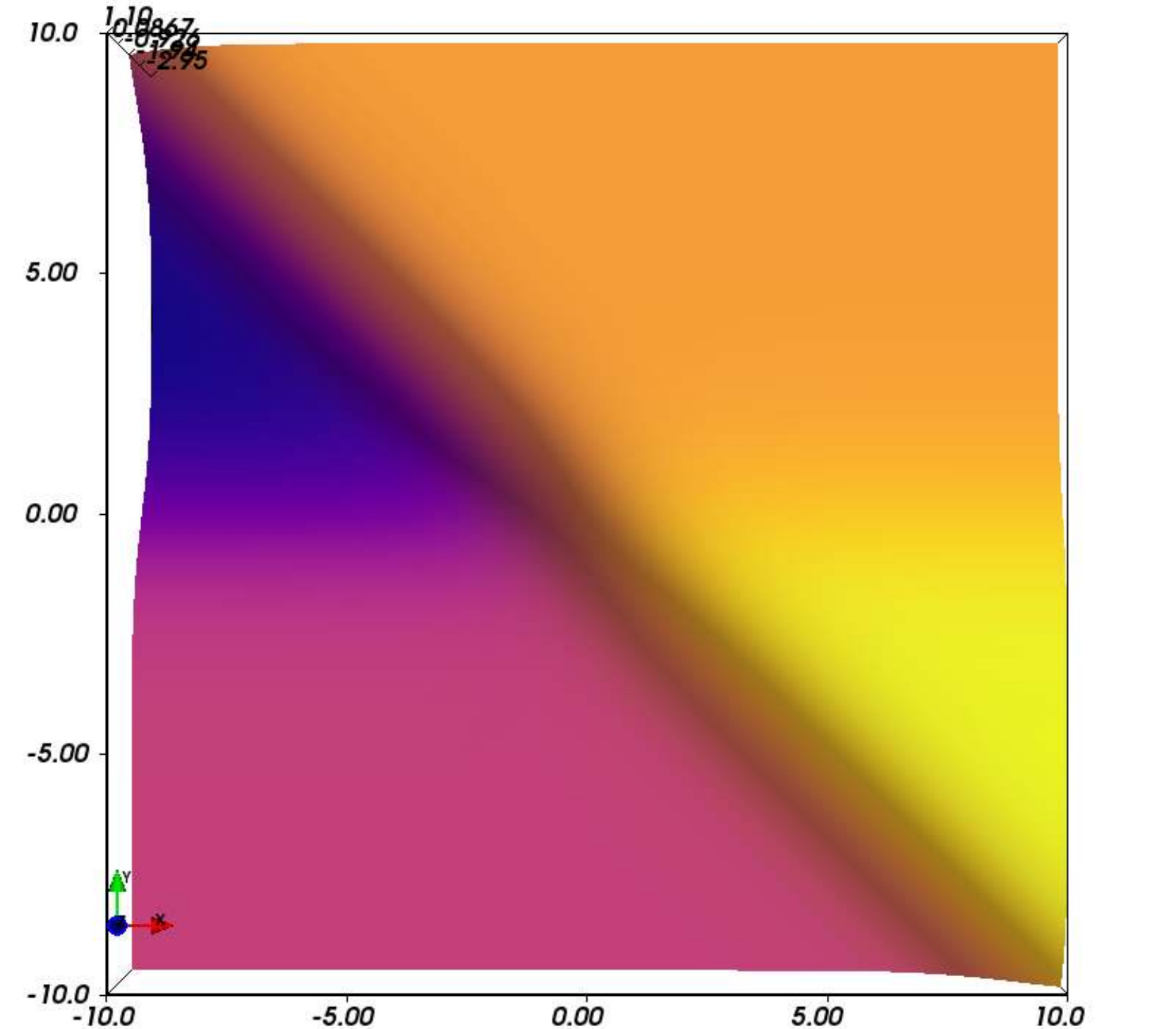}
\end{subfigure}
\begin{subfigure}{0.1375\textwidth}
\includegraphics[width=\textwidth]{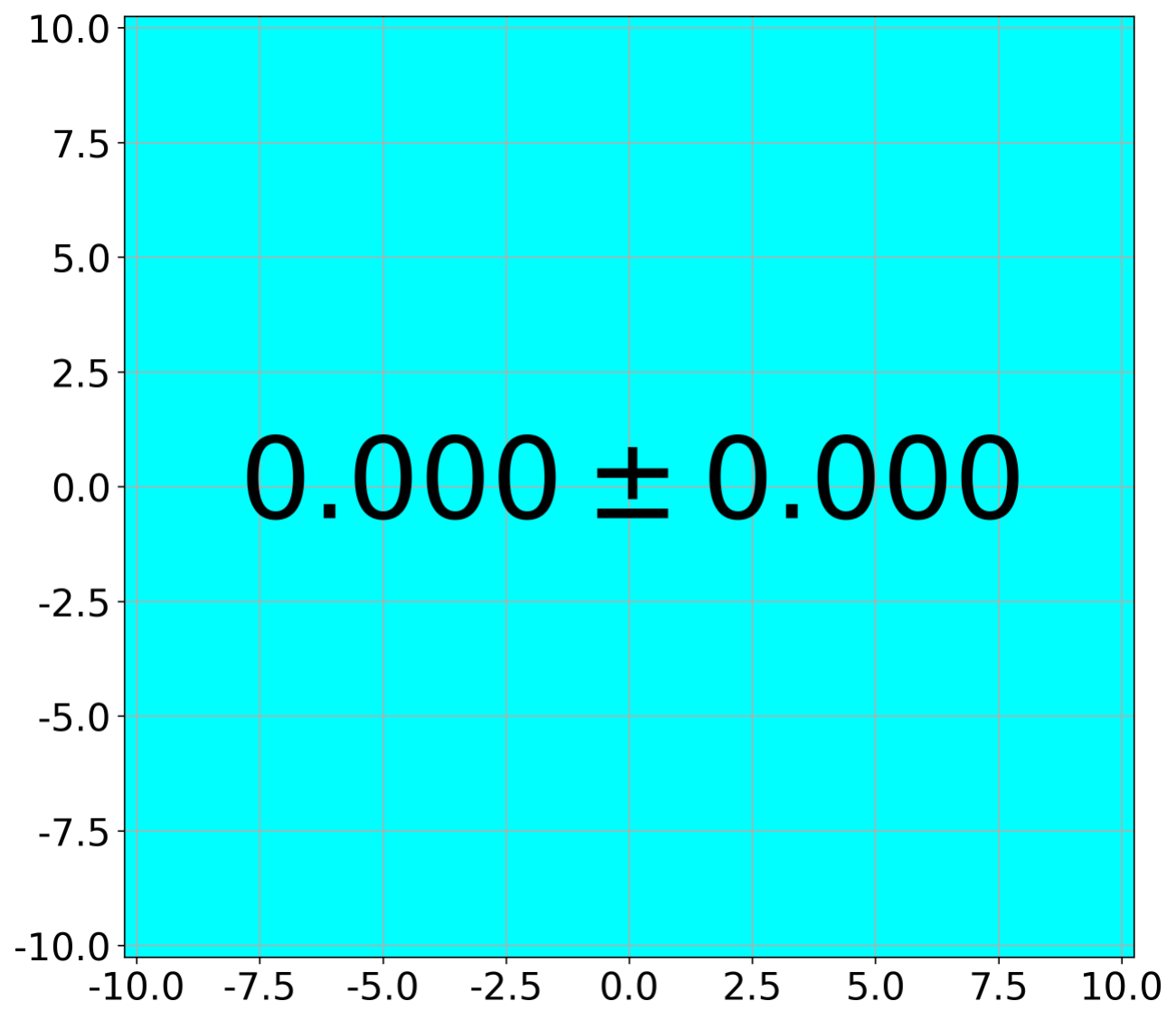}
\end{subfigure}
\begin{subfigure}{0.1375\textwidth}
\includegraphics[width=\textwidth]{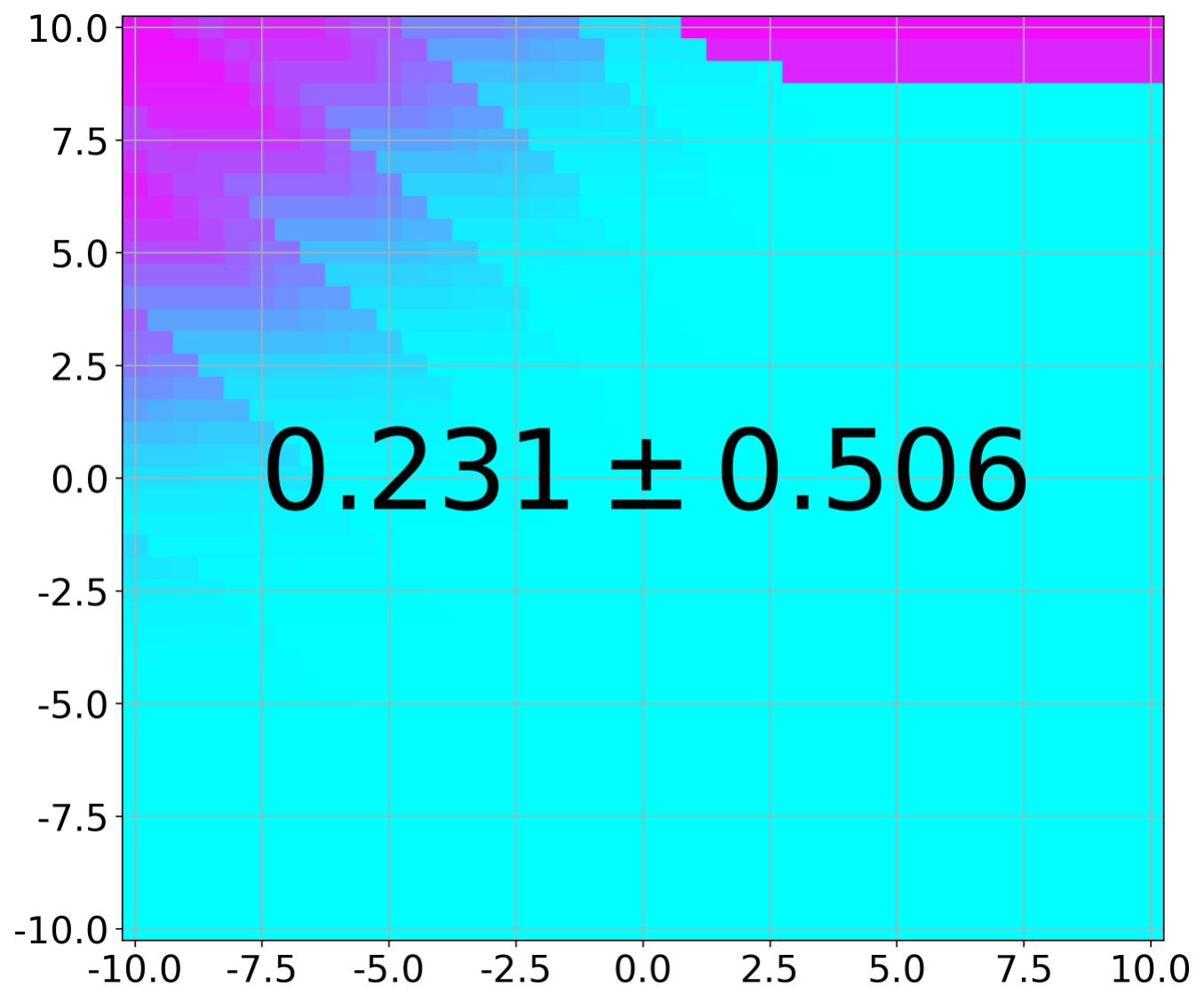}
\end{subfigure}
\begin{subfigure}{0.1375\textwidth}
\includegraphics[width=\textwidth]{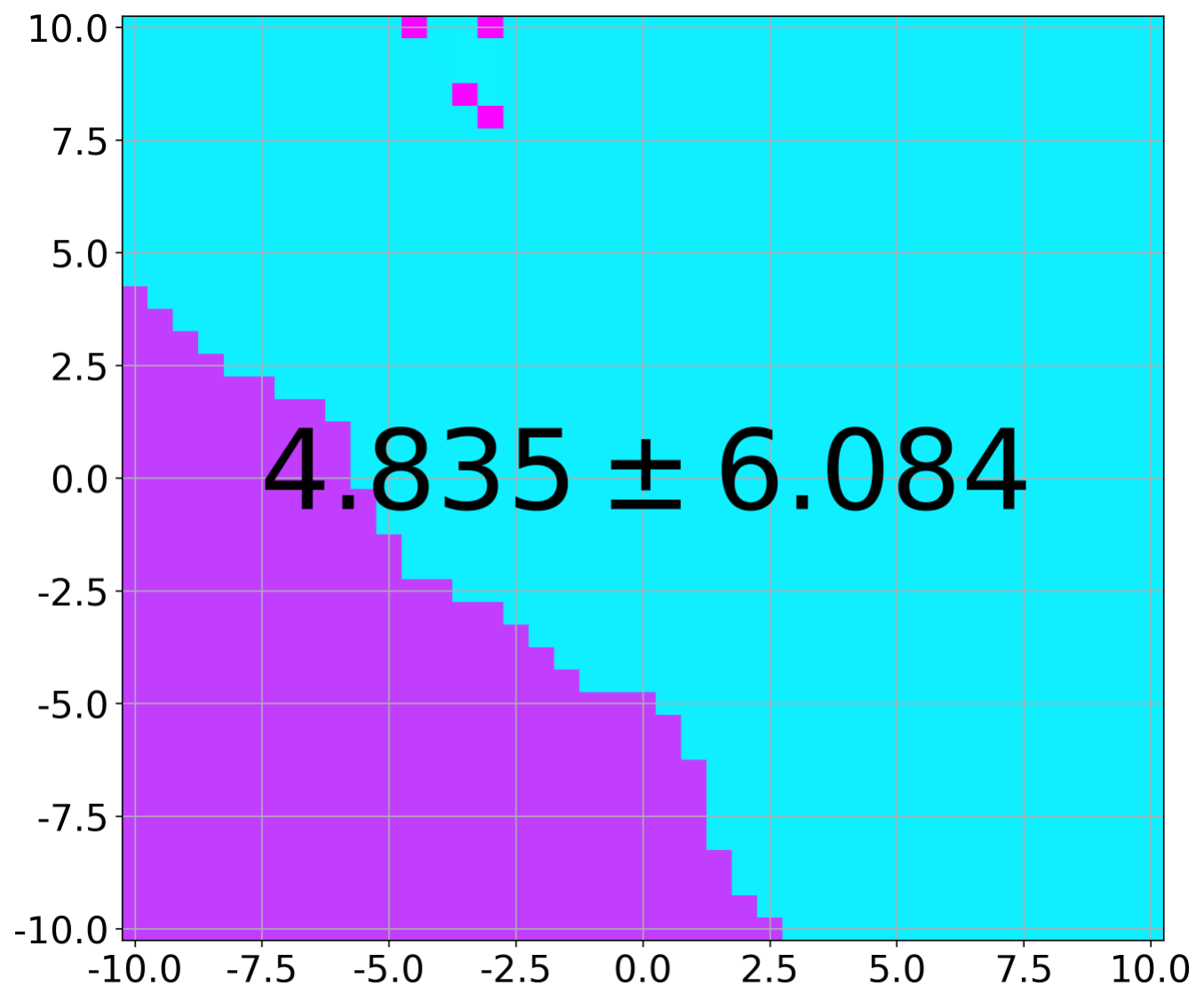}
\end{subfigure}
\begin{subfigure}{0.1375\textwidth}
\includegraphics[width=\textwidth]{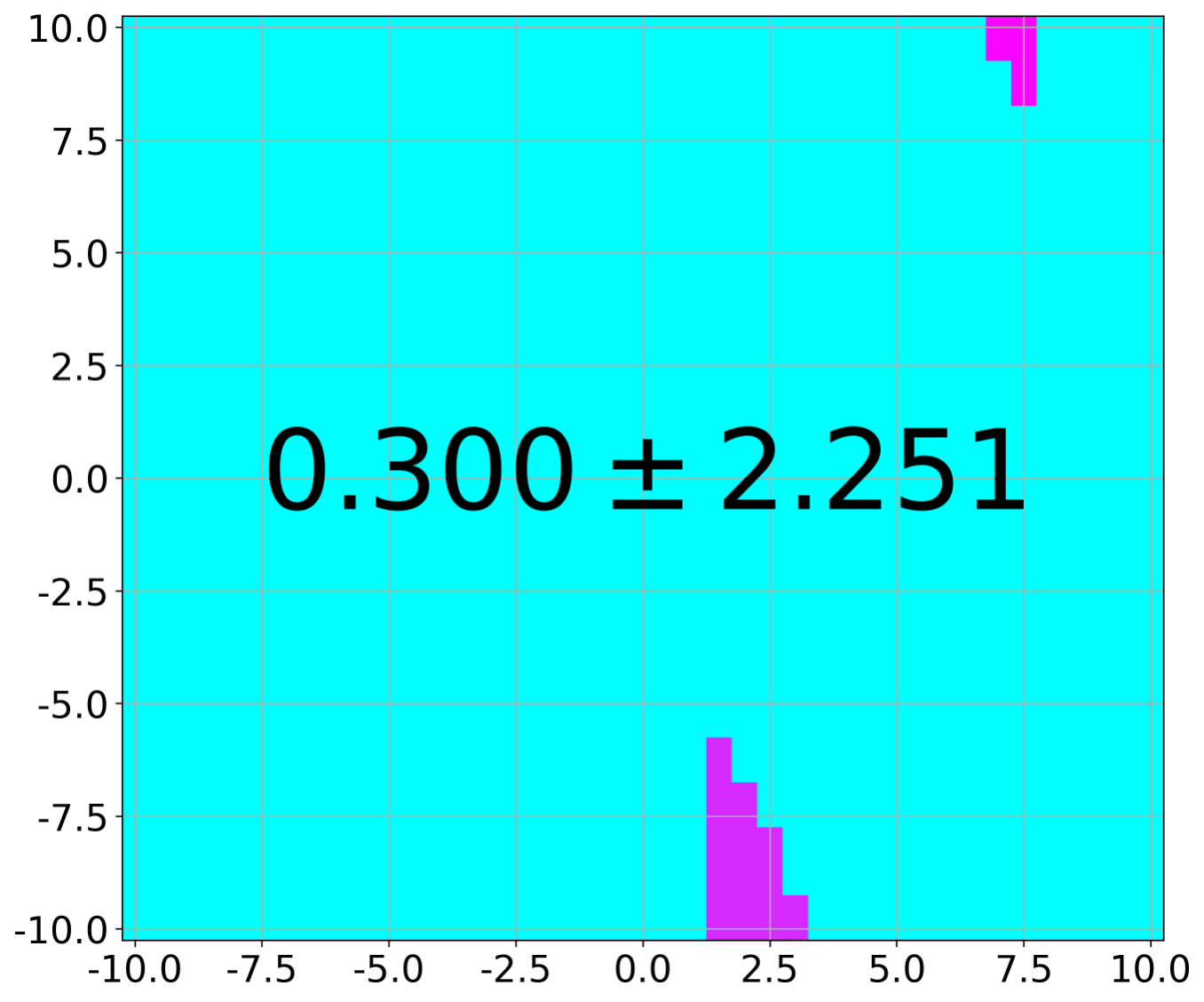}
\end{subfigure}
\begin{subfigure}{0.1375\textwidth}
\includegraphics[width=\textwidth]{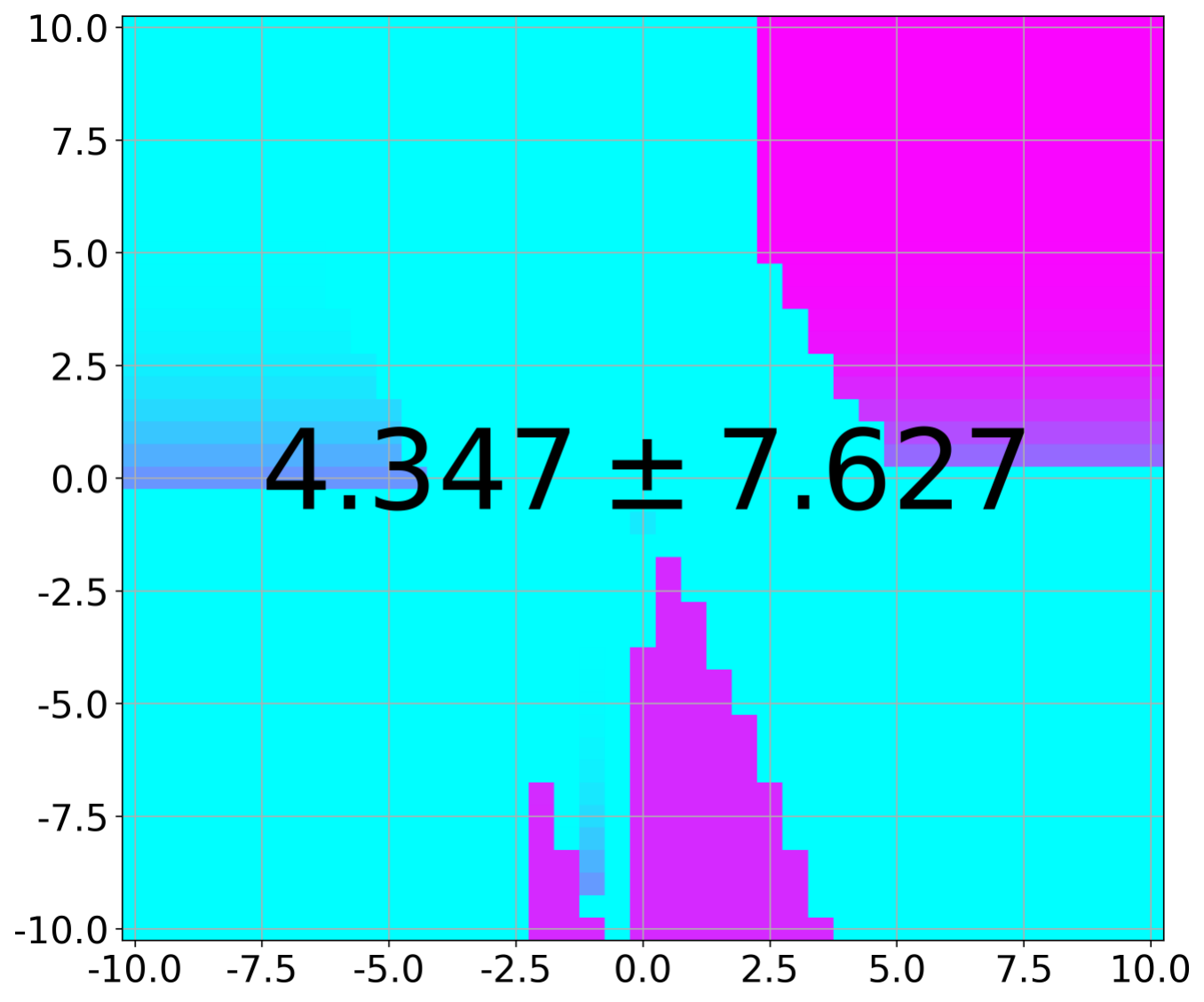}
\end{subfigure}
\begin{subfigure}{0.1375\textwidth}
\includegraphics[width=\textwidth]{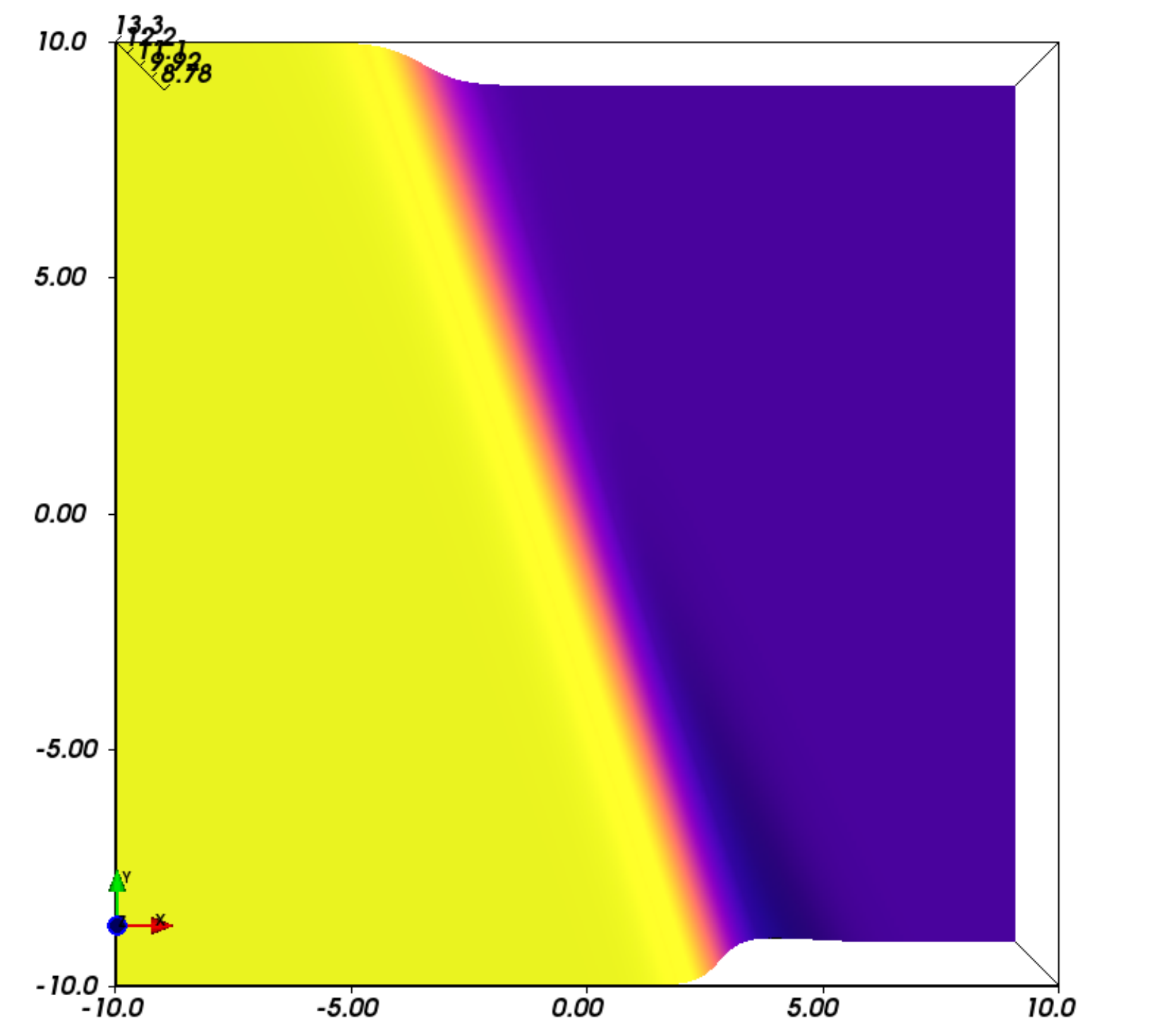}
\end{subfigure}
\begin{subfigure}{0.1375\textwidth}
\includegraphics[width=\textwidth]{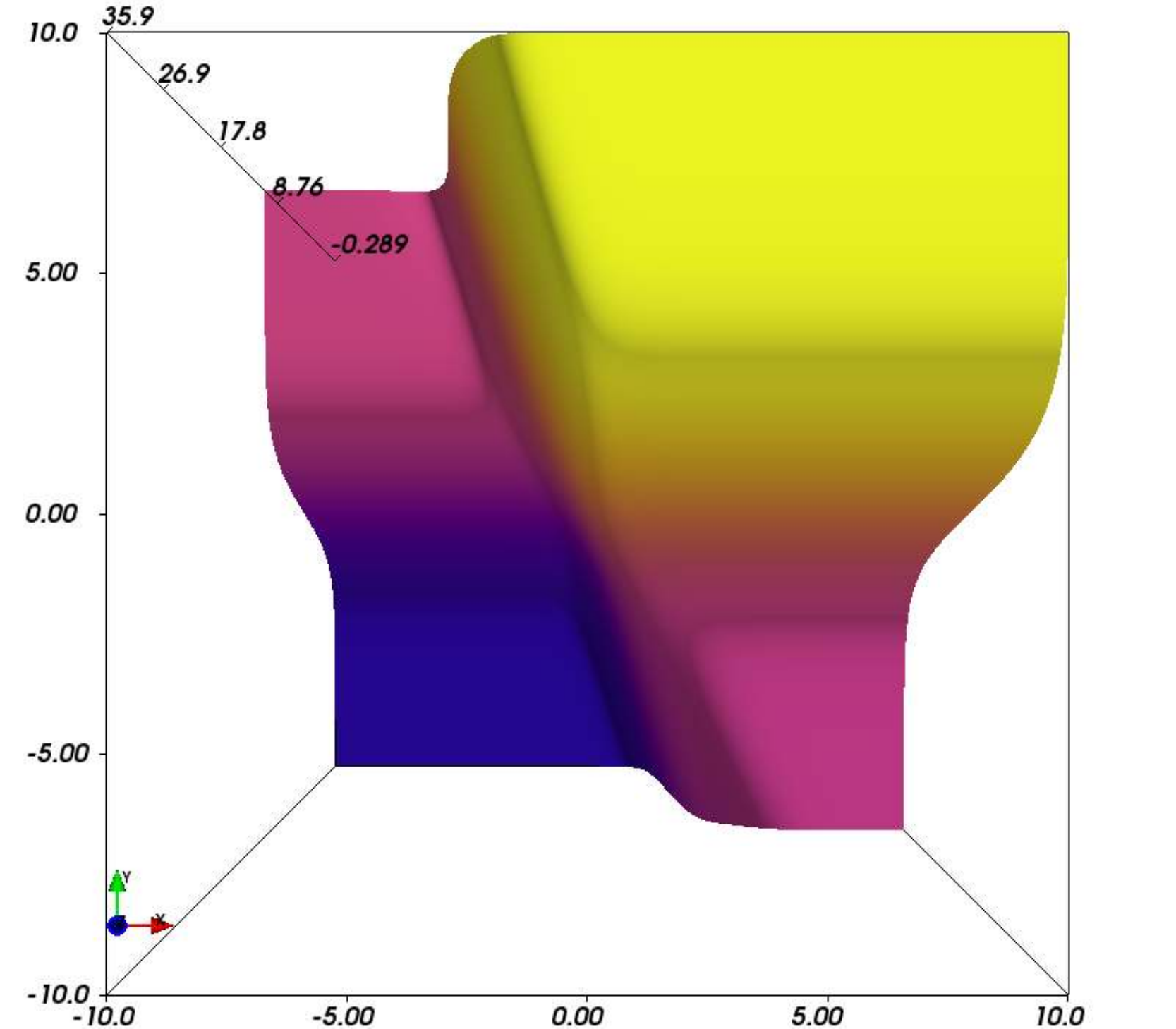}
\end{subfigure}
\begin{subfigure}{0.1375\textwidth}
\includegraphics[width=\textwidth]{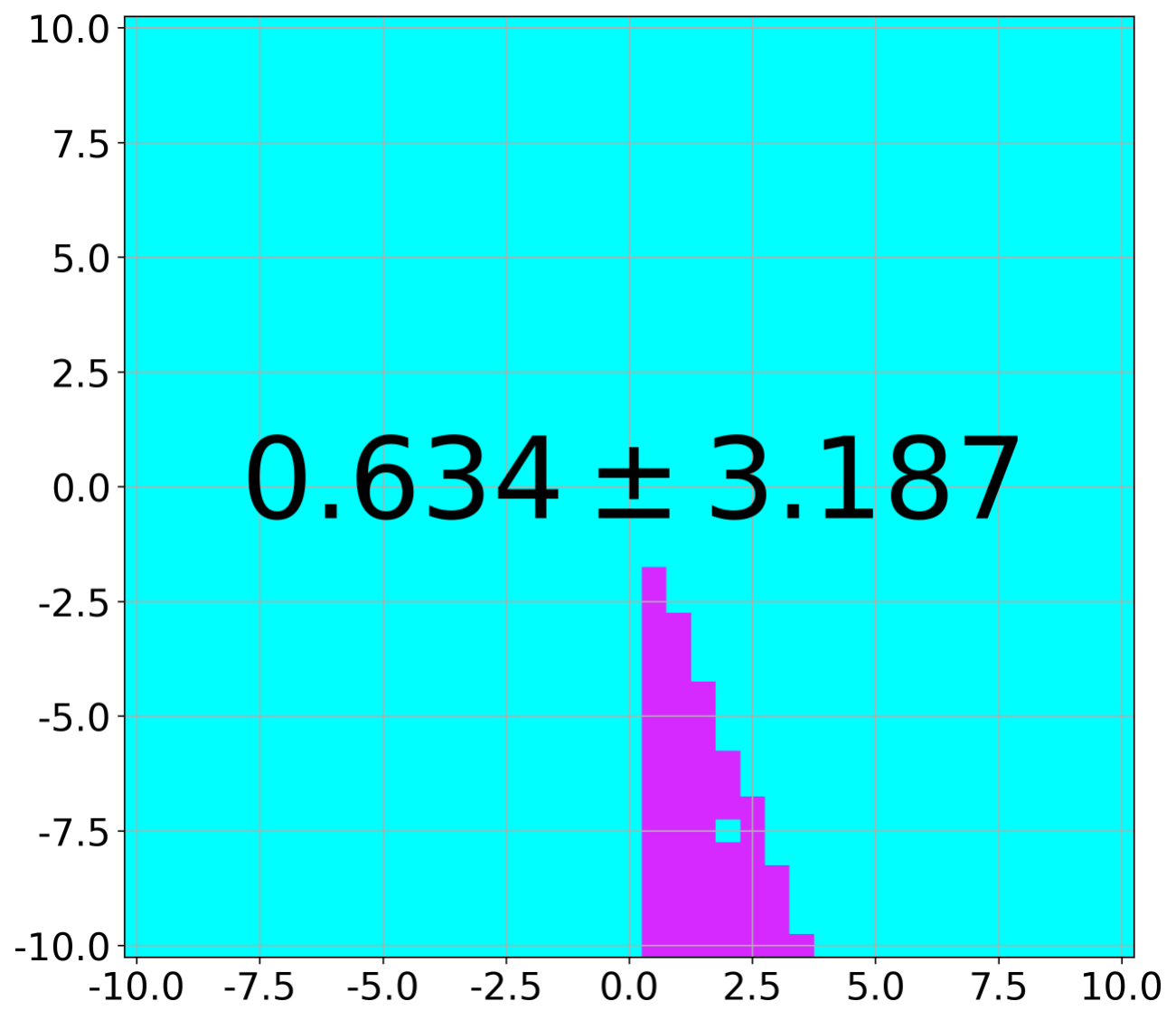}
\end{subfigure}
\begin{subfigure}{0.1375\textwidth}
\includegraphics[width=\textwidth]{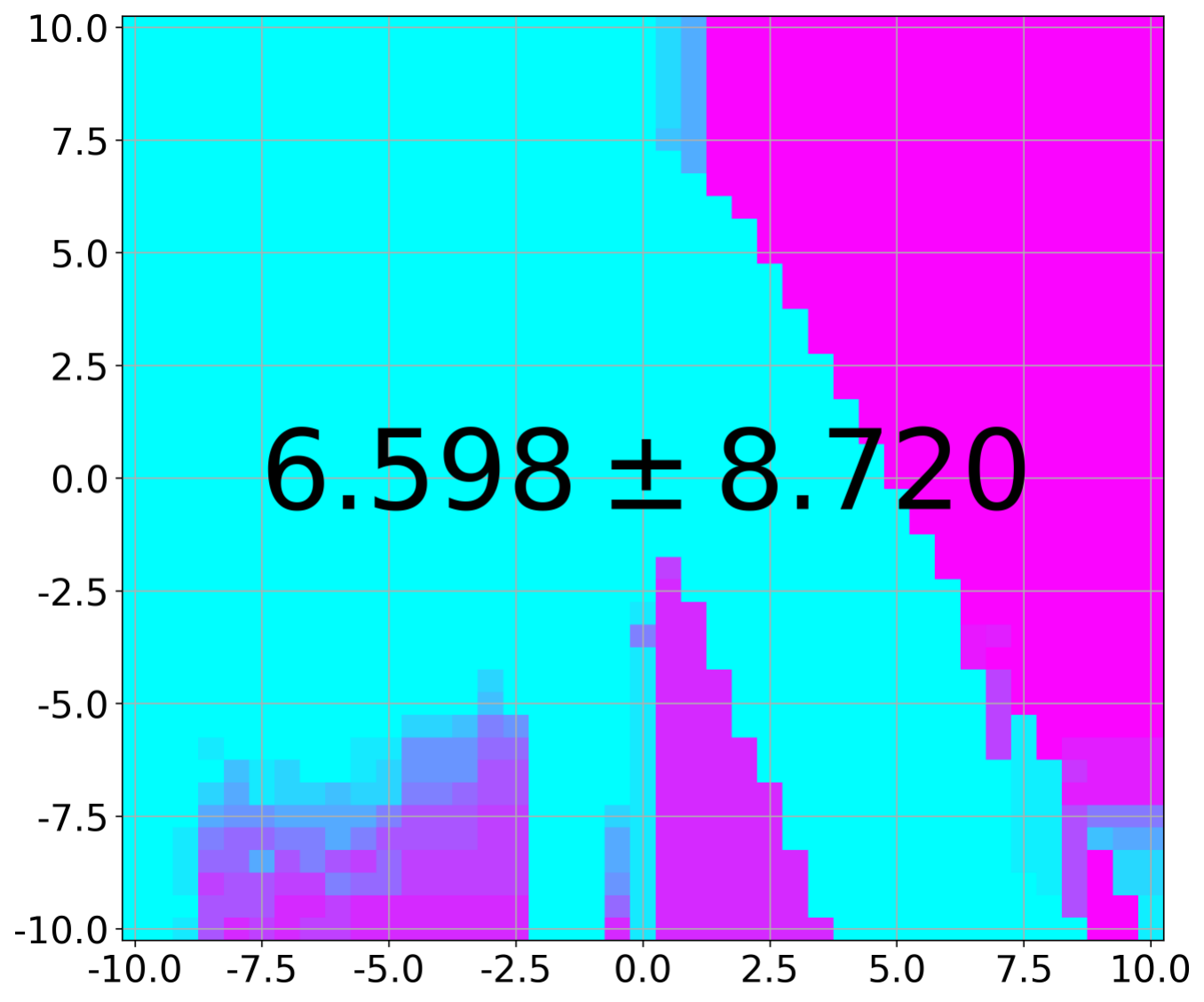}
\end{subfigure}
\begin{subfigure}{0.1375\textwidth}
\includegraphics[width=\textwidth]{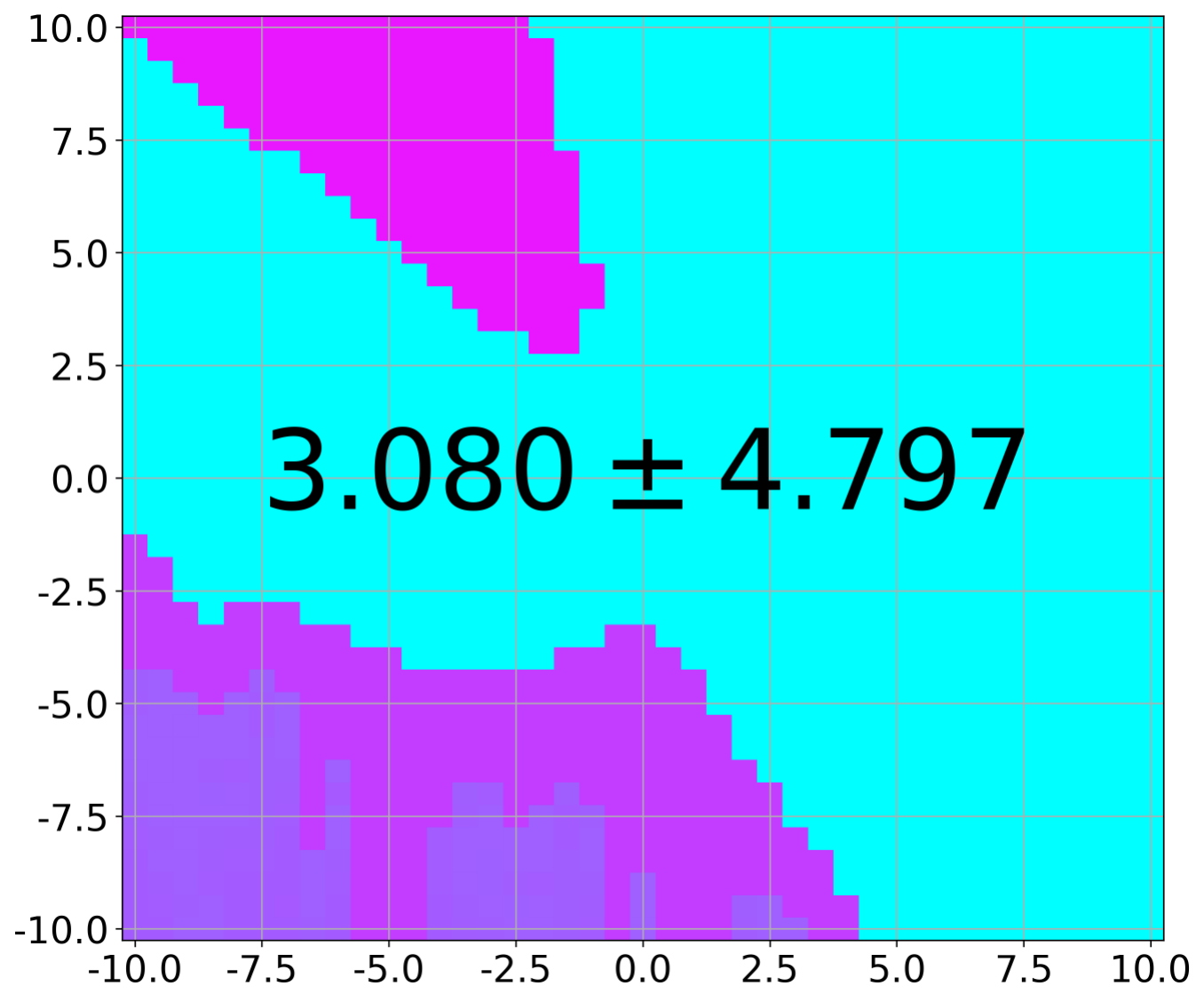}
\subcaption*{$\gamma < \gammabw$}
\end{subfigure}
\begin{subfigure}{0.1375\textwidth}
\includegraphics[width=\textwidth]{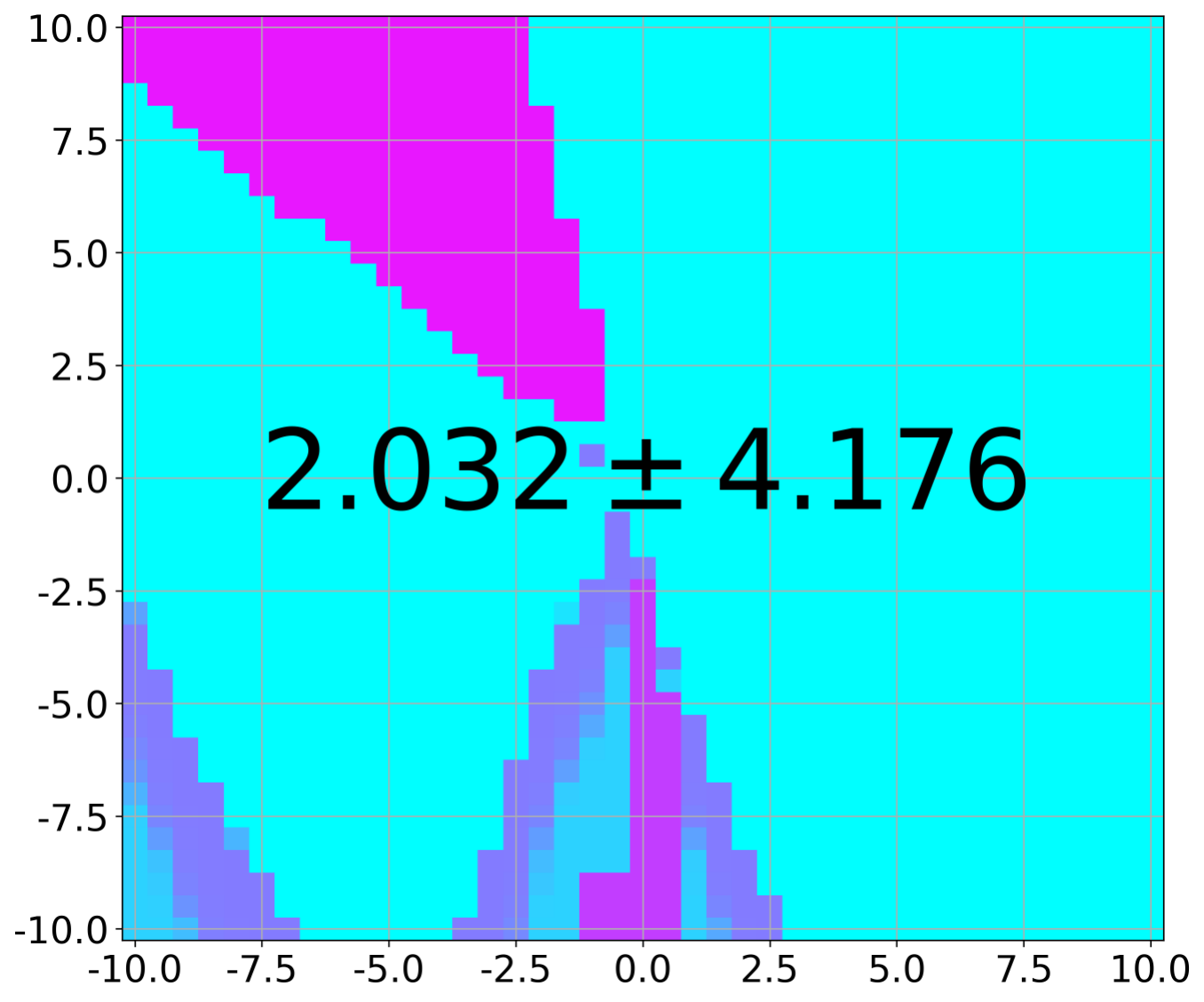}
\subcaption*{$\gamma \approx \gammabw$}
\end{subfigure}
\begin{subfigure}{0.1375\textwidth}
\includegraphics[width=\textwidth]{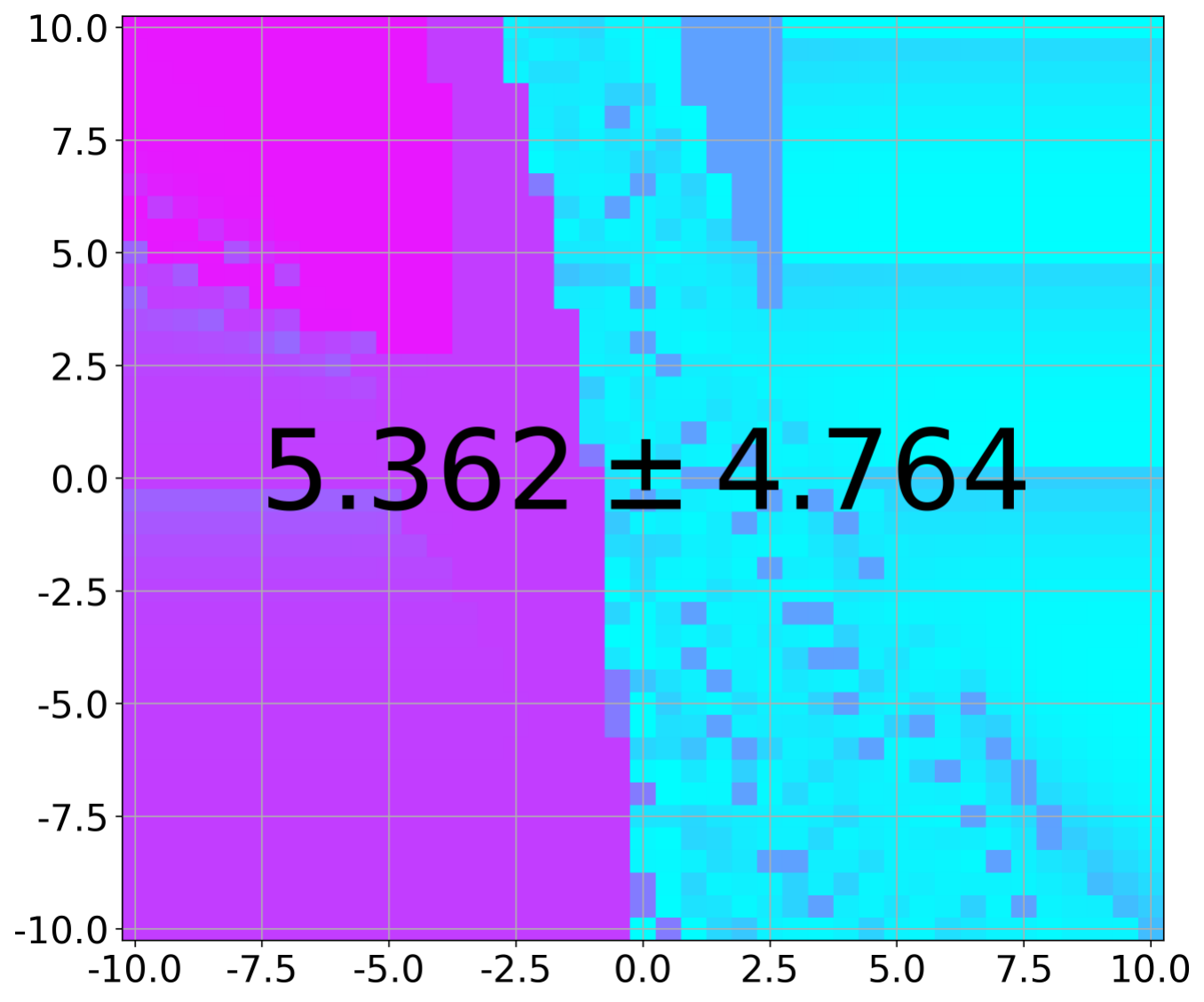}
\subcaption*{$\gammabw < \gamma \approx 1$}
\end{subfigure}
\begin{subfigure}{0.1375\textwidth}
\includegraphics[width=\textwidth]{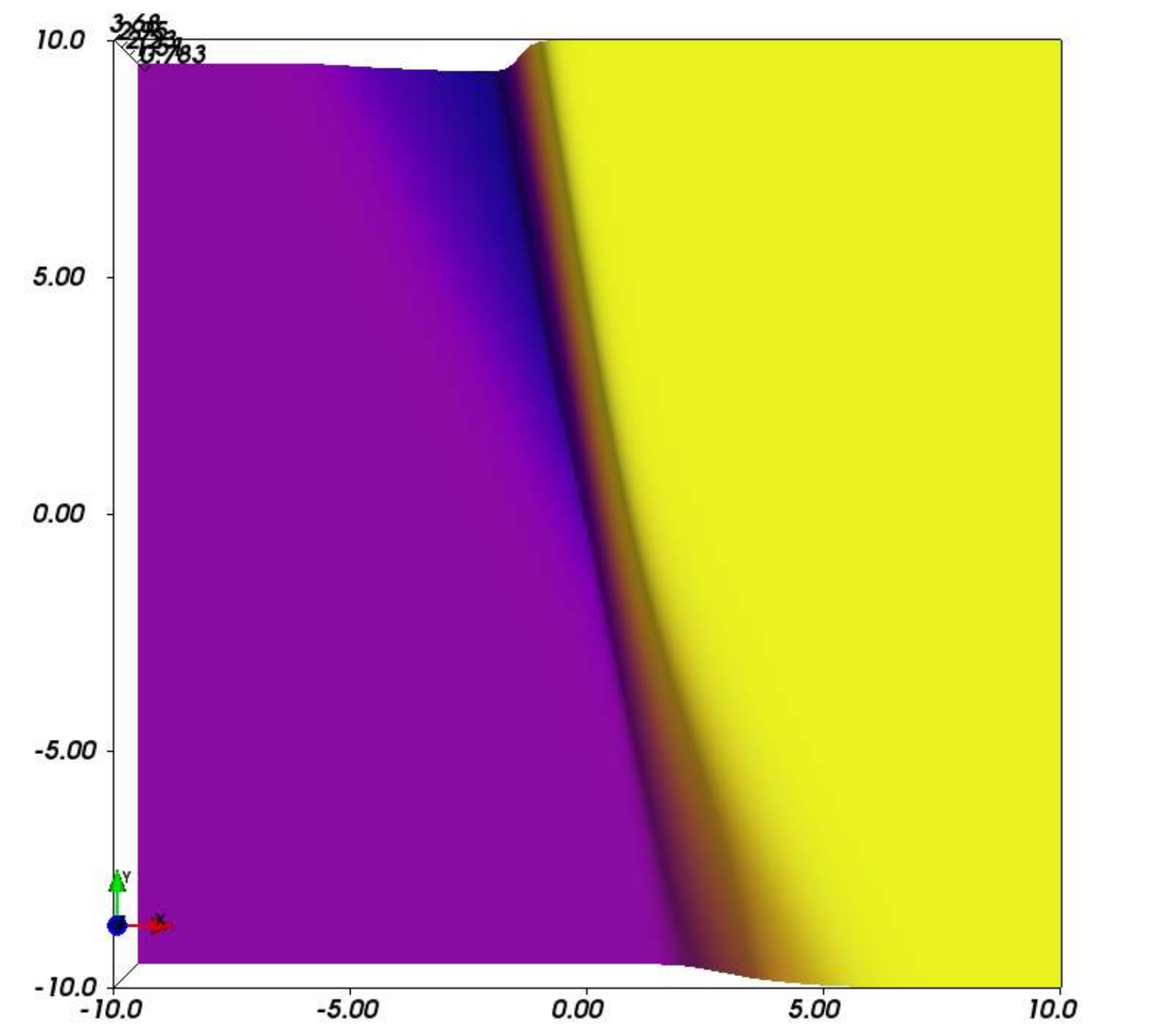}
\subcaption*{Gain landscape}
\end{subfigure}
\begin{subfigure}{0.1375\textwidth}
\includegraphics[width=\textwidth]{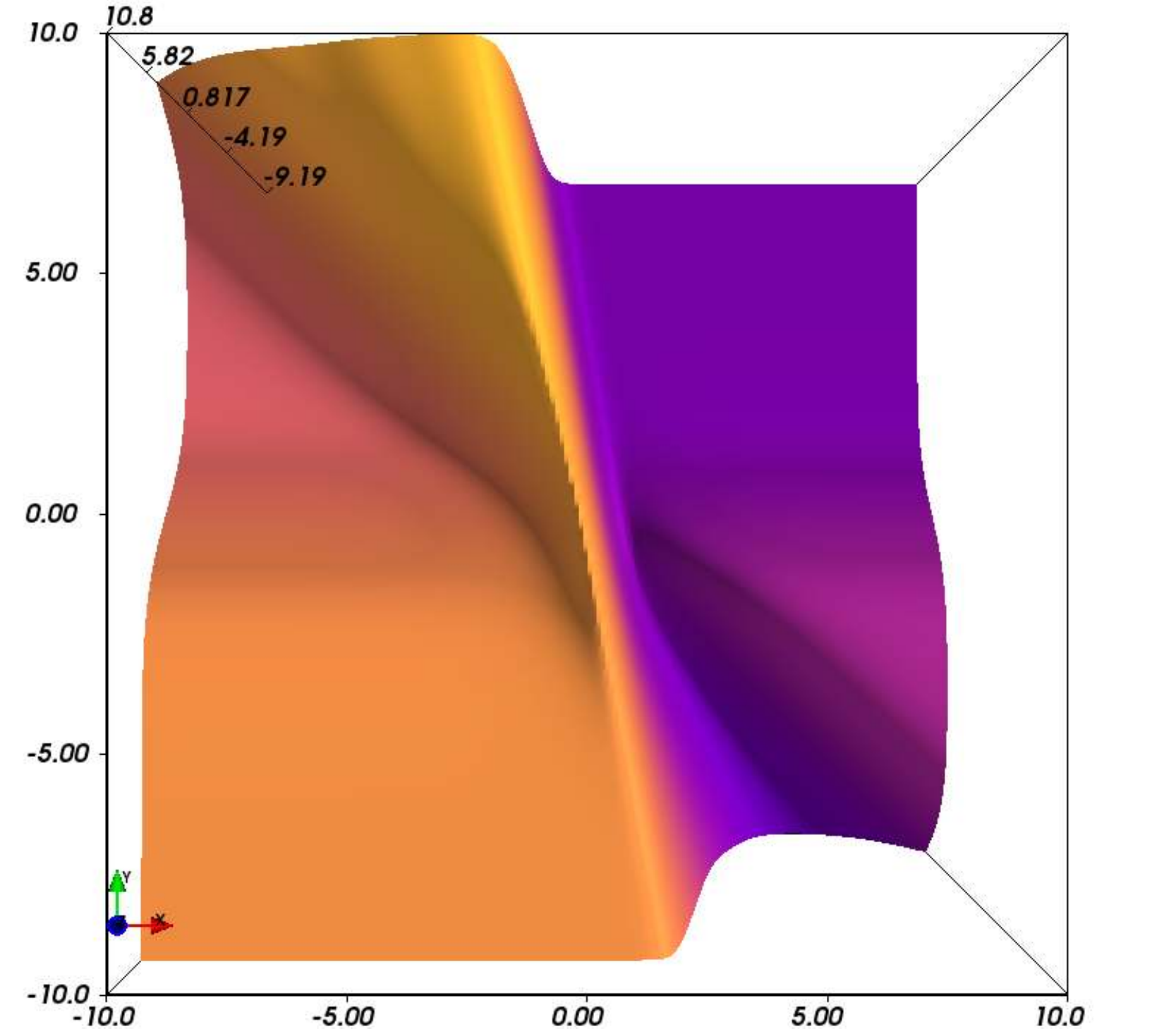}
\subcaption*{Bias landscape}
\end{subfigure}
\begin{subfigure}{0.1375\textwidth}
\includegraphics[width=\textwidth]{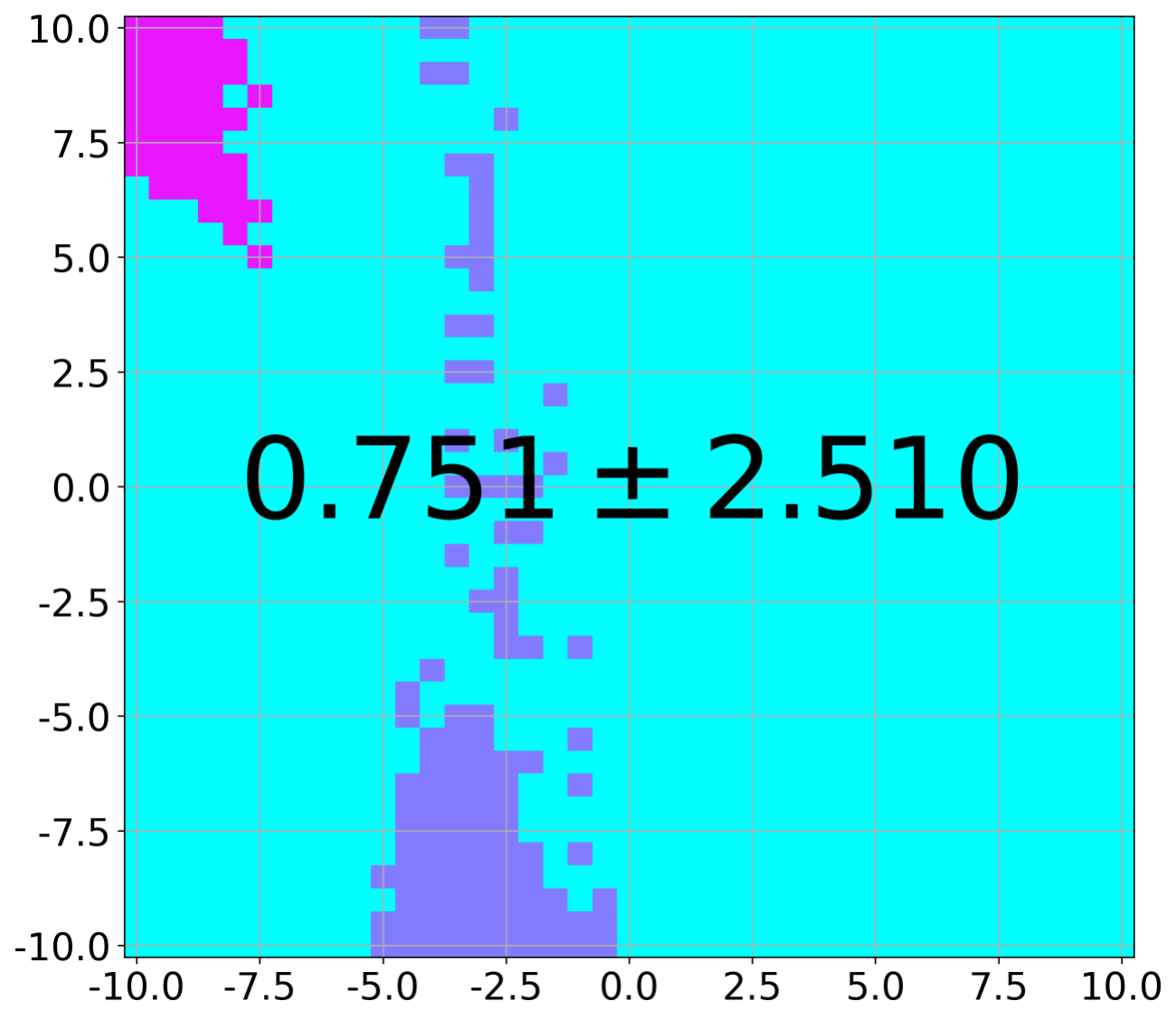}
\subcaption*{Ours}
\end{subfigure}
\begin{subfigure}{0.1375\textwidth}
\includegraphics[width=\textwidth]{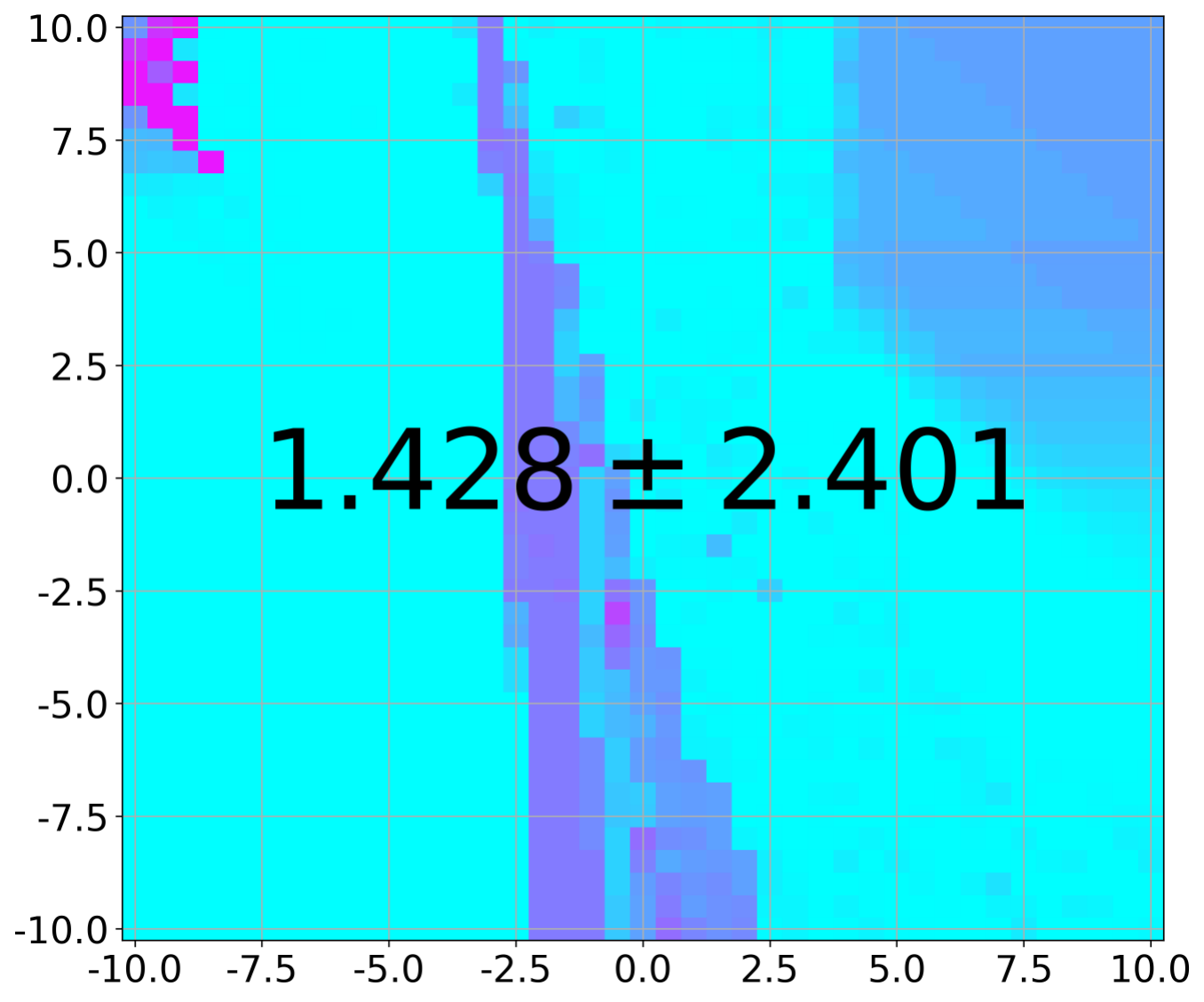}
\subcaption*{Ours \tiny{(sampling)}}
\end{subfigure}

\caption{Evaluations of our \algref{alg:gainbiasbarrier_optim}
with policy parameter $\vecb{\theta} \in \real{2}$ (horizontal and vertical axes)
on 6 environments (rows).
The landscape plots map low:dark-blue and high:yellow.
The other plots show the absolute difference or error (low:blue to high:magenta)
between the bias of the nBw-optimal deterministic policy and of the resulting randomized policy,
where the 2D-coordinate indicates the initial value of~$\vecb{\theta}$.
The numbers in the middle indicate the mean and standard deviation across $41^2$ grid values.
The lowest absolute bias difference (blue) is desirable.
For experimental setup, see \secref{sec:xprmt_setup_nbwpg}.
}
\label{fig:optim_cmp}
\end{figure}

\subsection{Main experimental results}

\figref{fig:optim_cmp} presents our main experimental results on
six environments (\secref{sec:env_nbpwg}) as rows.
The 4th and 5th columns depict optimization landscapes, where
each 2D coordinate represents a randomized stationary policy whose
gain and bias values are indicated by the coordinates' colors.
Here, the color-maps are normalized to the minimum-maximum range of
the gain and bias of deterministic policies:
the minimum is mapped to dark-blue and the maximum to yellow.
Note that the gain landscape plots of Env-A1, A2, and A3 display
only dark-blue colors because all stationary policies have the same gain value.
On the other hand, the bias landscape of Env-B3 does not display
the dark-blue and yellow colors because the displayed portion of the parameter space
does not include randomized policies close enough to deterministic policies
with minimum and maximum bias values.

The average-reward policy gradient method operates in
the gain landscape (4th column); performing gain maximization.
In the first top-3 environments (Env-A), it immediately reaches
the global maxima region because all stationary policies are gain optimal.
Therefore, it is imperative to employ our proposed method that switches to
optimizing the bias-barrier \eqref{equ:biasbarrier_opt} towards the highest bias
(the yellow region in the 5th column), yielding nBw-optimal policies.

The situation for the bottom-3 environments (Env-B) is different in that
some policies are gain optimal (the yellow region in the 4th column), some are not.
In these environments, once the gain maximization is deemed converged
(near or in the yellow region in the 4th column),
the bias-barrier optimization \eqref{equ:biasbarrier_opt} kicks in.
The barrier component prevents the optimization iterates from going outside
the gain optimal region (such iterates operate in the bias-barrier landscape, which
looks like the bias landscape (5th column) in region where the policies are gain optimal).
Thus, in the bottom-3 environments, the nBw-optimal policies are represented
by the coordinates, whose colors are yellow (the highest) in the gain landscape
and are violet (not the highest value) in the bias landscape.
Note that by optimal policies, we mean \emph{approximately} optimal policies
since all policies represented in the plots are randomized
(those approximately optimal are very close to being deterministic).

In what follows, we discuss comparisons between discounted-reward (1st, 2nd, 3rd columns)
and our proposed (6th column) policy gradient methods in exact settings
where there is no sampling-based approximation.
Then, we discuss the behaviour of our proposed method in sampling experiments
(7th right-most column), and compare it to its exact variant (6th column).
Here, the performance of a method is indicated by the absolute error (difference)
between the bias of the resulting randomized policy and
of the nBw-optimal deterministic policy.
The lowest error is mapped to blue, whereas the highest to magenta.
These become the two extreme colors in the error plots in
all but landscape columns in \figref{fig:optim_cmp}.
In those error plots, each 2D coordinate indicates the initial policy parameter,
whereas the coordinate's color indicates the bias error of a policy
parameterized by the final policy parameter.
Note that a high performance method has an error plot dominated by blue.

\textbf{First,} in exact settings, our proposed \algref{alg:gainbiasbarrier_optim}
compares favourably with the discounted reward method whose performance is
sensitive to the discount factor~$\gamma \in [0, 1)$.
Our proposed algorithm is superior to those with poor choices of $\gamma$
(either too low or too high).
It is better or at least on par with its discounted counterpart whose~$\gamma$
is close to its critical value~$\gammabw$, which is unknown in practice.

As can be observed from the 1st column, $\gamma < \gammabw$ generally
results in suboptimal policies with respect to Blackwell optimality (\secref{sec:rwork_nbwpg}).
This is indicated by larger magenta regions in the 1st than 2nd columns.
Increasing $\gamma$ too close to~1 also results in larger magenta regions
in the 3rd column, compared to the 2nd column (where $\gamma \approx \gammabw$).
This is as anticipated since the scaled discounted reward is equal to
the gain as $\gamma \to 1$.
Hence, optimizing discounted rewards with such high $\gamma$ tends to behave
like its average reward (gain) counterpart, which is undesirable for our six environments
because gain optimality is underselective for the induced unichain MDPs.

The 2nd column represents a discounted reward variant with
a proper $\gamma \approx \gammabw$, by which the nBw-optimal policy is also
attainable (\secref{sec:rwork_nbwpg}).
Therefore, its error plot is expected to be similar to that of
our proposed method (6th column).
Ours however is discounting-free so that it does not suffer from
the performance sensitivity with respect to $\gamma$ values
(as discussed in the previous paragraph).
More importantly, its hyperparameter $\beta_0$ merely serves as
the initial value for the barrier parameter~$\beta$, which is then shrunk down to~$0$
as the optimization iteration increases in \algref{alg:gainbiasbarrier_optim}.
This is in contrast to $\gamma$ that which is held fixed during optimization.
In \secref{sec:barrierparamtuning}, it is empirically shown that
the performance is less sensitive to $\beta_0$ than to $\gamma$
(whose effects to performance are depicted in the 1st to 3rd columns).

\textbf{Second,} we compare the exact variant of \algref{alg:gainbiasbarrier_optim}
(6th column) with its approximation (7th right-most column).
Here, the approximation comes from sampling-based estimates of
the gradients and Fisher matrices of the gain and bias-barrier functions,
as prescribed by the sampling procedure in \algref{alg:gradfisher_sampling}.
It can be seen that the sampling-based approximation results are
reasonably close to the exact's in most part of the policy parameter space.
They are anticipated to follow the law of large numbers.

\subsection{Decomposition of the bias gradients} \label{sec:gradbias_decompose}

We empirically investigate the contribution of each term
of the bias gradient expression \eqref{equ:grad_bias}.
Each term is computed exactly.
The pre-mixing terms are gradually added up term-by-term from
$t=0$ to $\tmix^\pi - 1$, then the single post-mixing term at $\tmix^\pi$ is
finally added up to the total of pre-mixing terms.

\figref{fig:gradbias_decompose} shows the error of the exact gradual summation
with respect to the exact full gradients of the bias;
hence, the error after all summations are carried out is exactly zero.
There are direction (angular) and magnitude (norm) errors.
Note that in practice, the magnitude error can be folded into (compensated by)
the optimization step length (\ie the learning rate).

One apparent trait is that the error drops always occur either
in the beginning (the first few terms) of the summation, at the end, or both.
Generally, there is little to no error change in between.
This suggests at least two phenomenons.
First, pre-mixing terms at later timesteps contribute less to the bias gradient,
compared to the first few.
Second, the single post-mixing term is likely to have a significant contribution
to the bias gradient.

\subsection{Bias-only optimization} \label{sec:biasopt}

In this section, we focus on bias-only optimization
(\cf gain-then-bias-barrier optimization in \secref{sec:bwoptim_algo}).
For environments where all policies are gain optimal (\ie Env-A1, A2, A3),
bias-only optimization returns nBw-optimal policies.
For others (\ie Env-B1, B2, B3), it returns policies that have
the maximum bias but \emph{not} necessarily the maximum gain
(thus, they are not necessarily nBw-optimal).

In exact settings, there are four schemes of bias-only optimization due to
different preconditioning matrices~$\mat{C}$ in \eqref{equ:polparam_update}.
They are as follows.
\begin{enumerate} [label=\roman{*}.]
\item Identity: $\mat{C} \gets \mat{I}$ for an identity matrix $\mat{I}$.
    This yields standard (vanilla) gradients.
    \label{item:identity}
\item Hessian: $\mat{C} \gets \nabla^2 v_b(\vecb{\theta})$,
    where the Hessian $\nabla^2 v_b(\vecb{\theta})$ is modified
    by adding $e \mat{I}$ for a small positive~$e$ until
    it becomes a positive definite matrix (checked by the Cholesky test).
    \label{item:hessian}
\item Analytic Fisher: $\mat{C} \gets \fbamat(\vecb{\theta})$ from \eqref{equ:fisher_b},
    yielding natural gradients.
    That the Fisher matrix is positive-semidefinite (instead of positive-definite)
    is a typical problem for optimization methods that involve the Fisher matrix inversion,
    such as \eqref{equ:polparam_update}.
    There are several resolutions to this,
    \eg using the pseudo-inverse, or adding a small positive-definite term,
    say $10^{-3} \mat{I}$.
    \label{item:natgrad}
\item Sampling-enabler Fisher:
    $\mat{C} \gets \fbamatsam(\vecb{\theta})$ from \eqref{equ:fisher_b_sampling},
    yielding approximate natural gradients (which are approximations to
    those of Scheme~\ref{item:natgrad} due some ignored terms,
    as described in \secref{sec:natgrad}).
    Here, a \emph{sampling-enabler} expression enables sampling because
    it contains only expectation terms.
    In exact settings, those expectations are computed exactly.
    \label{item:natgrad_sampling}
\end{enumerate}

In sampling-based approximation settings, there are two schemes approximating
Schemes~\ref{item:identity} and~\ref{item:natgrad_sampling}.
Here, approximation refers to estimating the gradient and Fisher of the bias by
their sample means, as described in \algref{alg:gradfisher_sampling}.
We did not perform approximation experiments corresponding to
Schemes~\ref{item:hessian} and~\ref{item:natgrad} because
the Hessian $\nabla^2 v_b$
(which is a straightforward differentiation of \eqref{equ:gradbias_naive})
and the bias Fisher $\fbamat$ \eqref{equ:fisher_b}
do not enable sampling-based approximation.

\figref{fig:biasoptcmp} shows the experiment results of bias-only optimization.
\textbf{In exact settings}, the natural gradients
(both Schemes~\ref{item:natgrad} and~\ref{item:natgrad_sampling}) mostly
yield higher bias than the standard gradients (Scheme~\ref{item:identity}),
as anticipated in \secref{sec:natgrad}.
Interestingly, as a preconditioning matrix, the proposed Fisher often outperforms
the modified Hessian (Scheme~\ref{item:hessian}), otherwise it is slightly worse.
More importantly, the sampling-enabler Fisher (computed exactly) maintains this benefit.

In exact settings, we also carried out comparisons
on seven different bias Fisher candidates for bias-only optimization.
This is to track the empirical effects of the series of approximation
(not including sampling-based approximation) that lead to
the sampling-enabler bias Fisher in \eqref{equ:fisher_b_sampling}.
As can be seen in \figref{fig:biasfishercmp}, the use of
$\sum_{t=0}^\infty |p_\pi^t(s|s_0) - p_\pi^\star(s)|$,
in lieu of $| h_\pi(s|s_0) |$ in \eqref{equ:fisher_b},
does not significantly degrade the advantage of the Fisher.
Then, dropping some terms in \eqref{equ:fisher_b} causes
acceptable degradation as anticipated.
It is interesting that increasing $\tabsminhat$
(\ie dropping fewer terms in \eqref{equ:fisher_b})
does not always result in better optimization performance as in Env-A2 and A3.
We observe that $\tabsminhat = 2$ gives us the best compromise across
at least, six benchmarking environments (\secref{sec:env_nbpwg}).

\textbf{In sampling-based approximation settings}, the stochastic natural gradients
outperform the stochastic standard gradients whenever the sample size is sufficient
in order to obtain sufficiently accurate estimates of the Fisher matrix.
Note that the behavior of sampling-based approximation is a function of sample sizes,
following the strong law of large numbers.
Recall that if $Y_1, \ldots, Y_m$ are i.i.d random variables, then
the deviation between the sample mean $\bar{Y} \eqdef \frac{1}{m} \sum_{i=1}^m Y_i$ and
its expectation $\E{}{\bar{Y}} = \E{}{Y_i}$ (due to unbiasedness of the sample mean) is
of order $\bigO{1/\sqrt{m}}$.
Assuming that $Y_i \in [y_{min}, y_{max}]$ almost surely and
for $\delta \in (0, 1)$, with probability greater than $1 - \delta$, we have
$|\bar{Y} - \E{}{Y_i}|
\le |y_{min} - y_{max}| \sqrt{ \frac{\log (2/\delta)}{2m} }$,
which is derived through the Hoeffding's inequality.

\subsection{Initial barrier parameter tuning}
\label{sec:barrierparamtuning}

The proposed \algref{alg:gainbiasbarrier_optim} requires
an initial value of the barrier parameter $\beta_0$.
For tuning $\beta_0$, we run the exact version of \algref{alg:gainbiasbarrier_optim}
several times with $\beta_0 \in \{0.01, 0.1, 1, 10, 100, 1000, 10000 \}$
representing a range of values from small to large extremes.
We note that in general, the value of $\beta_0$ goes hand in hand with
the shrinking parameter $\beta_{\mathrm{div}}$ and the number of outer iterations
in \algref{alg:gainbiasbarrier_optim}.
For a recipe for choosing those parameters' values, we refer the reader to
\citet[\page{570}]{boyd_2004_cvxopt} and \citet[\page{104, 123}]{bertsekas_1996_copt}.
Also note that the barrier parameter $\beta$ is quite analogous to the penalty parameter
of penalty methods, such as the penalty parameter $\phi$ in \eqref{equ:optim_penalty}.

\figref{fig:barrierparam_env_a_b} shows the experimental results on
Env-A1, A2, A3, B1, B2, and~B3.
\textbf{For Env-A}, a small $\beta_0$ yields best performance
(\ie lowest absolute bias error: the blue color).
This is expected because all policies of Env-A are gain optimal.
In model-free RL, if we knew that the gain landscape is flat, then
we would optimize only the bias without the barrier by setting $\beta_0 \gets 0$.
Thus for this kind of environments, values towards the small extreme are
suitable for $\beta_0$.
Note that in exact settings, $\beta_0$ matters to the gradient-ascent update
only through the multiplication of $\beta$ with the gain Fisher $\fgamat$
as in \eqref{equ:biasbarrier_precond}.
This is because the gain gradient $\nabla g(\vecb{\theta})$ is a zero vector for Env-A.

\textbf{On the other hand for Env-B}, values towards the large extreme are
suitable for $\beta_0$.
As mentioned in \secref{sec:bwoptim_algo}, a small $\beta_0$ tends to induce
ill-conditioned bias-barrier optimization landscapes, which
should be avoided in the early outer iterations of \algref{alg:gainbiasbarrier_optim}.
Recall that for Env-B the barrier function is essential since
such an environment type requires gain-then-bias-barrier bi-level optimization.

\begin{landscape}
\begin{figure}
\centering

\begin{subfigure}{0.215\textwidth}
\includegraphics[width=\textwidth]{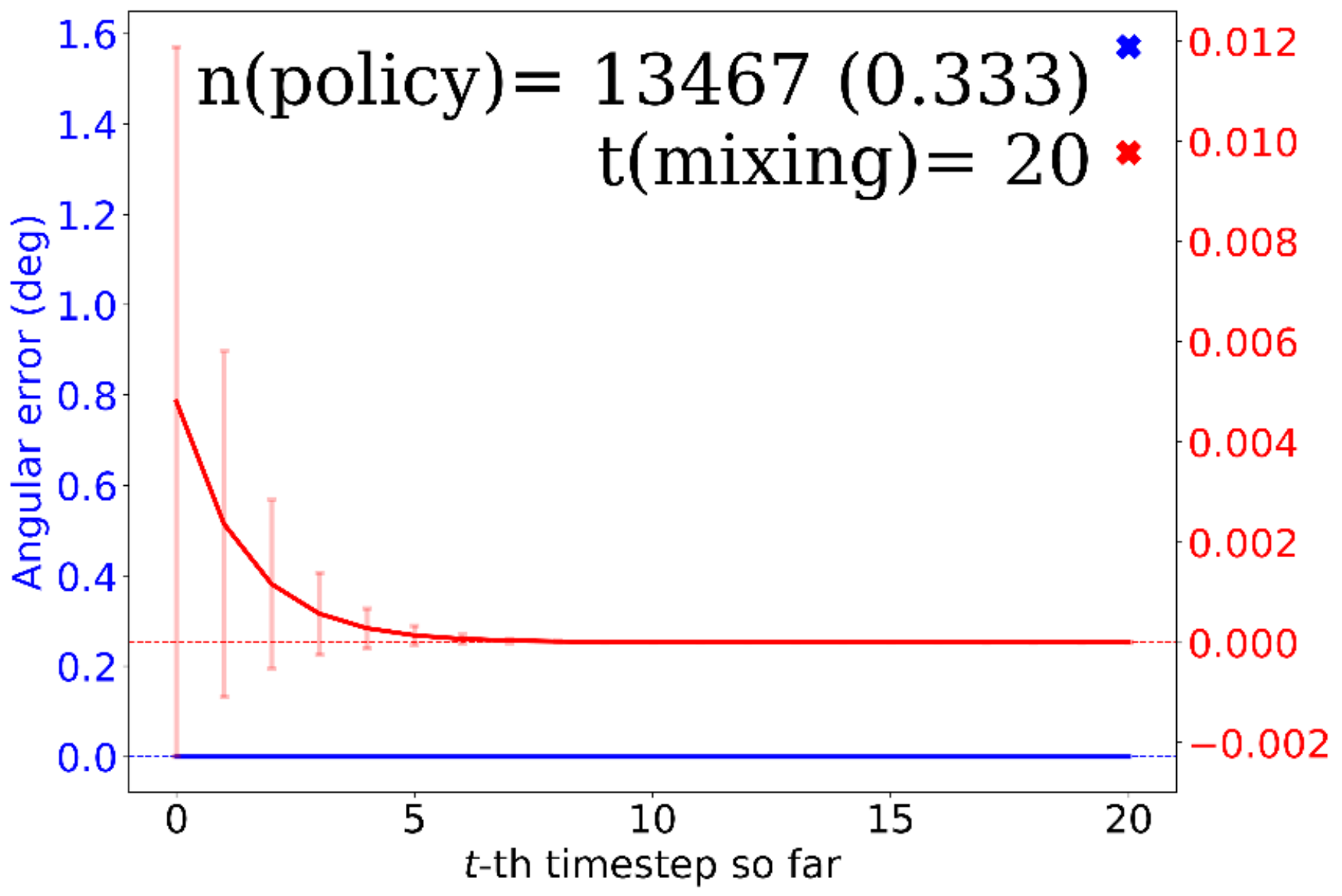}
\end{subfigure}
\begin{subfigure}{0.215\textwidth}
\includegraphics[width=\textwidth]{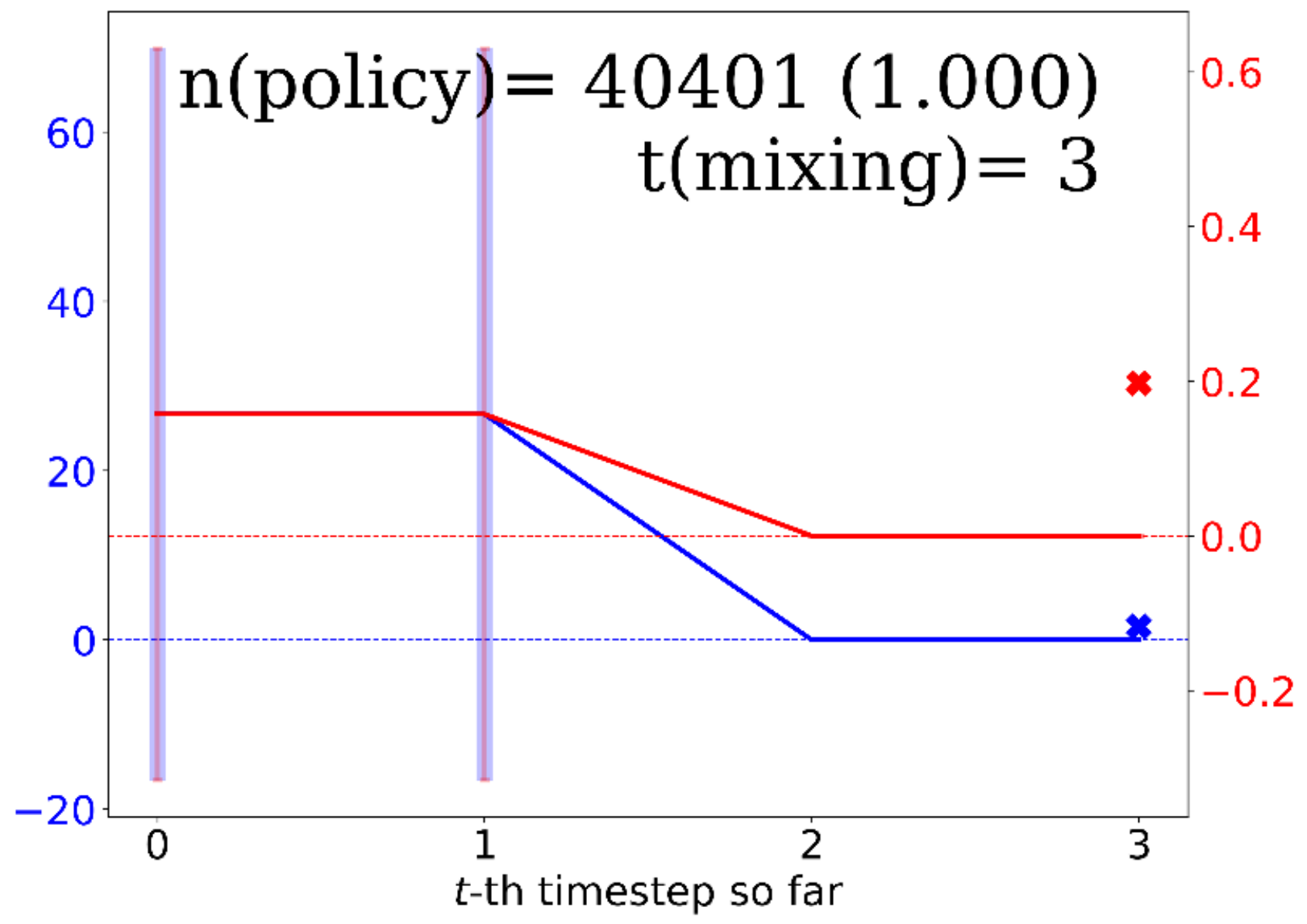}
\end{subfigure}
\begin{subfigure}{0.215\textwidth}
\includegraphics[width=\textwidth]{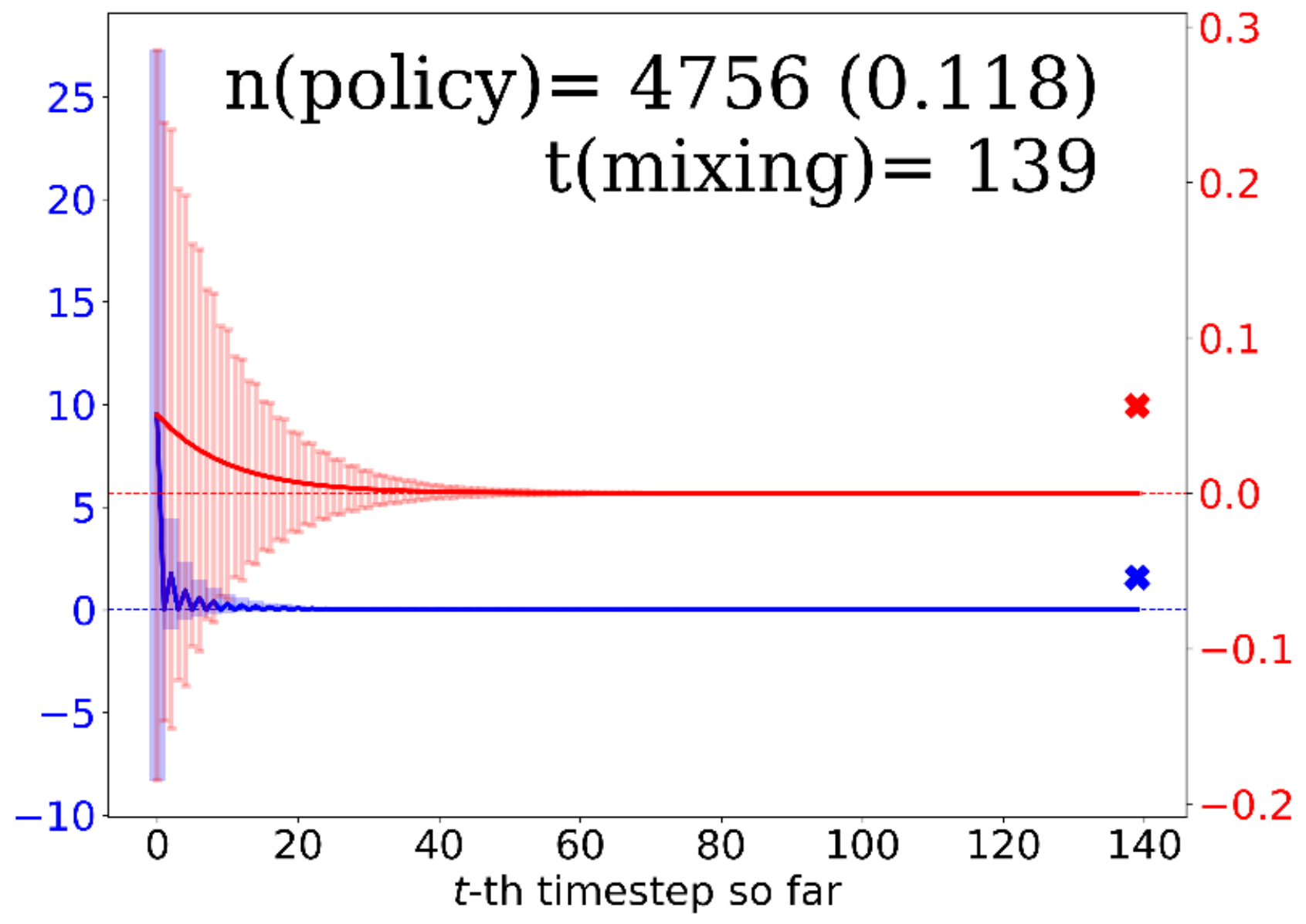}
\end{subfigure}
\begin{subfigure}{0.215\textwidth}
\includegraphics[width=\textwidth]{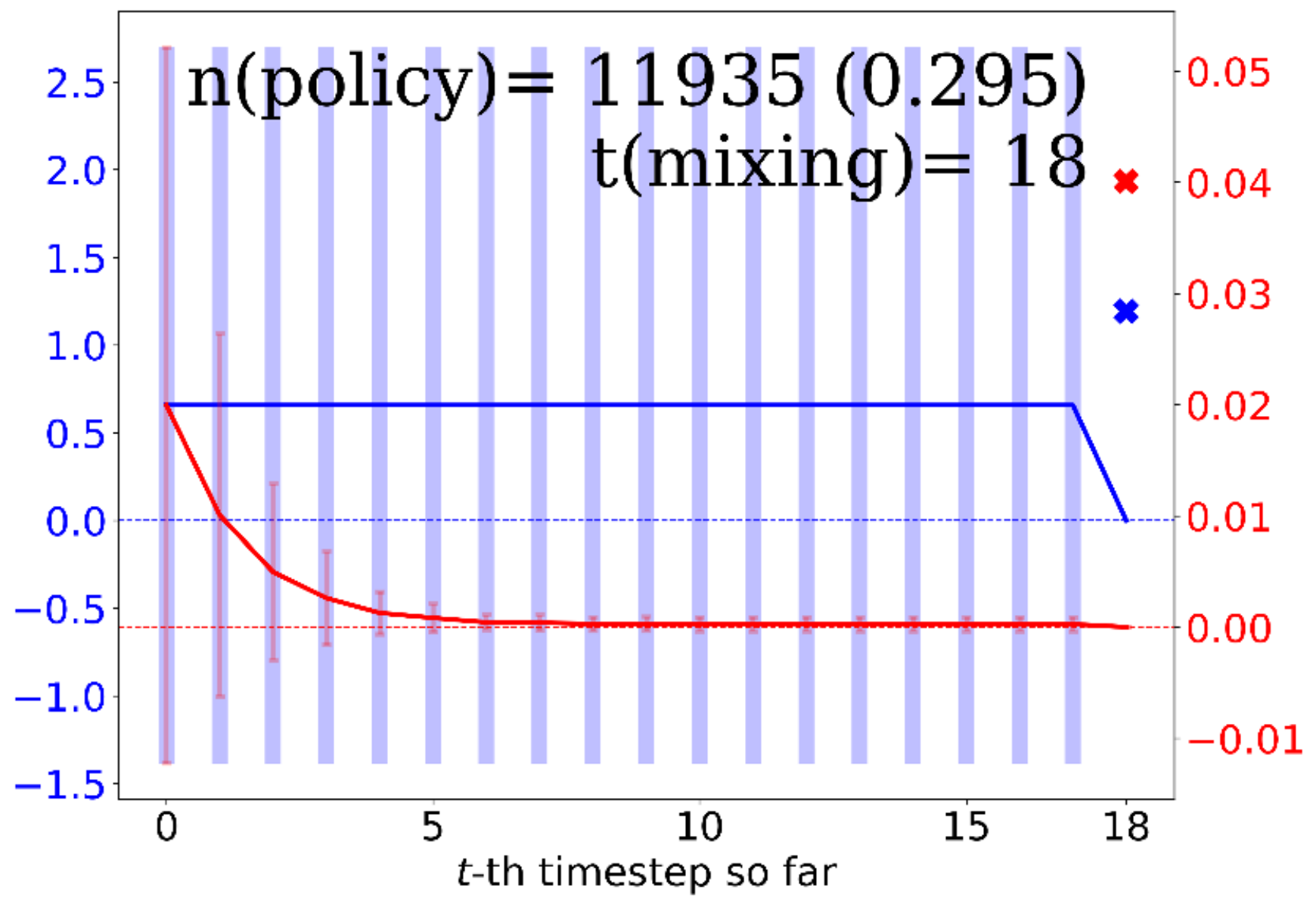}
\end{subfigure}
\begin{subfigure}{0.215\textwidth}
\includegraphics[width=\textwidth]{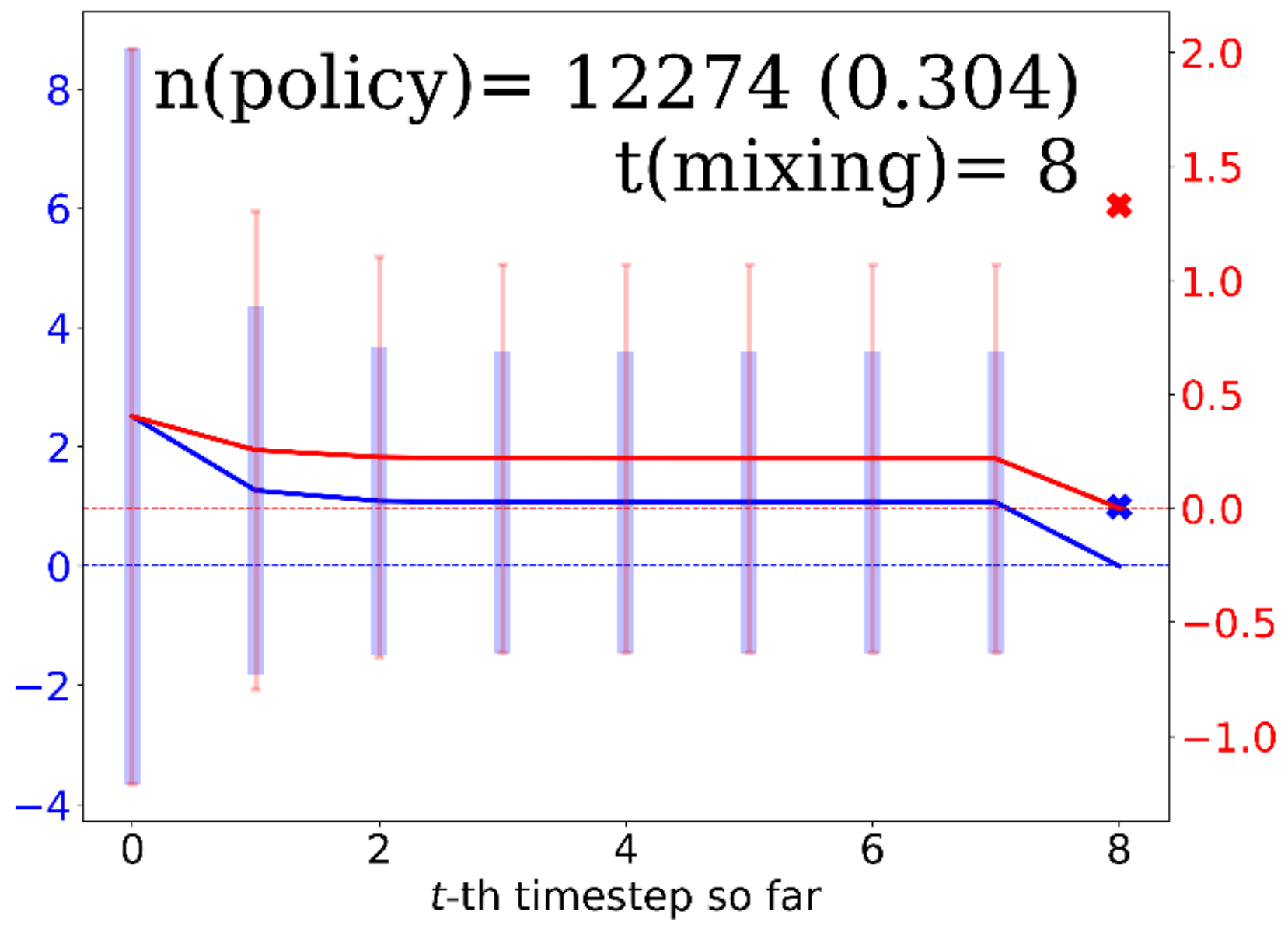}
\end{subfigure}
\begin{subfigure}{0.215\textwidth}
\includegraphics[width=\textwidth]{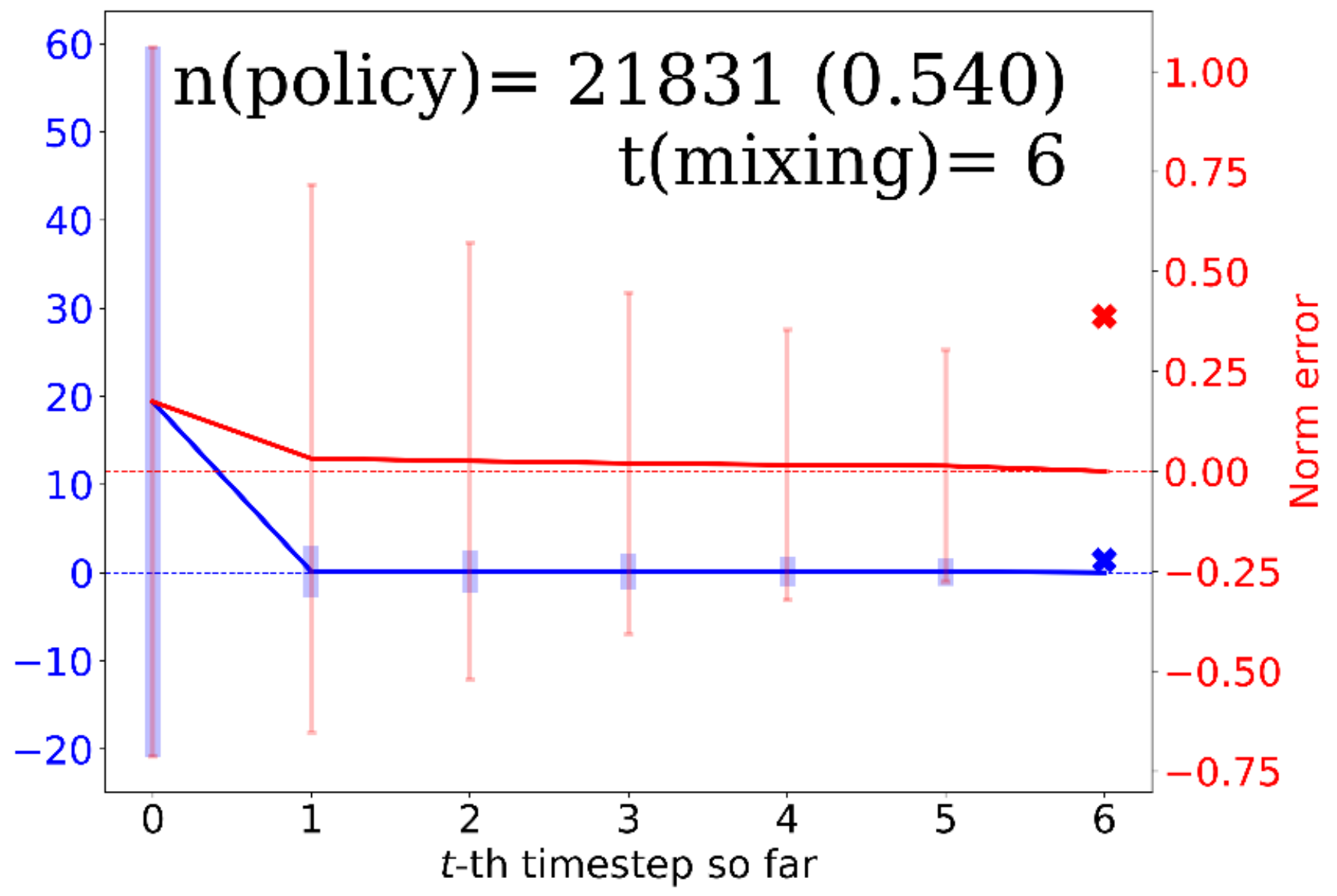}
\end{subfigure}

\begin{subfigure}{0.215\textwidth}
\includegraphics[width=\textwidth]{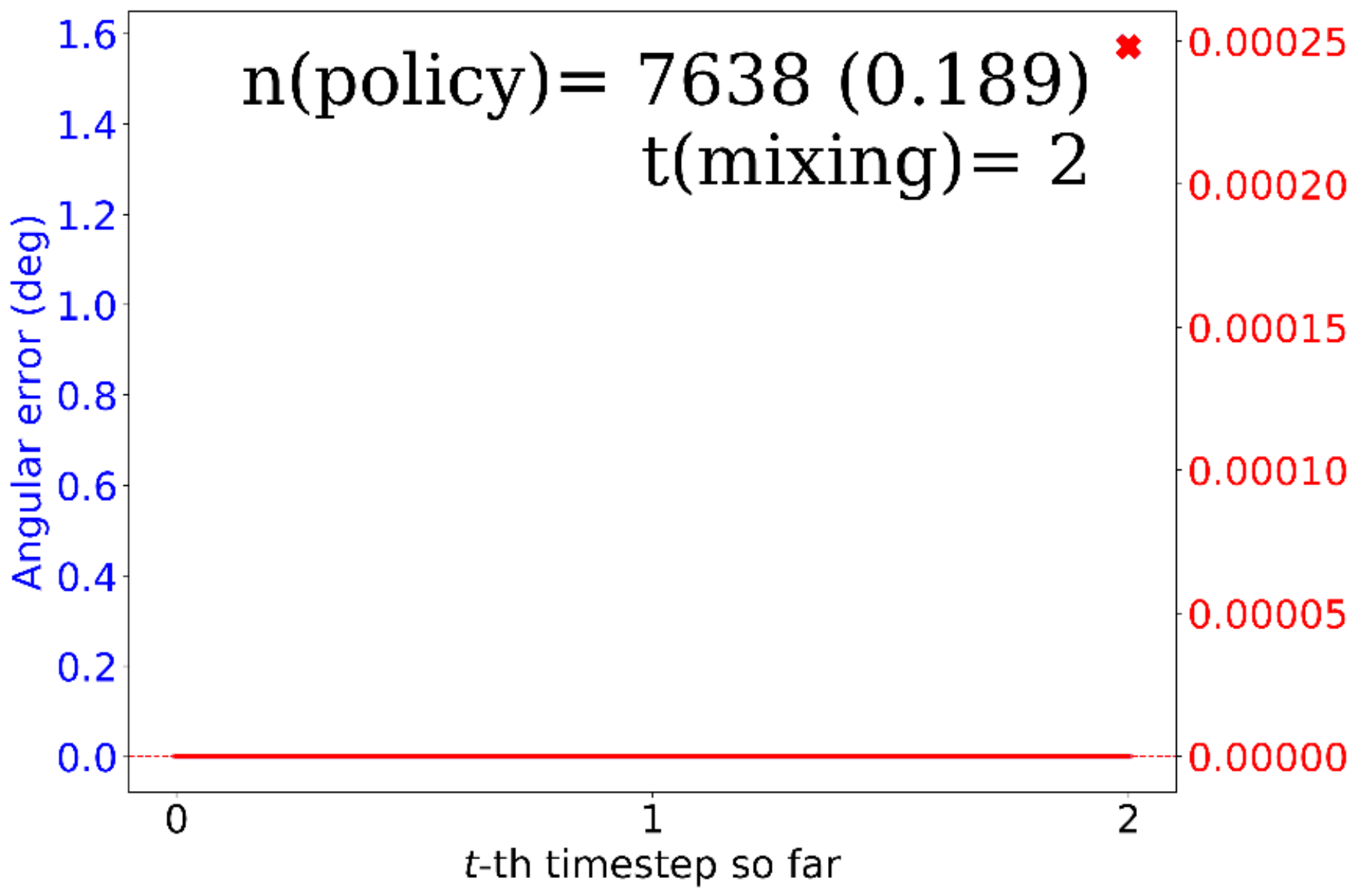}
\end{subfigure}
\begin{subfigure}{0.215\textwidth}
\includegraphics[width=\textwidth]{fig/gradbias-decompose/gradsamplingexact__1__OneDimensionStateAndTwoActionPolicyNetwork__Tor_20210307-v1}
\end{subfigure}
\begin{subfigure}{0.215\textwidth}
\includegraphics[width=\textwidth]{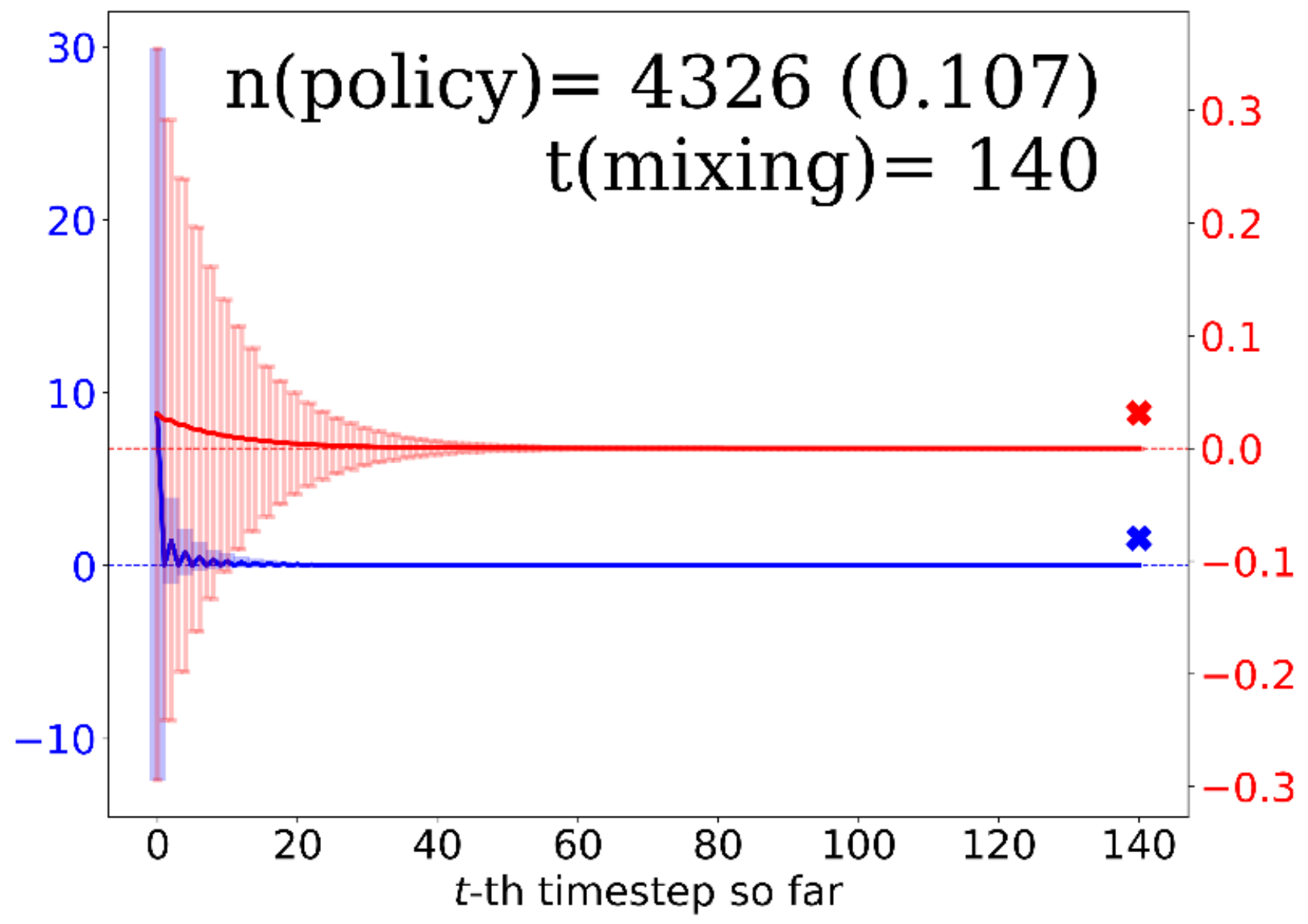}
\end{subfigure}
\begin{subfigure}{0.215\textwidth}
\includegraphics[width=\textwidth]{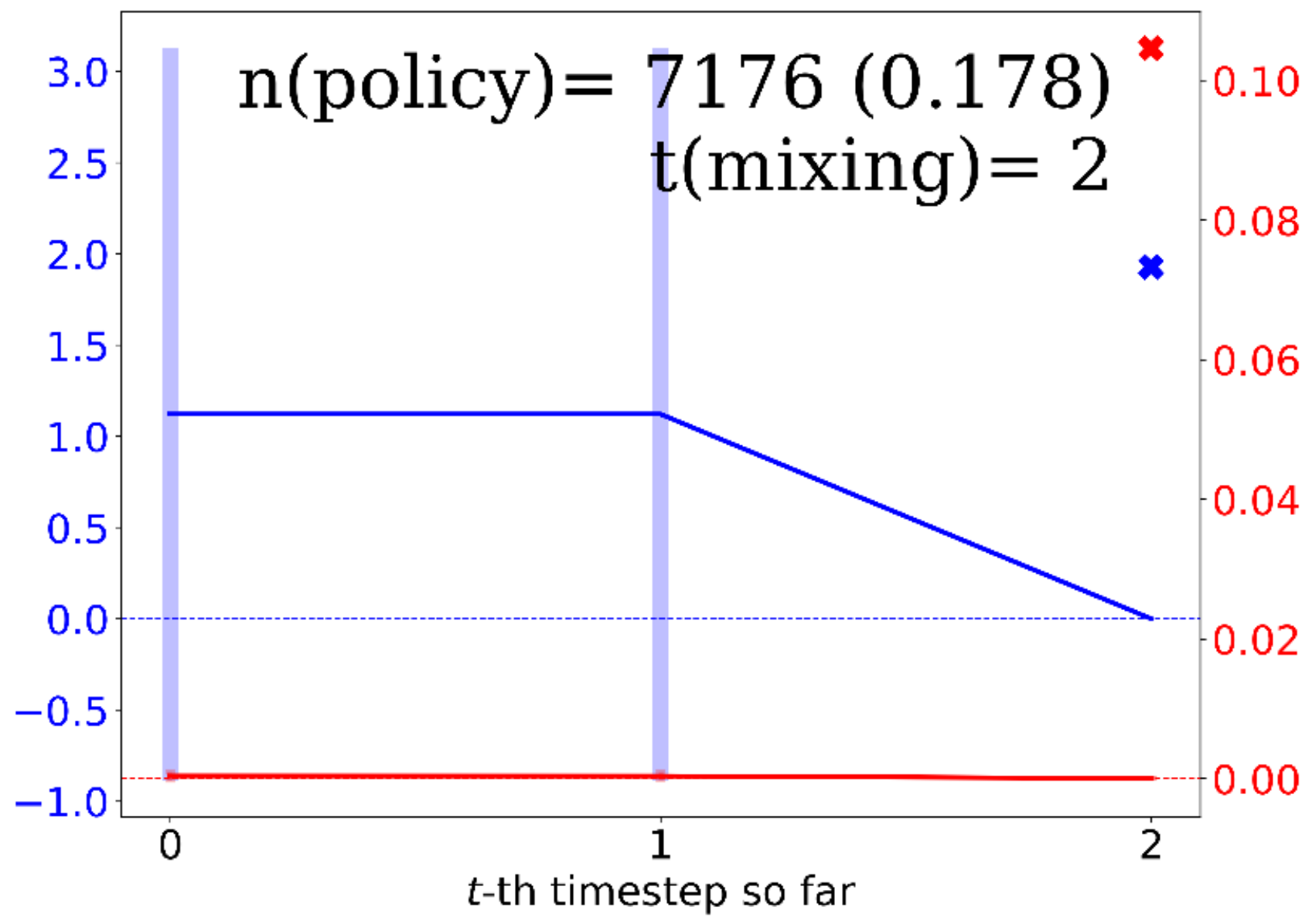}
\end{subfigure}
\begin{subfigure}{0.215\textwidth}
\includegraphics[width=\textwidth]{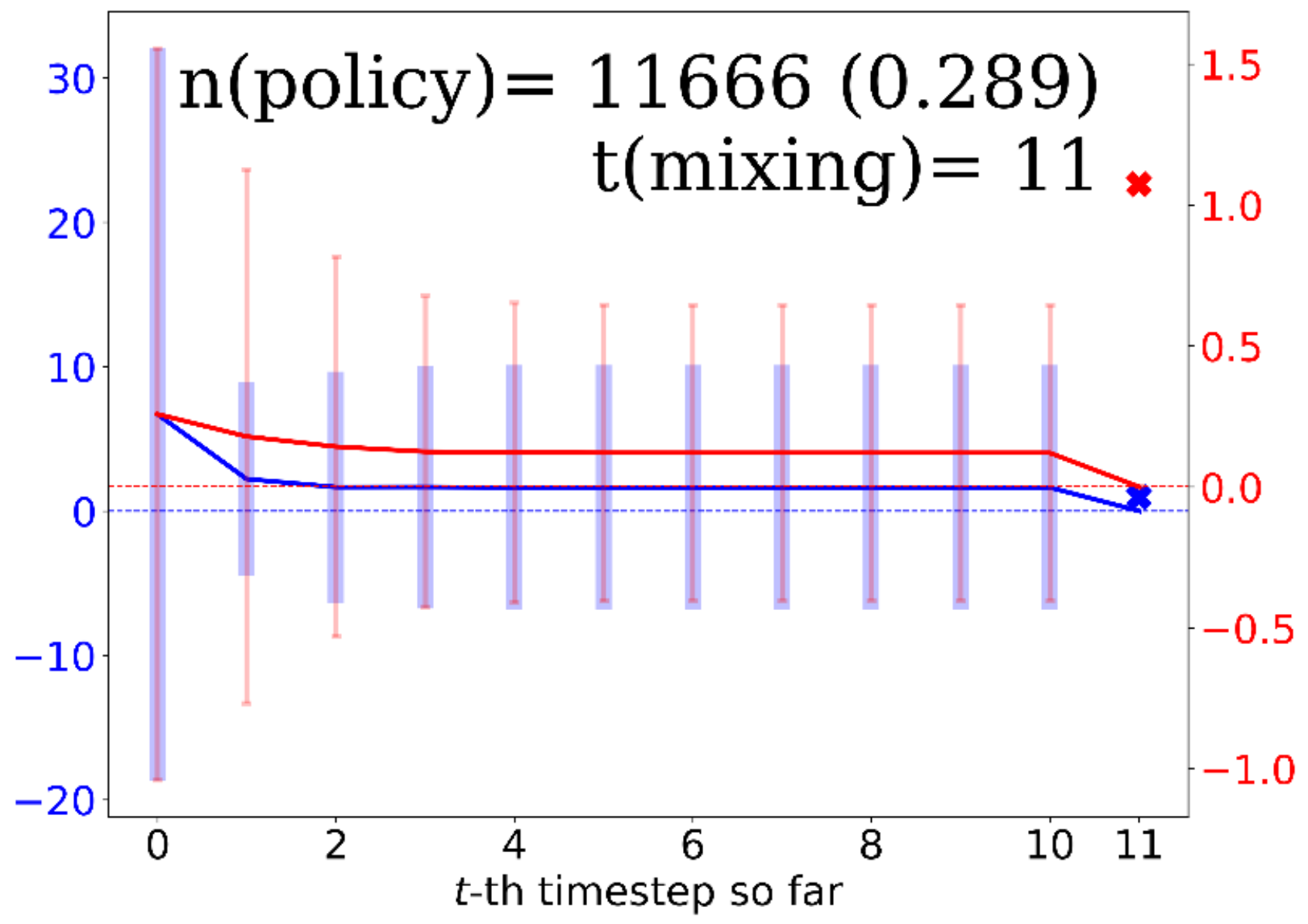}
\end{subfigure}
\begin{subfigure}{0.215\textwidth}
\includegraphics[width=\textwidth]{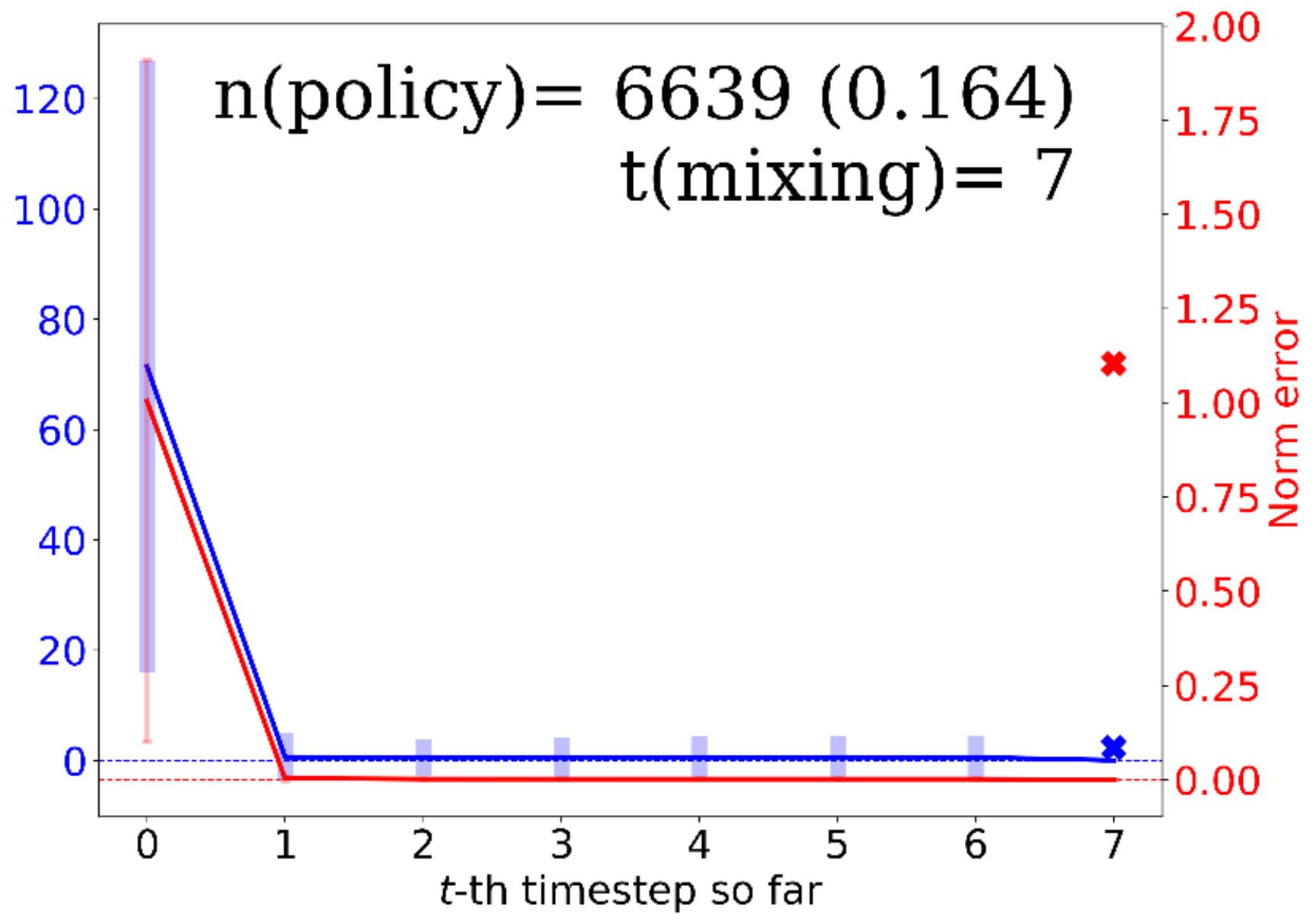}
\end{subfigure}

\begin{subfigure}{0.215\textwidth}
\includegraphics[width=\textwidth]{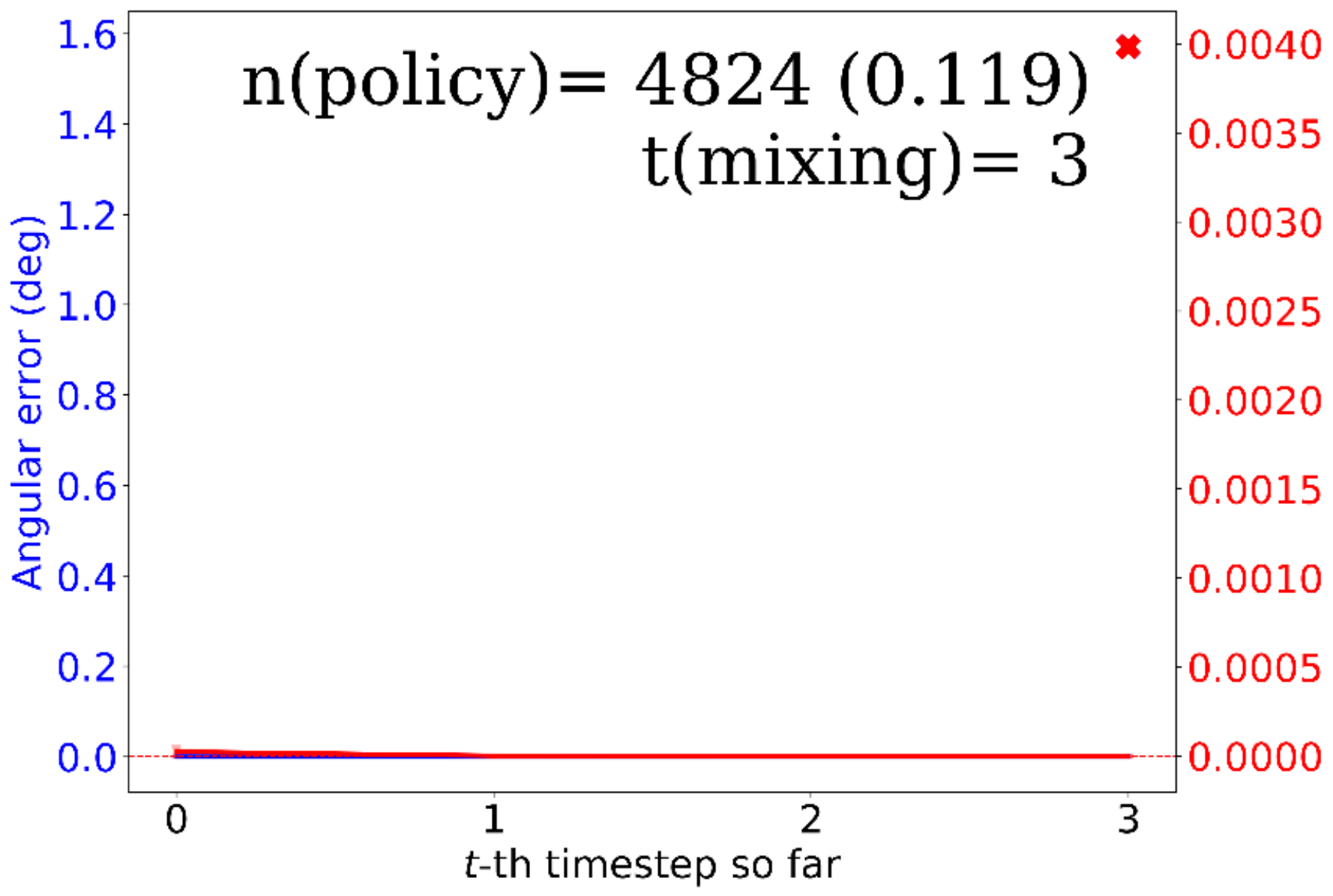}
\end{subfigure}
\begin{subfigure}{0.215\textwidth}
\includegraphics[width=\textwidth]{fig/gradbias-decompose/gradsamplingexact__1__OneDimensionStateAndTwoActionPolicyNetwork__Tor_20210307-v1}
\end{subfigure}
\begin{subfigure}{0.215\textwidth}
\includegraphics[width=\textwidth]{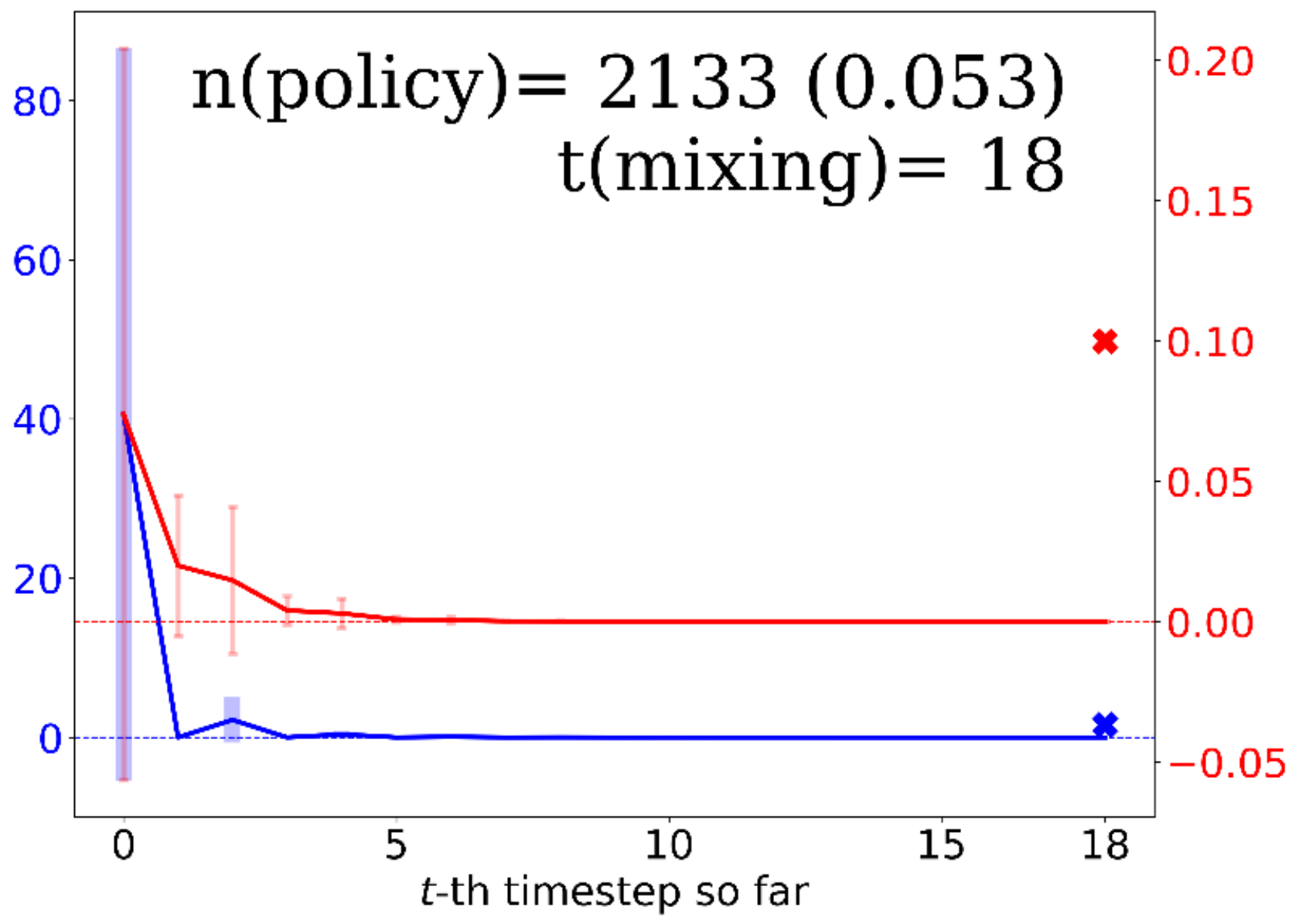}
\end{subfigure}
\begin{subfigure}{0.215\textwidth}
\includegraphics[width=\textwidth]{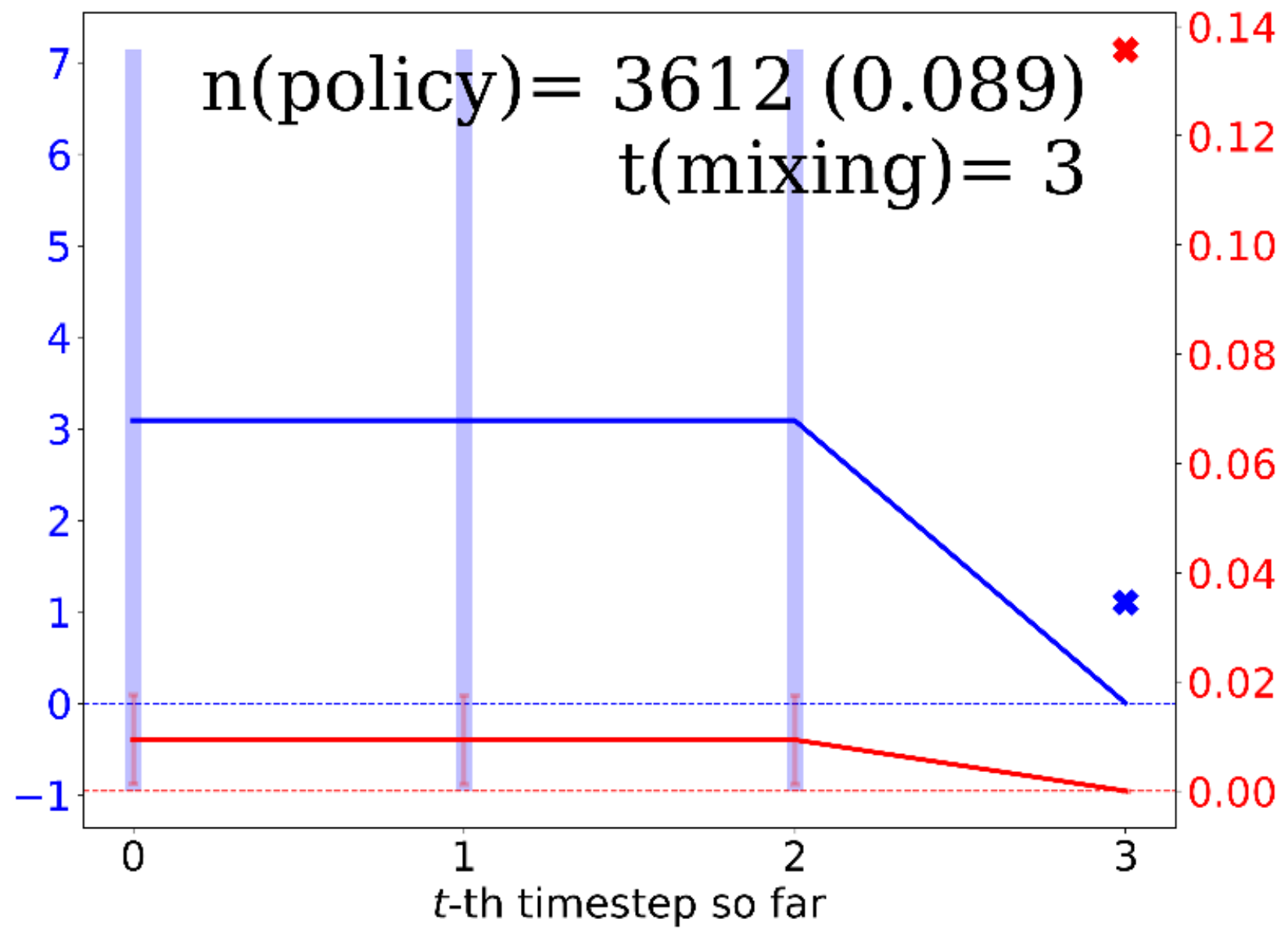}
\end{subfigure}
\begin{subfigure}{0.215\textwidth}
\includegraphics[width=\textwidth]{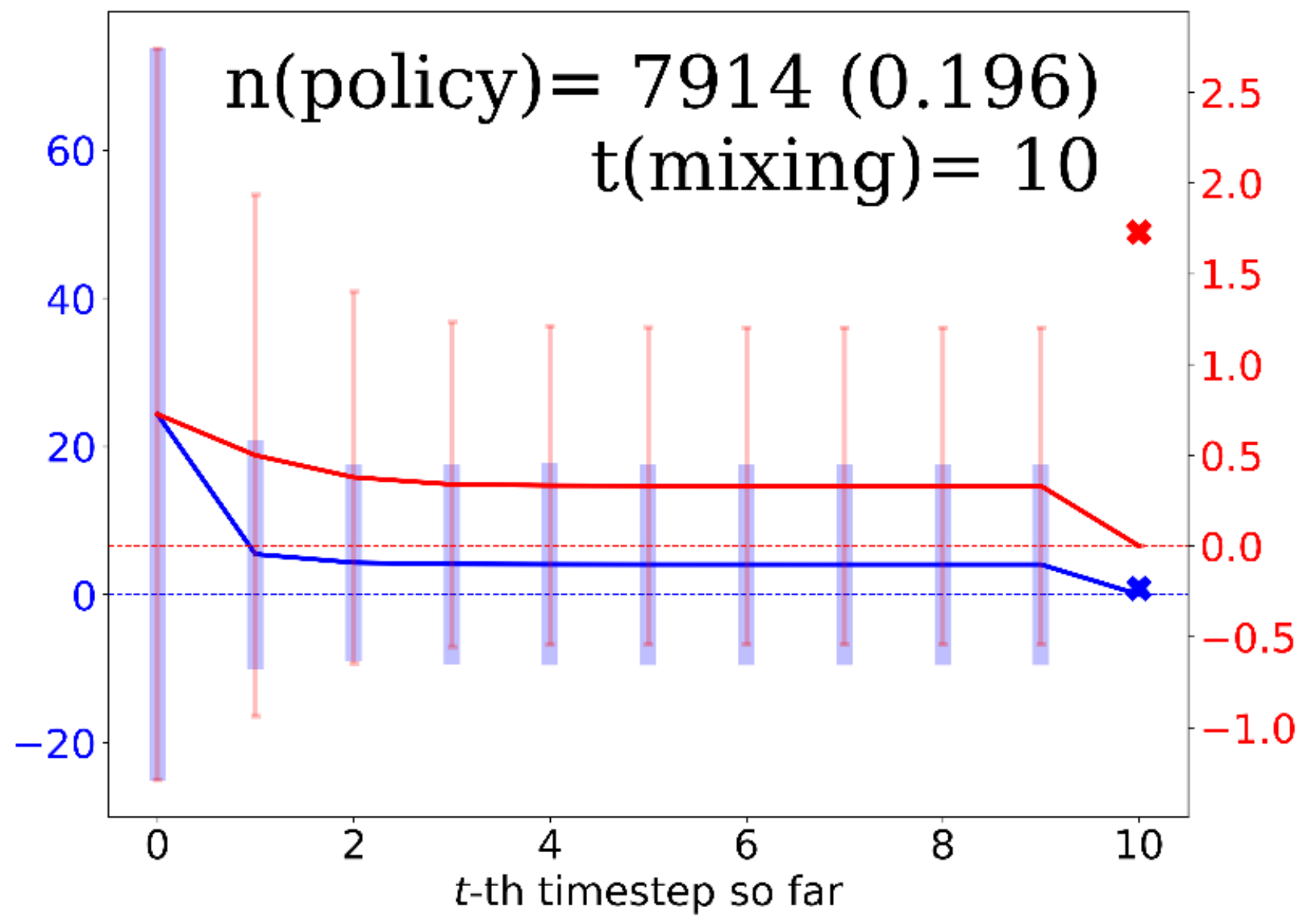}
\end{subfigure}
\begin{subfigure}{0.215\textwidth}
\includegraphics[width=\textwidth]{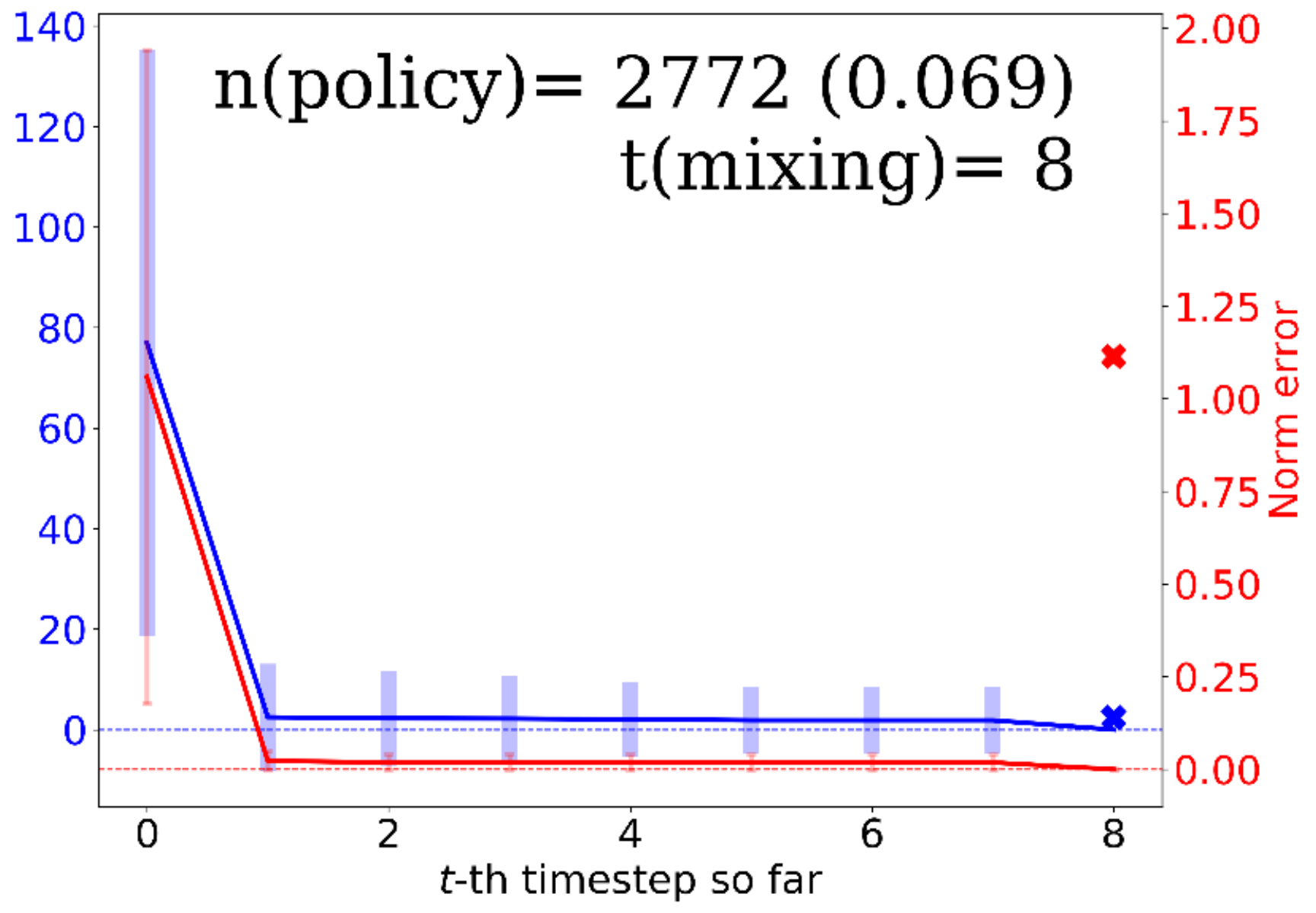}
\end{subfigure}

\begin{subfigure}{0.215\textwidth}
\includegraphics[width=\textwidth]{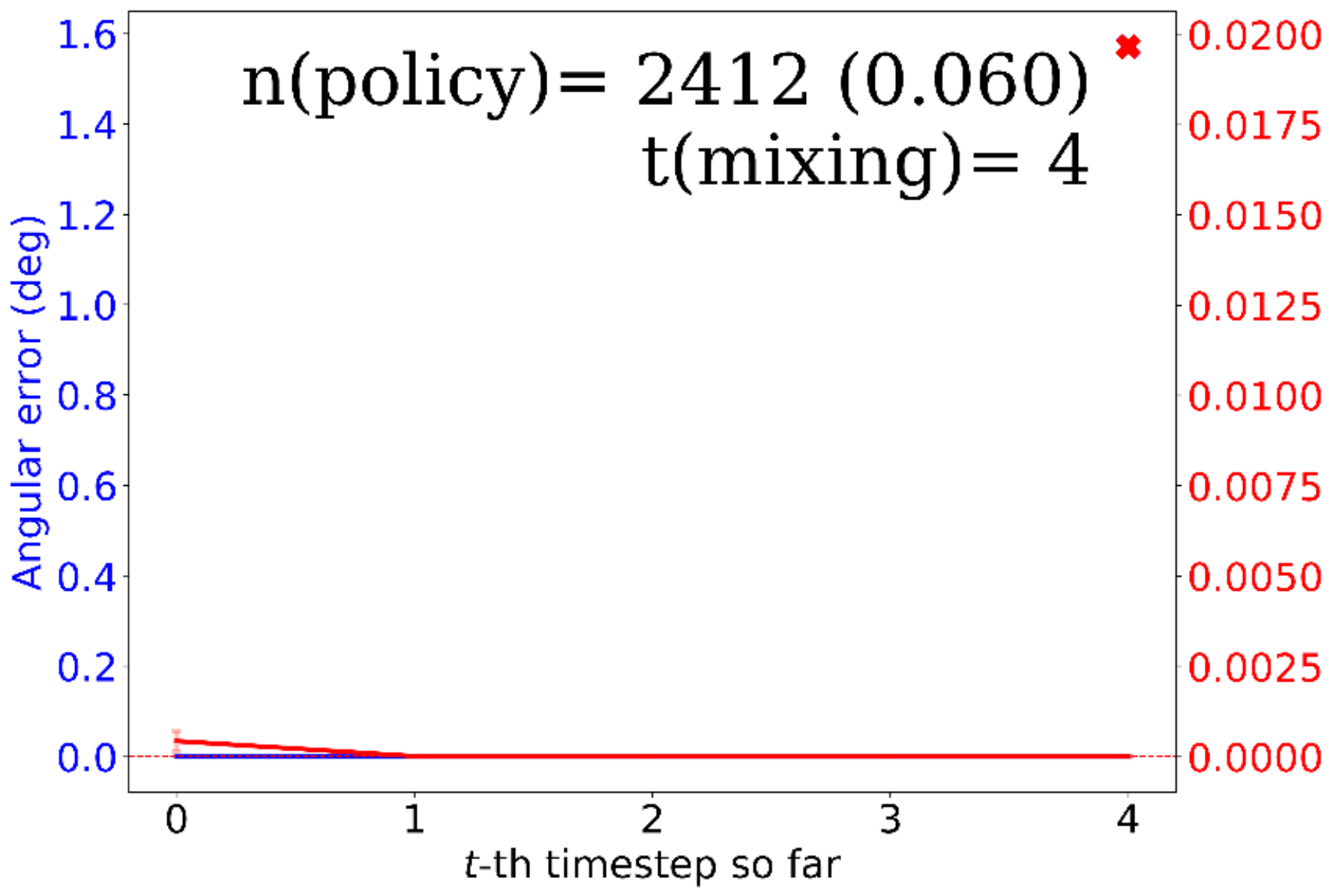}
\end{subfigure}
\begin{subfigure}{0.215\textwidth}
\includegraphics[width=\textwidth]{fig/gradbias-decompose/gradsamplingexact__1__OneDimensionStateAndTwoActionPolicyNetwork__Tor_20210307-v1}
\end{subfigure}
\begin{subfigure}{0.215\textwidth}
\includegraphics[width=\textwidth]{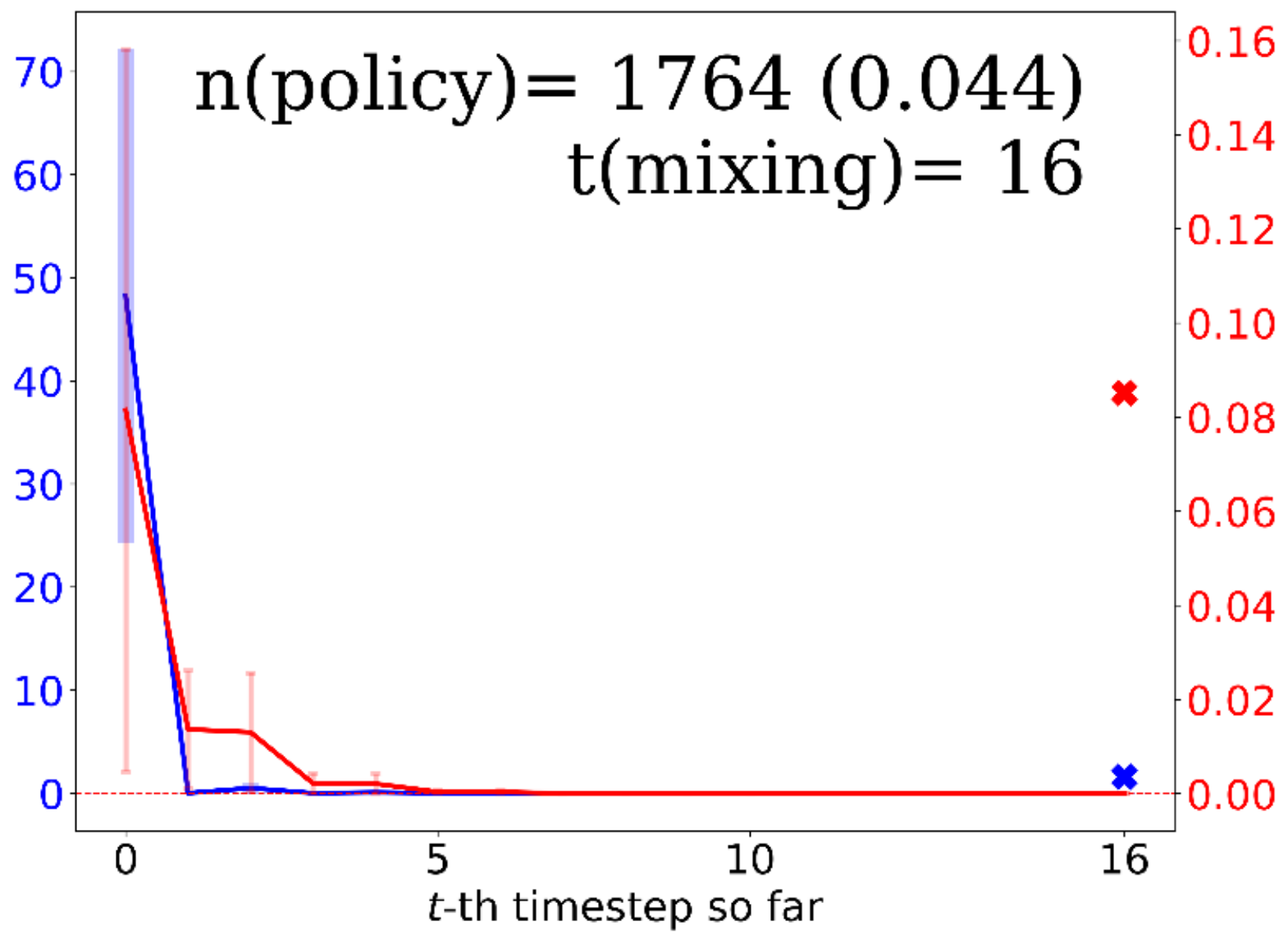}
\end{subfigure}
\begin{subfigure}{0.215\textwidth}
\includegraphics[width=\textwidth]{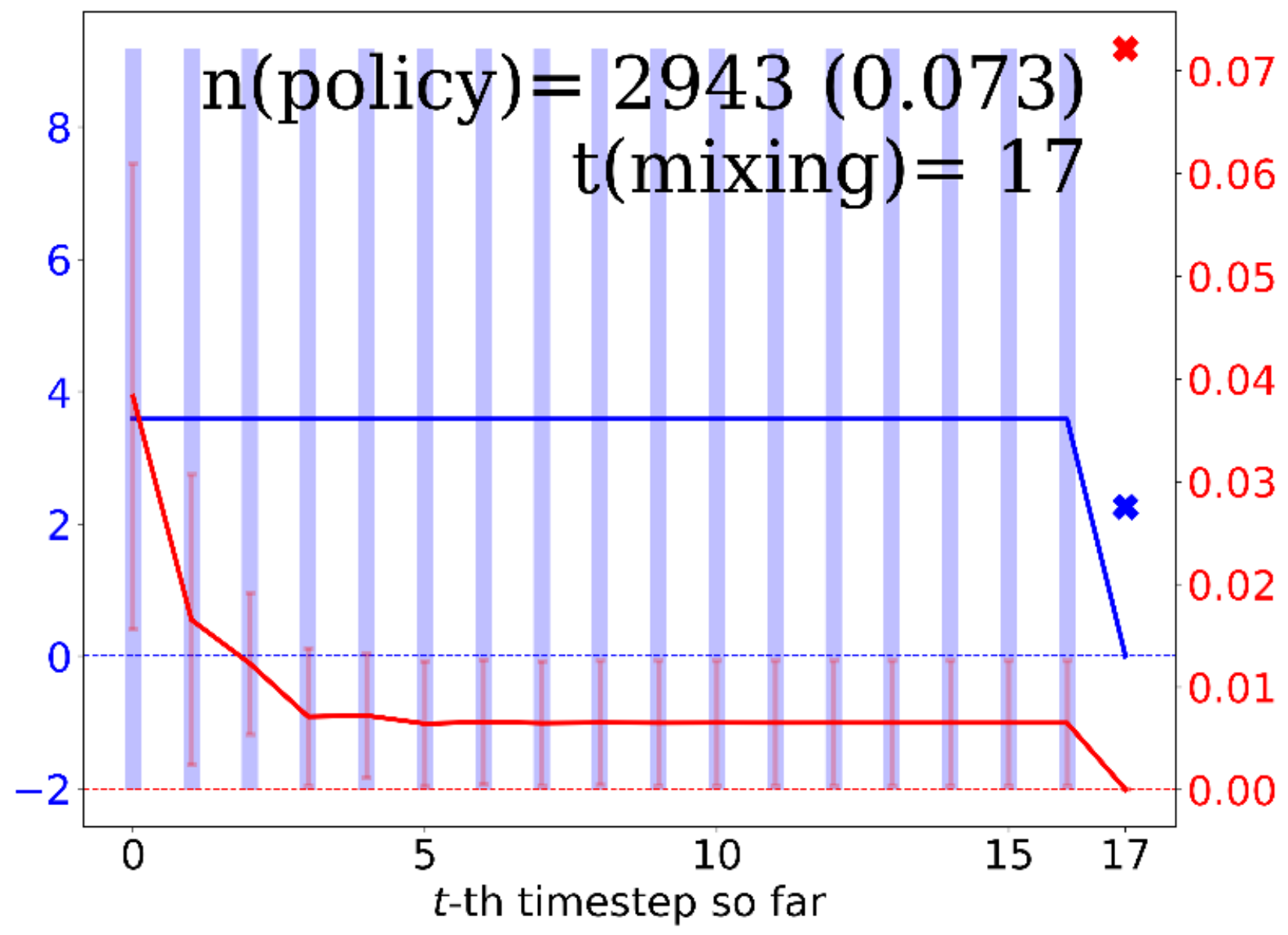}
\end{subfigure}
\begin{subfigure}{0.215\textwidth}
\includegraphics[width=\textwidth]{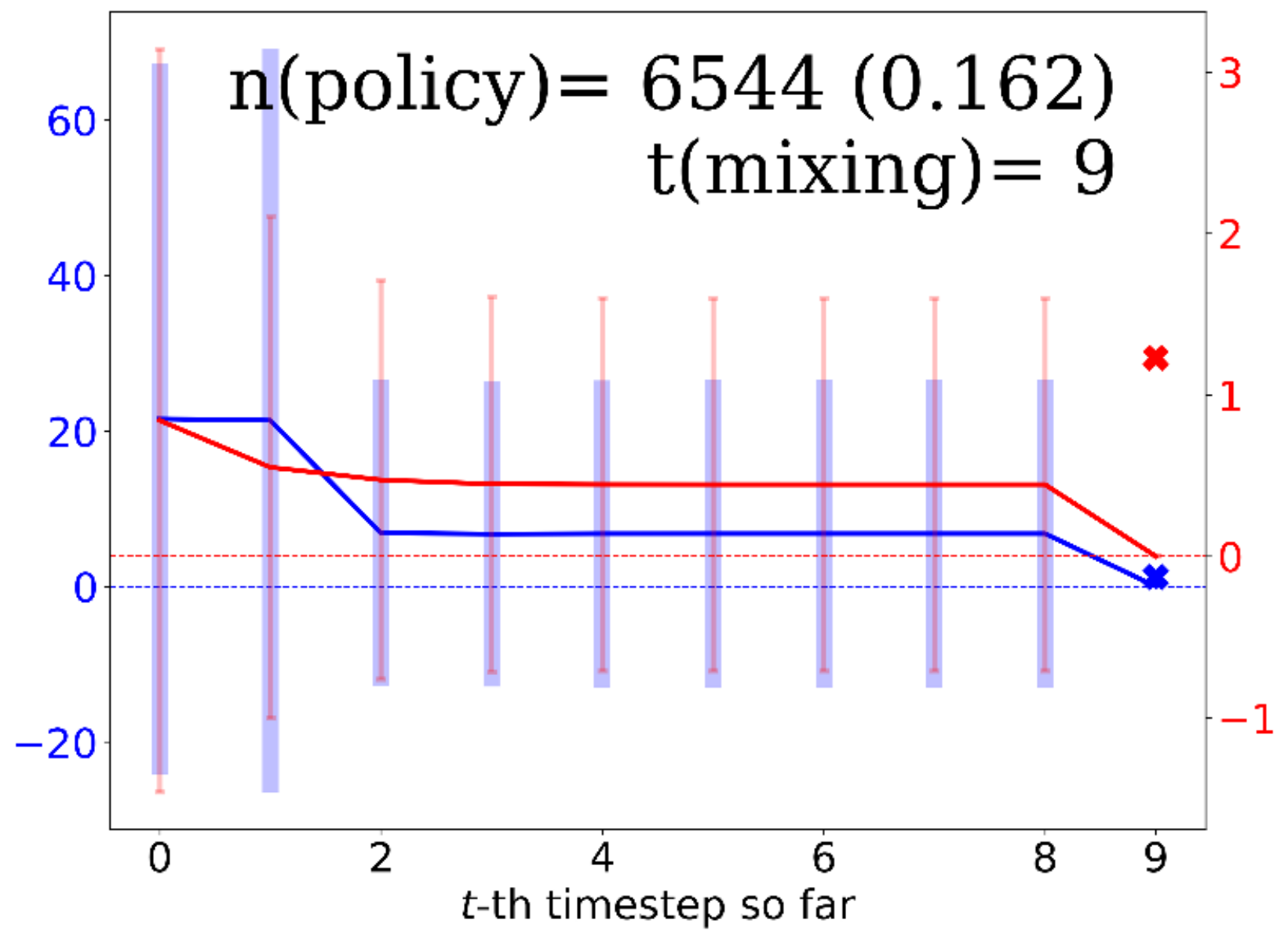}
\end{subfigure}
\begin{subfigure}{0.215\textwidth}
\includegraphics[width=\textwidth]{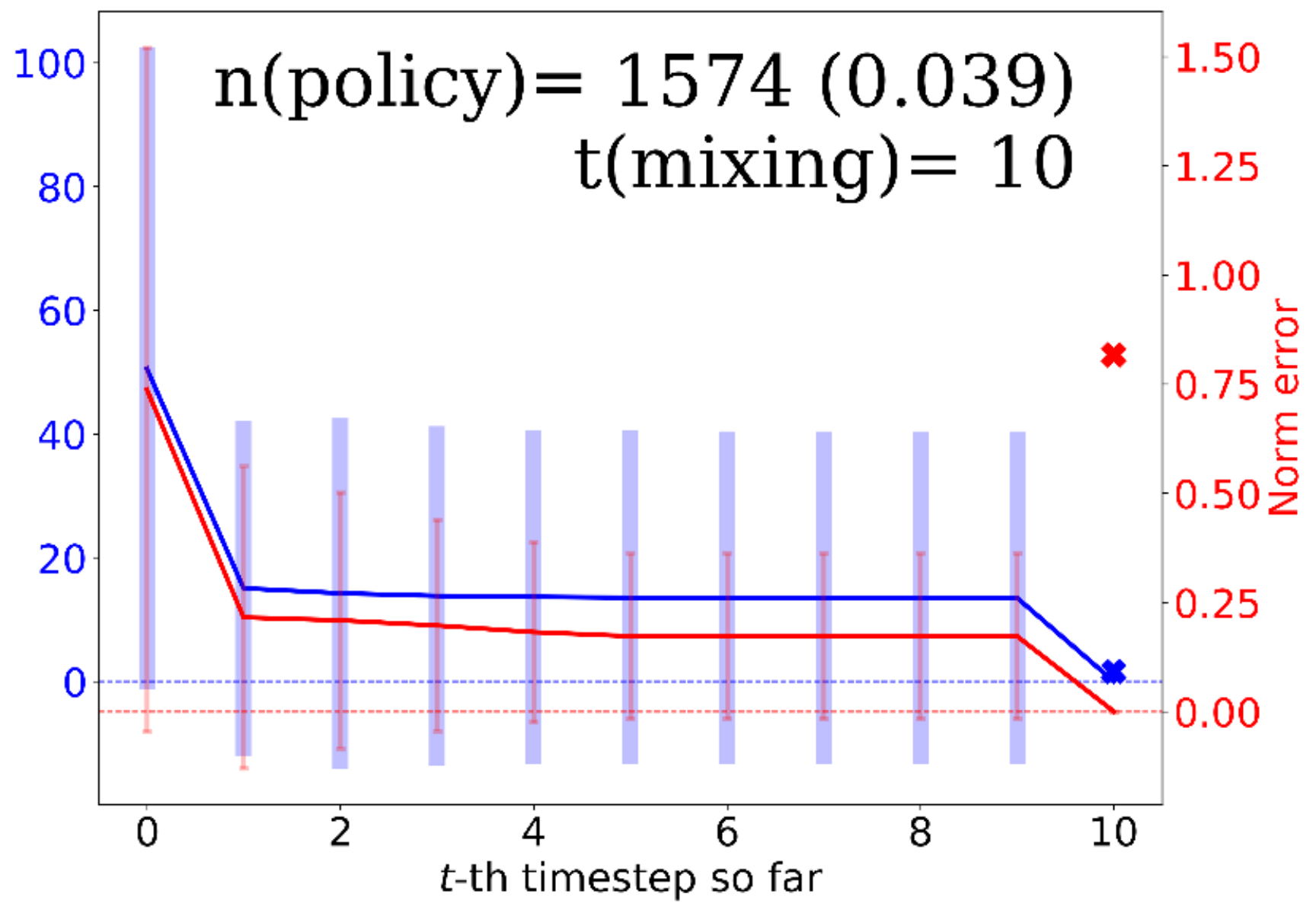}
\end{subfigure}

\begin{subfigure}{0.215\textwidth}
\includegraphics[width=\textwidth]{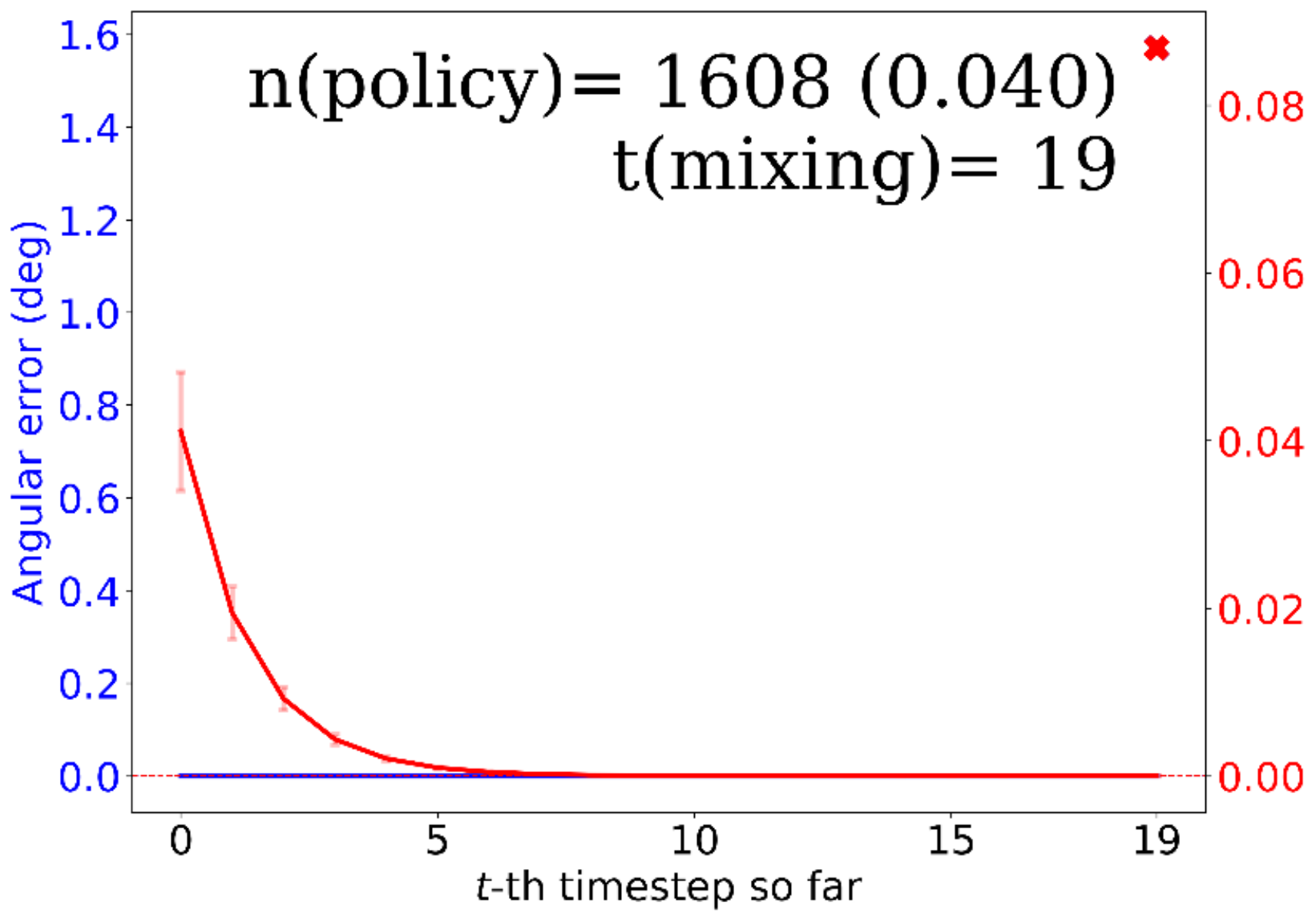}
\subcaption*{Env-A1}
\end{subfigure}
\begin{subfigure}{0.215\textwidth}
\includegraphics[width=\textwidth]{fig/gradbias-decompose/gradsamplingexact__1__OneDimensionStateAndTwoActionPolicyNetwork__Tor_20210307-v1}
\subcaption*{Env-A2}
\end{subfigure}
\begin{subfigure}{0.215\textwidth}
\includegraphics[width=\textwidth]{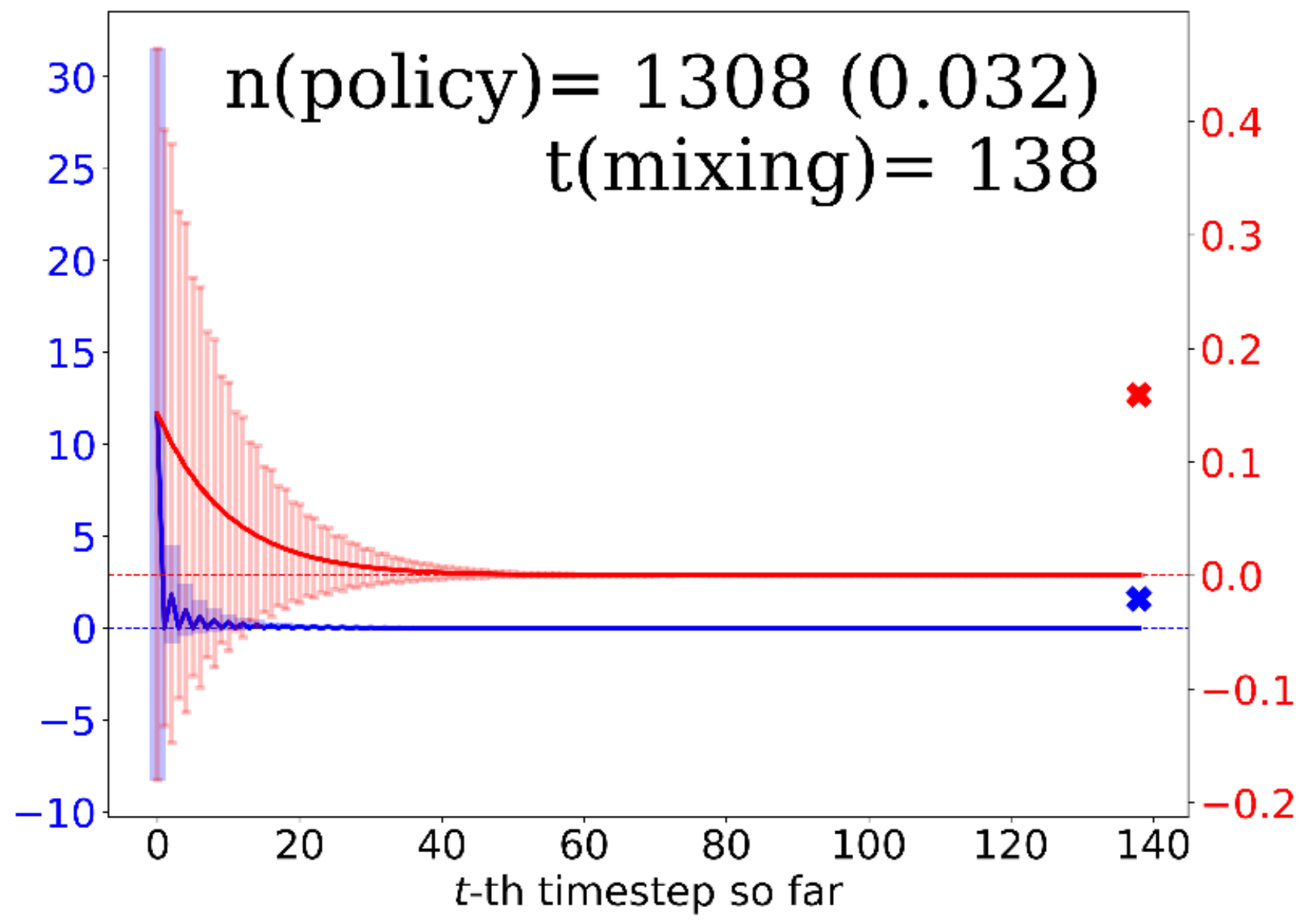}
\subcaption*{Env-A3}
\end{subfigure}
\begin{subfigure}{0.215\textwidth}
\includegraphics[width=\textwidth]{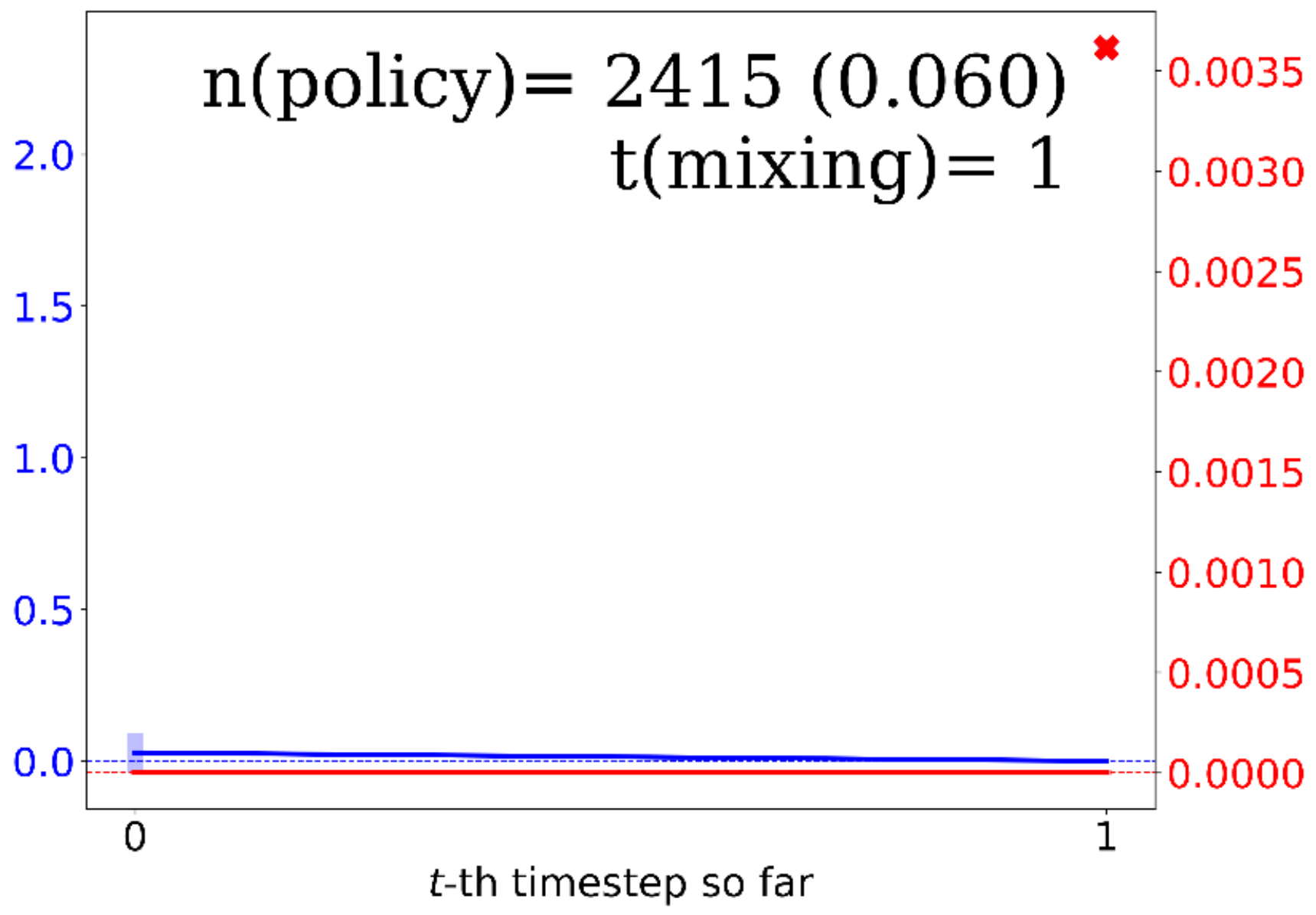}
\subcaption*{Env-B1}
\end{subfigure}
\begin{subfigure}{0.215\textwidth}
\includegraphics[width=\textwidth]{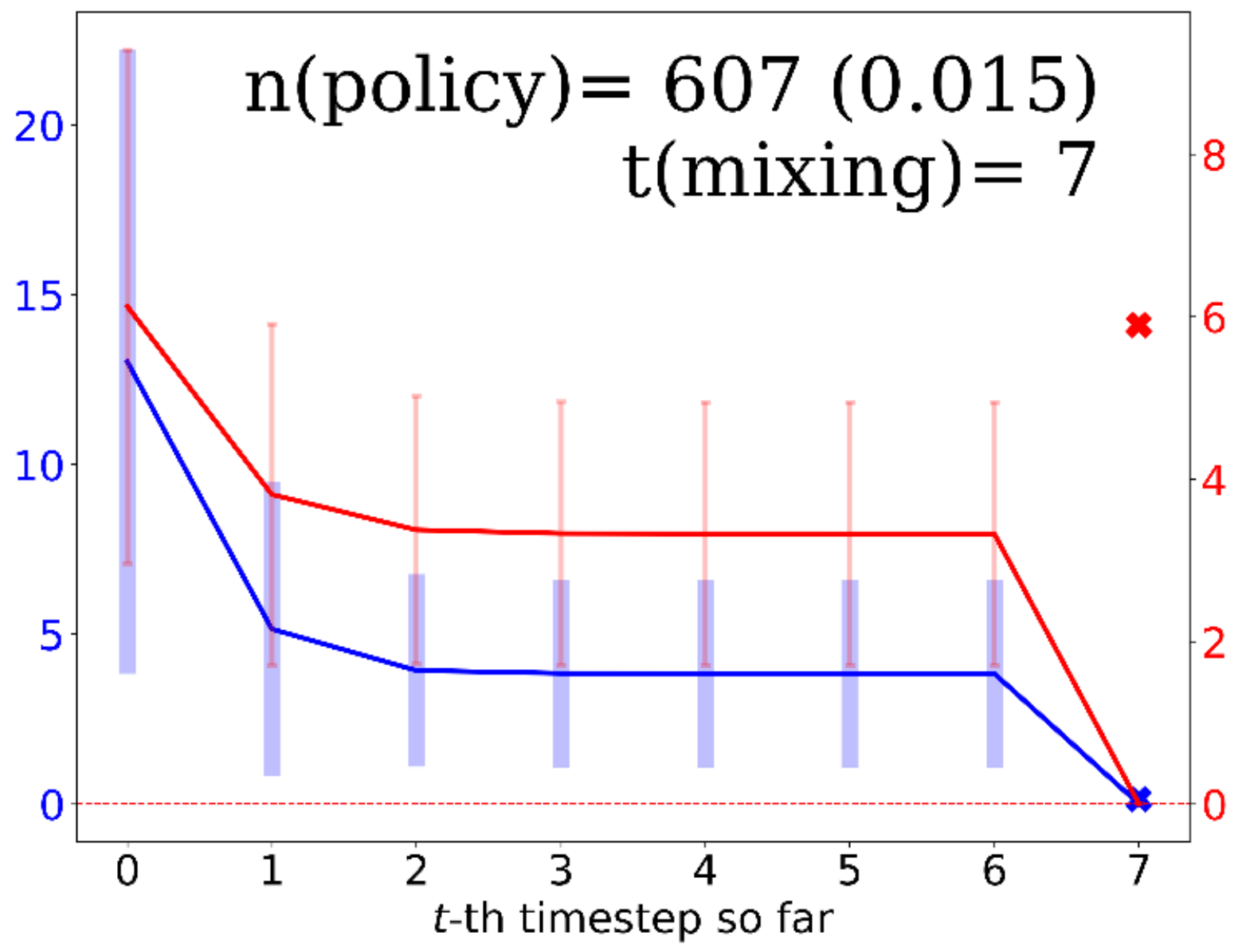}
\subcaption*{Env-B2}
\end{subfigure}
\begin{subfigure}{0.215\textwidth}
\includegraphics[width=\textwidth]{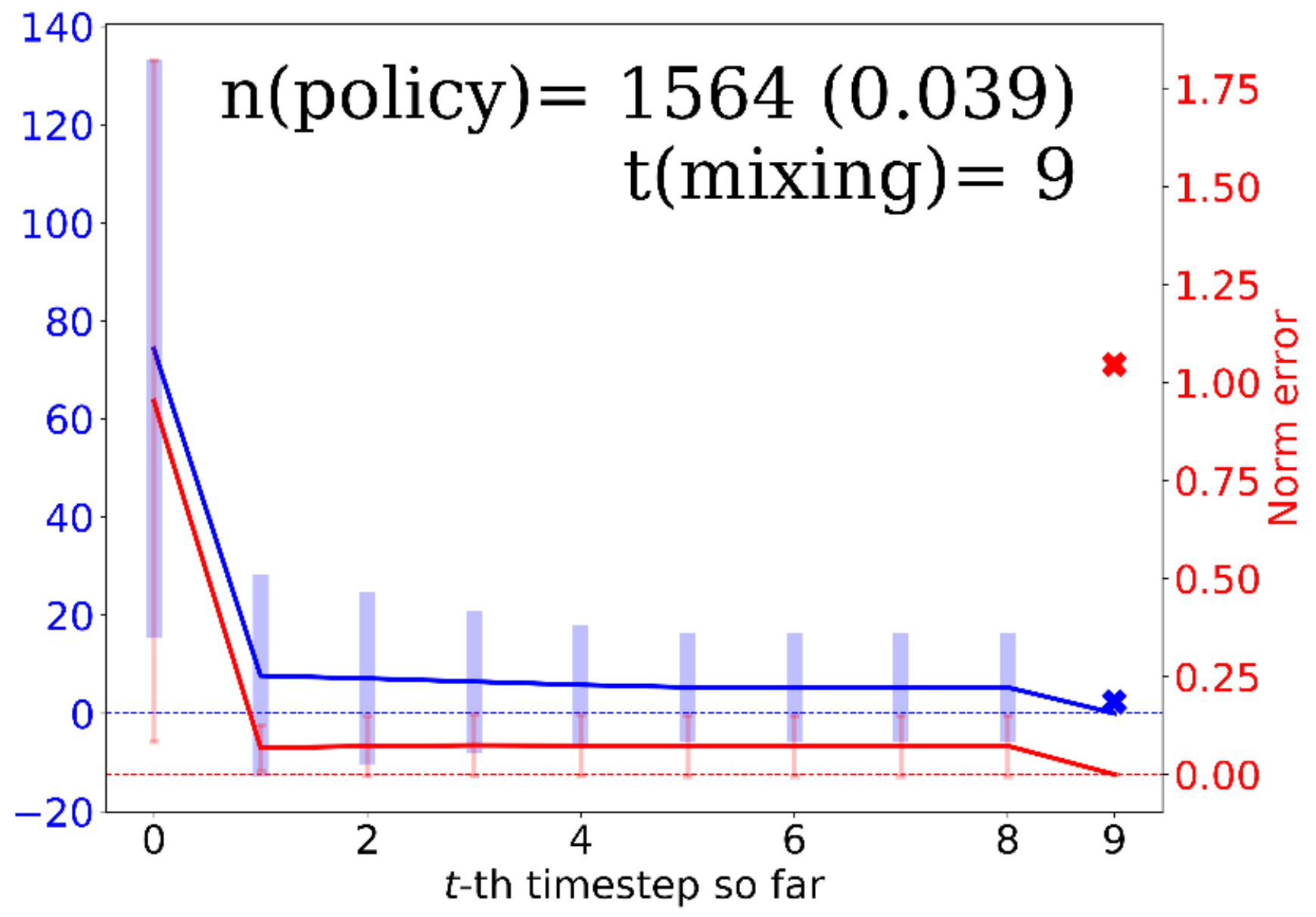}
\subcaption*{Env-B3}
\end{subfigure}

\caption{The error of the exact gradual summation of bias gradient terms
with respect to the exact full bias gradient \eqref{equ:grad_bias}.
The blue line indicates \textcolor{blue}{the angular error}, whereas
the red \textcolor{red}{the $\ell_2$-norm error}.
These bias gradients are of policies, whose mixing time is the most common (top row)
in that it is induced by most policies
(out of $40401$ policies in a $201$-by-$201$-grid policy parameter space
from $-10$ to $+10$ with $0.1$ uniform spacing in each parameter dimension).
Those of the 2nd, 3rd, up to 5th most common mixing times are in
the 2nd, 3rd, down to the bottom-most row, respectively.
The error bar indicates the standard deviation across
the corresponding error values of those many policies.
The blue and red crosses (on the right part of each subfigure) show
the angular and norm errors of the single post-mixing term.
In some subfigures, the blue cross coincides with the red one
(note however, their vertical axes are different).
For experimental setup, see \secref{sec:xprmt_setup_nbwpg}.
}
\label{fig:gradbias_decompose}
\end{figure}
\end{landscape}

\begin{landscape}
\begin{figure}
\centering

\begin{subfigure}{0.180\textwidth}
\includegraphics[width=\textwidth]{fig/envprop-3d/bs0__Example_10_1_2-v1__OneDimensionStateAndTwoActionPolicyNetwork}
\end{subfigure}
\begin{subfigure}{0.180\textwidth}
\includegraphics[width=\textwidth]{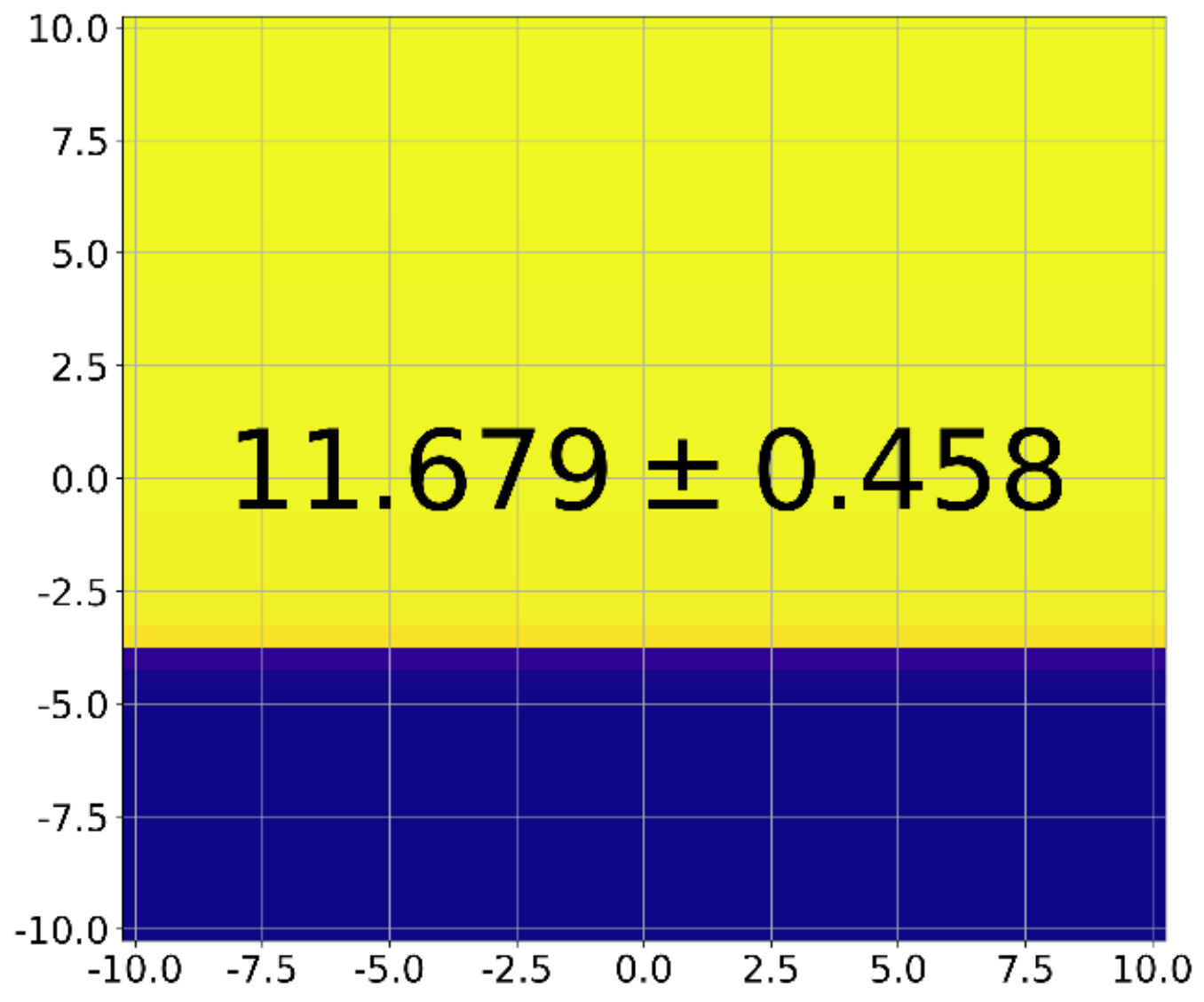}
\end{subfigure}
\begin{subfigure}{0.180\textwidth}
\includegraphics[width=\textwidth]{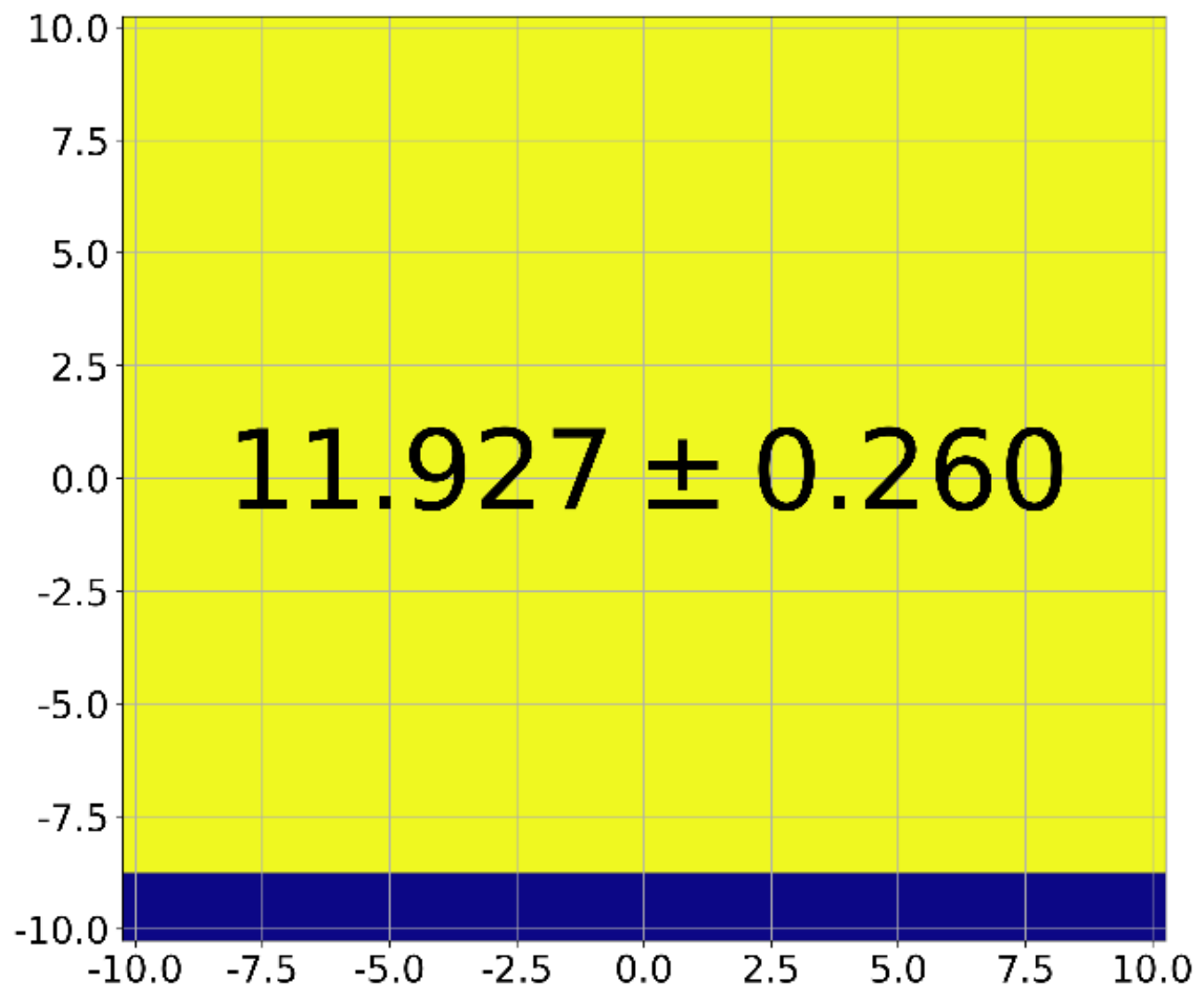}
\end{subfigure}
\begin{subfigure}{0.180\textwidth}
\includegraphics[width=\textwidth]{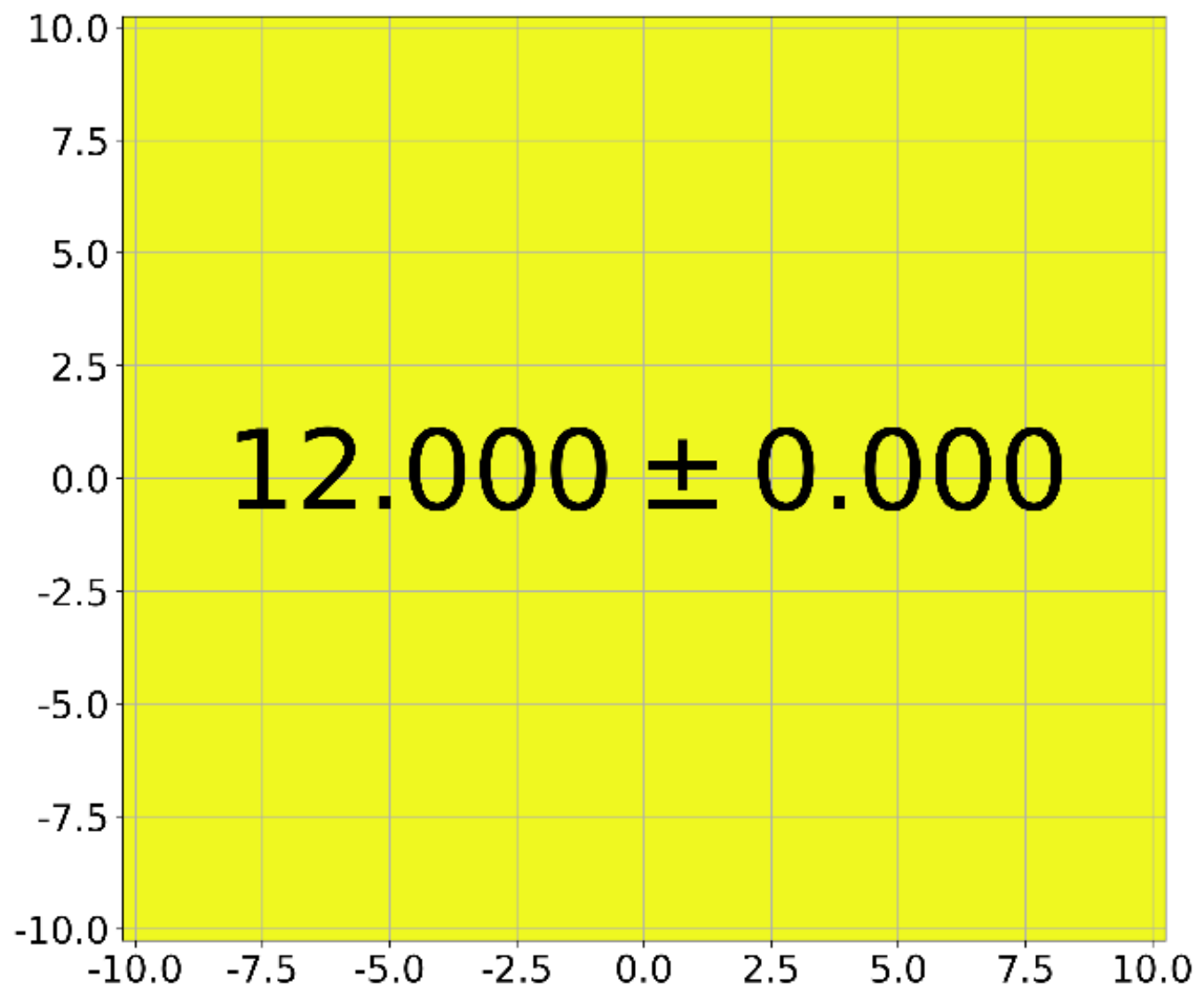}
\end{subfigure}
\begin{subfigure}{0.180\textwidth}
\includegraphics[width=\textwidth]{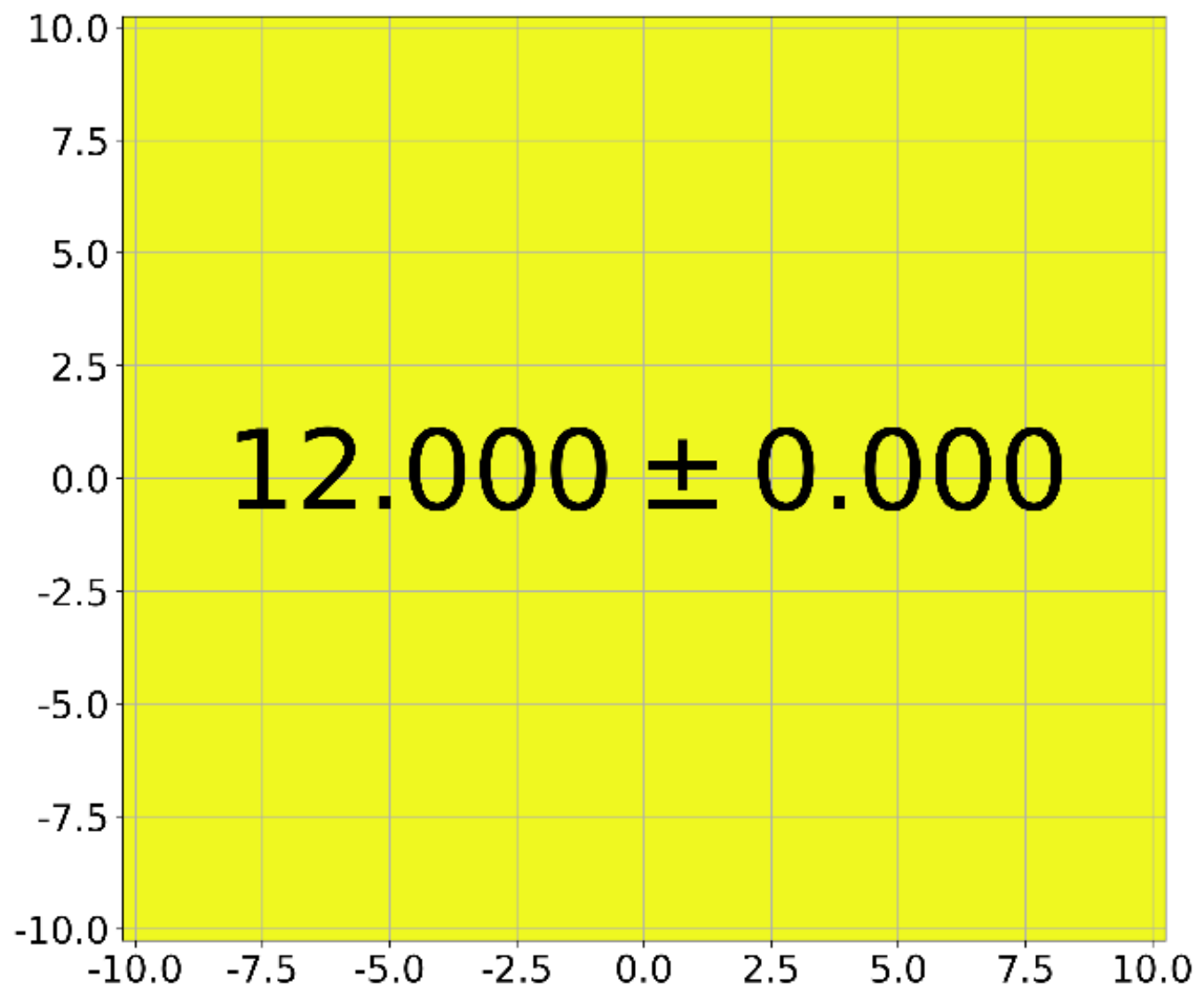}
\end{subfigure}
\begin{subfigure}{0.180\textwidth}
\includegraphics[width=\textwidth]{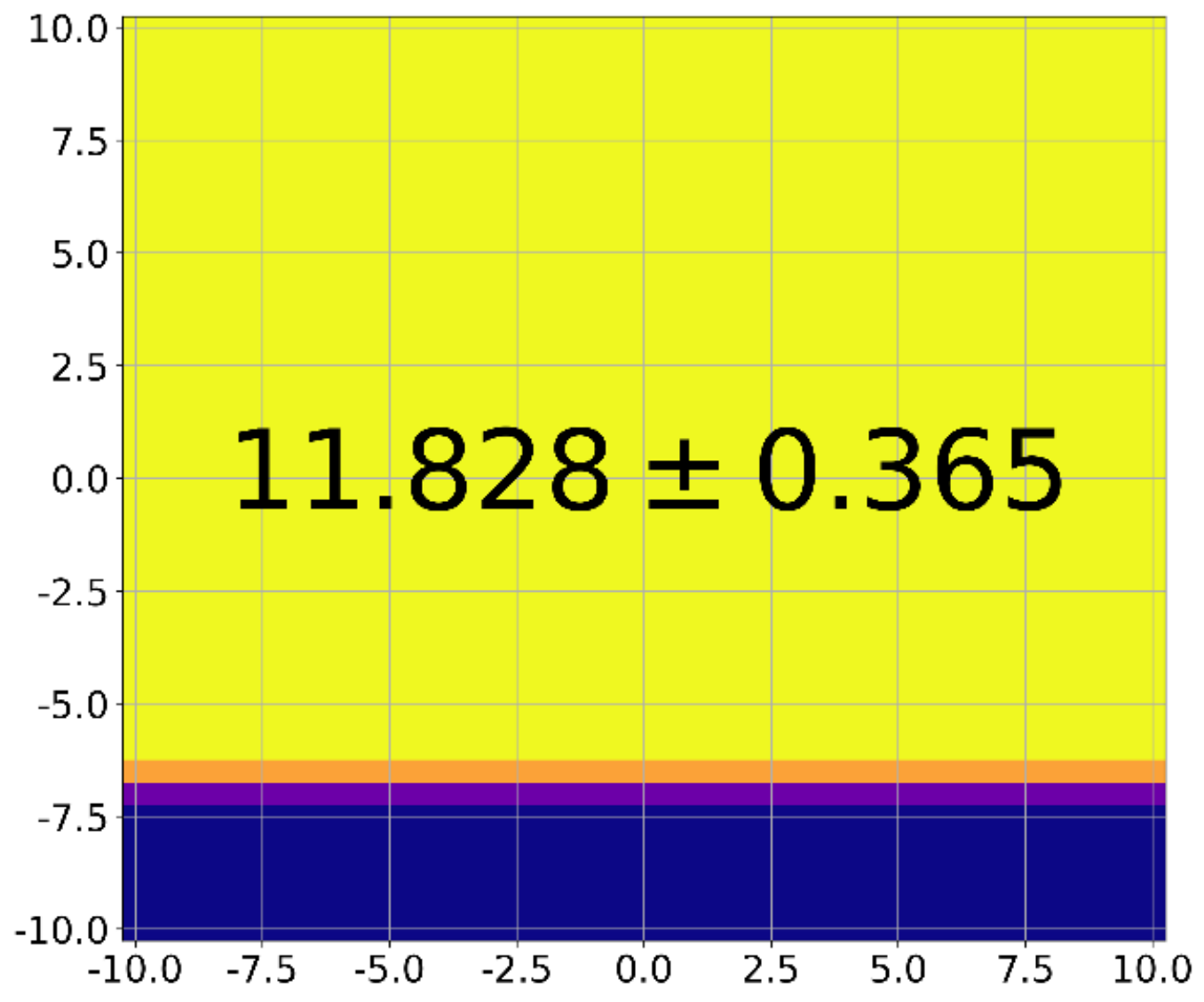}
\end{subfigure}
\begin{subfigure}{0.180\textwidth}
\includegraphics[width=\textwidth]{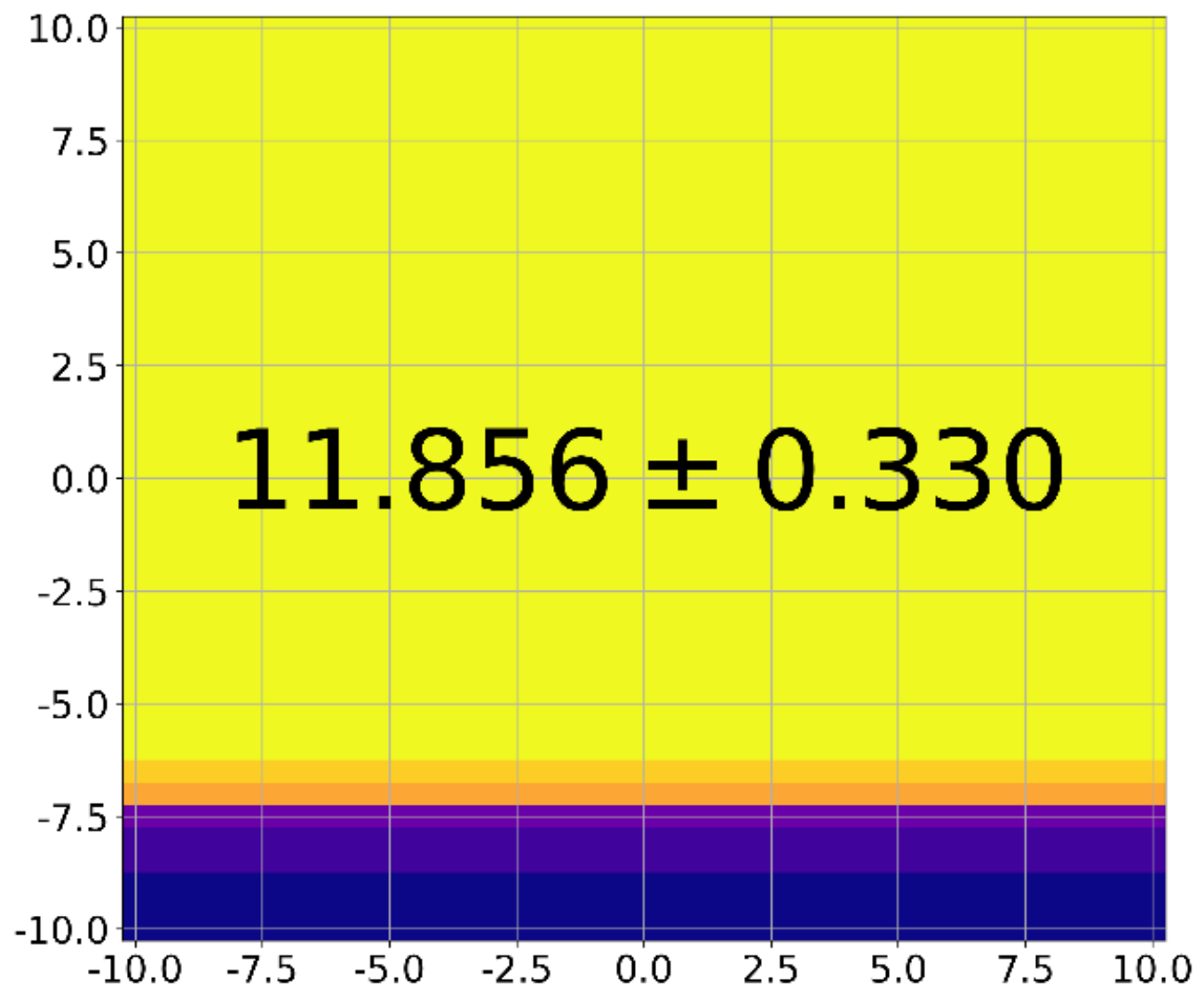}
\end{subfigure}

\begin{subfigure}{0.180\textwidth}
\includegraphics[width=\textwidth]{fig/envprop-3d/bs0__Tor_20210307-v1__OneDimensionStateAndTwoActionPolicyNetwork}
\end{subfigure}
\begin{subfigure}{0.180\textwidth}
\includegraphics[width=\textwidth]{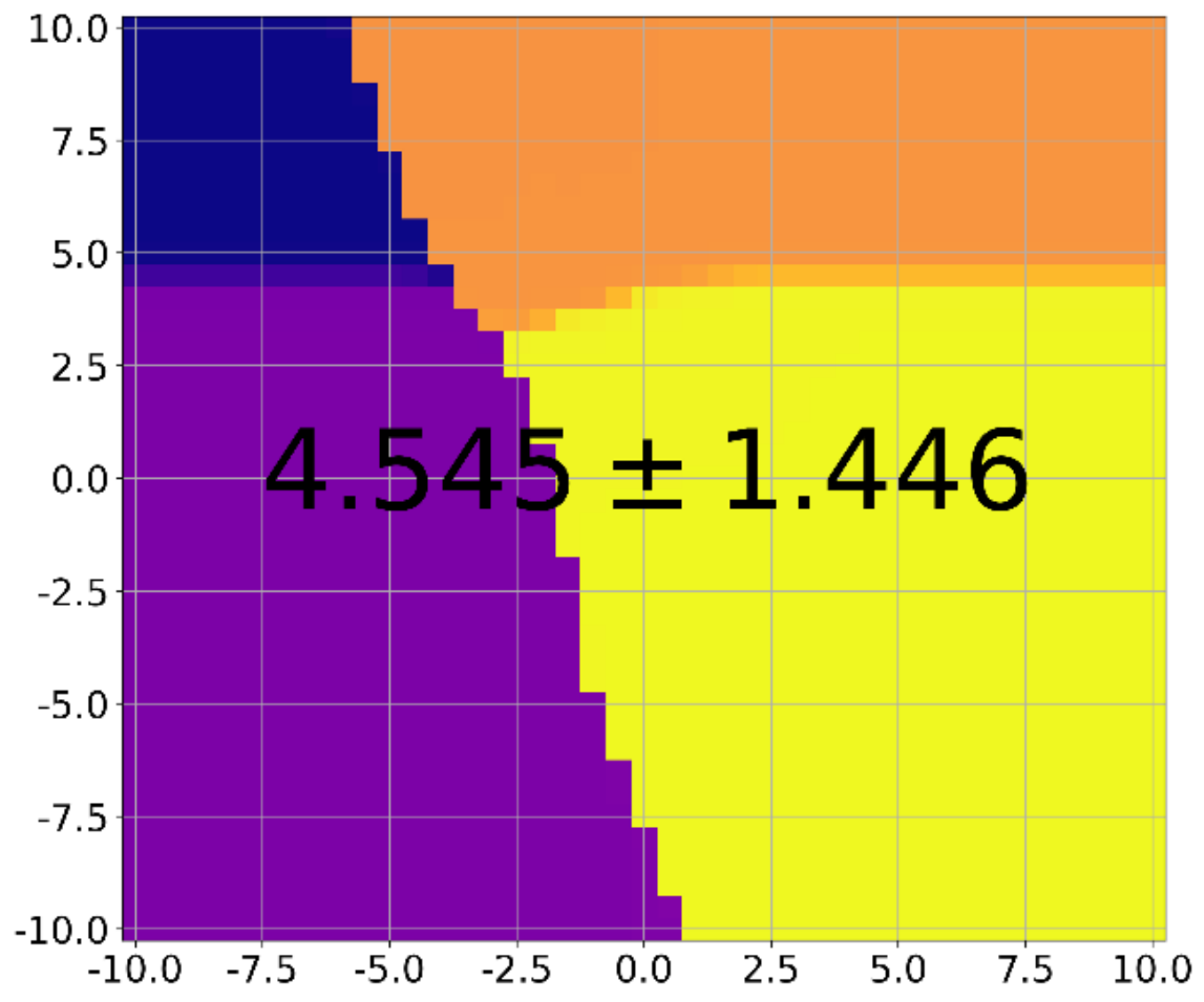}
\end{subfigure}
\begin{subfigure}{0.180\textwidth}
\includegraphics[width=\textwidth]{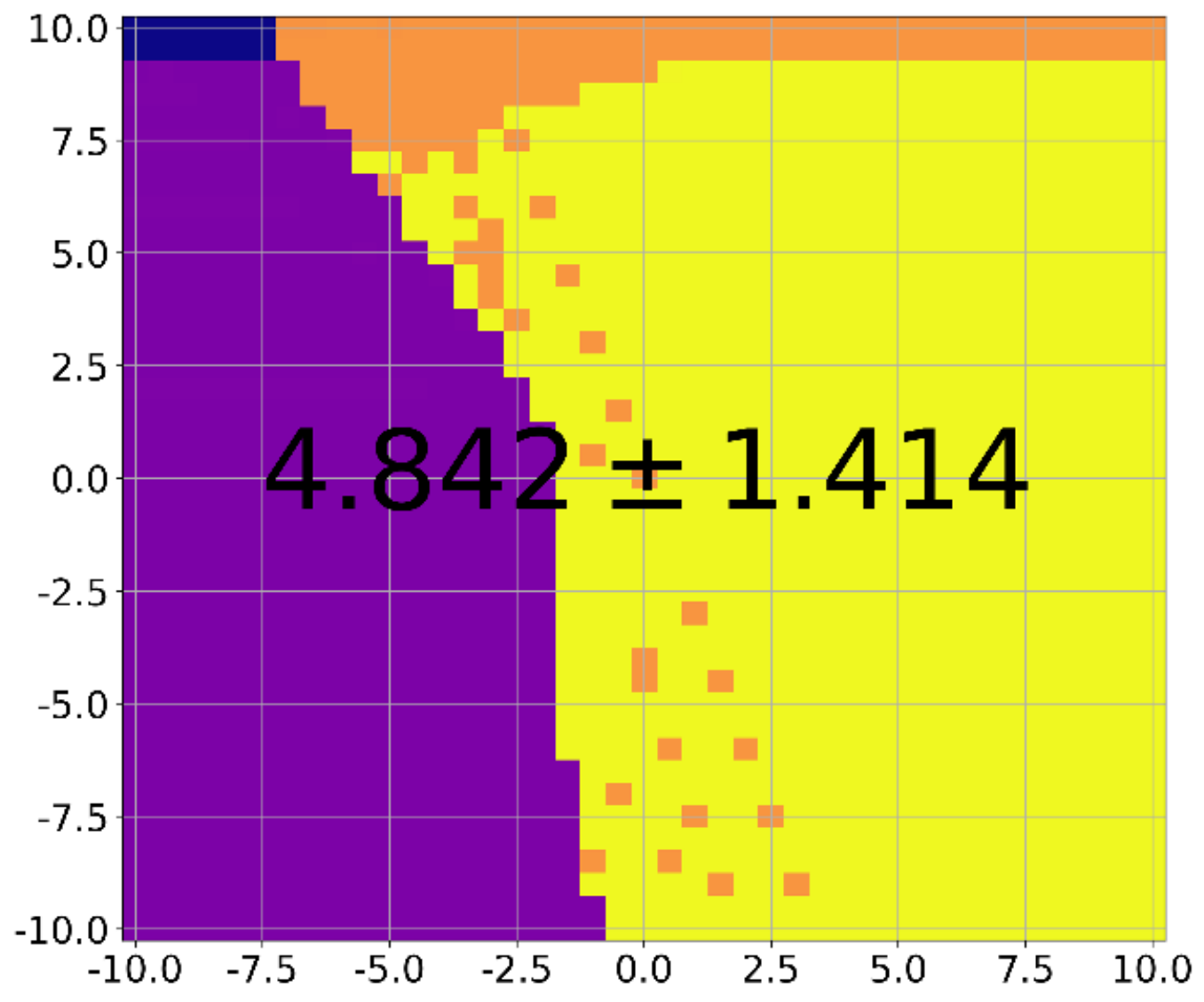}
\end{subfigure}
\begin{subfigure}{0.180\textwidth}
\includegraphics[width=\textwidth]{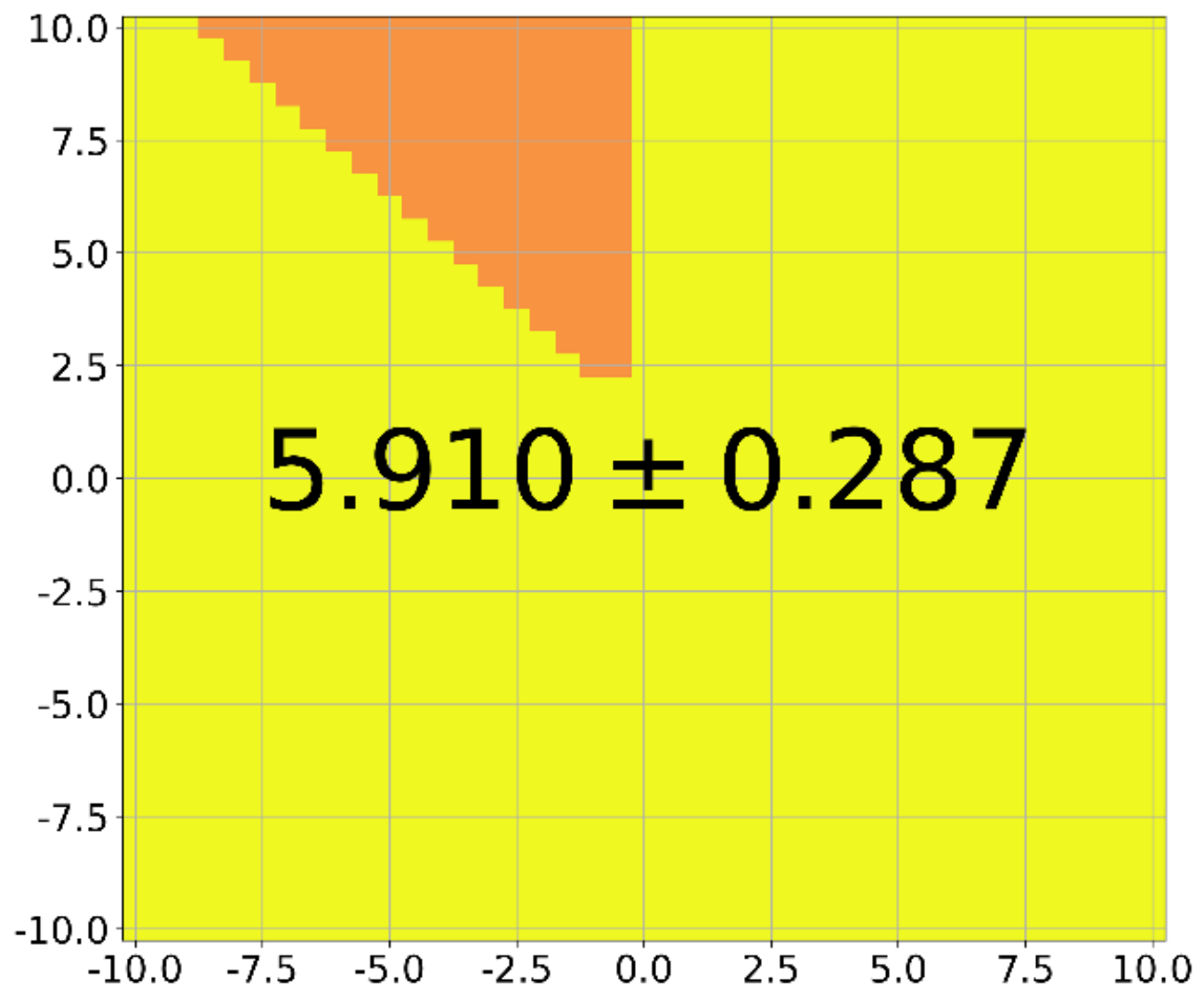}
\end{subfigure}
\begin{subfigure}{0.180\textwidth}
\includegraphics[width=\textwidth]{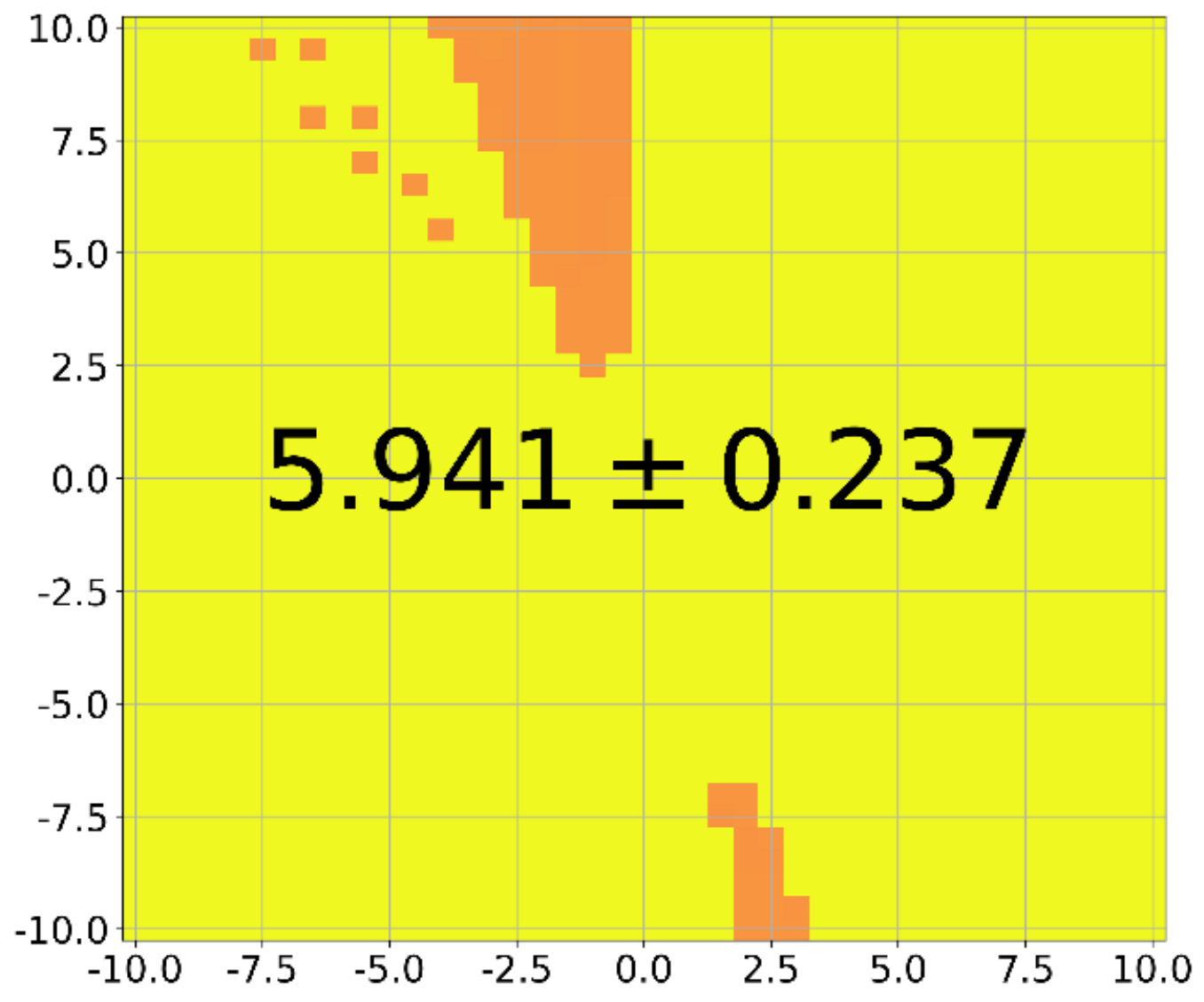}
\end{subfigure}
\begin{subfigure}{0.180\textwidth}
\includegraphics[width=\textwidth]{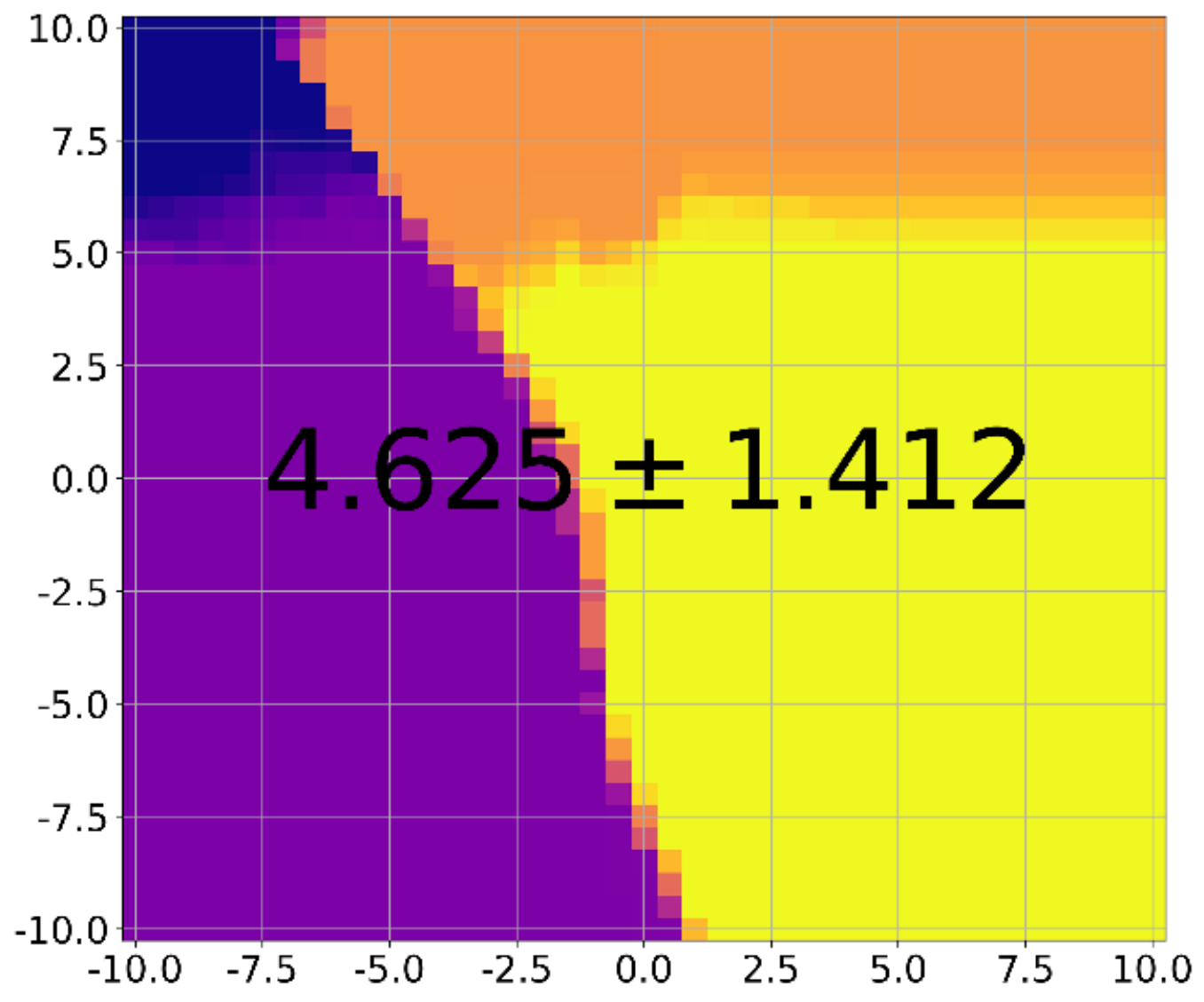}
\end{subfigure}
\begin{subfigure}{0.180\textwidth}
\includegraphics[width=\textwidth]{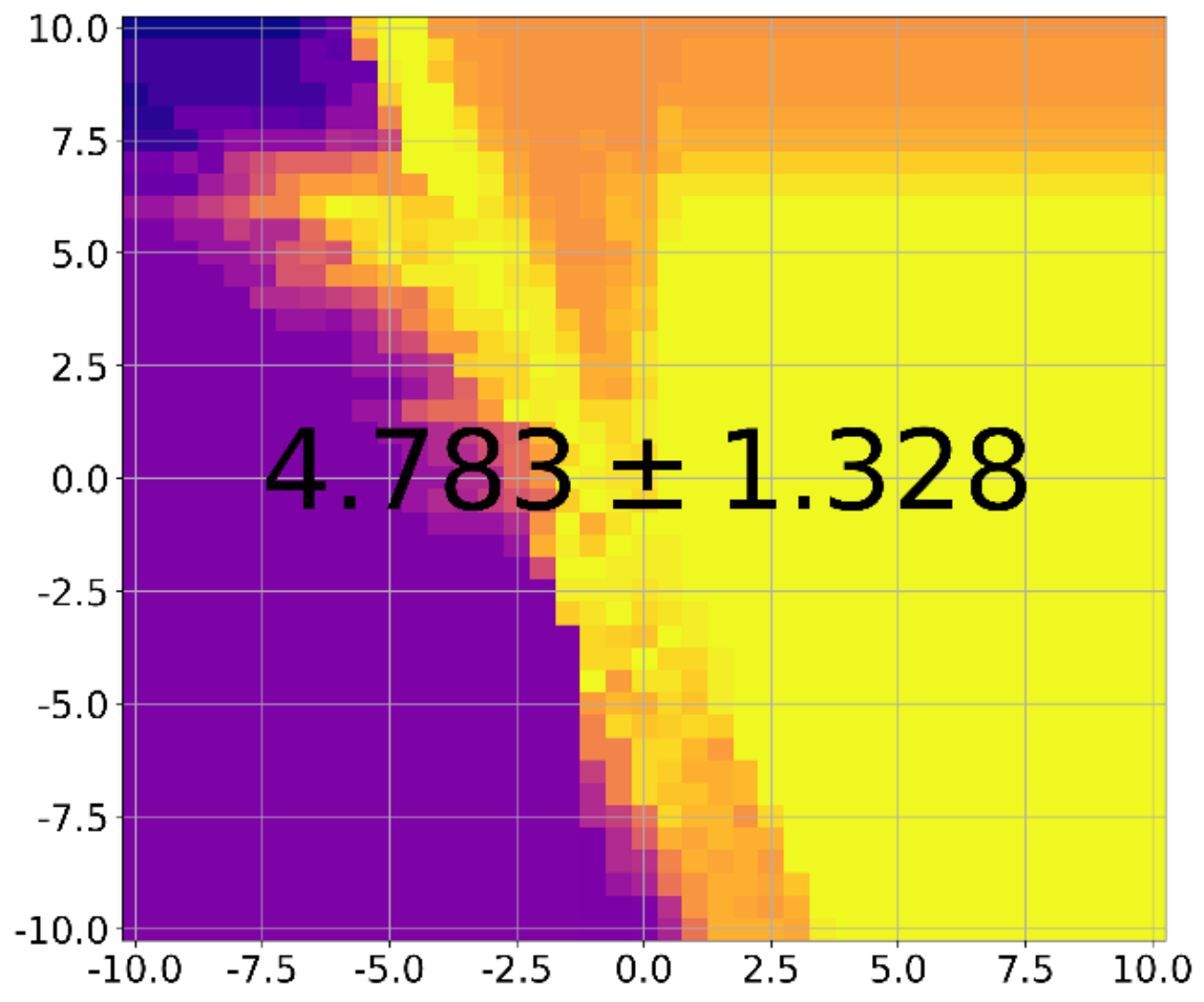}
\end{subfigure}

\begin{subfigure}{0.180\textwidth}
\includegraphics[width=\textwidth]{fig/envprop-3d/bs0__GridNav_2-v0__OneDimensionStateAndTwoActionPolicyNetwork}
\end{subfigure}
\begin{subfigure}{0.180\textwidth}
\includegraphics[width=\textwidth]{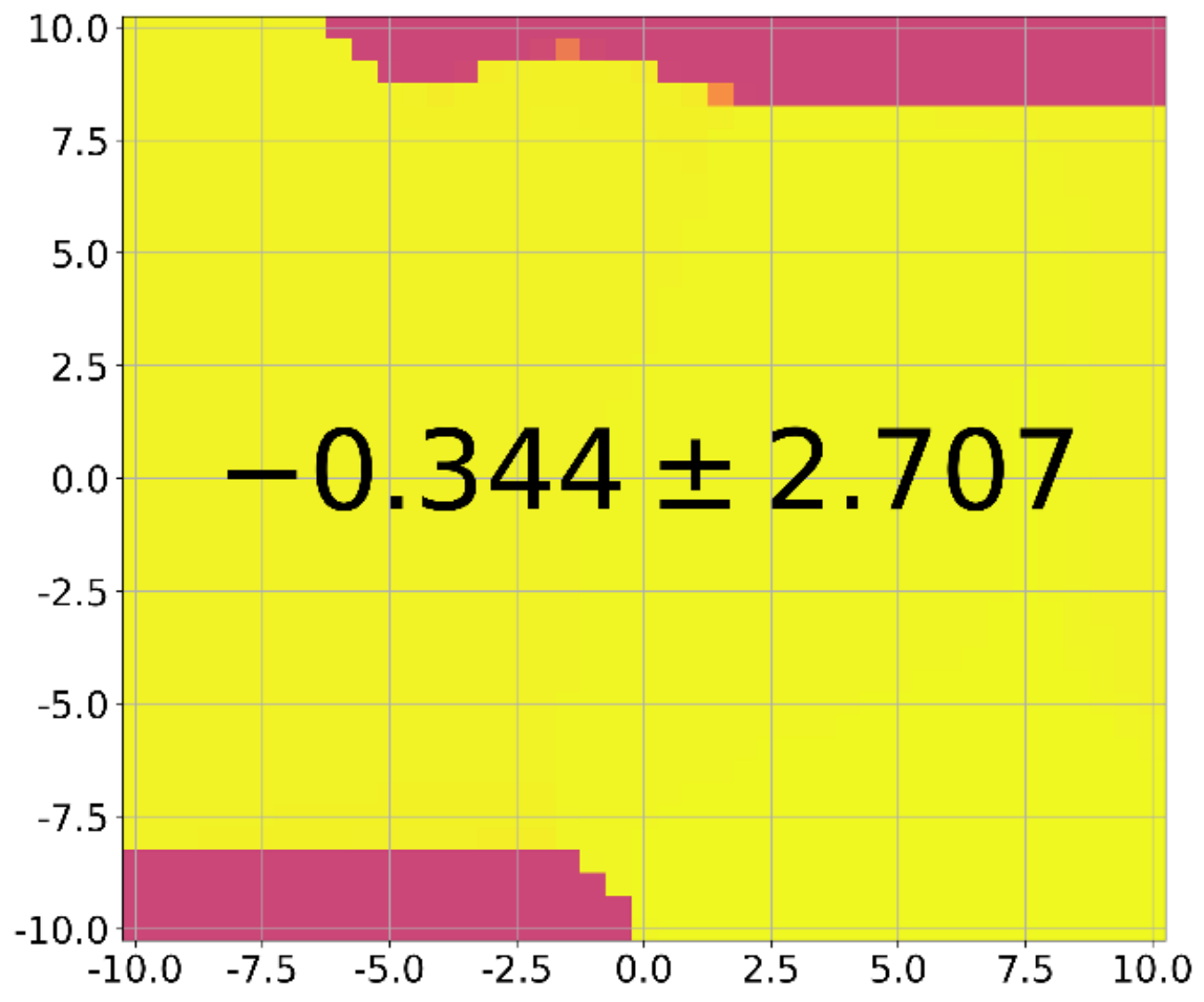}
\end{subfigure}
\begin{subfigure}{0.180\textwidth}
\includegraphics[width=\textwidth]{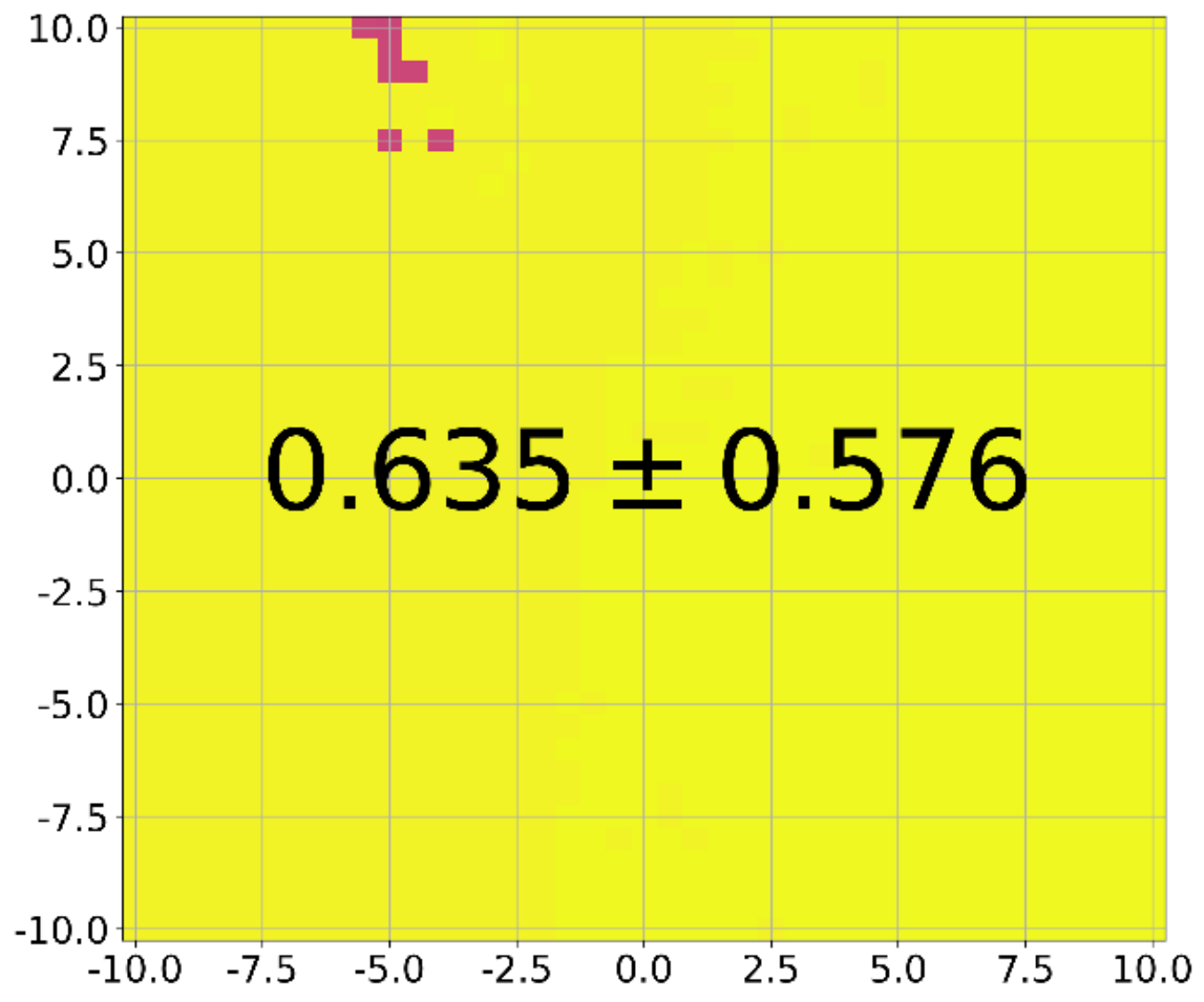}
\end{subfigure}
\begin{subfigure}{0.180\textwidth}
\includegraphics[width=\textwidth]{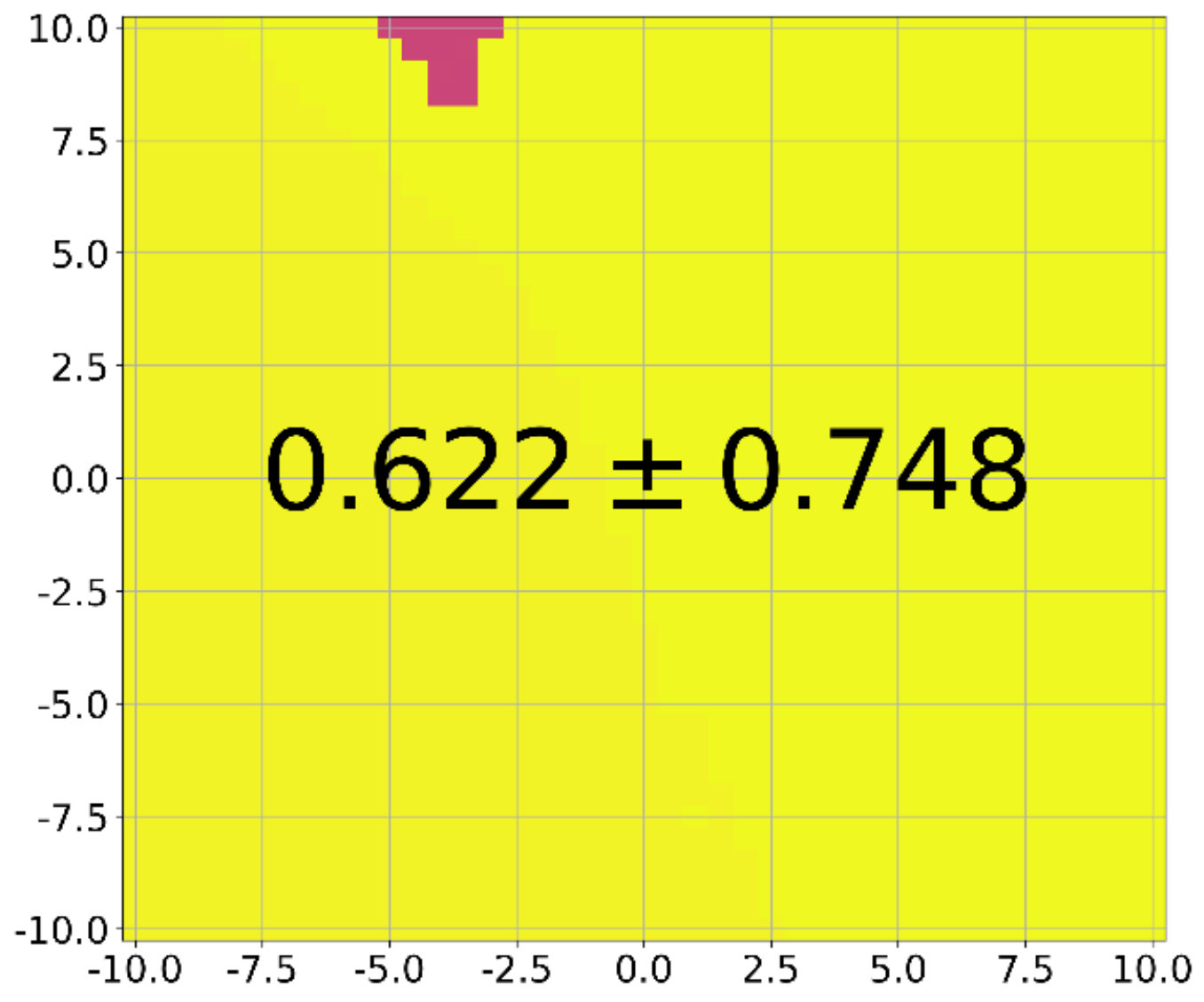}
\end{subfigure}
\begin{subfigure}{0.180\textwidth}
\includegraphics[width=\textwidth]{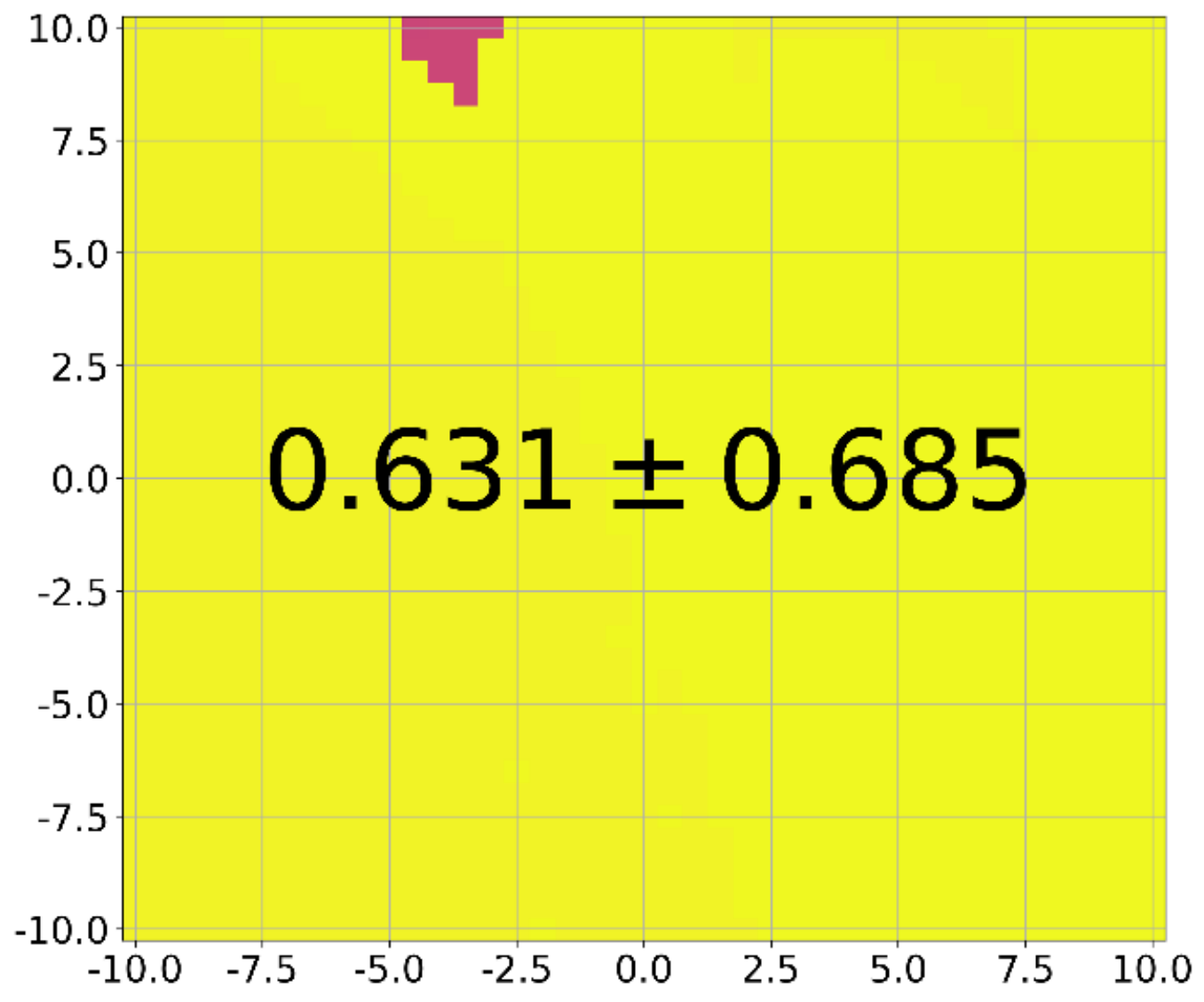}
\end{subfigure}
\begin{subfigure}{0.180\textwidth}
\includegraphics[width=\textwidth]{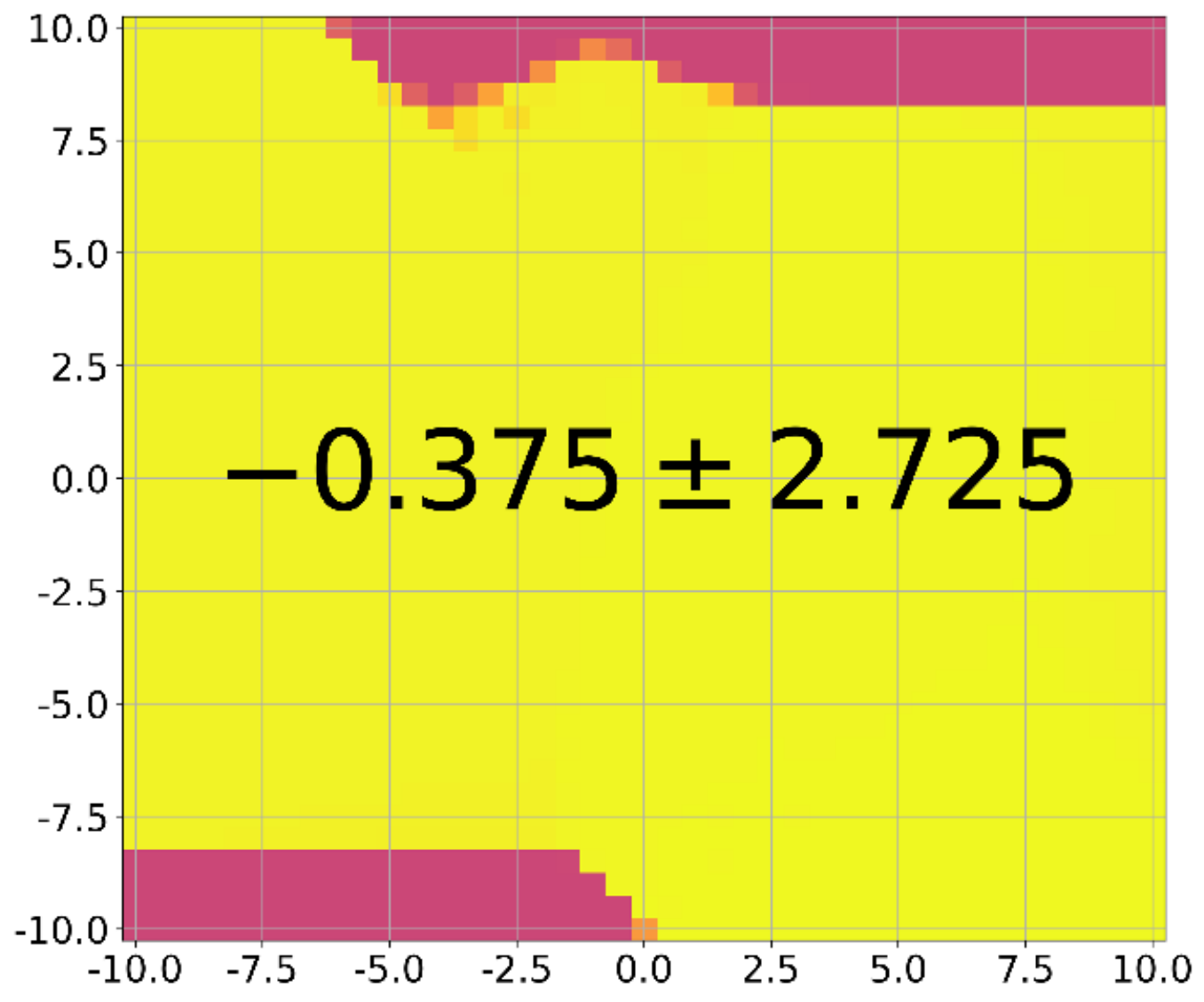}
\end{subfigure}
\begin{subfigure}{0.180\textwidth}
\includegraphics[width=\textwidth]{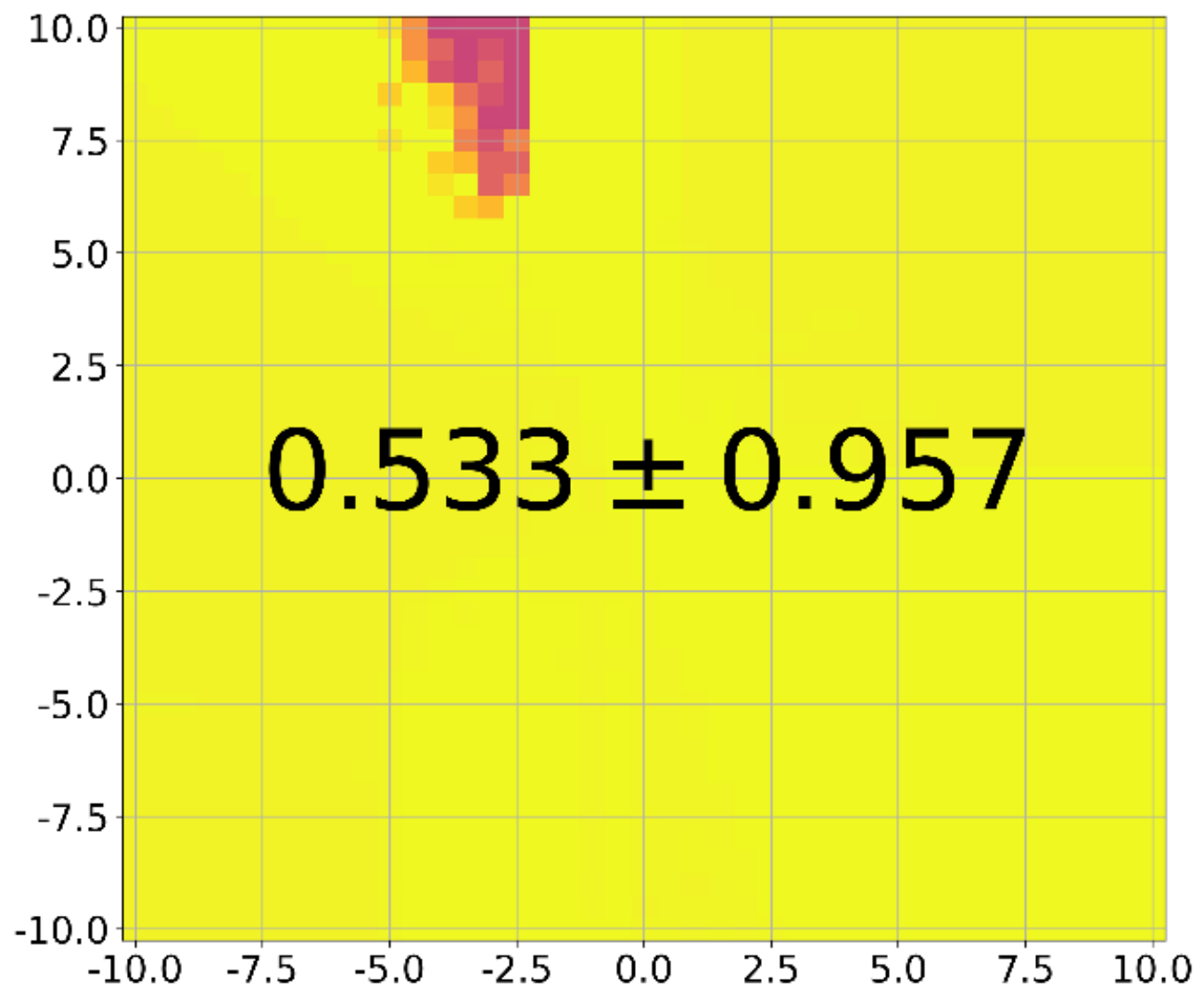}
\end{subfigure}

\begin{subfigure}{0.180\textwidth}
\includegraphics[width=\textwidth]{fig/envprop-3d/bs0__Tor_20201121a-v1__OneDimensionStateAndTwoActionPolicyNetwork}
\end{subfigure}
\begin{subfigure}{0.180\textwidth}
\includegraphics[width=\textwidth]{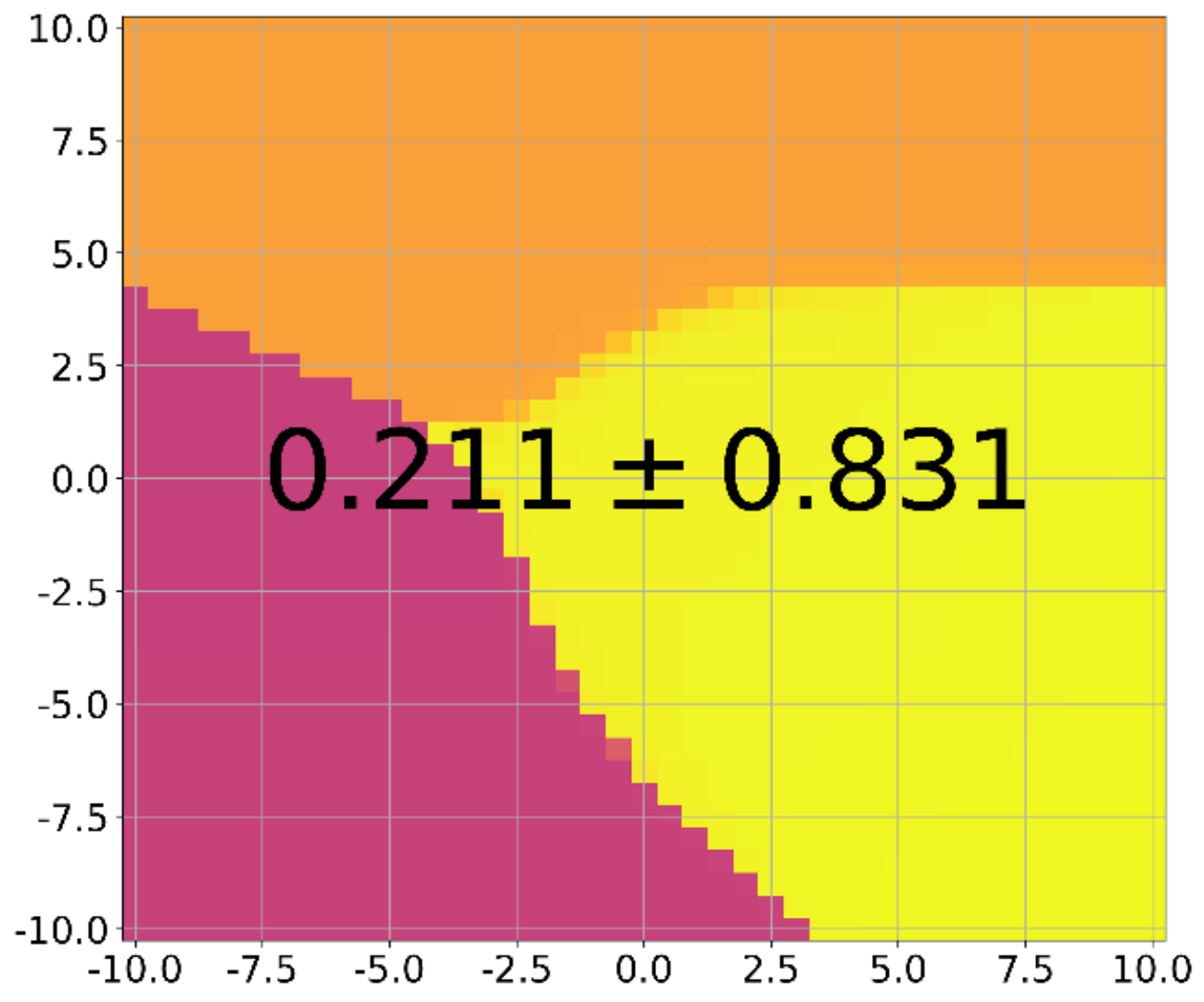}
\end{subfigure}
\begin{subfigure}{0.180\textwidth}
\includegraphics[width=\textwidth]{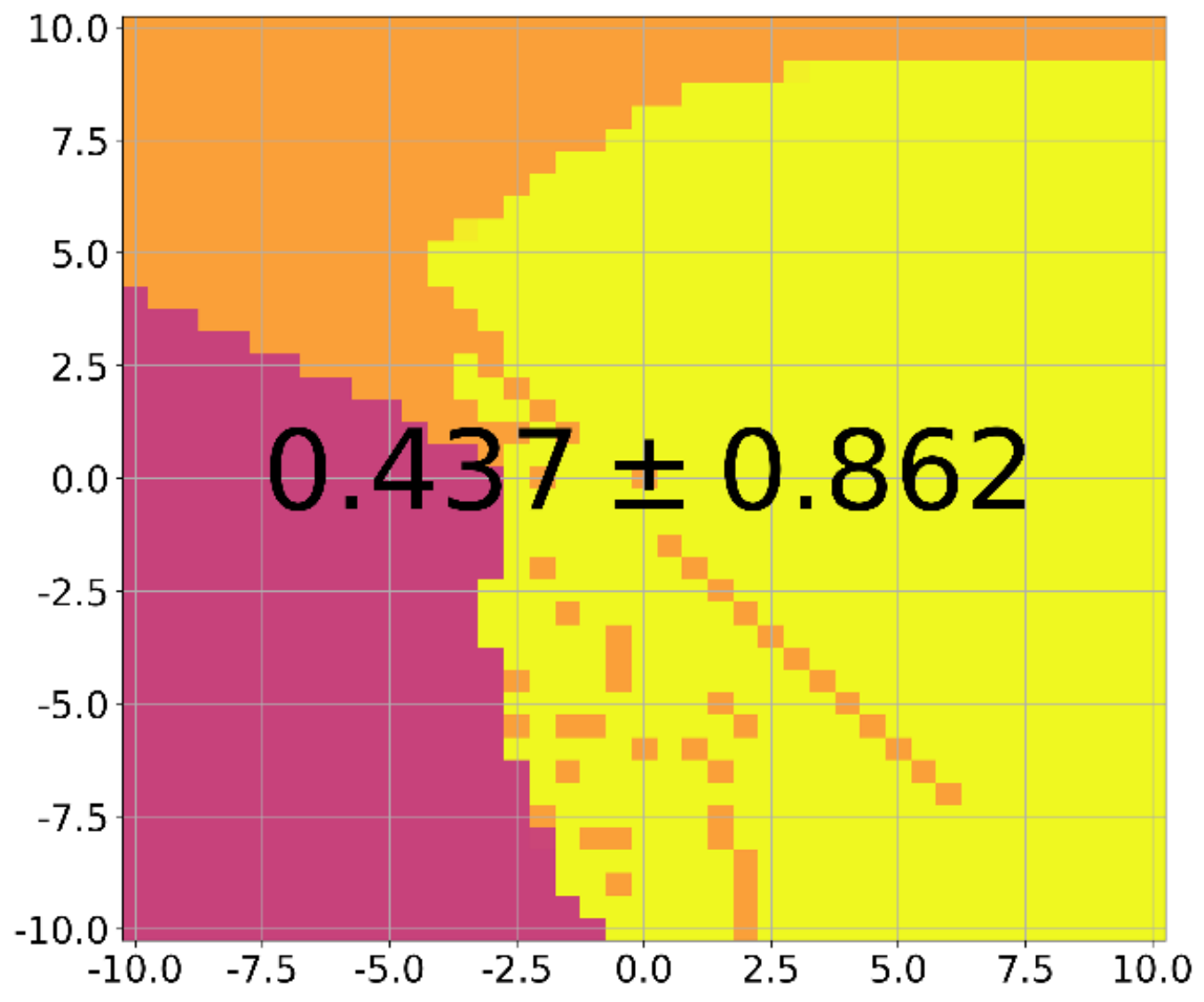}
\end{subfigure}
\begin{subfigure}{0.180\textwidth}
\includegraphics[width=\textwidth]{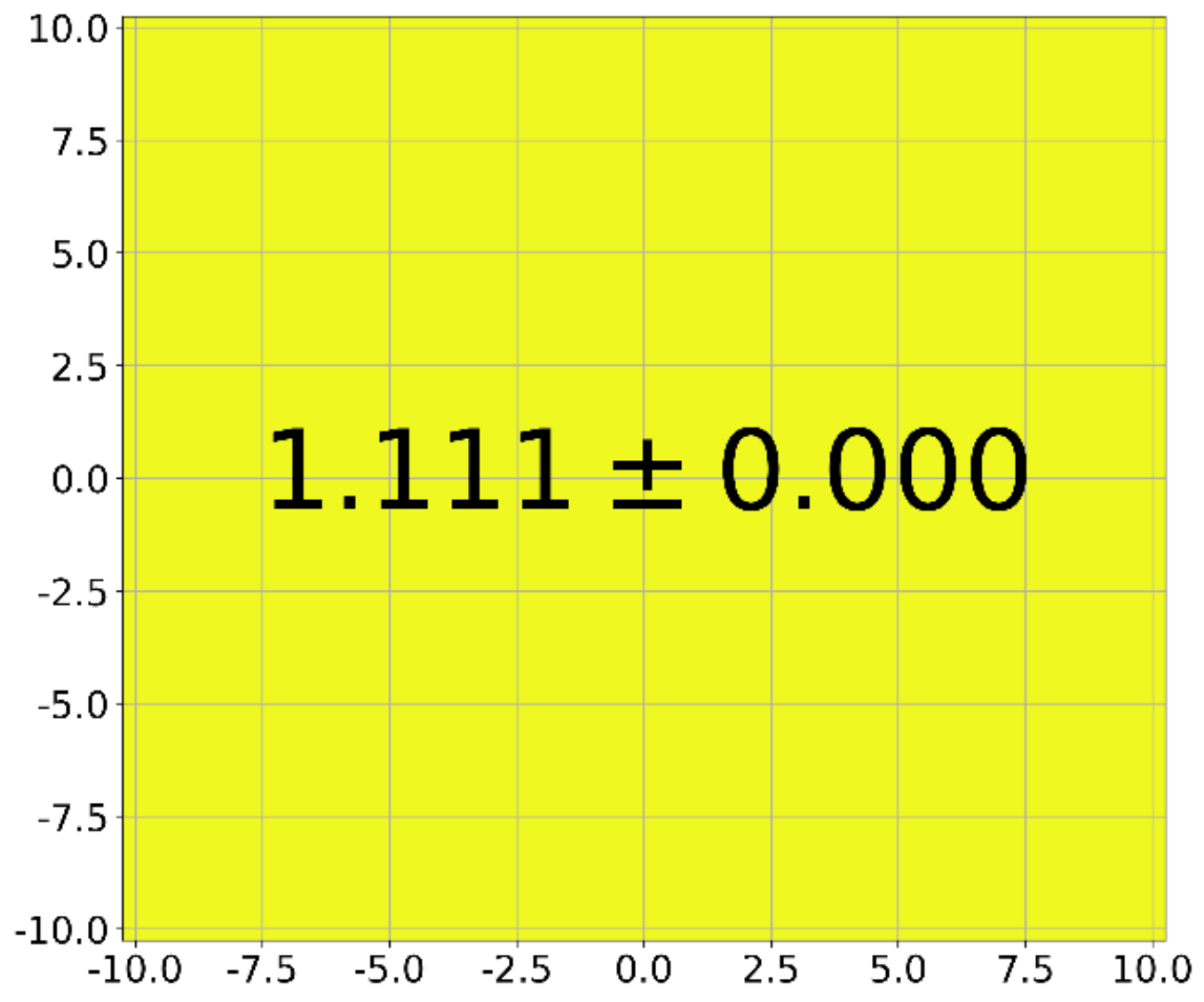}
\end{subfigure}
\begin{subfigure}{0.180\textwidth}
\includegraphics[width=\textwidth]{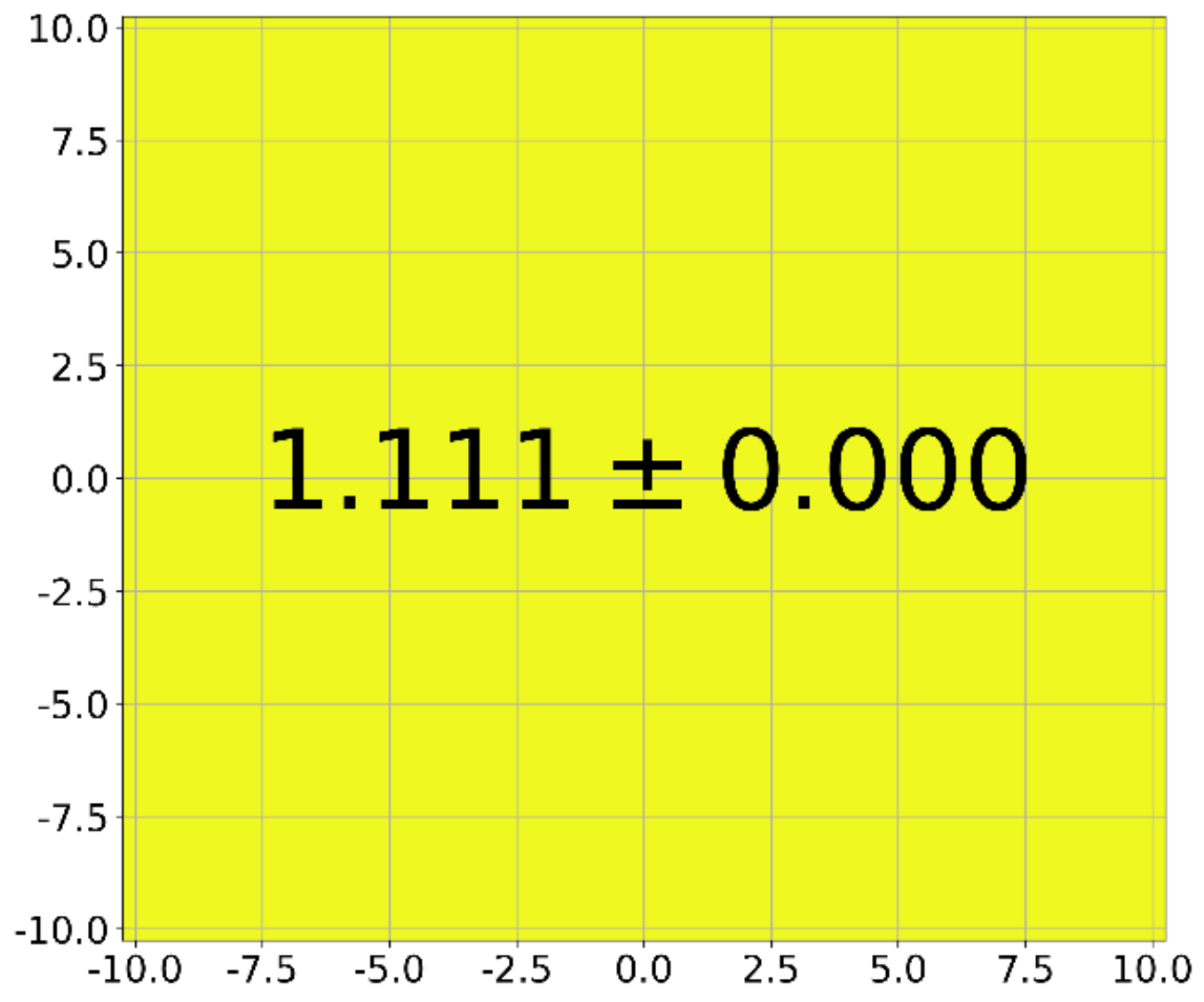}
\end{subfigure}
\begin{subfigure}{0.180\textwidth}
\includegraphics[width=\textwidth]{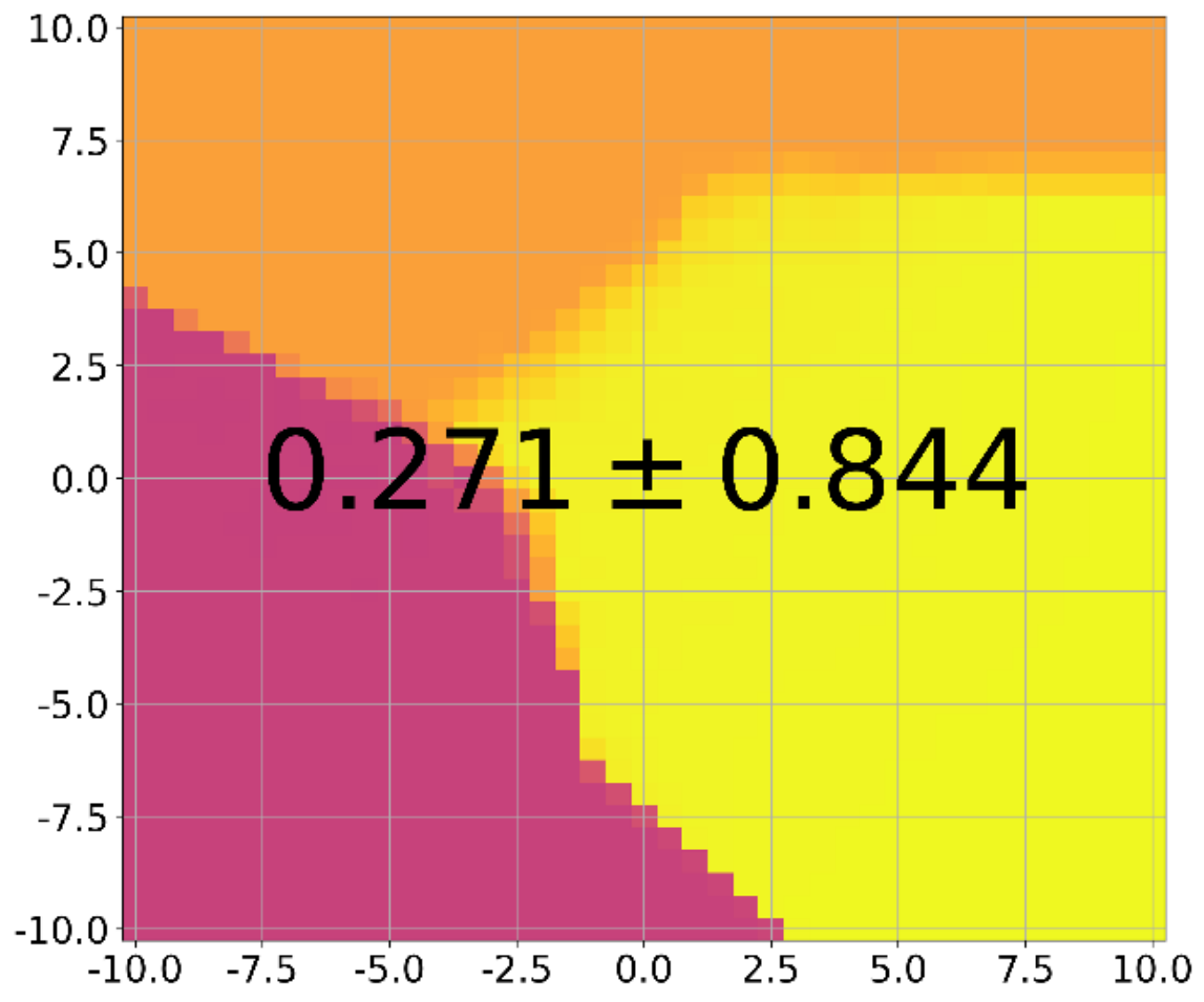}
\end{subfigure}
\begin{subfigure}{0.180\textwidth}
\includegraphics[width=\textwidth]{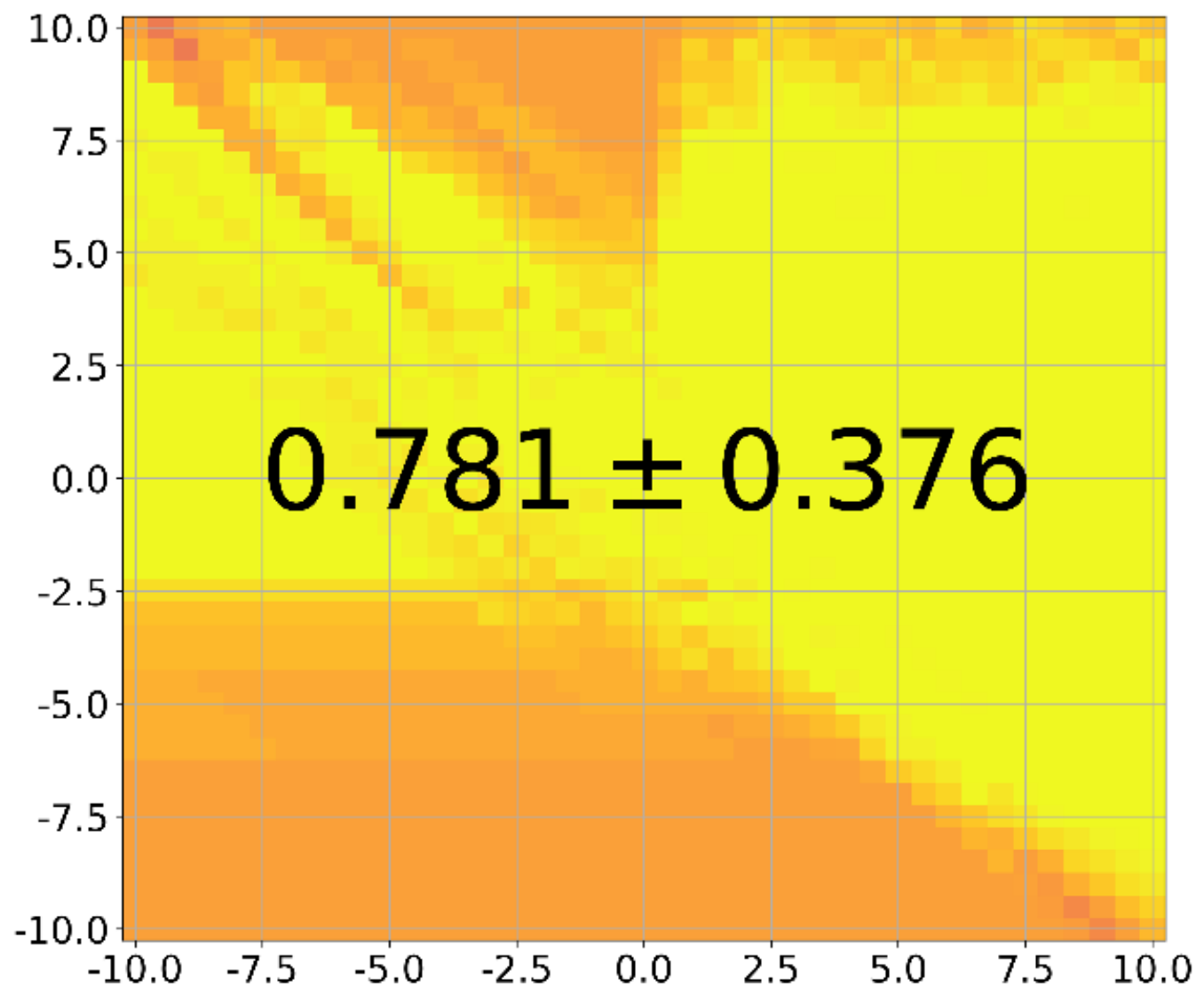}
\end{subfigure}

\begin{subfigure}{0.180\textwidth}
\includegraphics[width=\textwidth]{fig/envprop-3d/bs0__Hordijk_example-v4__OneDimensionStateAndTwoActionPolicyNetwork}
\end{subfigure}
\begin{subfigure}{0.180\textwidth}
\includegraphics[width=\textwidth]{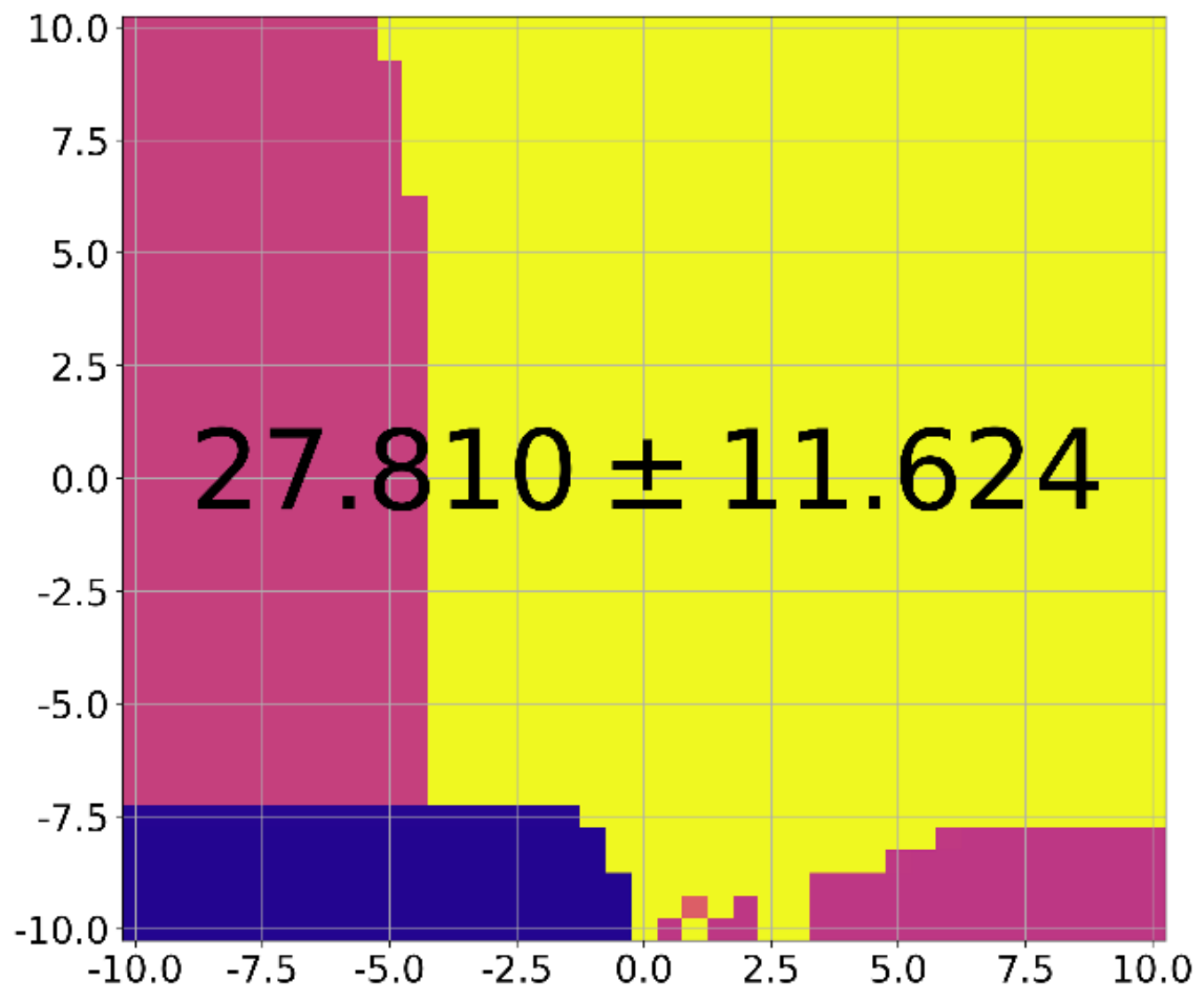}
\end{subfigure}
\begin{subfigure}{0.180\textwidth}
\includegraphics[width=\textwidth]{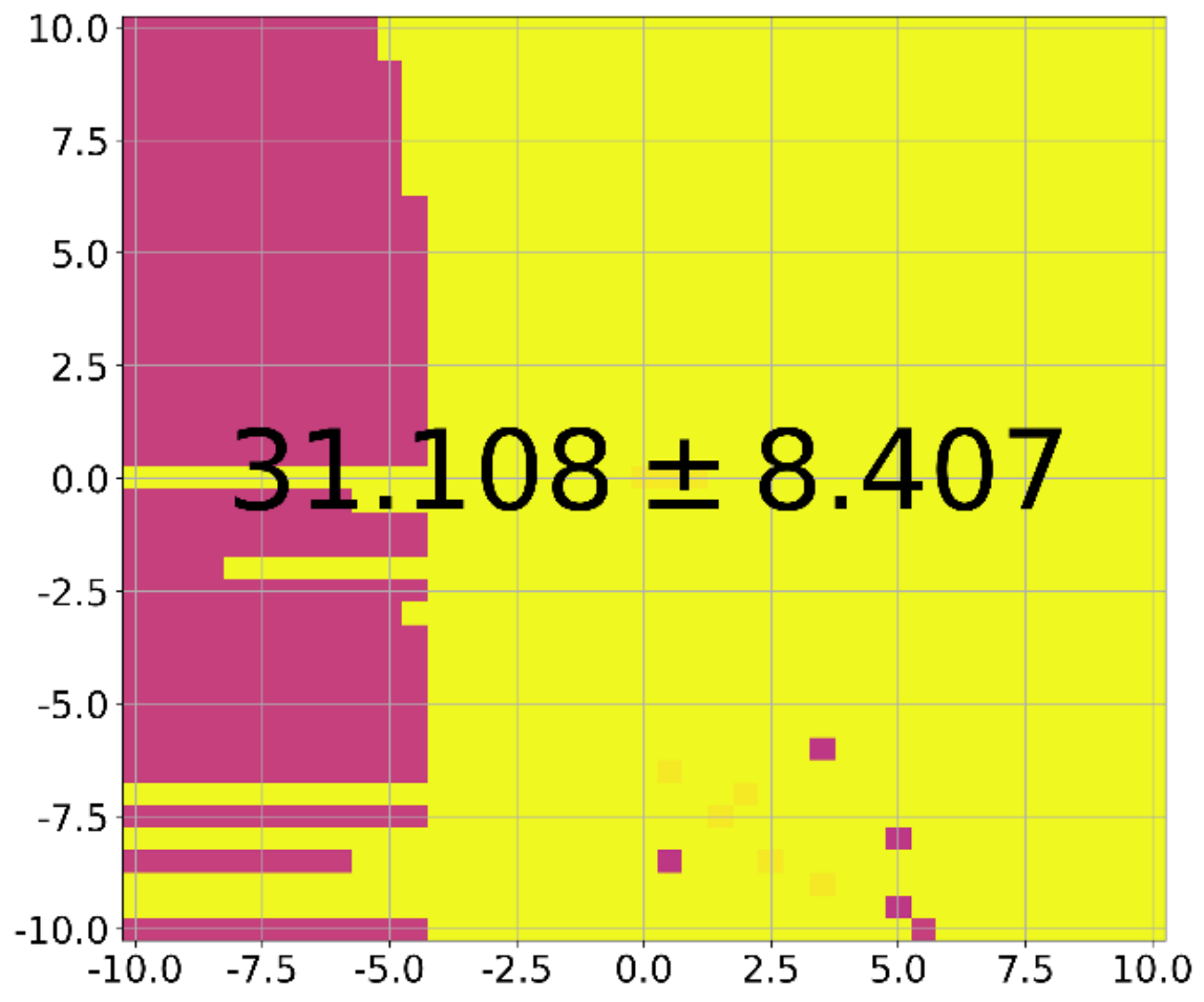}
\end{subfigure}
\begin{subfigure}{0.180\textwidth}
\includegraphics[width=\textwidth]{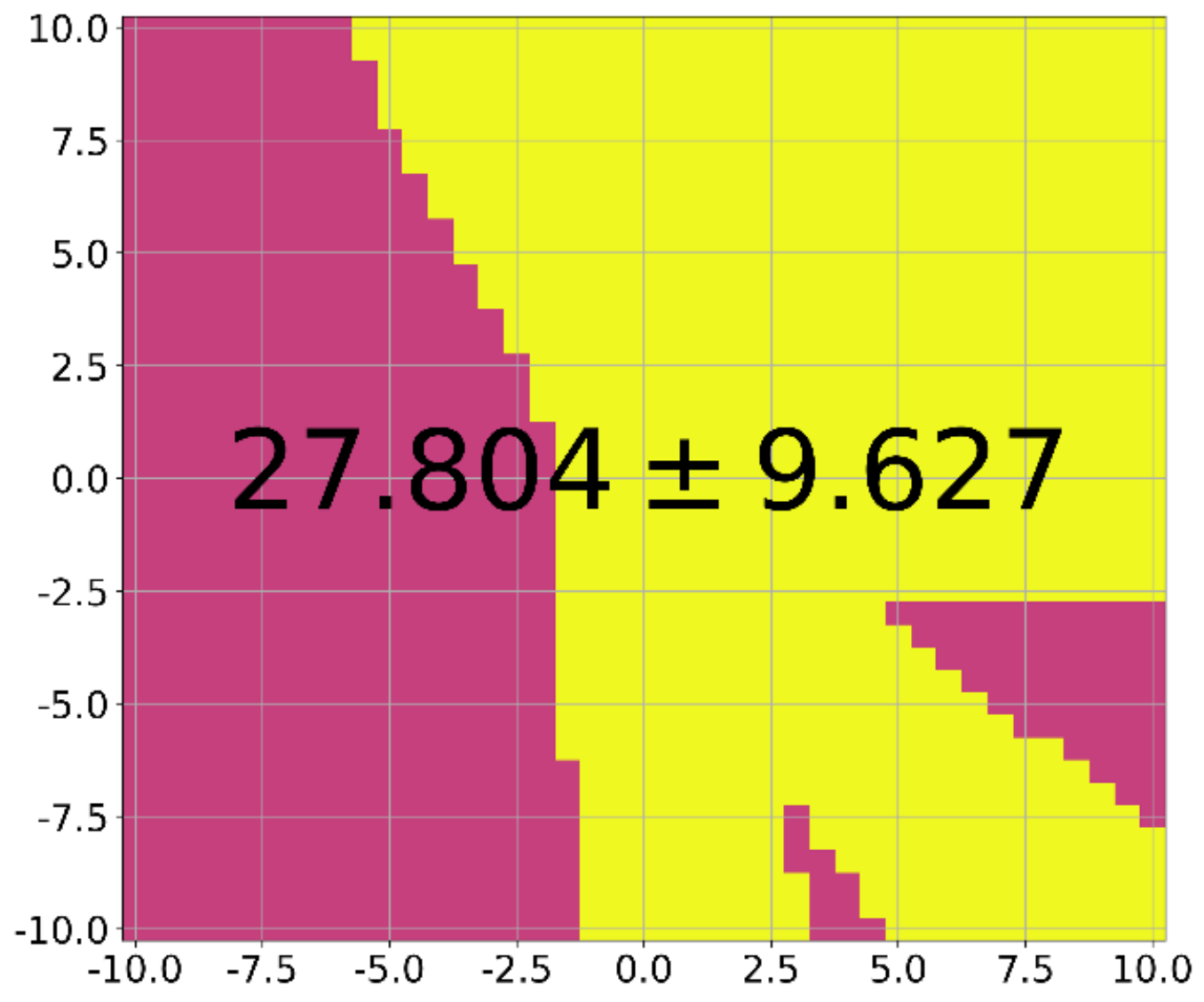}
\end{subfigure}
\begin{subfigure}{0.180\textwidth}
\includegraphics[width=\textwidth]{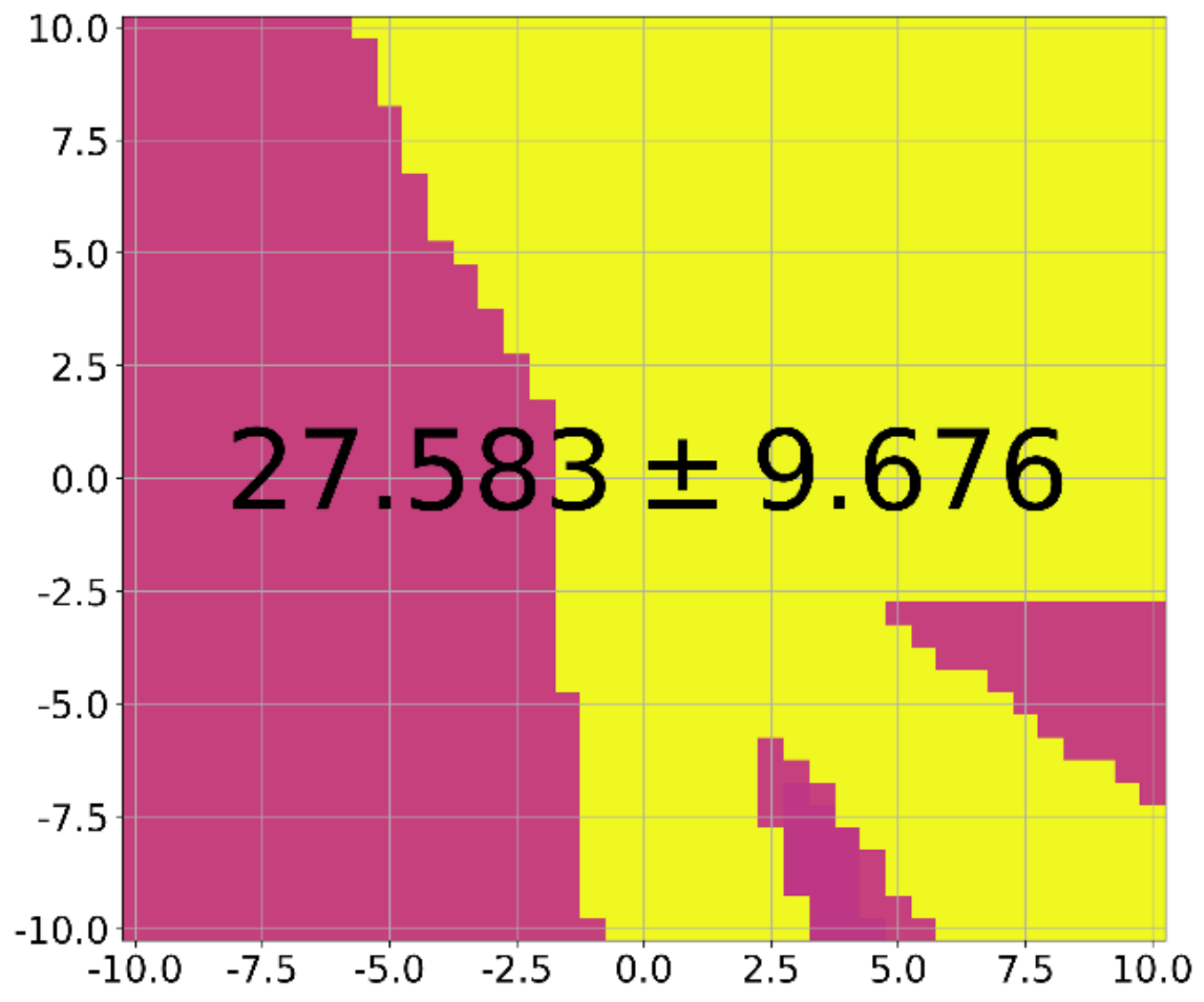}
\end{subfigure}
\begin{subfigure}{0.180\textwidth}
\includegraphics[width=\textwidth]{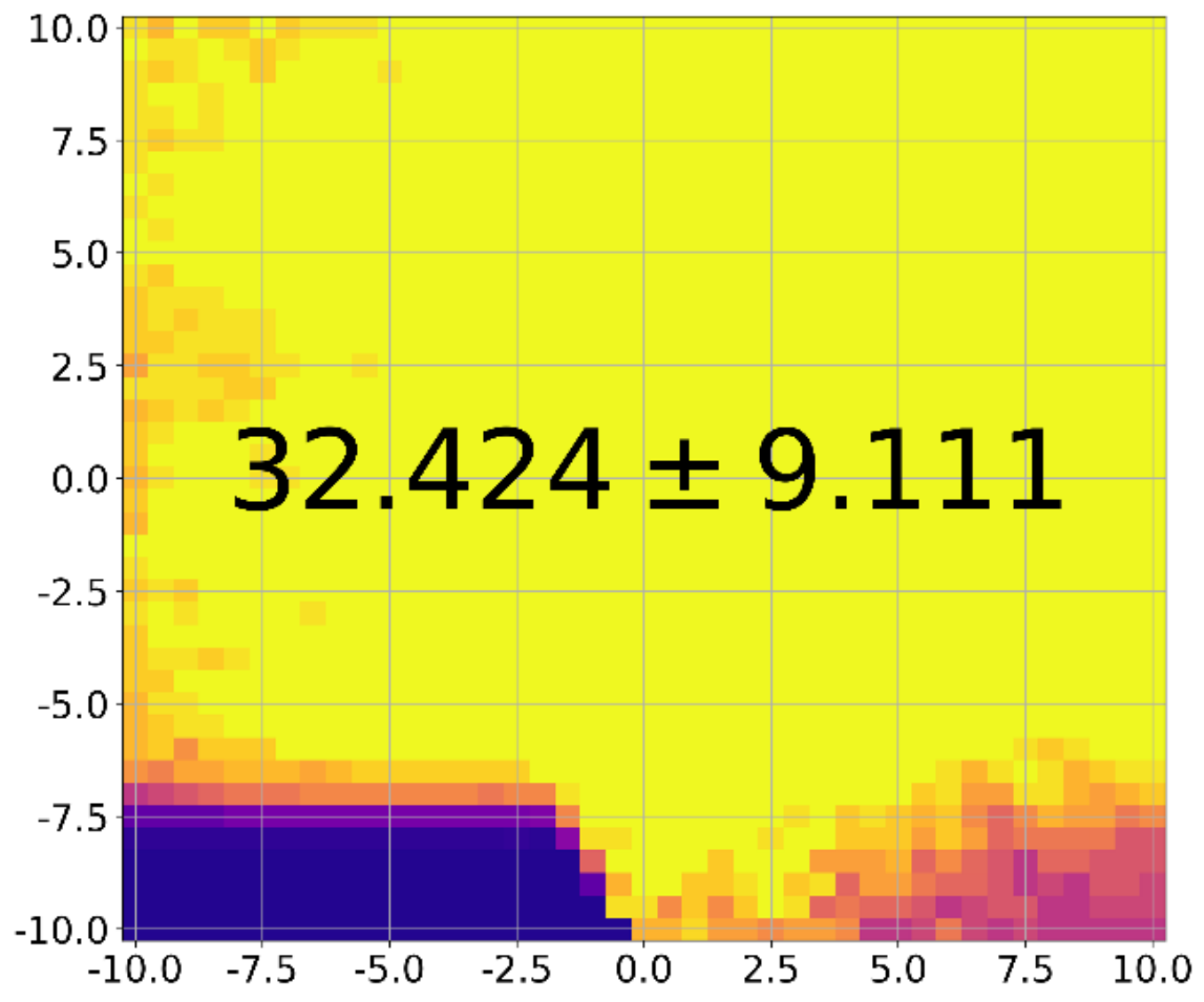}
\end{subfigure}
\begin{subfigure}{0.180\textwidth}
\includegraphics[width=\textwidth]{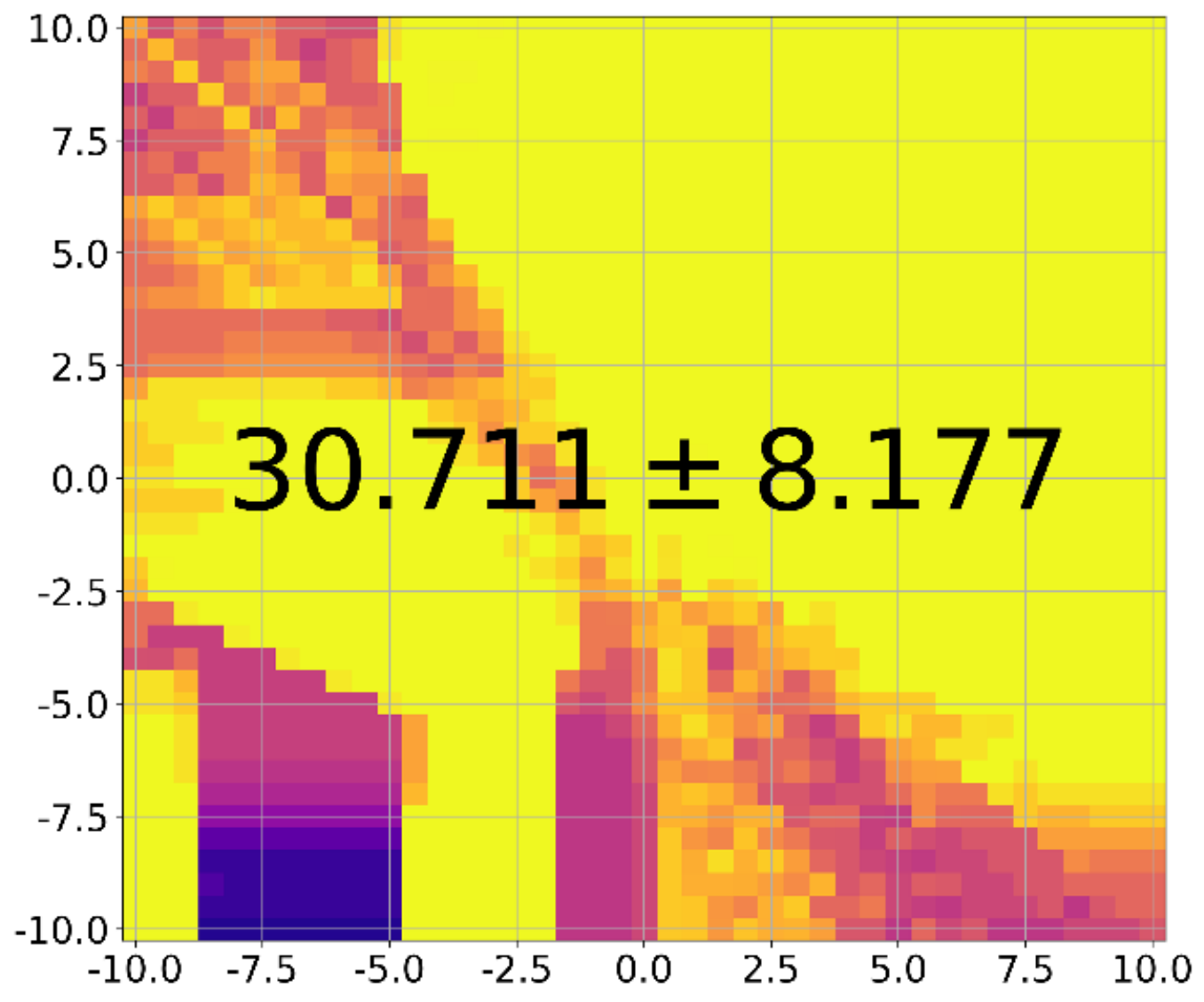}
\end{subfigure}

\begin{subfigure}{0.180\textwidth}
\includegraphics[width=\textwidth]{fig/envprop-3d/bs0__NChain_mod-v1__OneDimensionStateAndTwoActionPolicyNetwork}
\subcaption*{Bias landscape}
\end{subfigure}
\begin{subfigure}{0.180\textwidth}
\includegraphics[width=\textwidth]{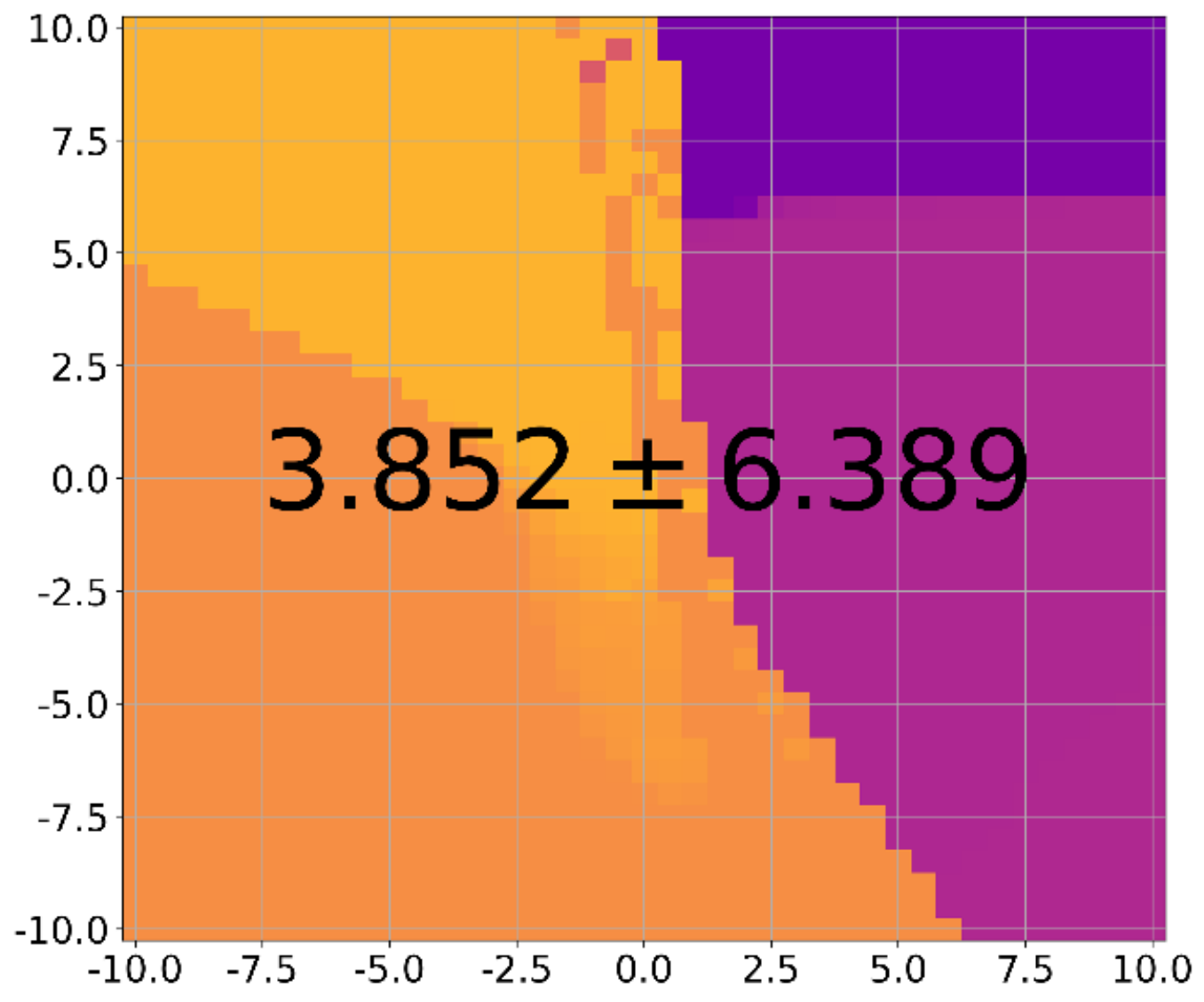}
\subcaption*{Identity}
\end{subfigure}
\begin{subfigure}{0.180\textwidth}
\includegraphics[width=\textwidth]{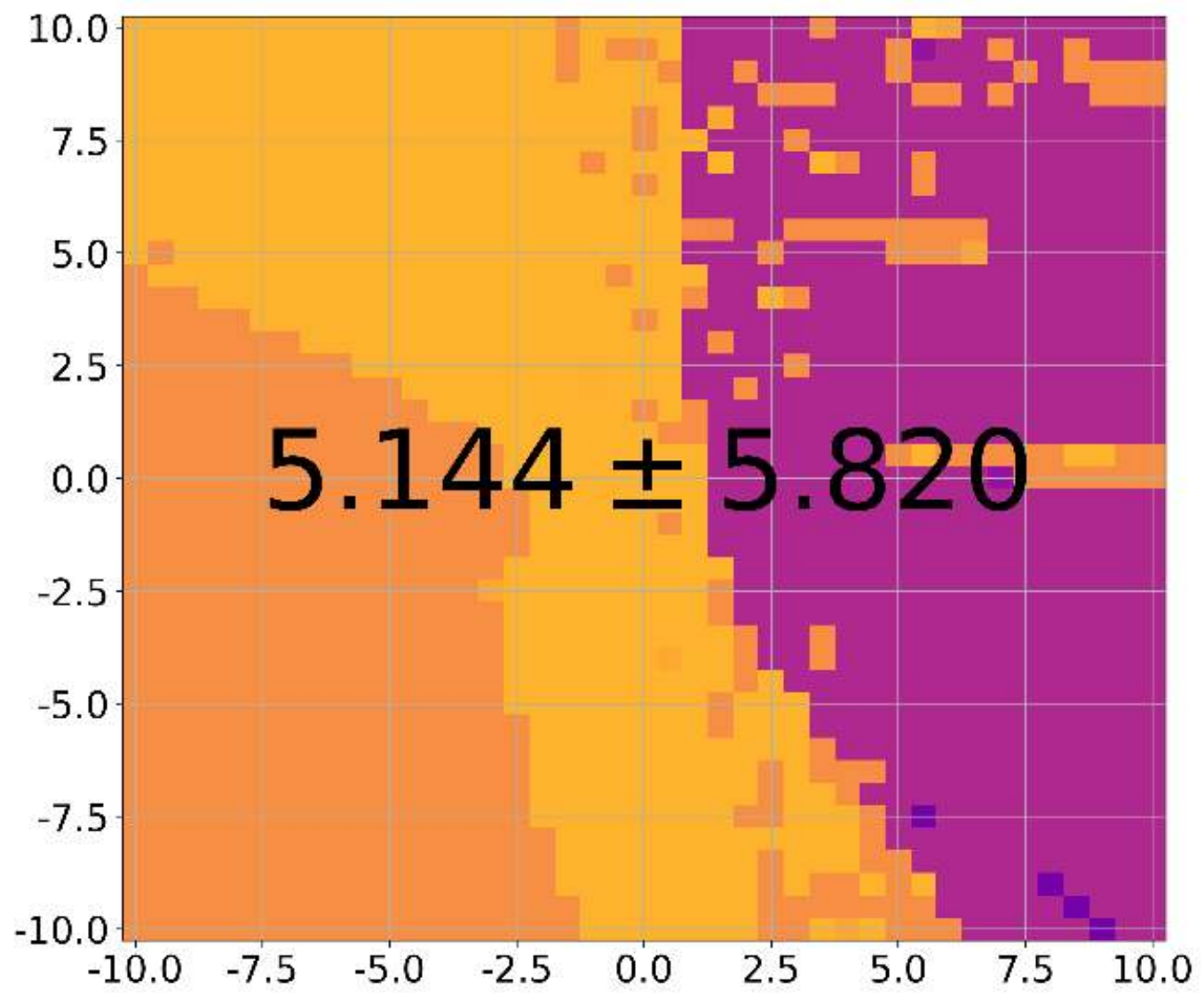}
\subcaption*{{\scriptsize (Modified) Hessian}}
\end{subfigure}
\begin{subfigure}{0.180\textwidth}
\includegraphics[width=\textwidth]{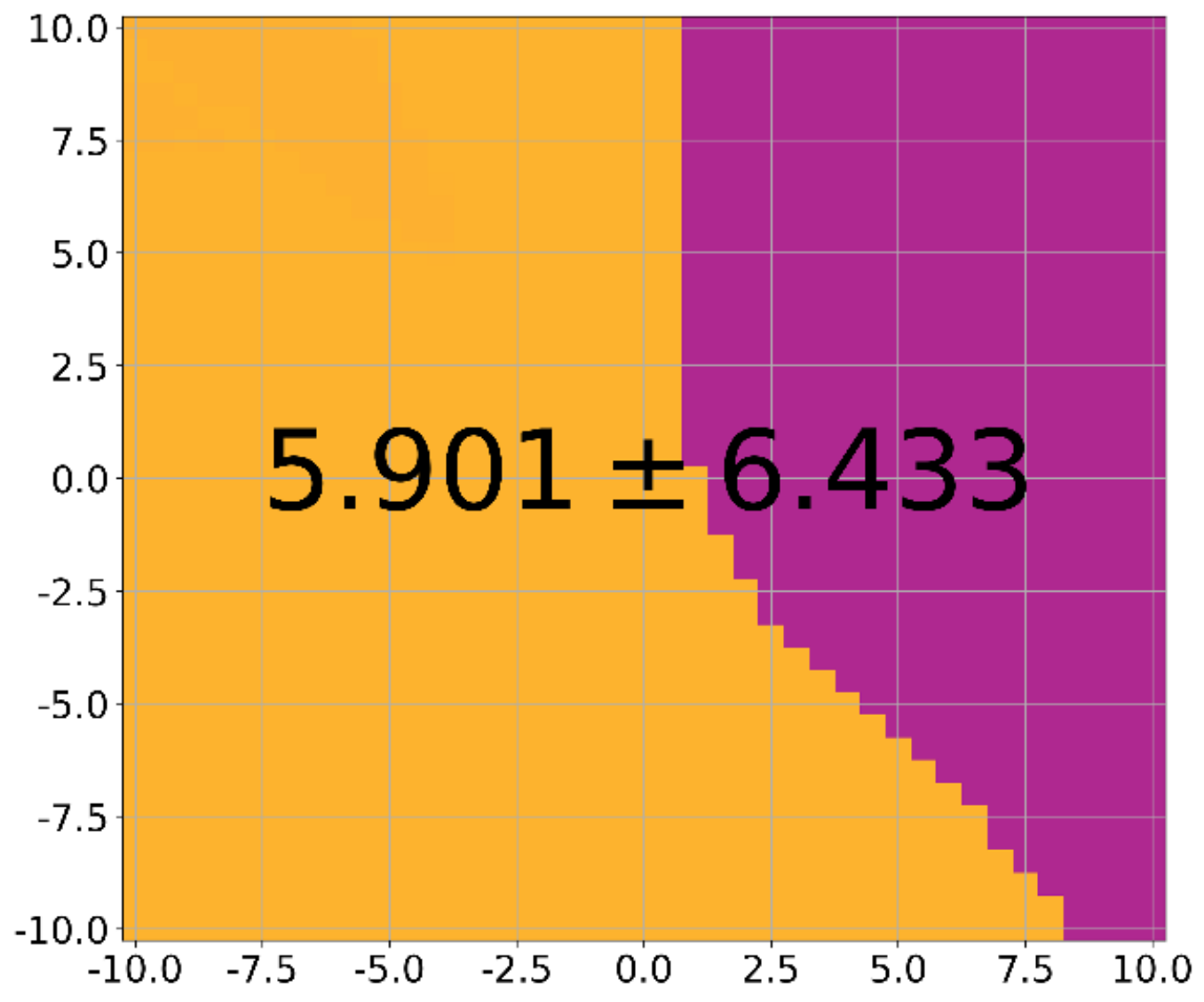}
\subcaption*{Analytic Fisher}
\end{subfigure}
\begin{subfigure}{0.180\textwidth}
\includegraphics[width=\textwidth]{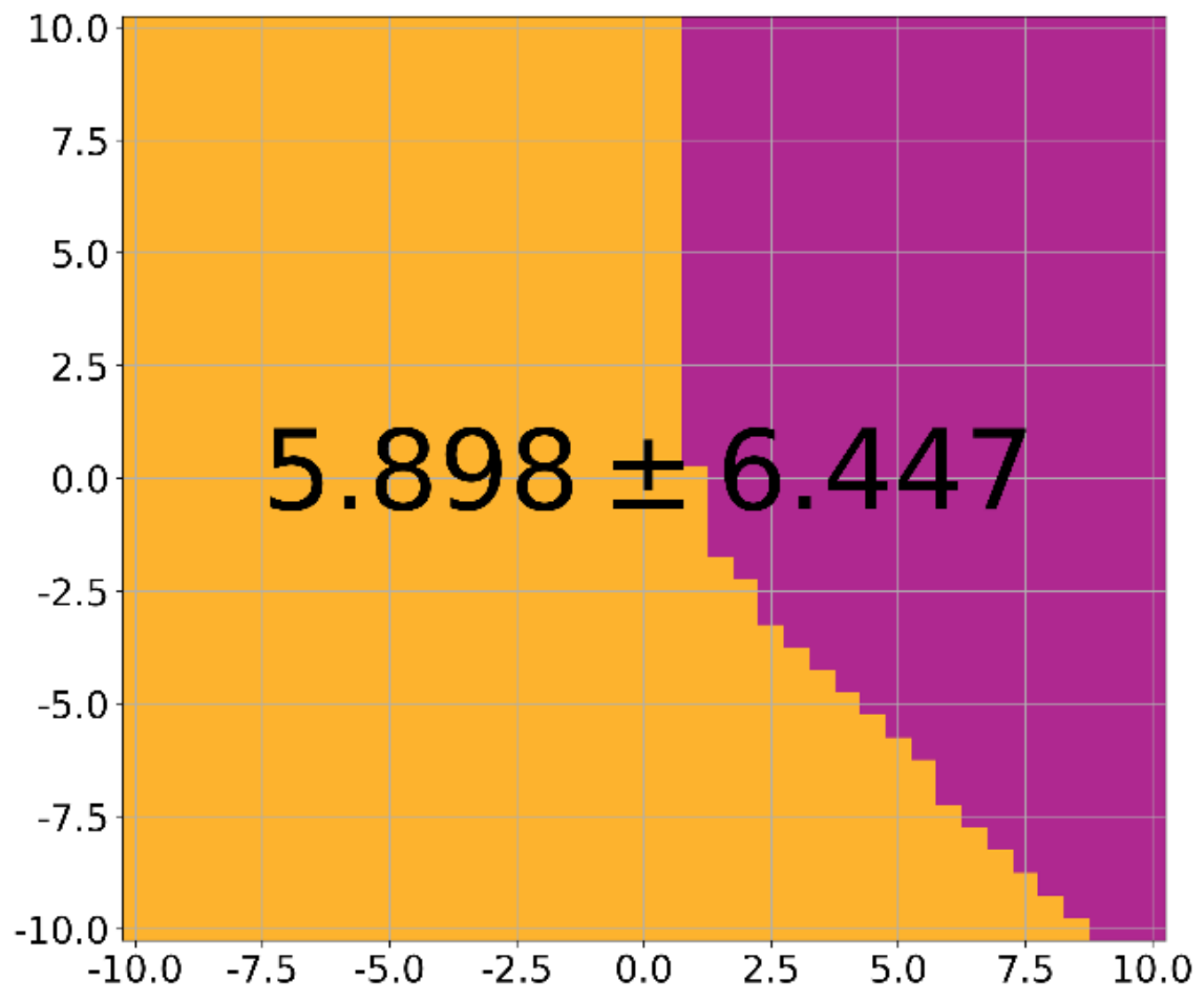}
\subcaption*{\small{Sampling Fisher}} %
\end{subfigure}
\begin{subfigure}{0.180\textwidth}
\includegraphics[width=\textwidth]{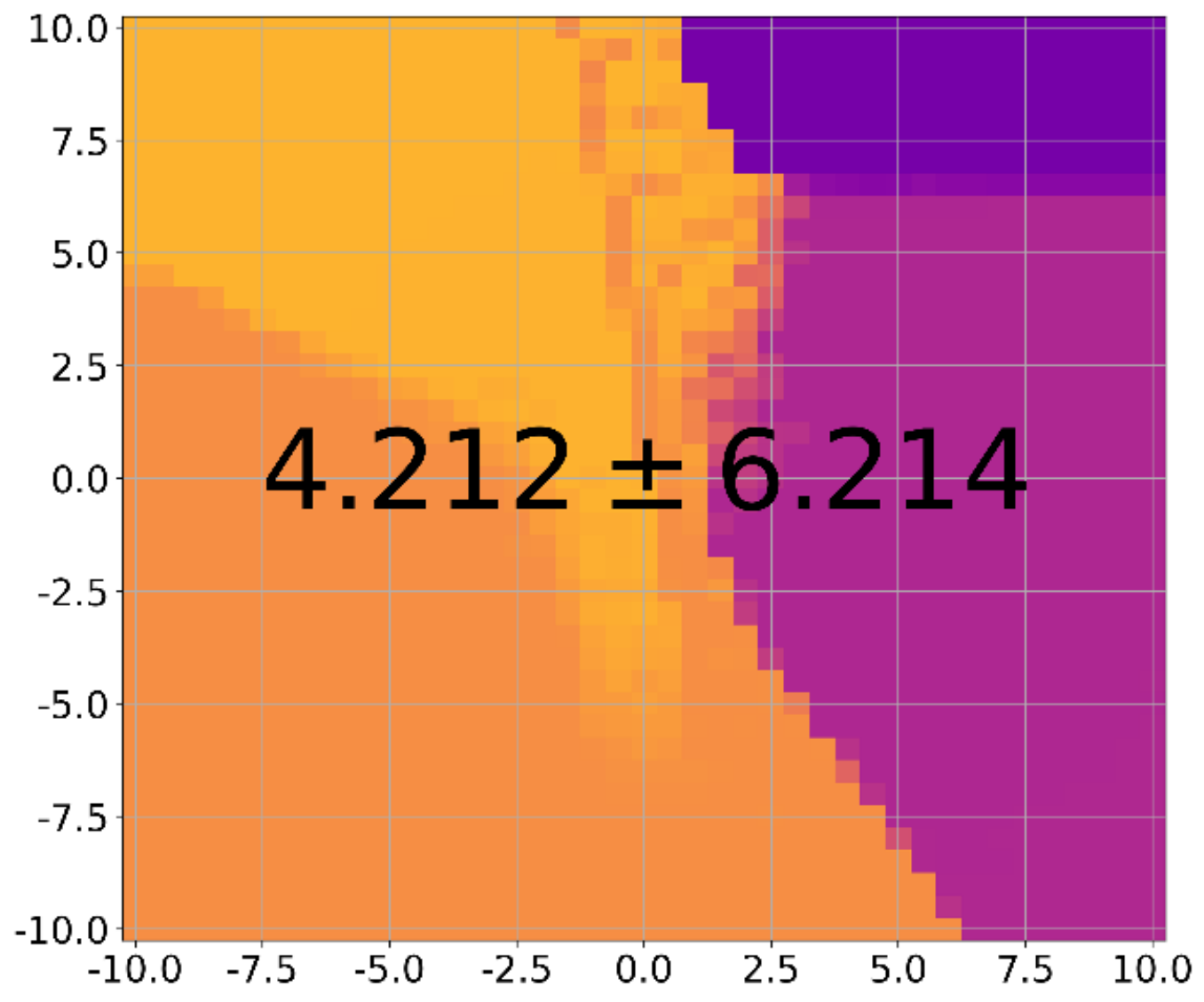}
\subcaption*{\scriptsize 16-sample Identity}
\end{subfigure}
\begin{subfigure}{0.180\textwidth}
\includegraphics[width=\textwidth]{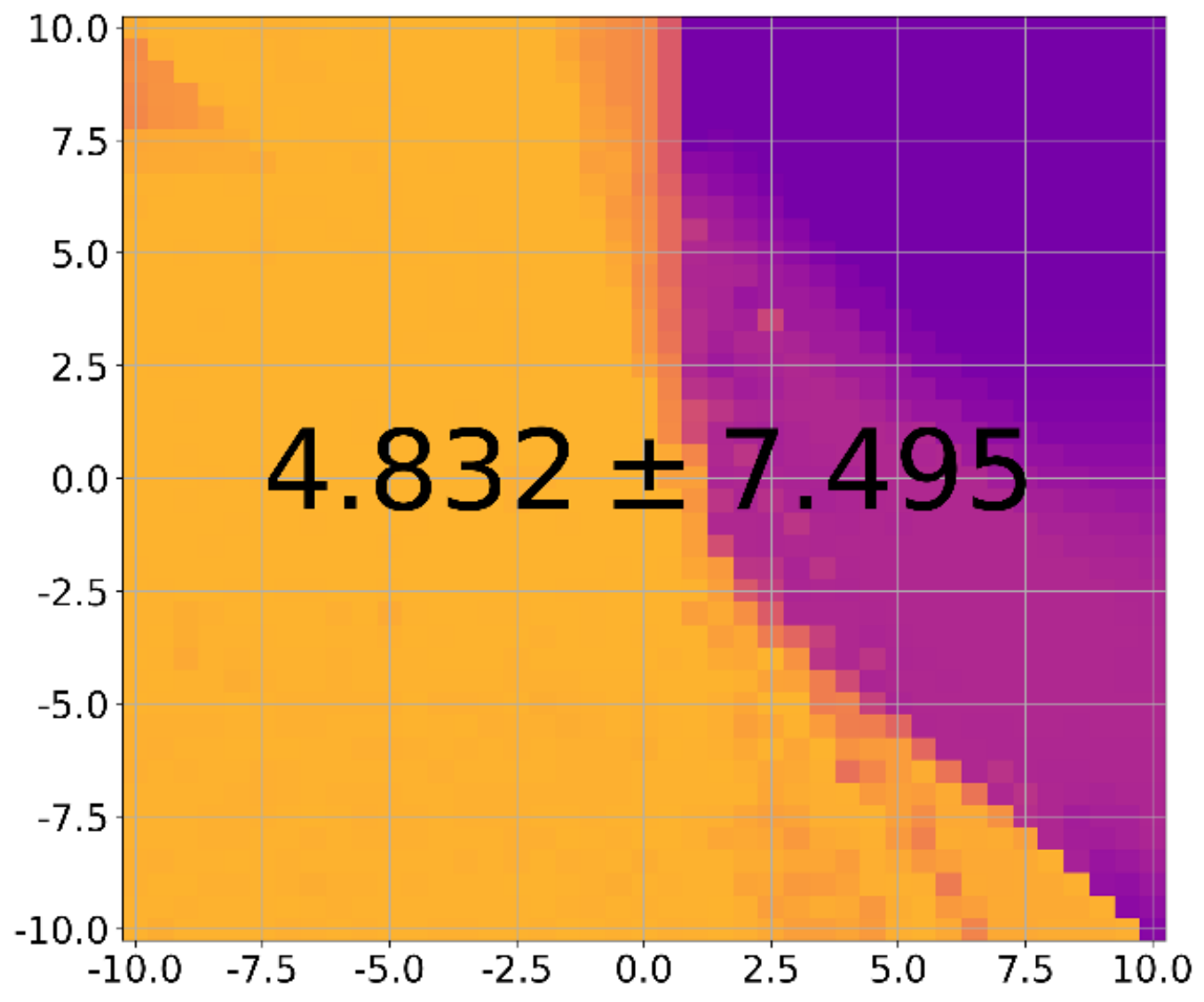}
\subcaption*{16-sample Fisher}
\end{subfigure}

\caption{Bias-only optimization on Env-A1, A2, A3, B1, B2, and B3 (top to bottom rows).
The 2nd to 5th columns from the left correspond to exact optimization of
Schemes i, ii, iii, and iv, whereas the two right-most columns correspond to
sampling-based approximation of Schemes i and iv (with 10 random seeds),
as explained in \secref{sec:biasopt}.
The color indicates the bias $v_b(\vecb{\theta}, s_0)$ of
the final policy parameter $\vecb{\theta}$ from optimization
initialized at the corresponding 2D-coordinate as $\vecb{\theta}_0$.
It maps low bias to dark-blue, whereas high bias to yellow.
Each subplot has $1681$ final bias grid-values, whose mean and standard deviation
are indicated by the number in the middle.
Here, the highest final bias (yellow color) is desirable.
For experimental setup, see \secref{sec:xprmt_setup_nbwpg}.
}
\label{fig:biasoptcmp}
\end{figure}
\end{landscape}

\begin{landscape}
\begin{figure}
\centering

\begin{subfigure}{0.170\textwidth}
\includegraphics[width=\textwidth]{fig/envprop-3d/bs0__Example_10_1_2-v1__OneDimensionStateAndTwoActionPolicyNetwork}
\end{subfigure}
\begin{subfigure}{0.170\textwidth}
\includegraphics[width=\textwidth]{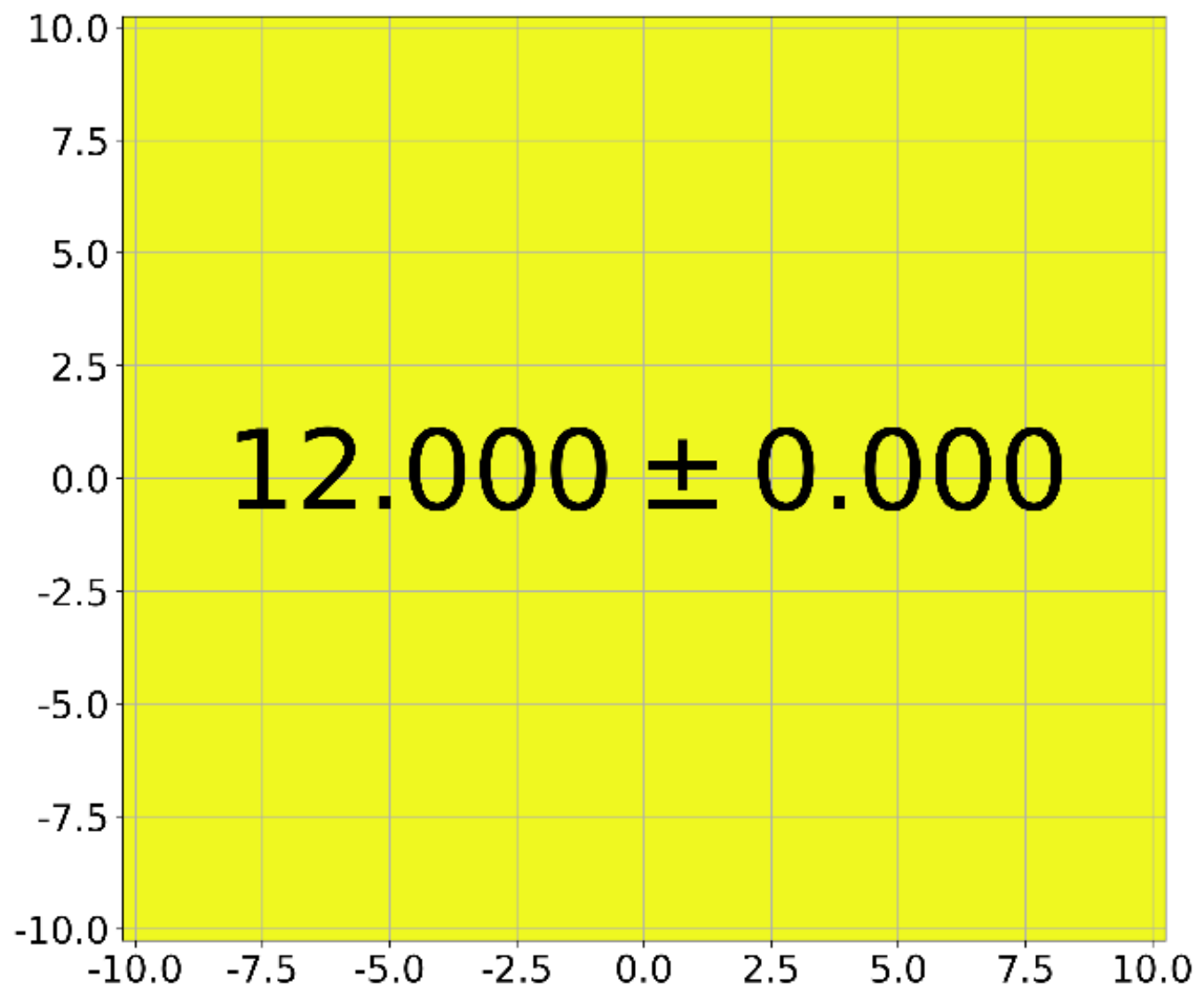}
\end{subfigure}
\begin{subfigure}{0.170\textwidth}
\includegraphics[width=\textwidth]{fig/biasopt/bias__obj=bias__conditioner_mode=fisher_probtransient__nseed1__OneDimensionStateAndTwoActionPolicyNetwork__Example_10_1_2-v1}
\end{subfigure}
\begin{subfigure}{0.170\textwidth}
\includegraphics[width=\textwidth]{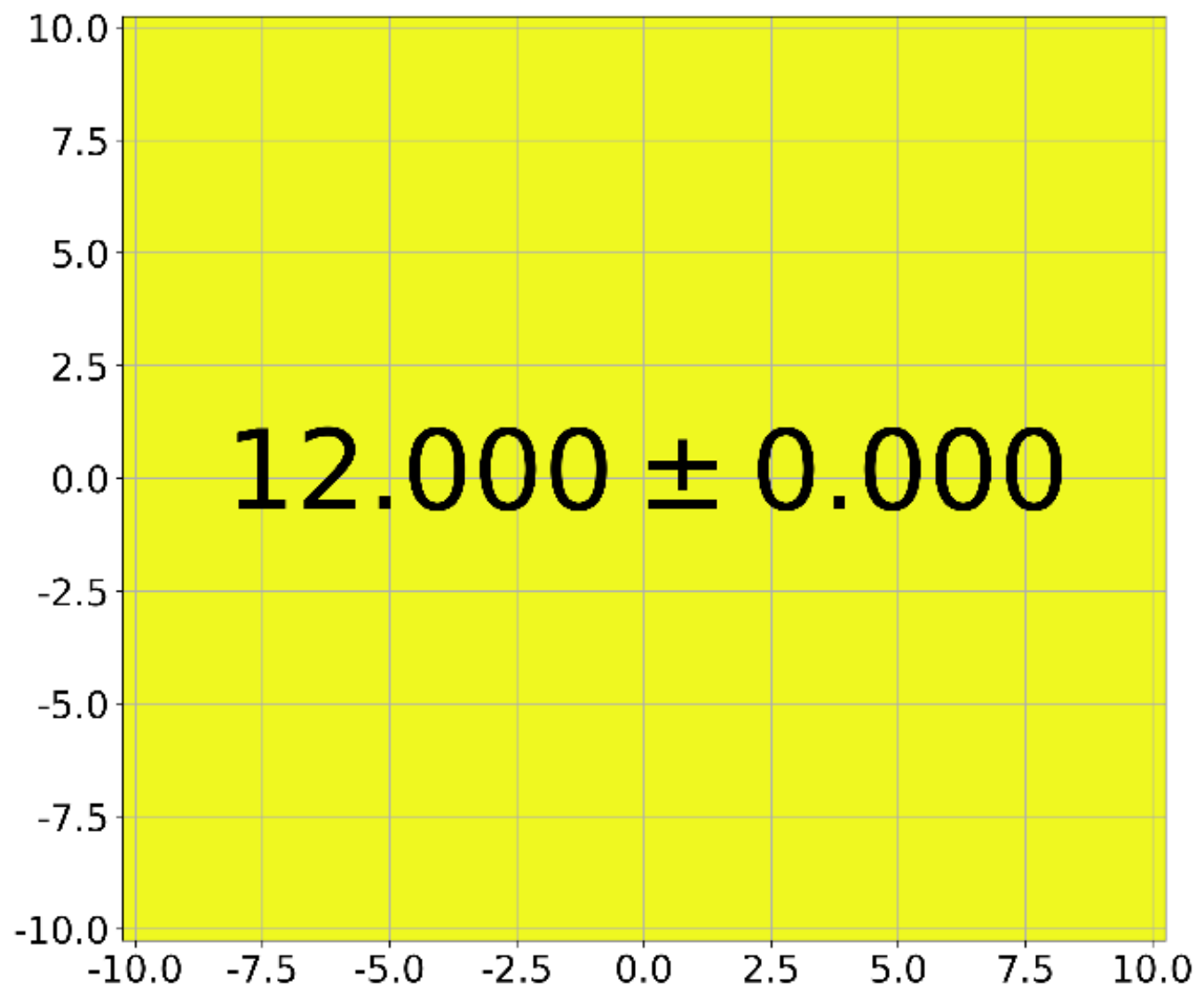}
\end{subfigure}
\begin{subfigure}{0.170\textwidth}
\includegraphics[width=\textwidth]{fig/biasopt/bias__obj=bias__conditioner_mode=fisher_transient_withsteadymul_upto_t1__nseed1__OneDimensionStateAndTwoActionPolicyNetwork__Example_10_1_2-v1}
\end{subfigure}
\begin{subfigure}{0.170\textwidth}
\includegraphics[width=\textwidth]{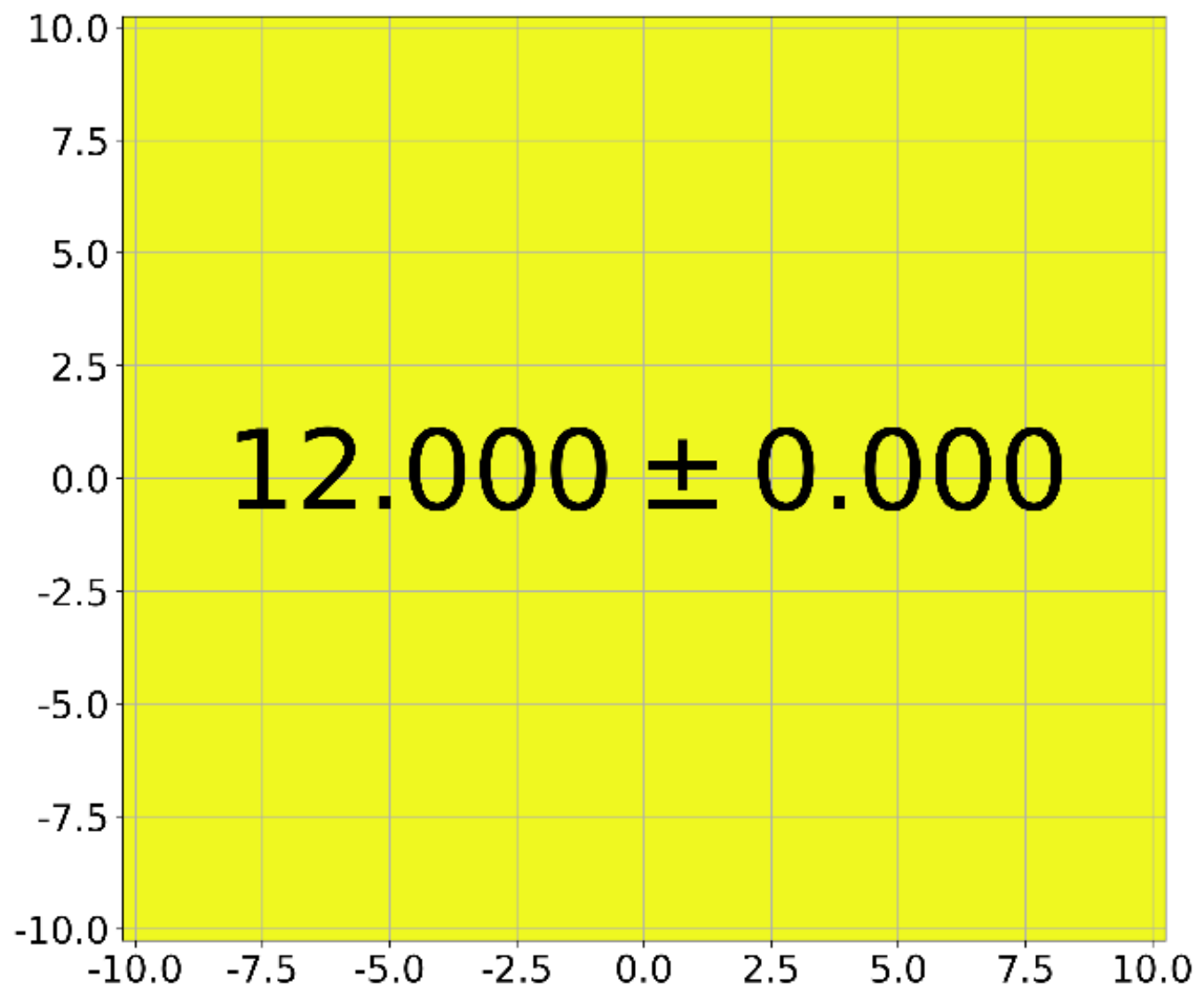}
\end{subfigure}
\begin{subfigure}{0.170\textwidth}
\includegraphics[width=\textwidth]{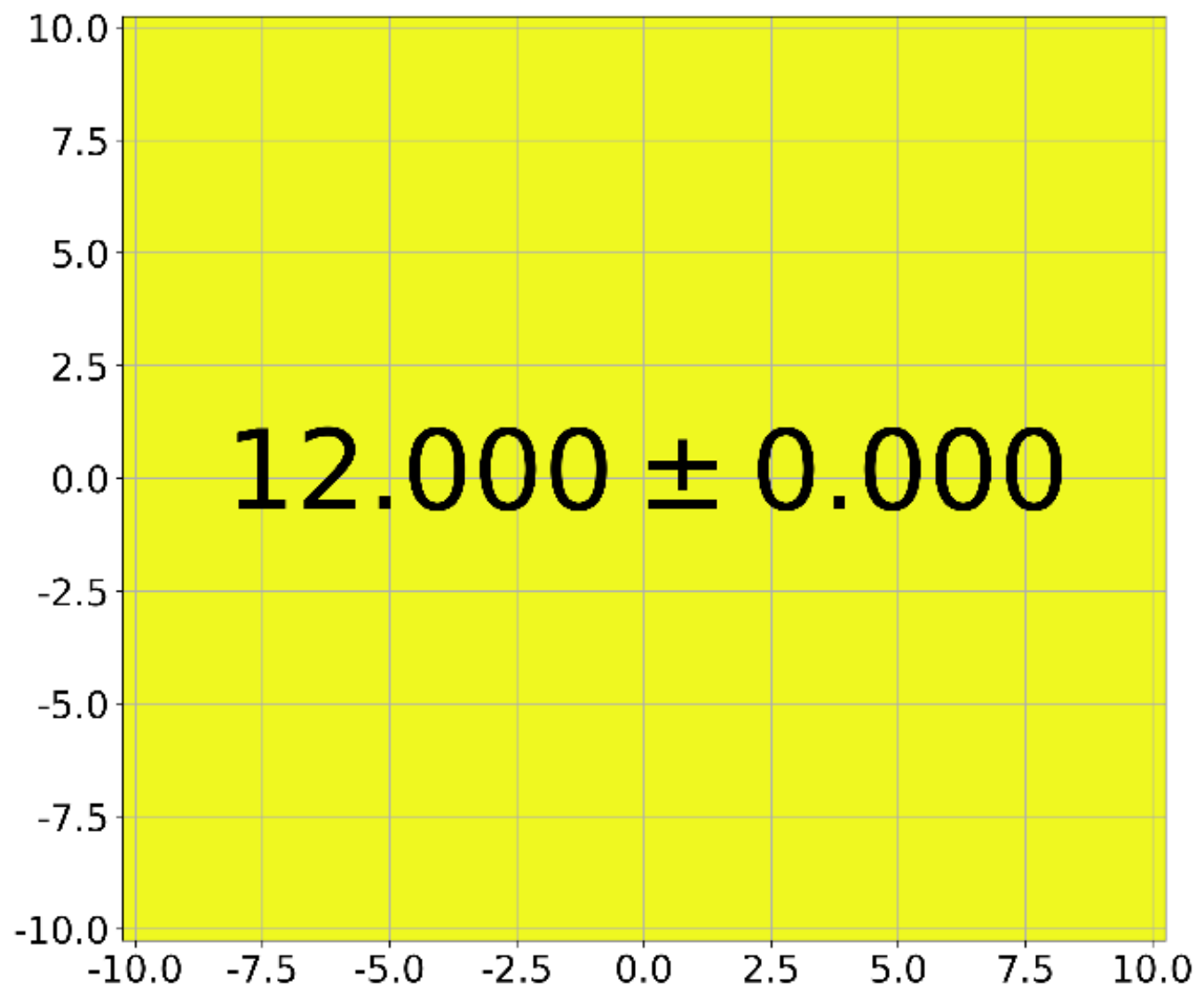}
\end{subfigure}
\begin{subfigure}{0.170\textwidth}
\includegraphics[width=\textwidth]{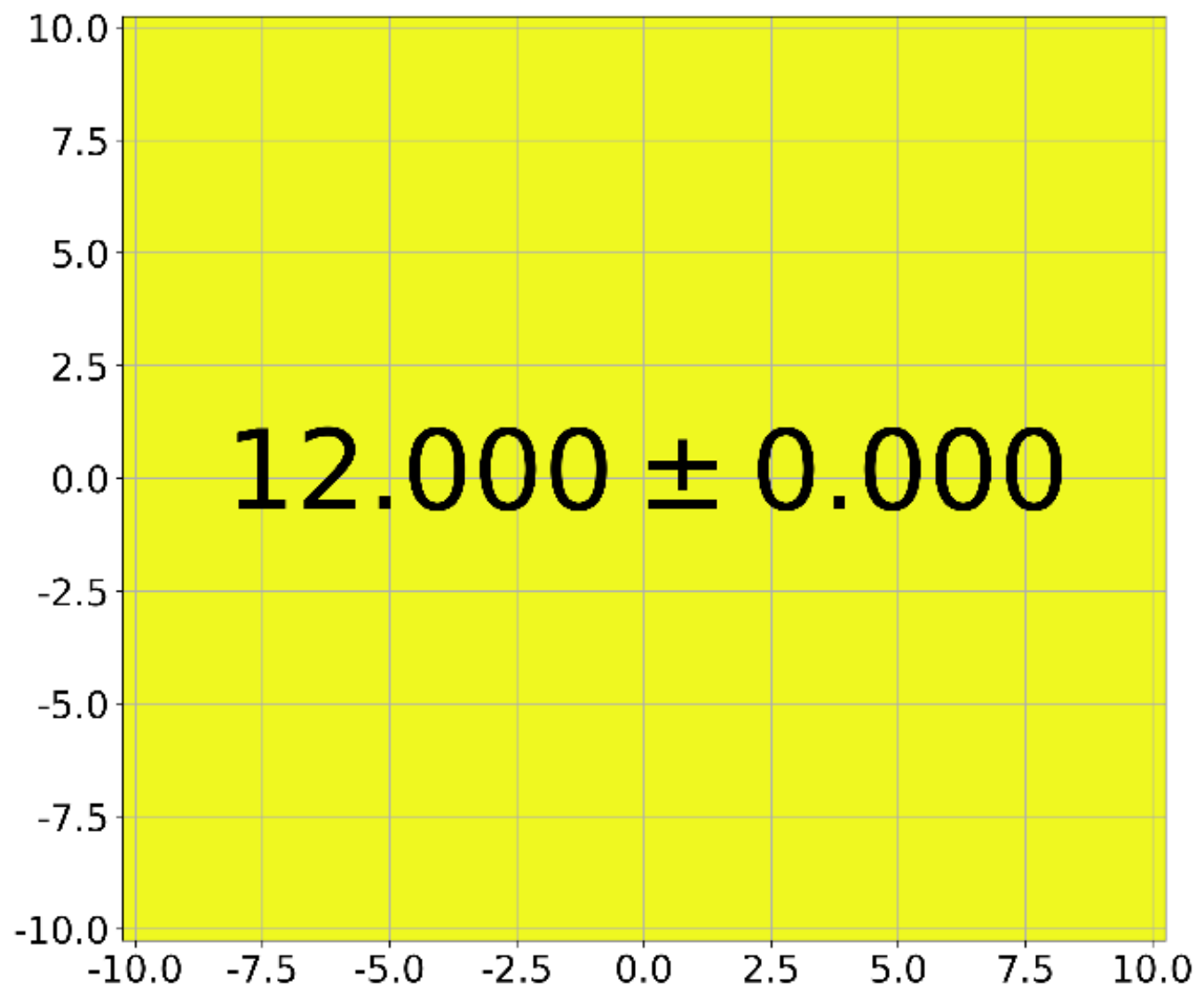}
\end{subfigure}

\begin{subfigure}{0.170\textwidth}
\includegraphics[width=\textwidth]{fig/envprop-3d/bs0__Tor_20210307-v1__OneDimensionStateAndTwoActionPolicyNetwork}
\end{subfigure}
\begin{subfigure}{0.170\textwidth}
\includegraphics[width=\textwidth]{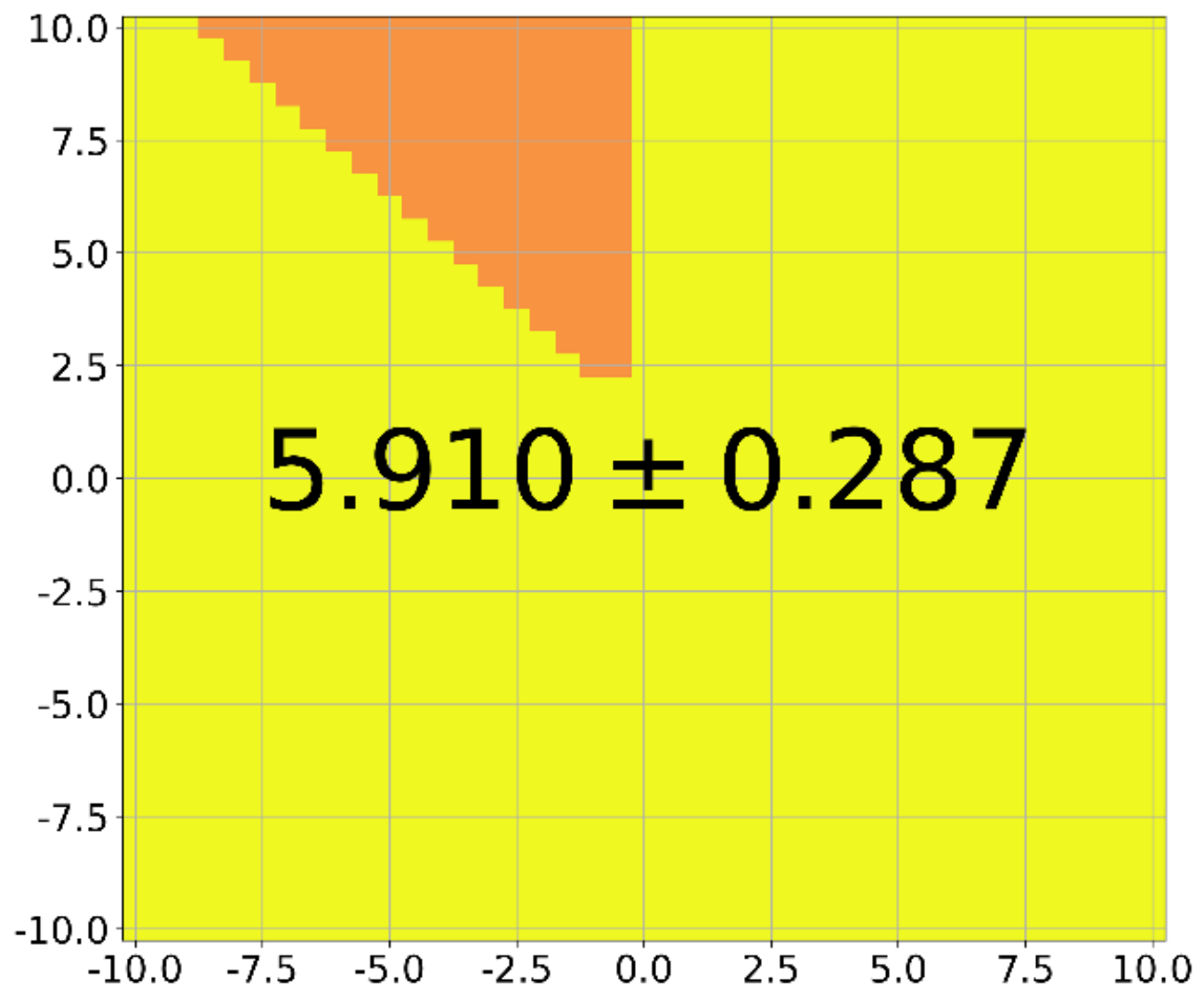}
\end{subfigure}
\begin{subfigure}{0.170\textwidth}
\includegraphics[width=\textwidth]{fig/biasopt/bias__obj=bias__conditioner_mode=fisher_probtransient__nseed1__OneDimensionStateAndTwoActionPolicyNetwork__Tor_20210307-v1}
\end{subfigure}
\begin{subfigure}{0.170\textwidth}
\includegraphics[width=\textwidth]{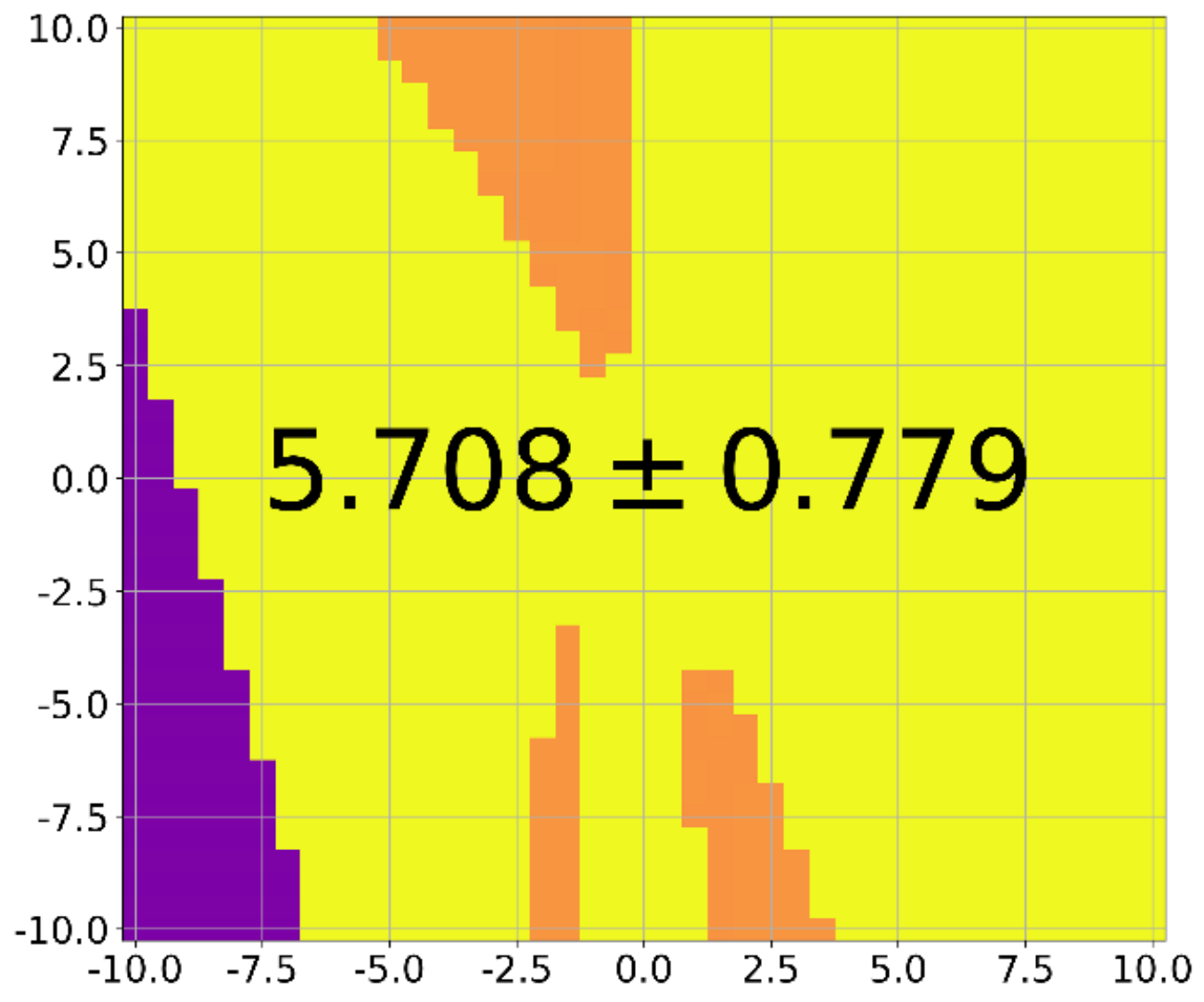}
\end{subfigure}
\begin{subfigure}{0.170\textwidth}
\includegraphics[width=\textwidth]{fig/biasopt/bias__obj=bias__conditioner_mode=fisher_transient_withsteadymul_upto_t1__nseed1__OneDimensionStateAndTwoActionPolicyNetwork__Tor_20210307-v1}
\end{subfigure}
\begin{subfigure}{0.170\textwidth}
\includegraphics[width=\textwidth]{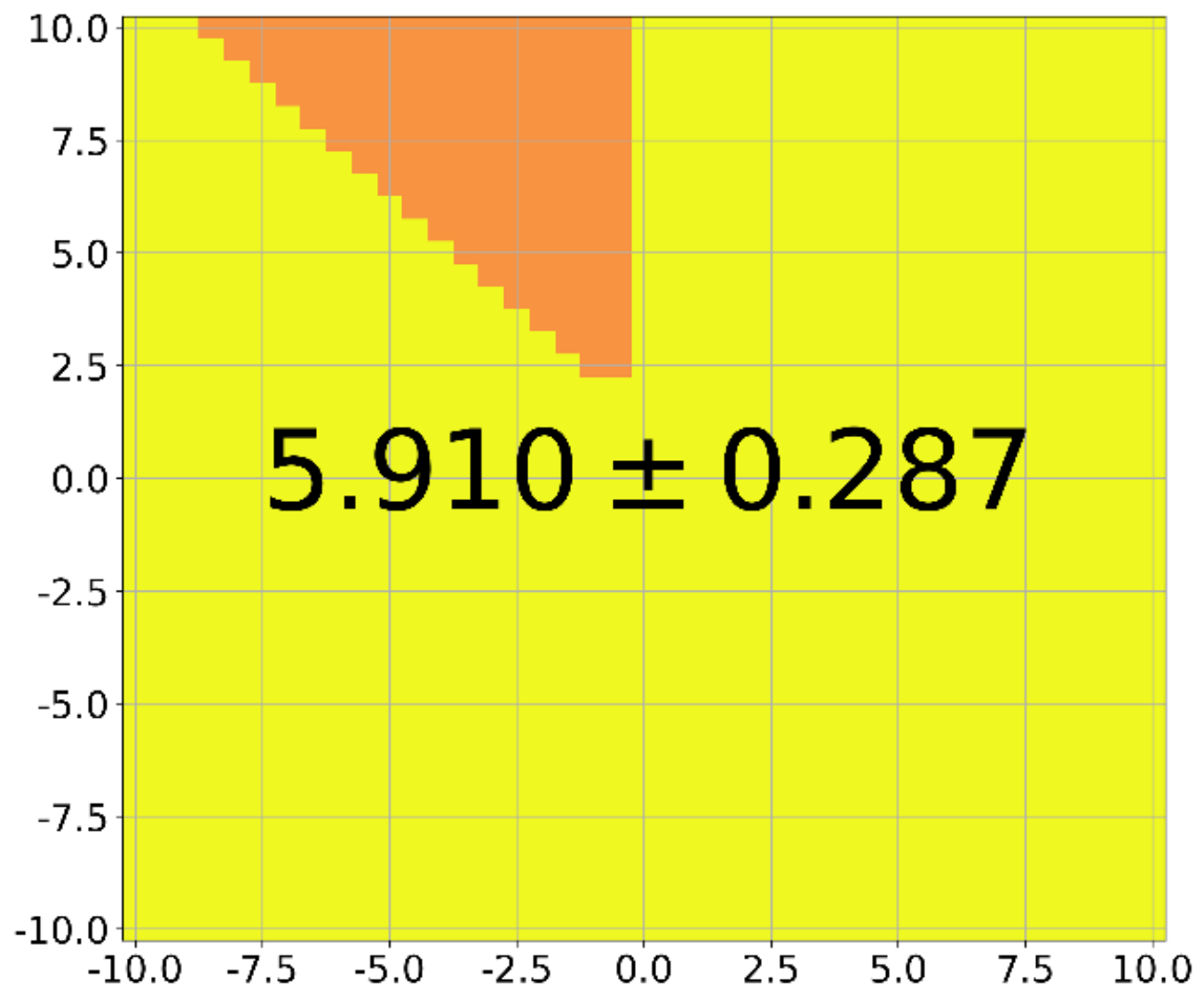}
\end{subfigure}
\begin{subfigure}{0.170\textwidth}
\includegraphics[width=\textwidth]{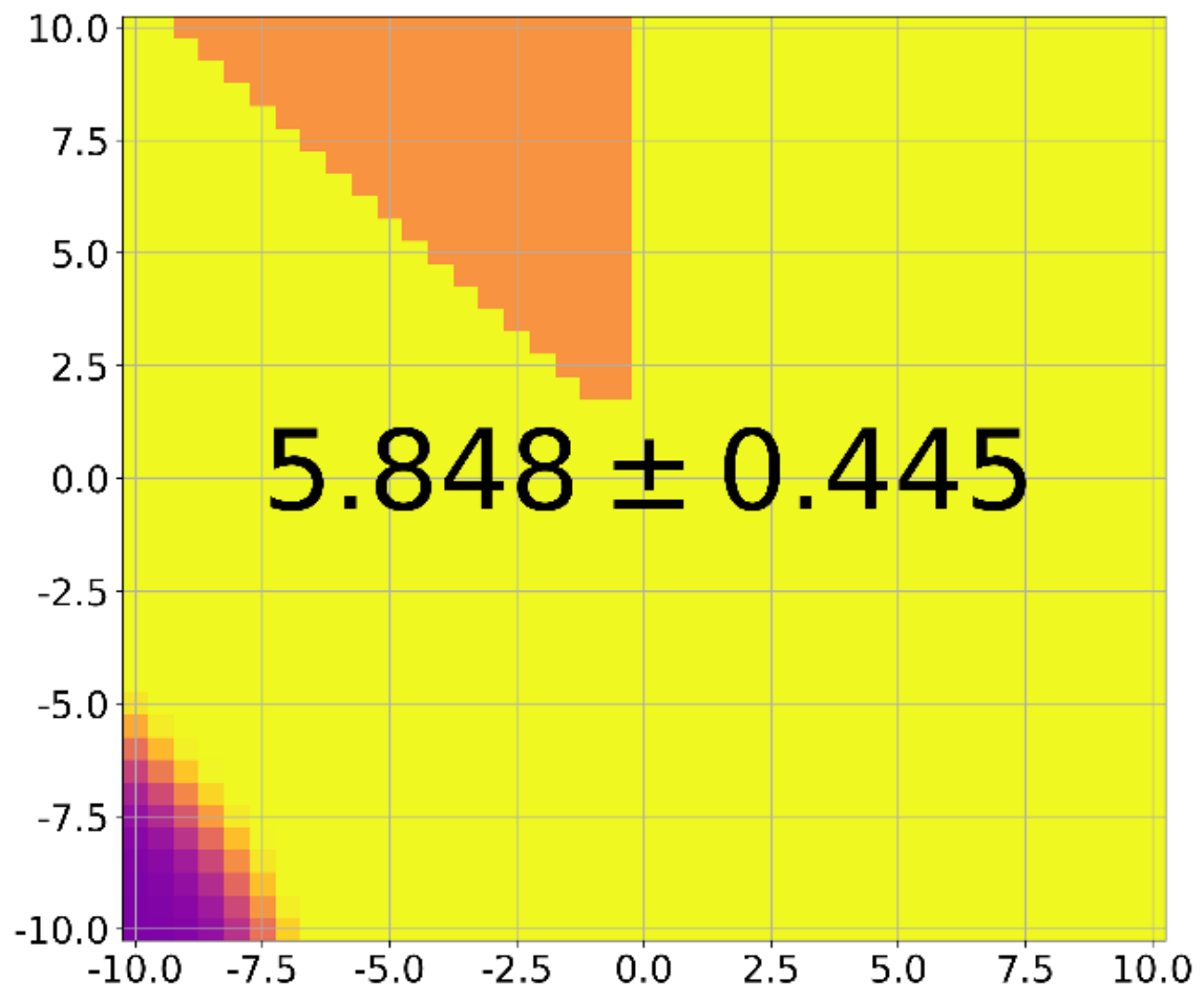}
\end{subfigure}
\begin{subfigure}{0.170\textwidth}
\includegraphics[width=\textwidth]{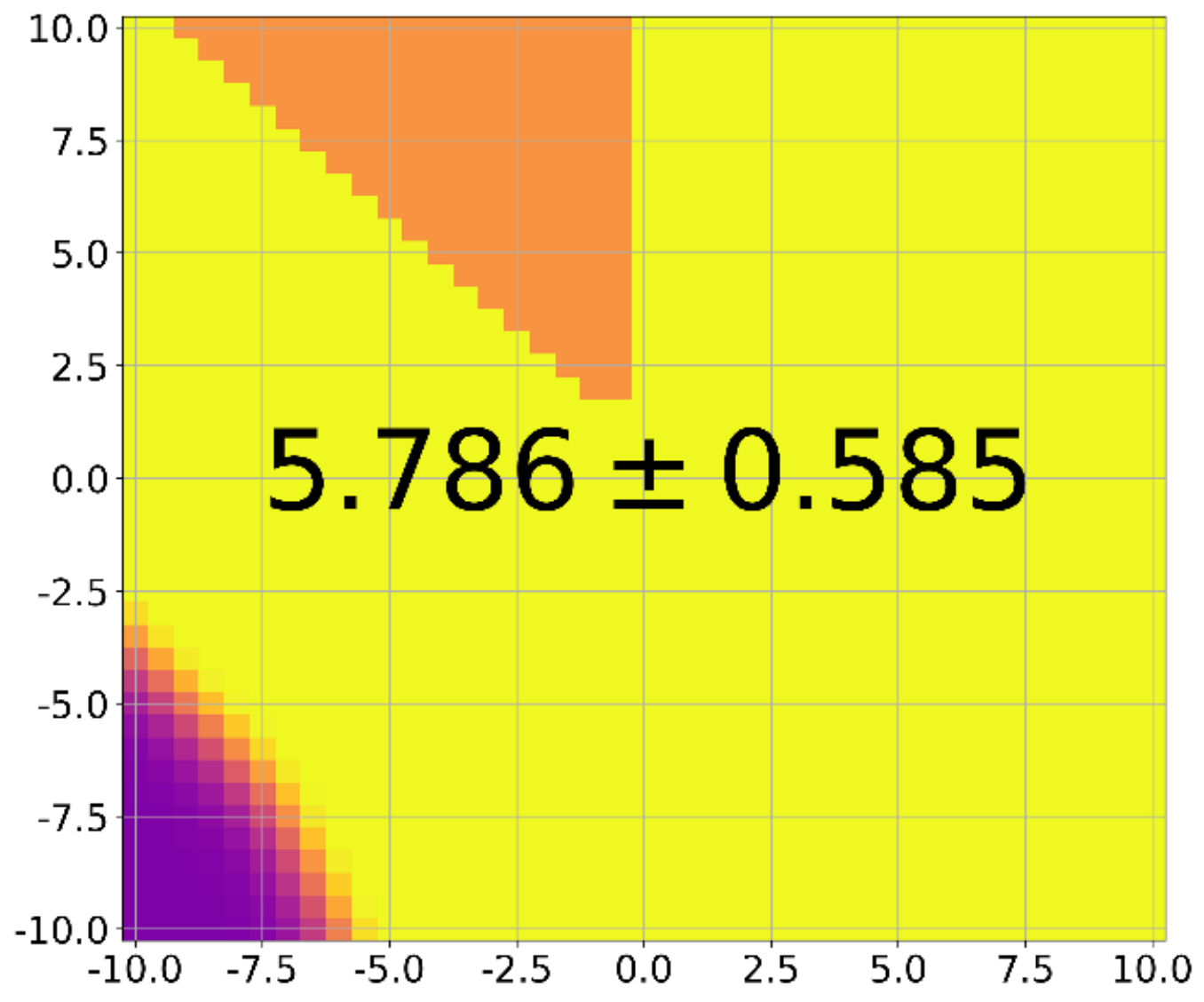}
\end{subfigure}

\begin{subfigure}{0.170\textwidth}
\includegraphics[width=\textwidth]{fig/envprop-3d/bs0__GridNav_2-v0__OneDimensionStateAndTwoActionPolicyNetwork}
\end{subfigure}
\begin{subfigure}{0.170\textwidth}
\includegraphics[width=\textwidth]{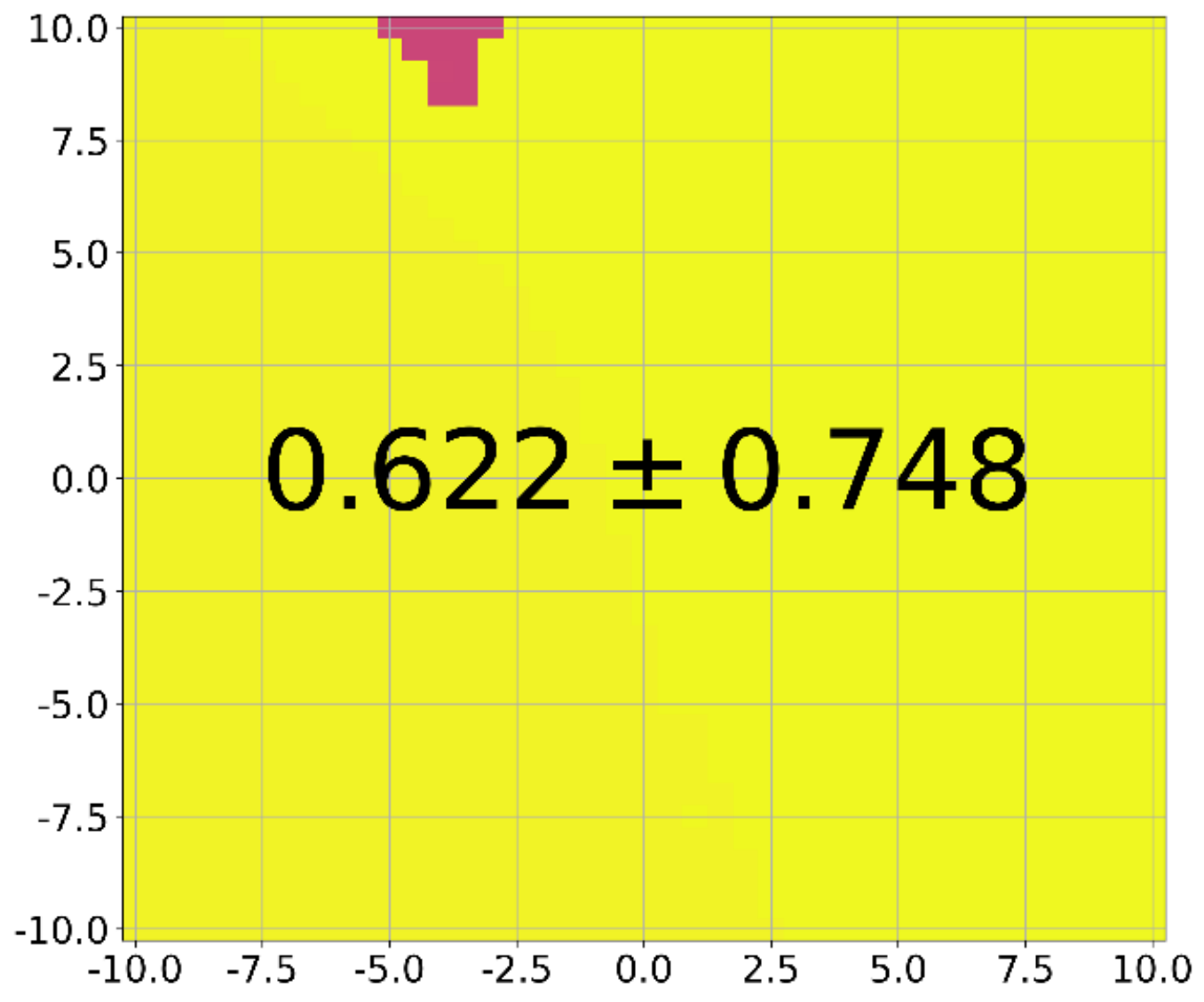}
\end{subfigure}
\begin{subfigure}{0.170\textwidth}
\includegraphics[width=\textwidth]{fig/biasopt/bias__obj=bias__conditioner_mode=fisher_probtransient__nseed1__OneDimensionStateAndTwoActionPolicyNetwork__GridNav_2-v0}
\end{subfigure}
\begin{subfigure}{0.170\textwidth}
\includegraphics[width=\textwidth]{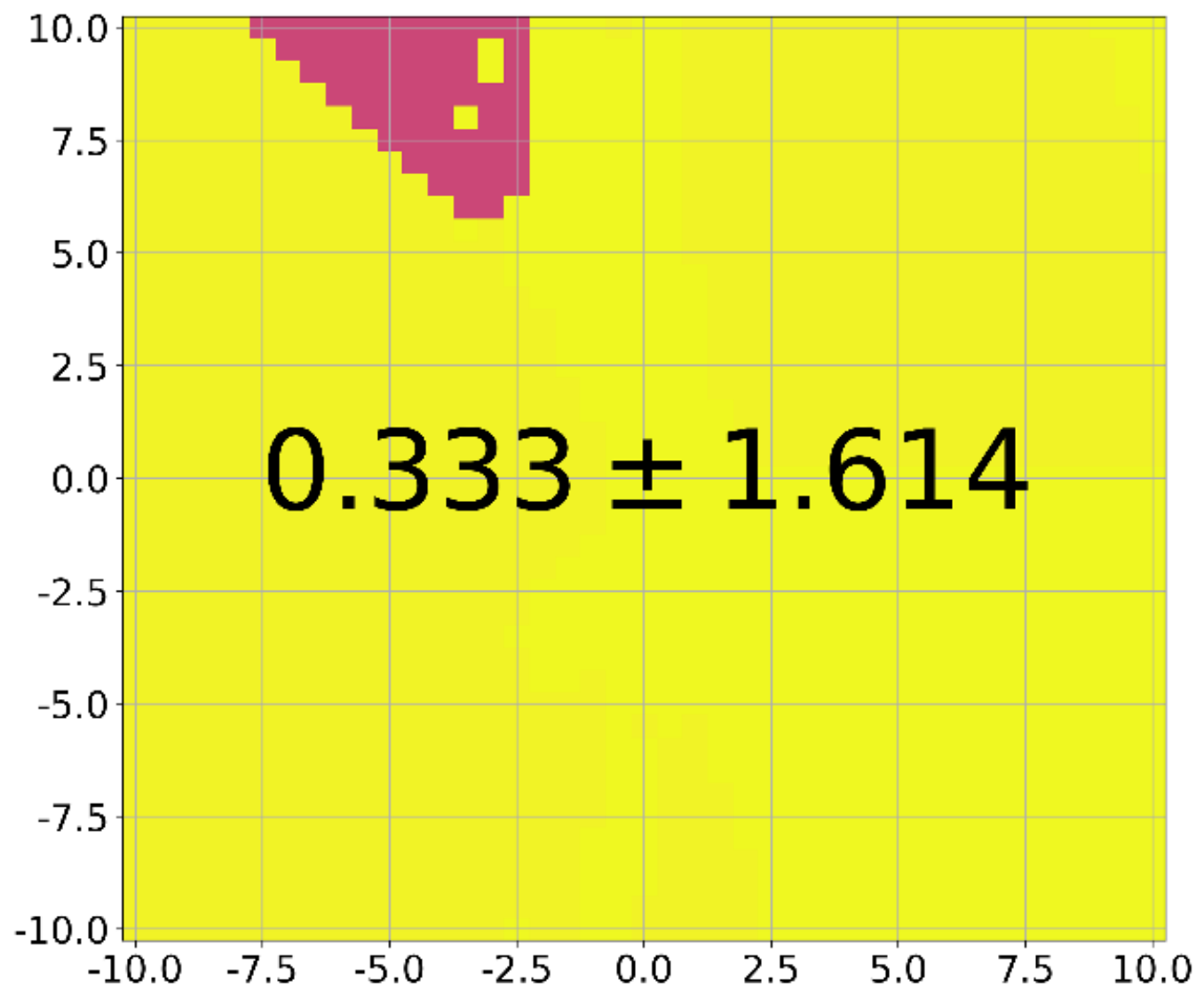}
\end{subfigure}
\begin{subfigure}{0.170\textwidth}
\includegraphics[width=\textwidth]{fig/biasopt/bias__obj=bias__conditioner_mode=fisher_transient_withsteadymul_upto_t1__nseed1__OneDimensionStateAndTwoActionPolicyNetwork__GridNav_2-v0}
\end{subfigure}
\begin{subfigure}{0.170\textwidth}
\includegraphics[width=\textwidth]{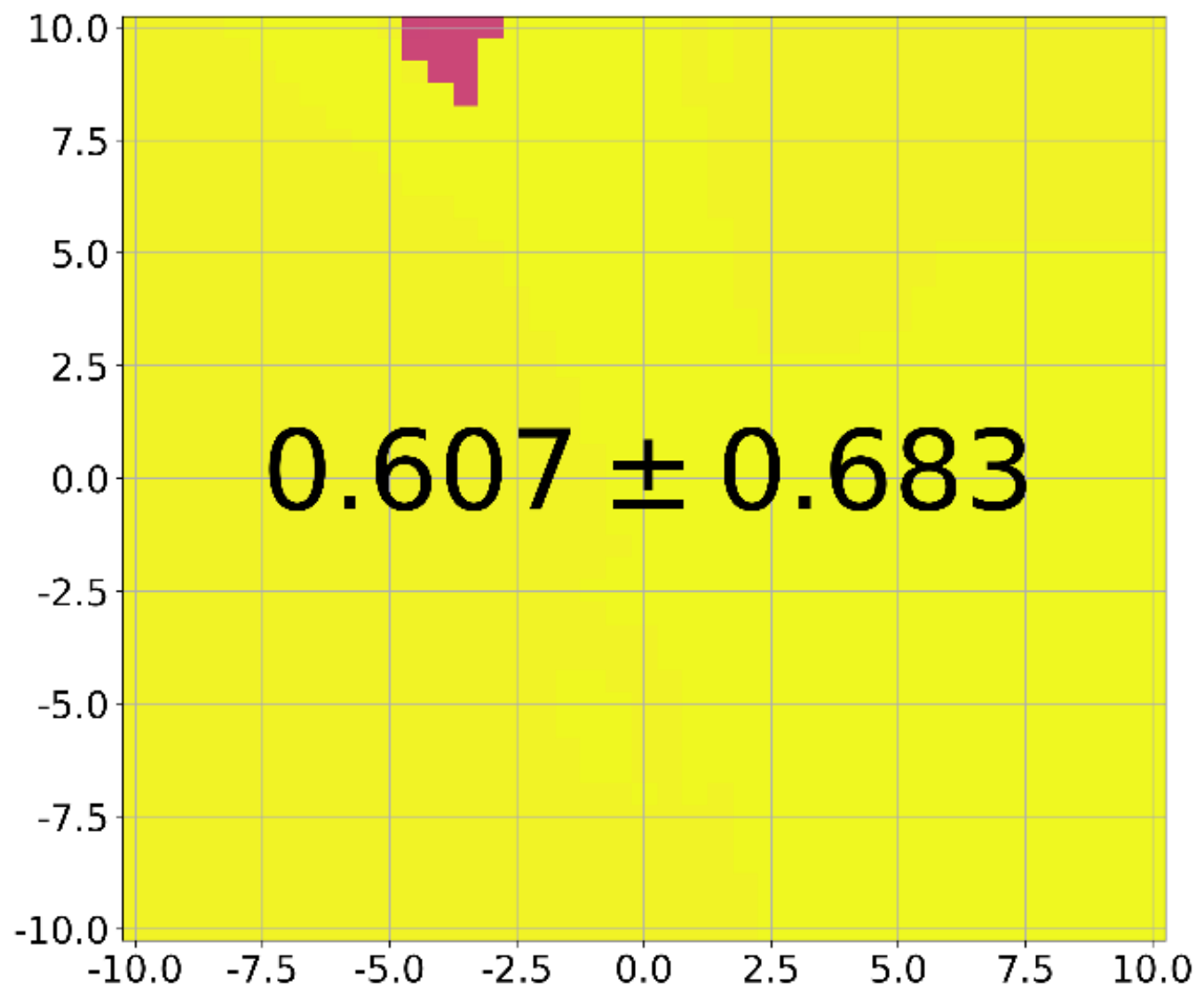}
\end{subfigure}
\begin{subfigure}{0.170\textwidth}
\includegraphics[width=\textwidth]{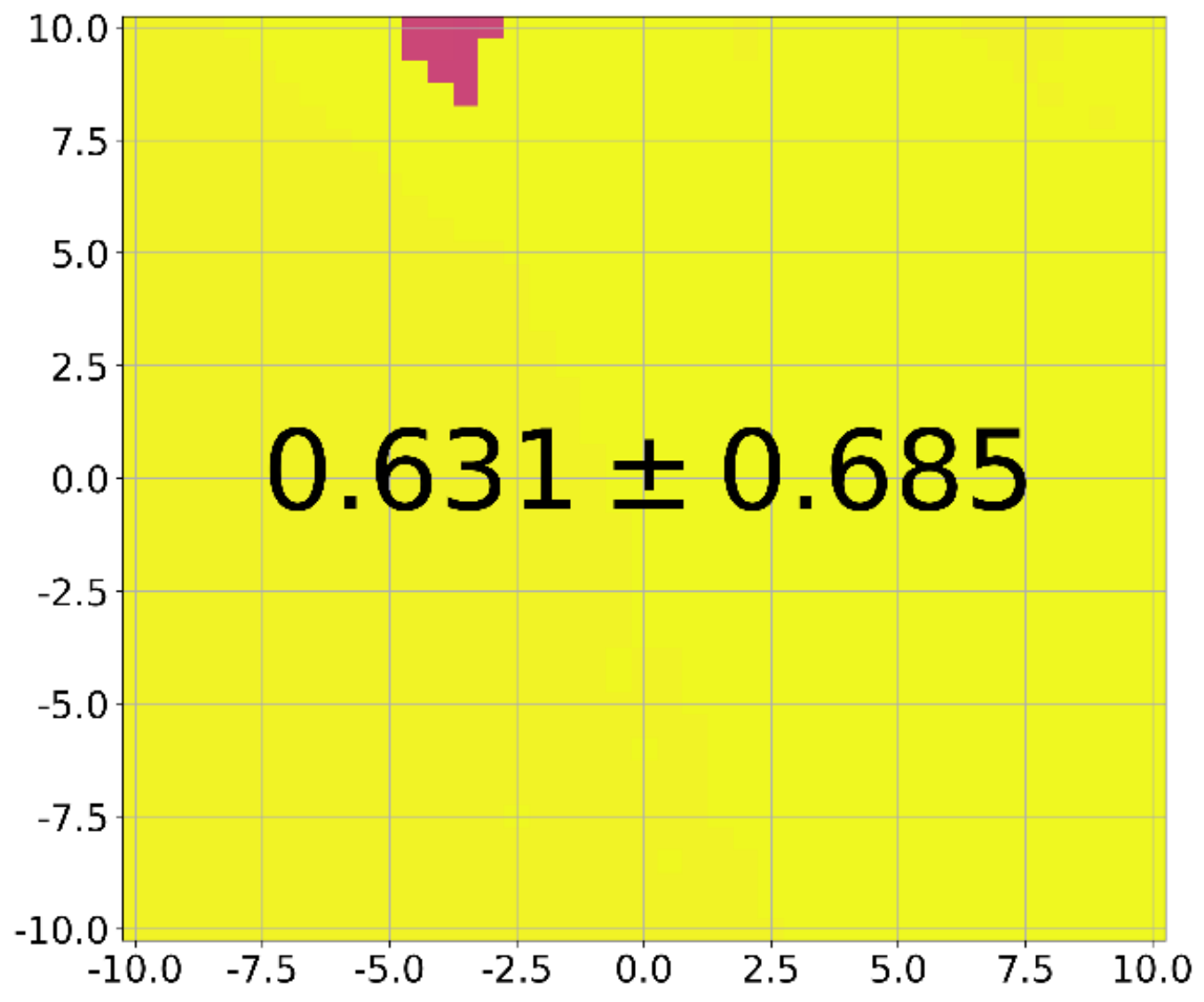}
\end{subfigure}
\begin{subfigure}{0.170\textwidth}
\includegraphics[width=\textwidth]{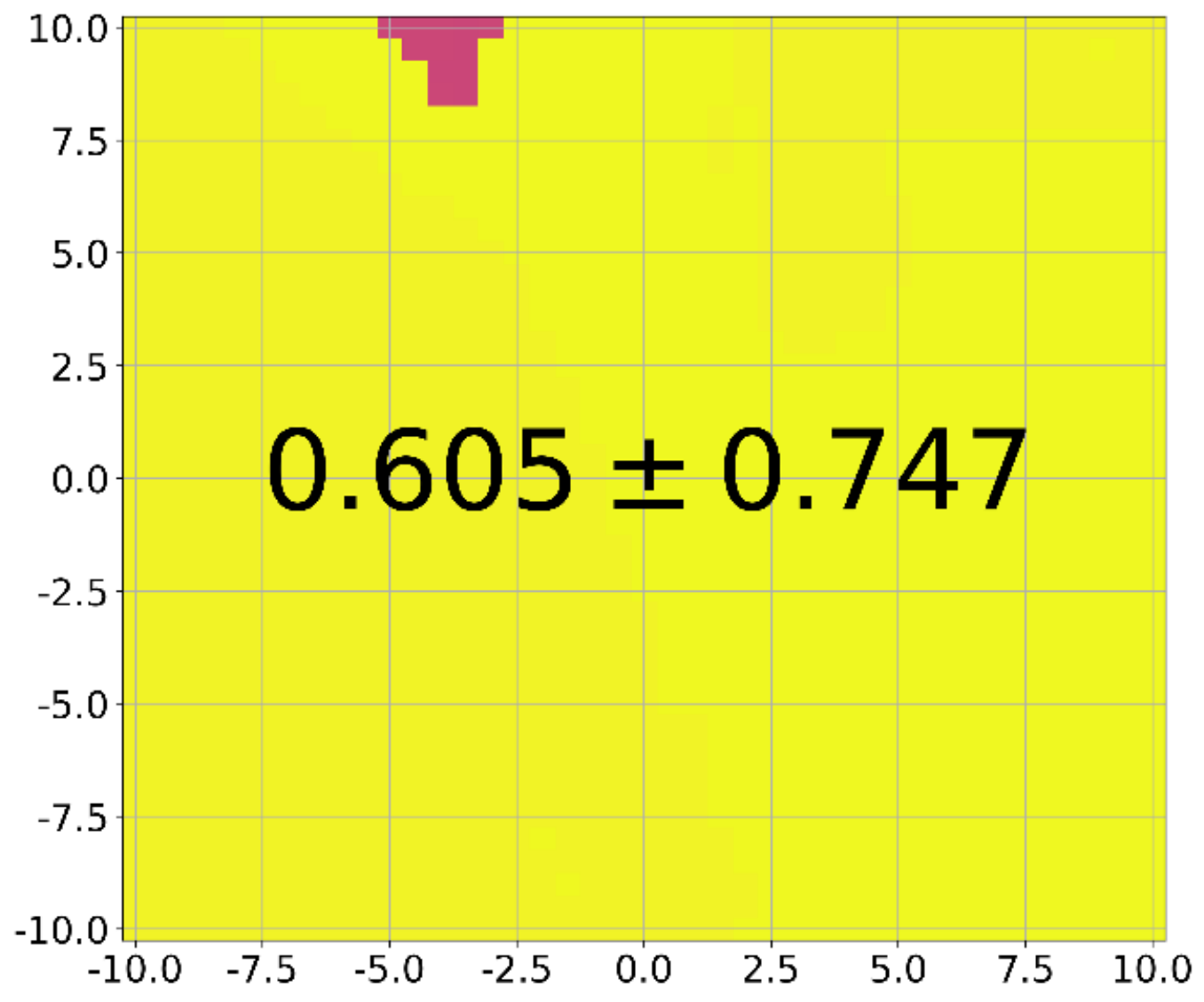}
\end{subfigure}

\begin{subfigure}{0.170\textwidth}
\includegraphics[width=\textwidth]{fig/envprop-3d/bs0__Tor_20201121a-v1__OneDimensionStateAndTwoActionPolicyNetwork}
\end{subfigure}
\begin{subfigure}{0.170\textwidth}
\includegraphics[width=\textwidth]{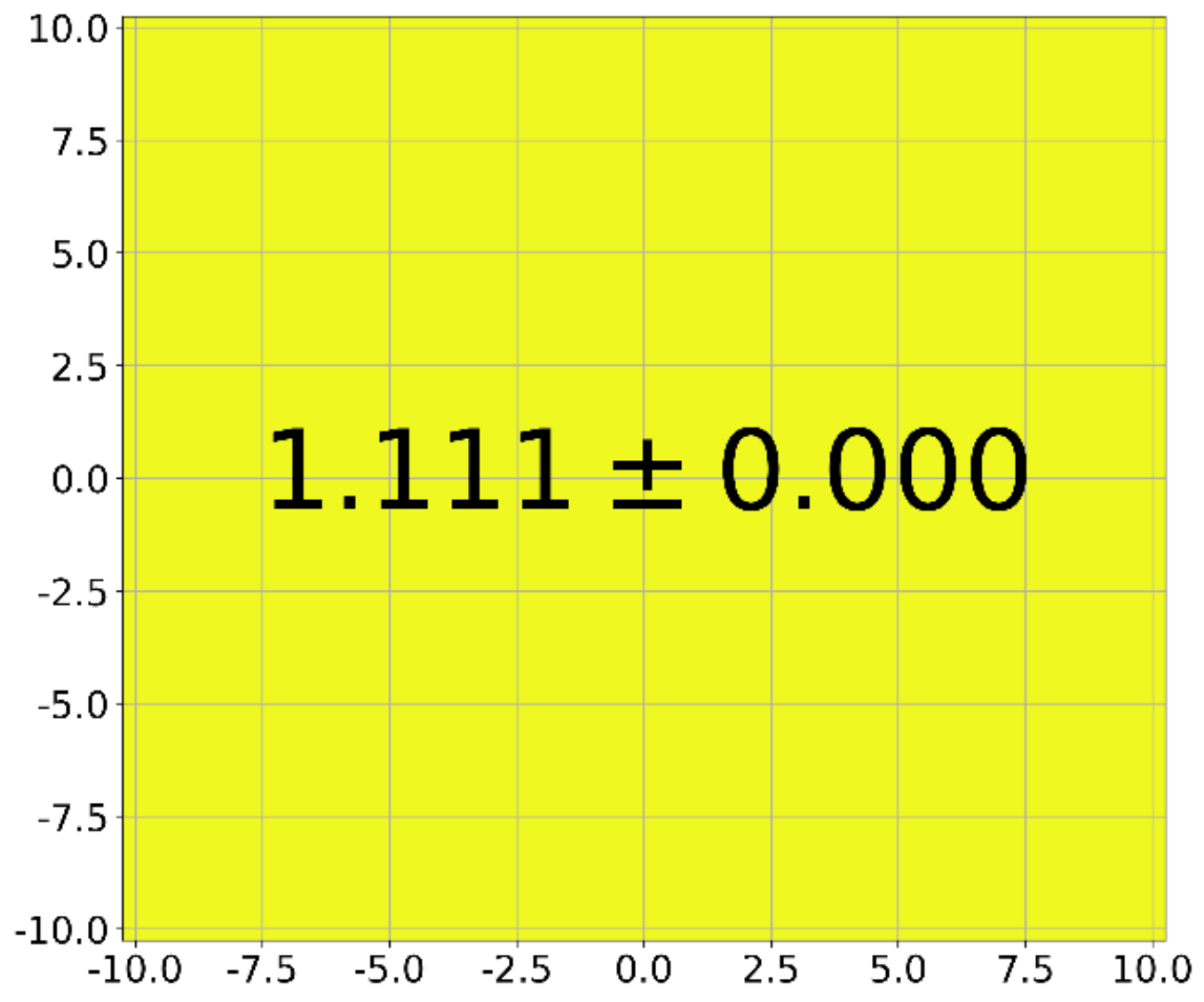}
\end{subfigure}
\begin{subfigure}{0.170\textwidth}
\includegraphics[width=\textwidth]{fig/biasopt/bias__obj=bias__conditioner_mode=fisher_probtransient__nseed1__OneDimensionStateAndTwoActionPolicyNetwork__Tor_20201121a-v1}
\end{subfigure}
\begin{subfigure}{0.170\textwidth}
\includegraphics[width=\textwidth]{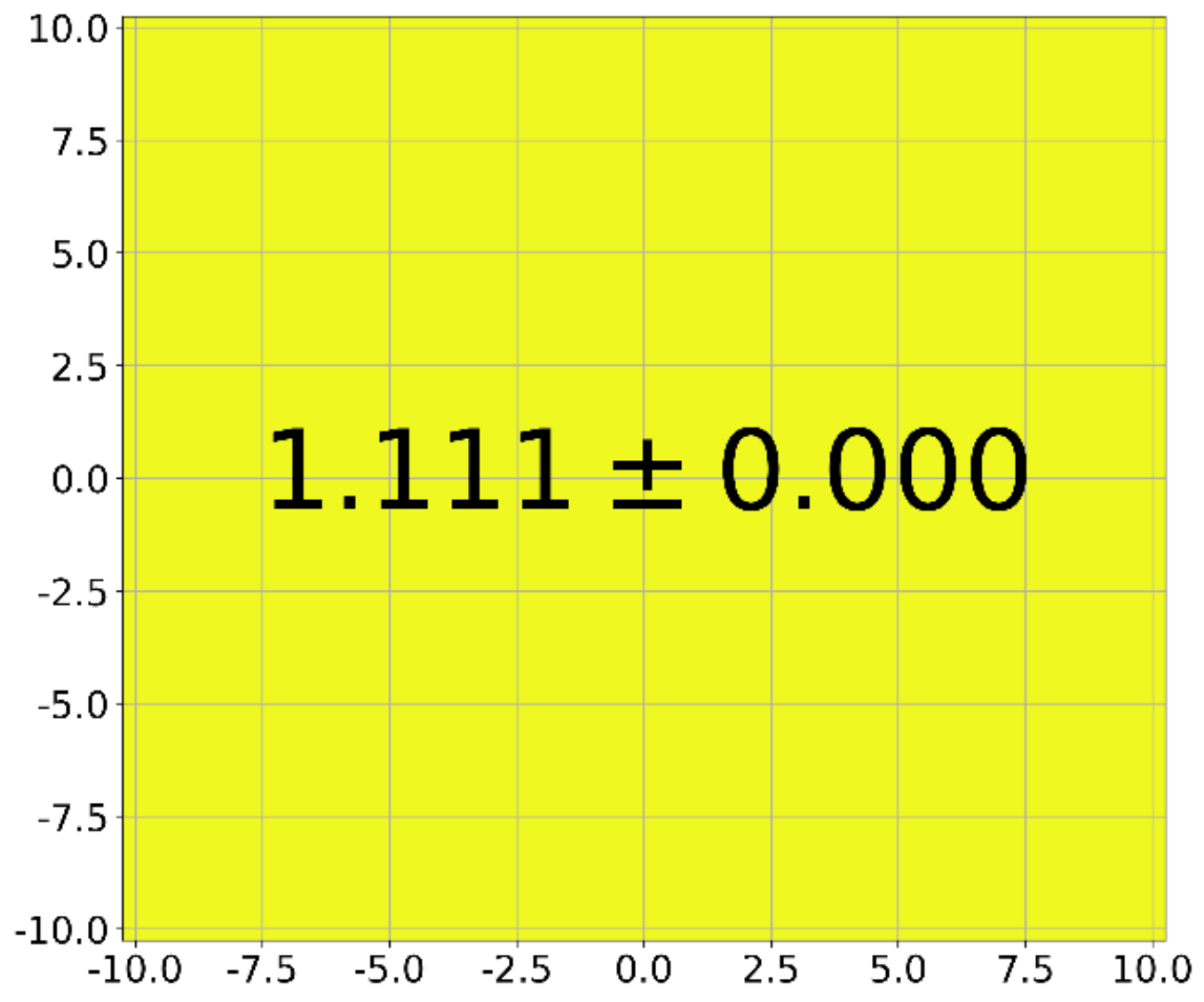}
\end{subfigure}
\begin{subfigure}{0.170\textwidth}
\includegraphics[width=\textwidth]{fig/biasopt/bias__obj=bias__conditioner_mode=fisher_transient_withsteadymul_upto_t1__nseed1__OneDimensionStateAndTwoActionPolicyNetwork__Tor_20201121a-v1}
\end{subfigure}
\begin{subfigure}{0.170\textwidth}
\includegraphics[width=\textwidth]{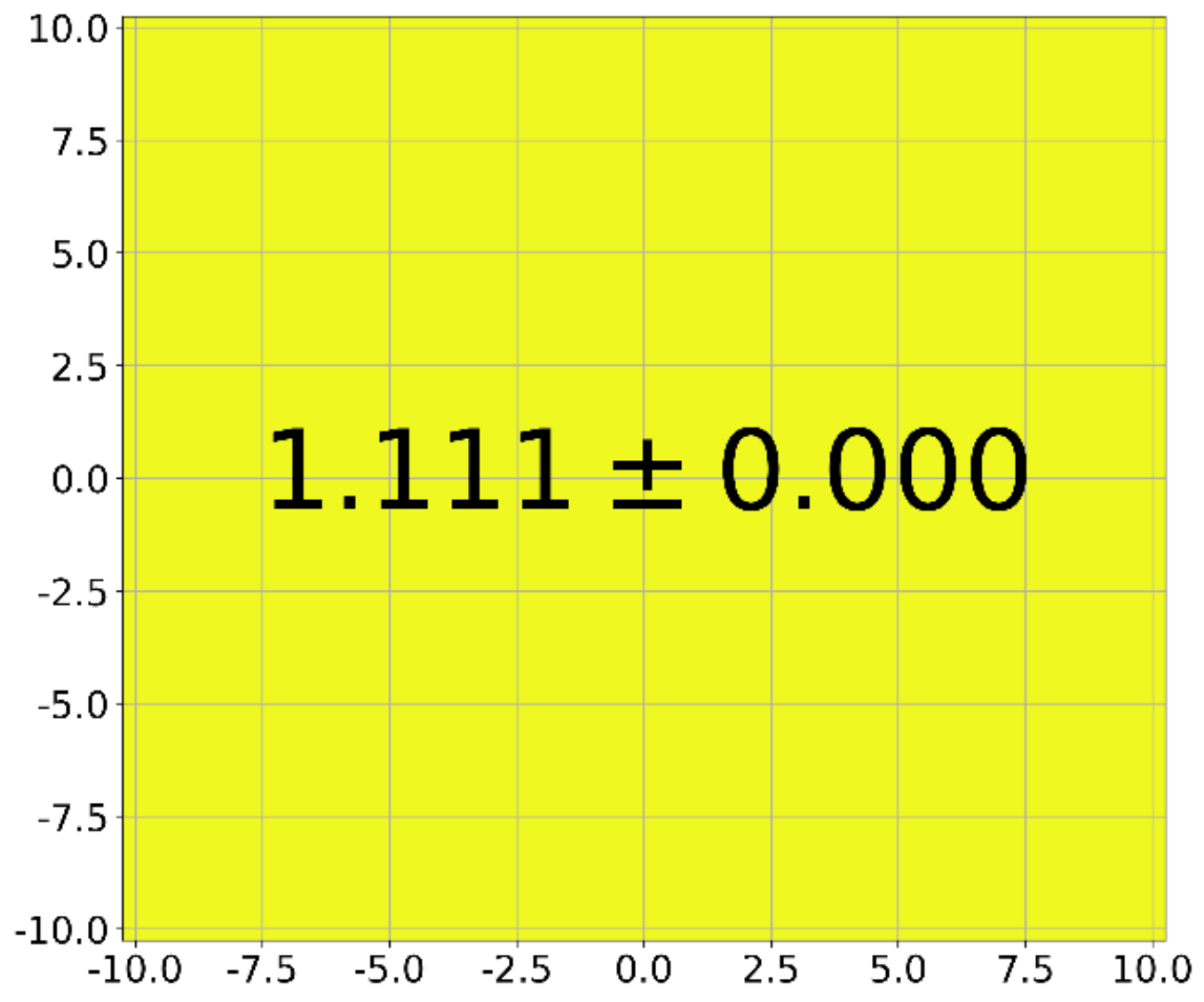}
\end{subfigure}
\begin{subfigure}{0.170\textwidth}
\includegraphics[width=\textwidth]{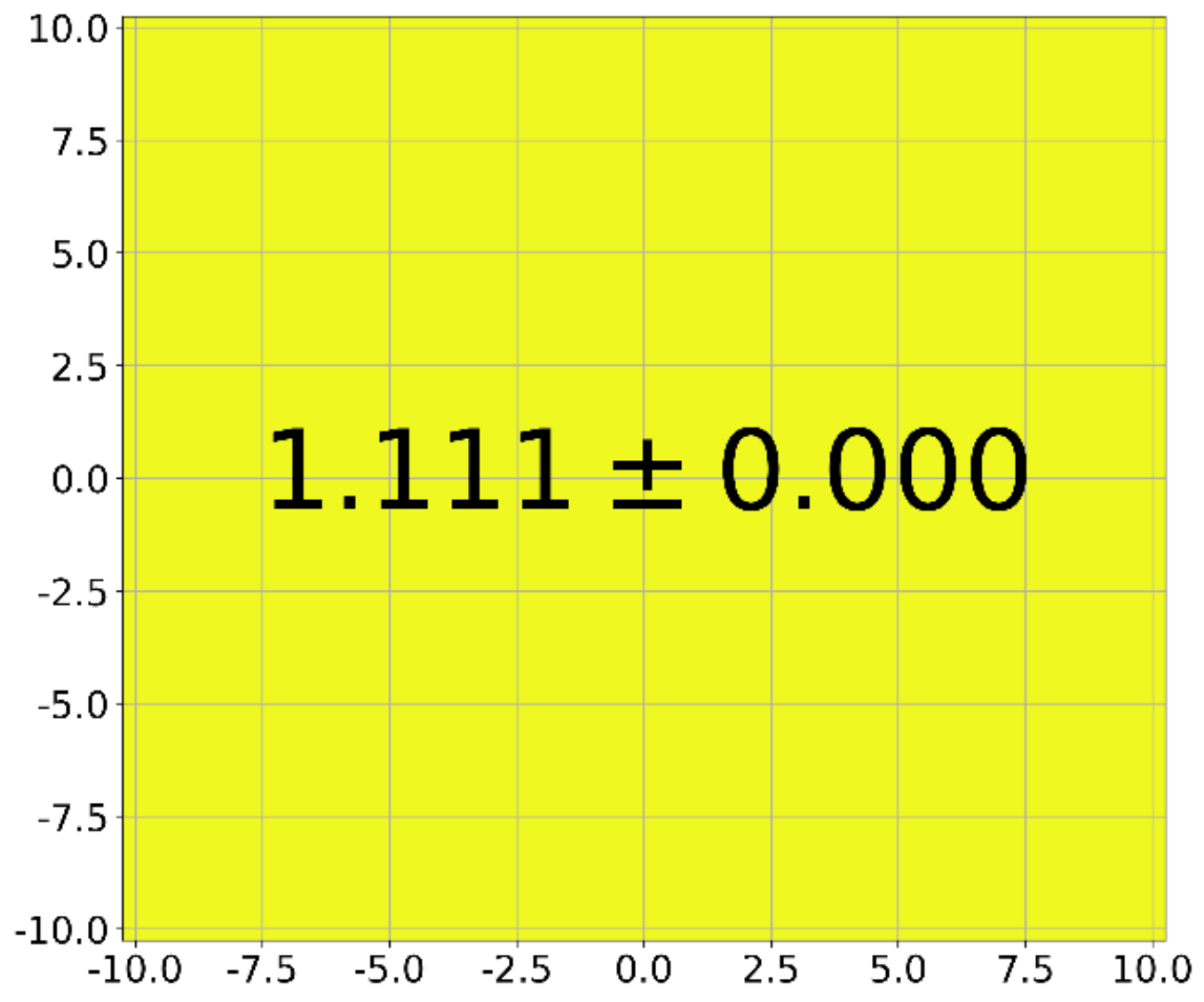}
\end{subfigure}
\begin{subfigure}{0.170\textwidth}
\includegraphics[width=\textwidth]{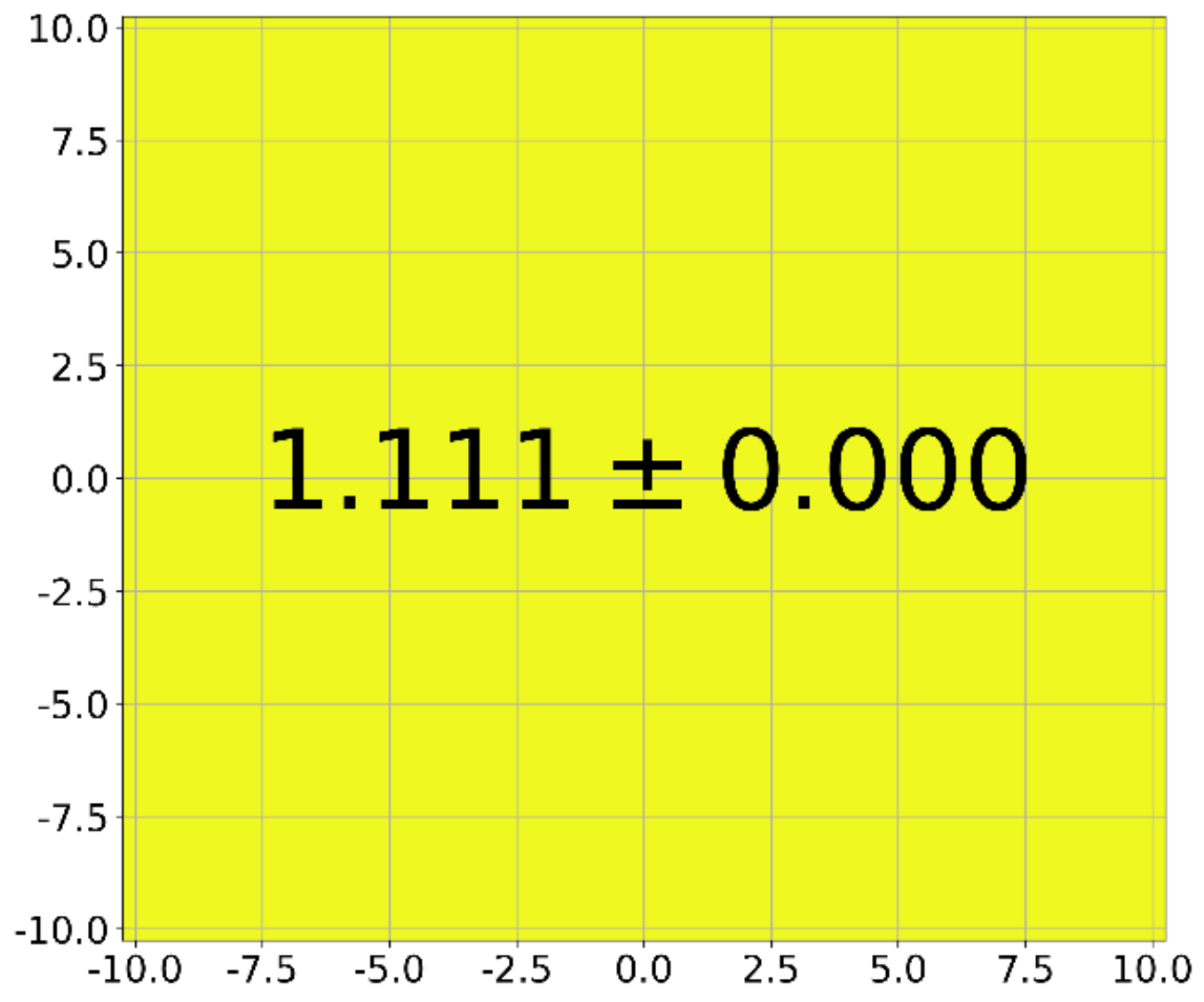}
\end{subfigure}

\begin{subfigure}{0.170\textwidth}
\includegraphics[width=\textwidth]{fig/envprop-3d/bs0__Hordijk_example-v4__OneDimensionStateAndTwoActionPolicyNetwork}
\end{subfigure}
\begin{subfigure}{0.170\textwidth}
\includegraphics[width=\textwidth]{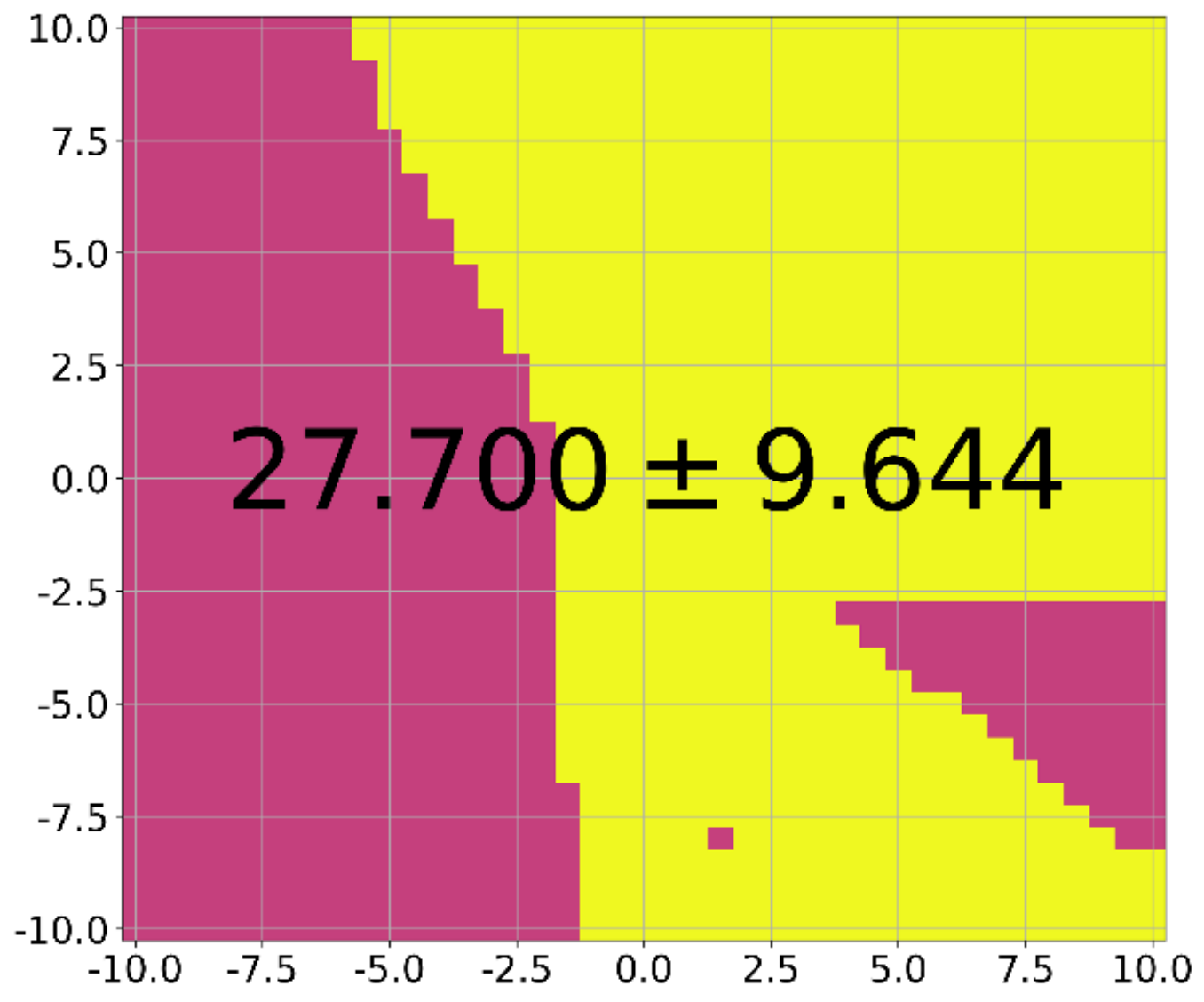}
\end{subfigure}
\begin{subfigure}{0.170\textwidth}
\includegraphics[width=\textwidth]{fig/biasopt/bias__obj=bias__conditioner_mode=fisher_probtransient__nseed1__OneDimensionStateAndTwoActionPolicyNetwork__Hordijk_example-v4}
\end{subfigure}
\begin{subfigure}{0.170\textwidth}
\includegraphics[width=\textwidth]{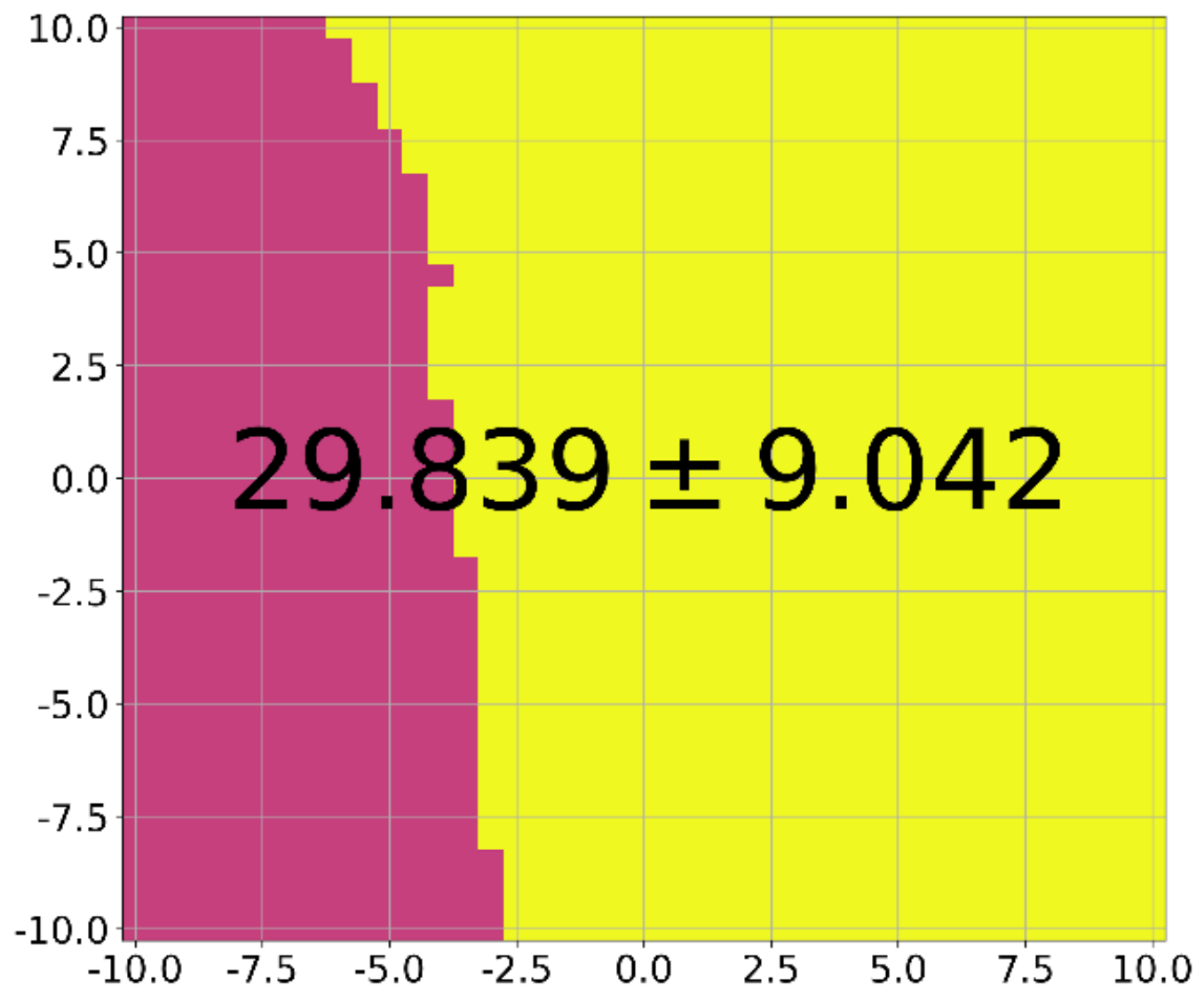}
\end{subfigure}
\begin{subfigure}{0.170\textwidth}
\includegraphics[width=\textwidth]{fig/biasopt/bias__obj=bias__conditioner_mode=fisher_transient_withsteadymul_upto_t1__nseed1__OneDimensionStateAndTwoActionPolicyNetwork__Hordijk_example-v4}
\end{subfigure}
\begin{subfigure}{0.170\textwidth}
\includegraphics[width=\textwidth]{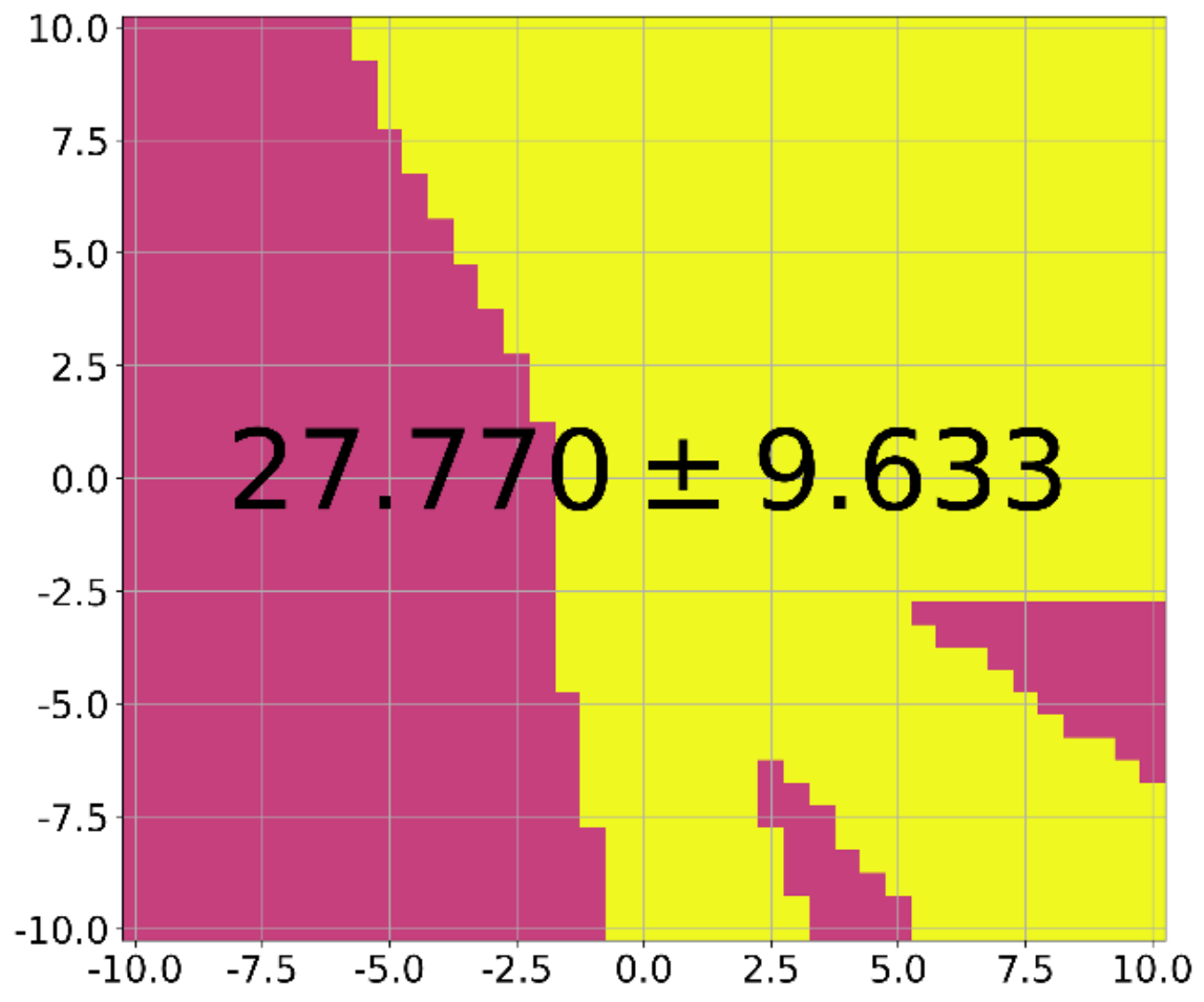}
\end{subfigure}
\begin{subfigure}{0.170\textwidth}
\includegraphics[width=\textwidth]{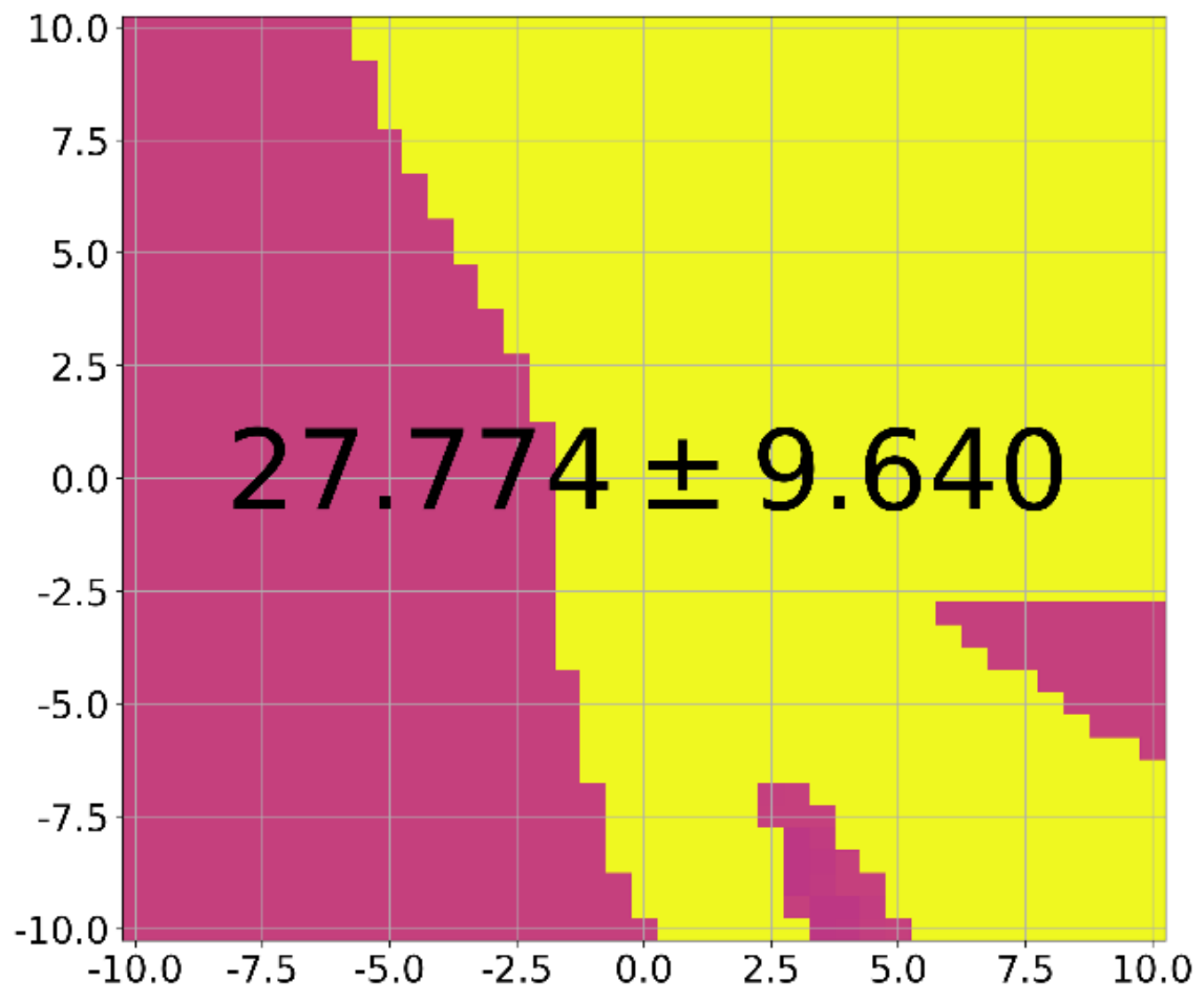}
\end{subfigure}
\begin{subfigure}{0.170\textwidth}
\includegraphics[width=\textwidth]{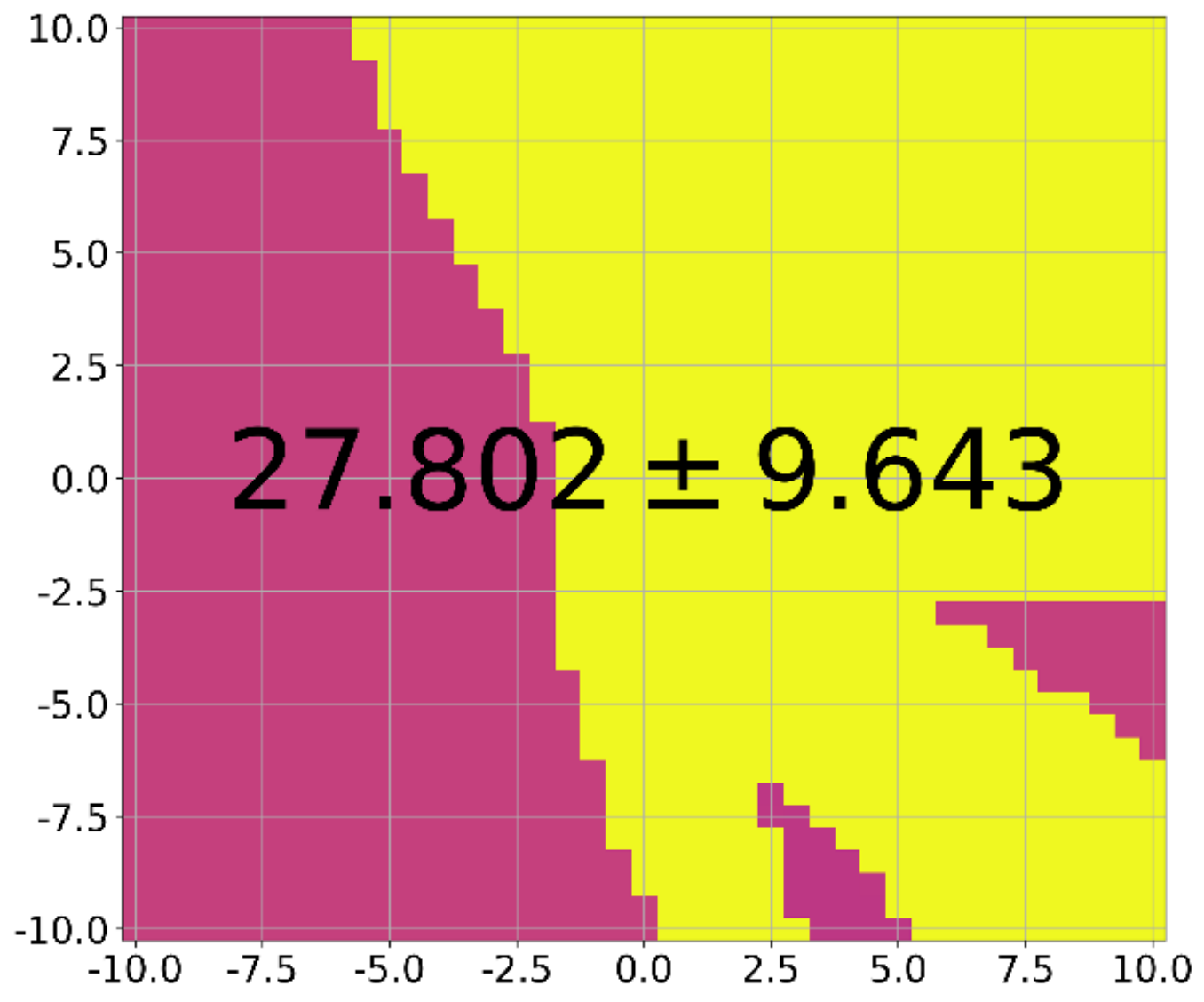}
\end{subfigure}

\begin{subfigure}{0.170\textwidth}
\includegraphics[width=\textwidth]{fig/envprop-3d/bs0__NChain_mod-v1__OneDimensionStateAndTwoActionPolicyNetwork}
\subcaption*{Bias landscape}
\end{subfigure}
\begin{subfigure}{0.170\textwidth}
\includegraphics[width=\textwidth]{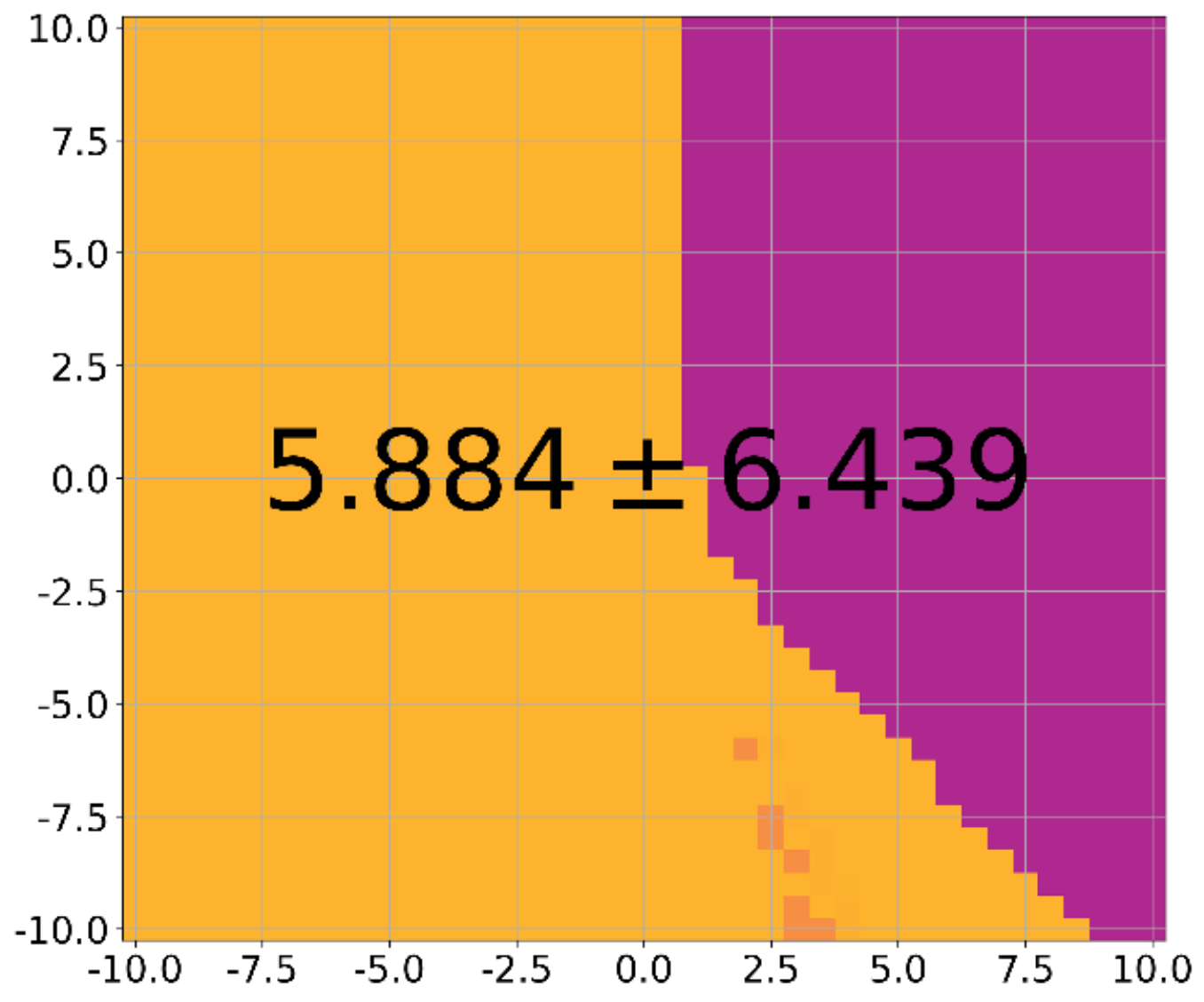}
\subcaption*{Devmat}
\end{subfigure}
\begin{subfigure}{0.170\textwidth}
\includegraphics[width=\textwidth]{fig/biasopt/bias__obj=bias__conditioner_mode=fisher_probtransient__nseed1__OneDimensionStateAndTwoActionPolicyNetwork__NChain_mod-v1}
\subcaption*{Analytic \eqref{equ:fisher_b}}
\end{subfigure}
\begin{subfigure}{0.170\textwidth}
\includegraphics[width=\textwidth]{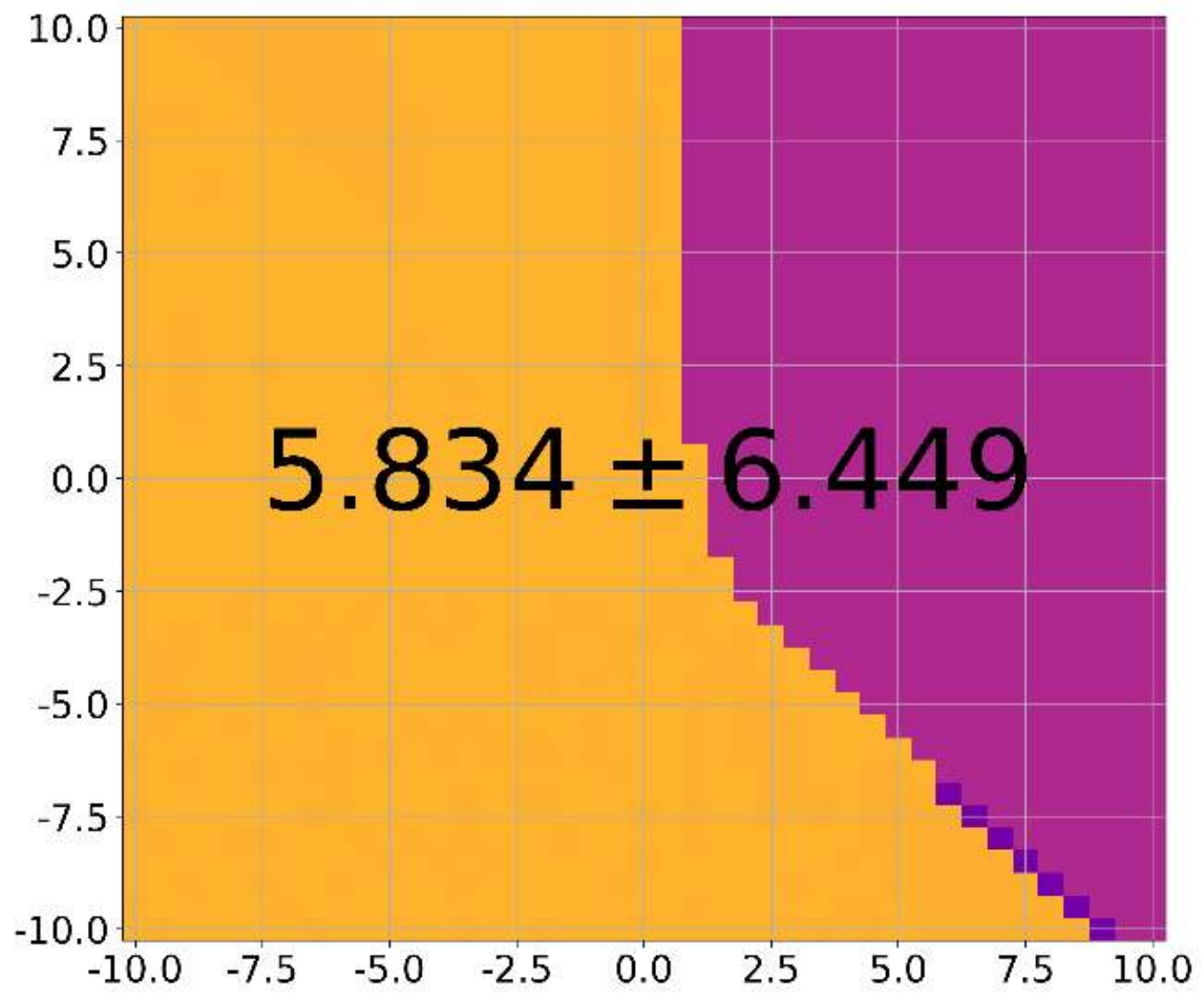}
\subcaption*{$\tabsminhat = 1$}
\end{subfigure}
\begin{subfigure}{0.170\textwidth}
\includegraphics[width=\textwidth]{fig/biasopt/bias__obj=bias__conditioner_mode=fisher_transient_withsteadymul_upto_t1__nseed1__OneDimensionStateAndTwoActionPolicyNetwork__NChain_mod-v1}
\subcaption*{$\tabsminhat = 2$}
\end{subfigure}
\begin{subfigure}{0.170\textwidth}
\includegraphics[width=\textwidth]{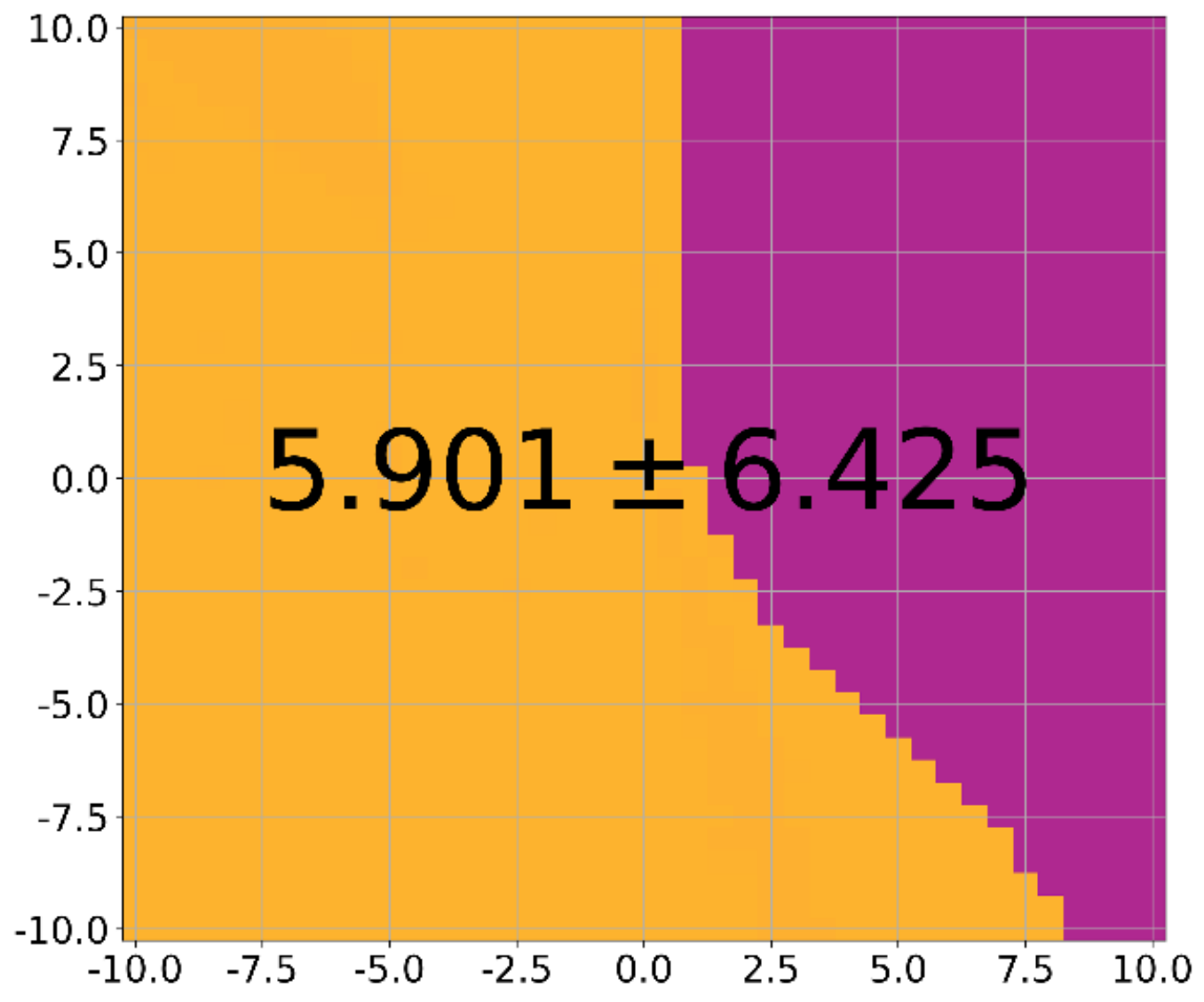}
\subcaption*{$\tabsminhat = 3$}
\end{subfigure}
\begin{subfigure}{0.170\textwidth}
\includegraphics[width=\textwidth]{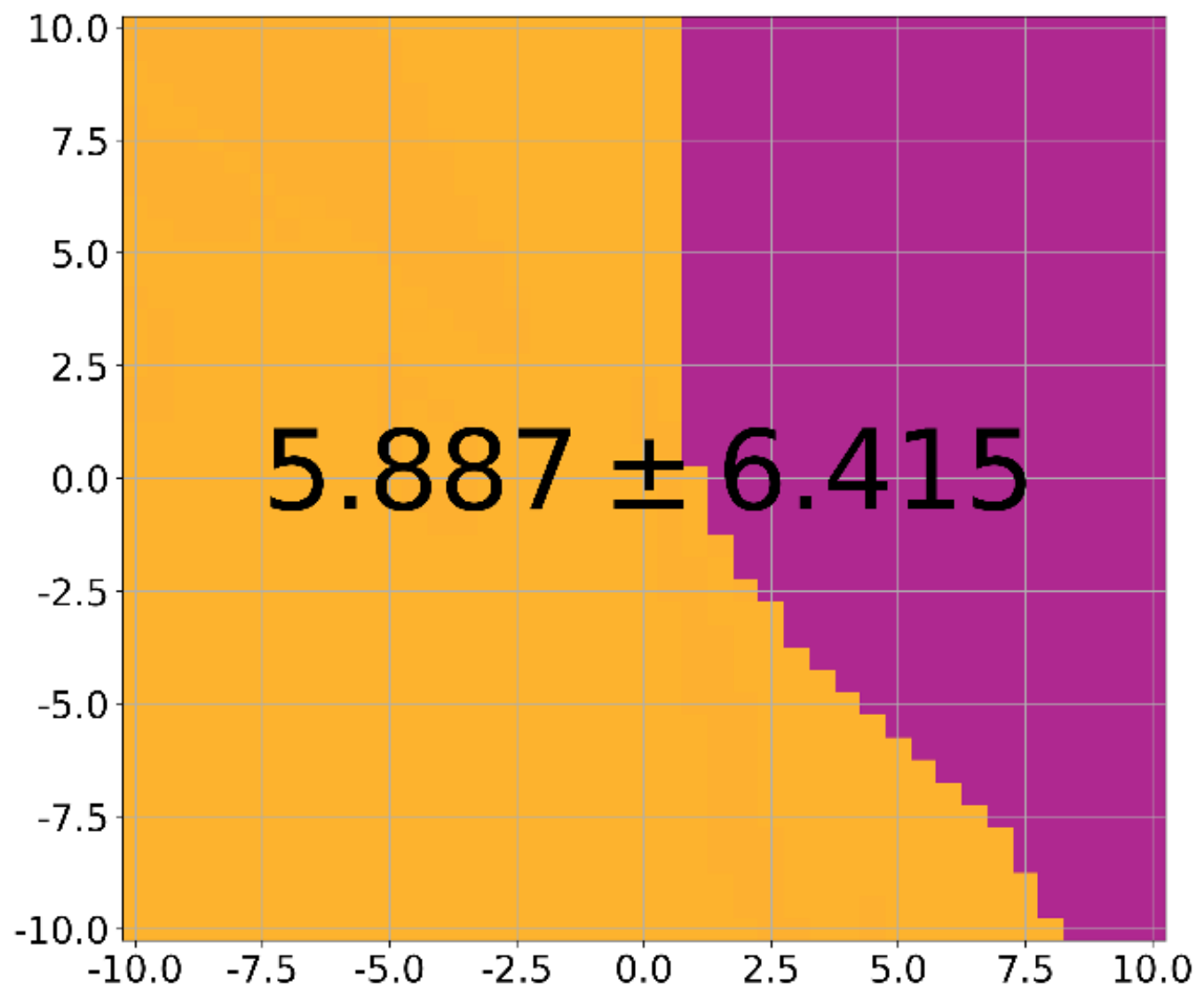}
\subcaption*{$\tabsminhat = 4$}
\end{subfigure}
\begin{subfigure}{0.170\textwidth}
\includegraphics[width=\textwidth]{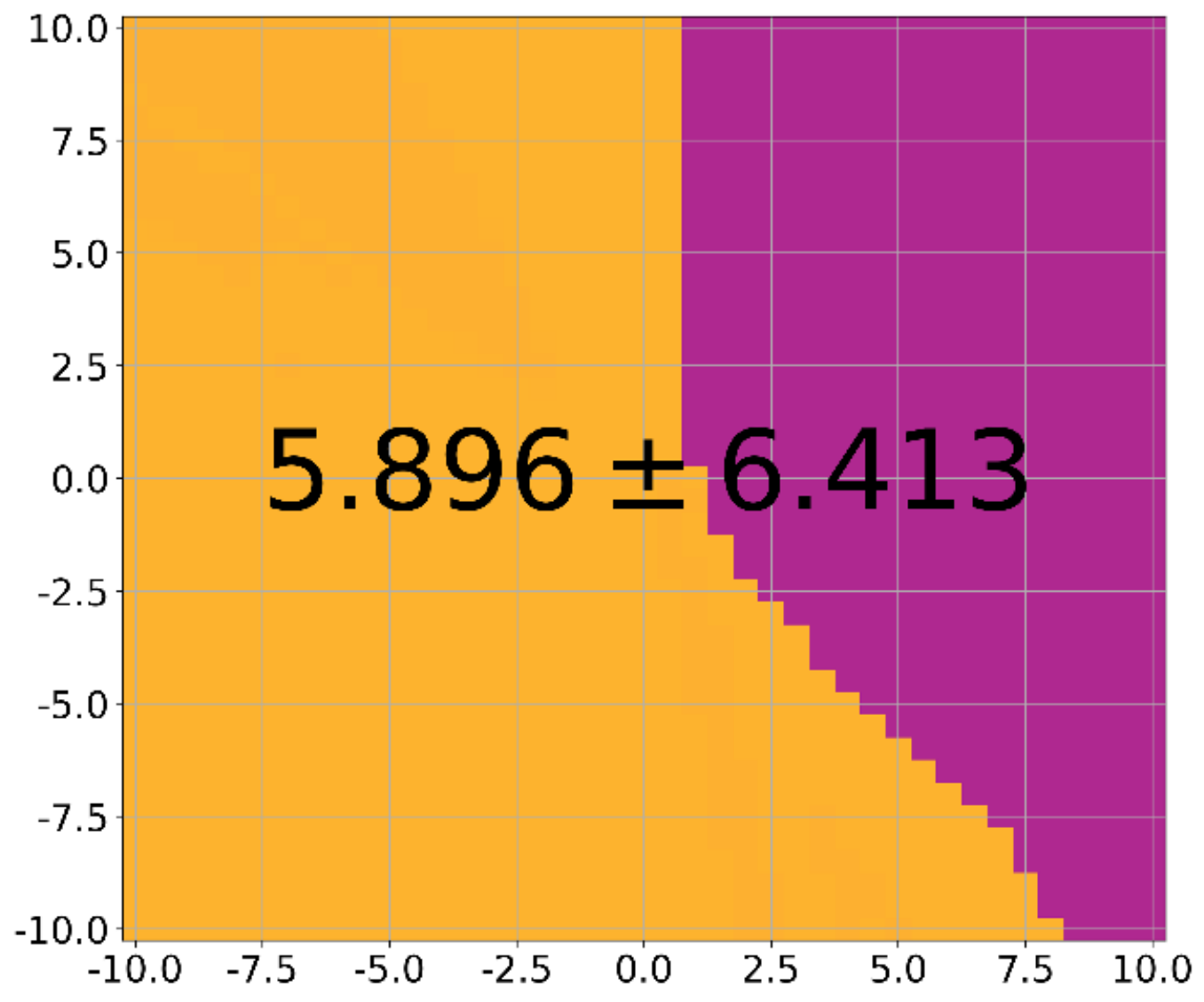}
\subcaption*{$\tabsminhat = 5$}
\end{subfigure}

\caption{A comparison of Fisher candidates for exact bias-only optimization
on Env-A1, A2, A3, B1, B2, and B3 (top to bottom rows).
The Devmat Fisher uses the entry-wise absolute deviation matrix, that is,
$\sum_{s \in \setname{S}} | h_\pi(s|s_0) | \famat(\vecb{\theta}, s)$.
The Analytic Fisher refers to Scheme~\ref{item:natgrad} (\secref{sec:biasopt}).
Both Devmat and Analytic candidates do not enable sampling-based approximation.
Contrarily, the other five candidates are sampling-enabler expressions
(but computed exactly here).
They are derived from \eqref{equ:fisher_b_sampling} with
varying minimum absorption time estimates $\tabsminhatb = 1, 2, \ldots, 5$
(each is invariant to policies and initial states).
Like in \figref{fig:biasoptcmp}, the color indicates the bias of
the final policy parameter from optimization initialized
at the corresponding 2D-coordinate.
It maps low bias to dark-blue, whereas high bias to yellow.
Each subplot has $1681$ final bias grid-values, whose
mean and standard deviation are indicated by the number in the middle.
Here, the highest final bias (yellow color) is desirable.
For experimental setup, see \secref{sec:xprmt_setup_nbwpg}.
}
\label{fig:biasfishercmp}
\end{figure}
\end{landscape}

\begin{landscape}
\begin{figure}
\centering

\begin{subfigure}{0.170\textwidth}
\includegraphics[width=\textwidth]{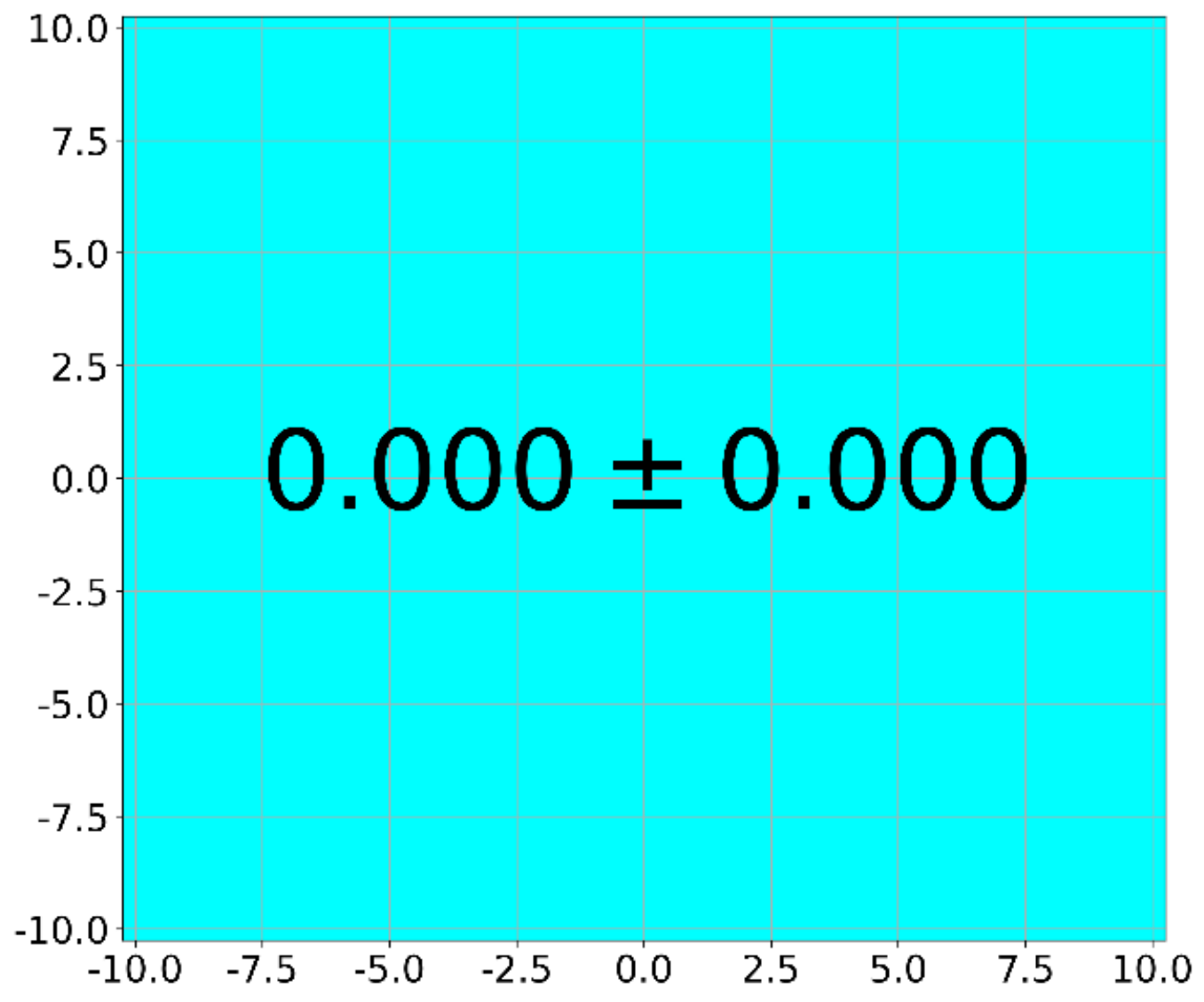}
\end{subfigure}
\begin{subfigure}{0.170\textwidth}
\includegraphics[width=\textwidth]{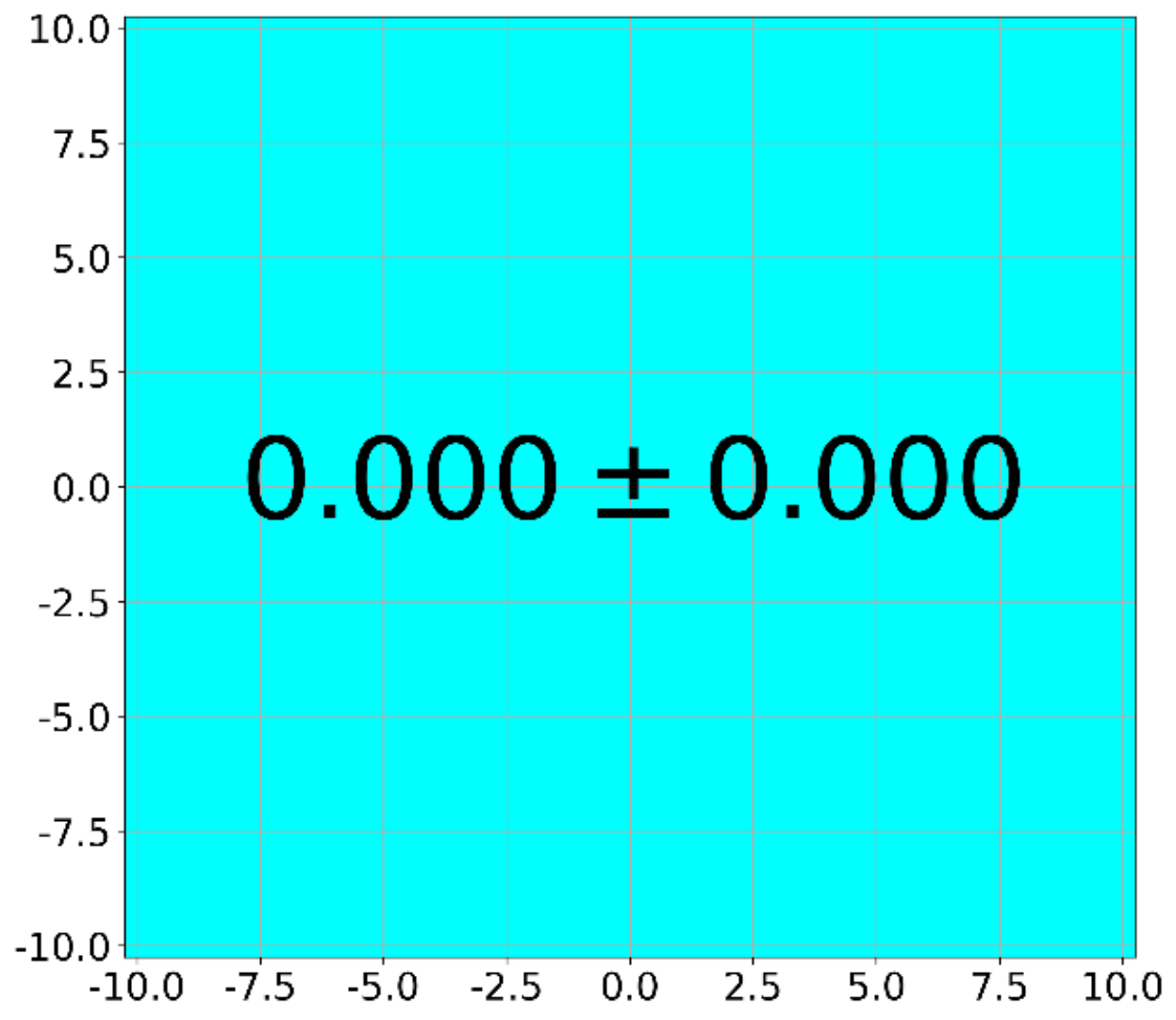}
\end{subfigure}
\begin{subfigure}{0.170\textwidth}
\includegraphics[width=\textwidth]{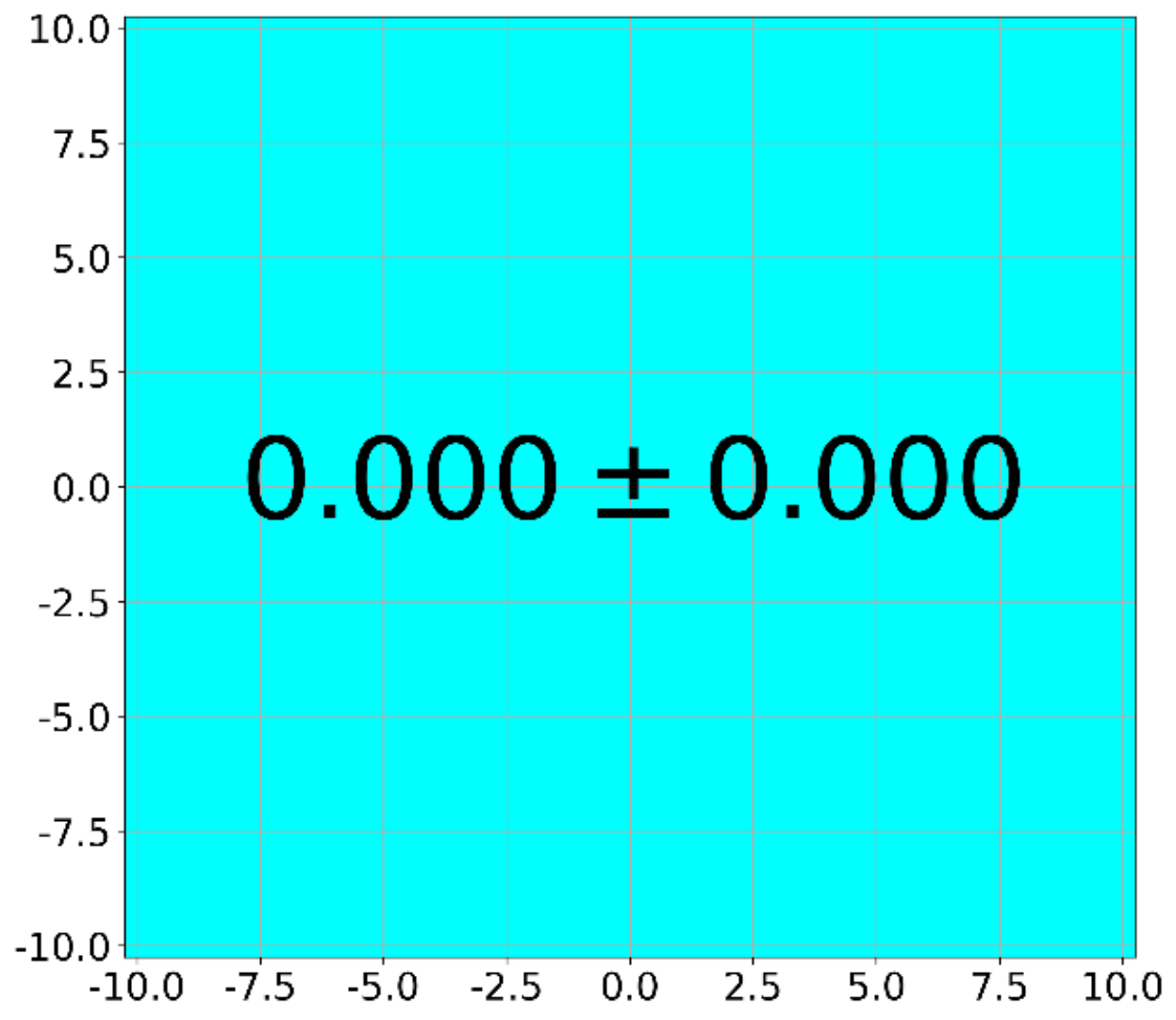}
\end{subfigure}
\begin{subfigure}{0.170\textwidth}
\includegraphics[width=\textwidth]{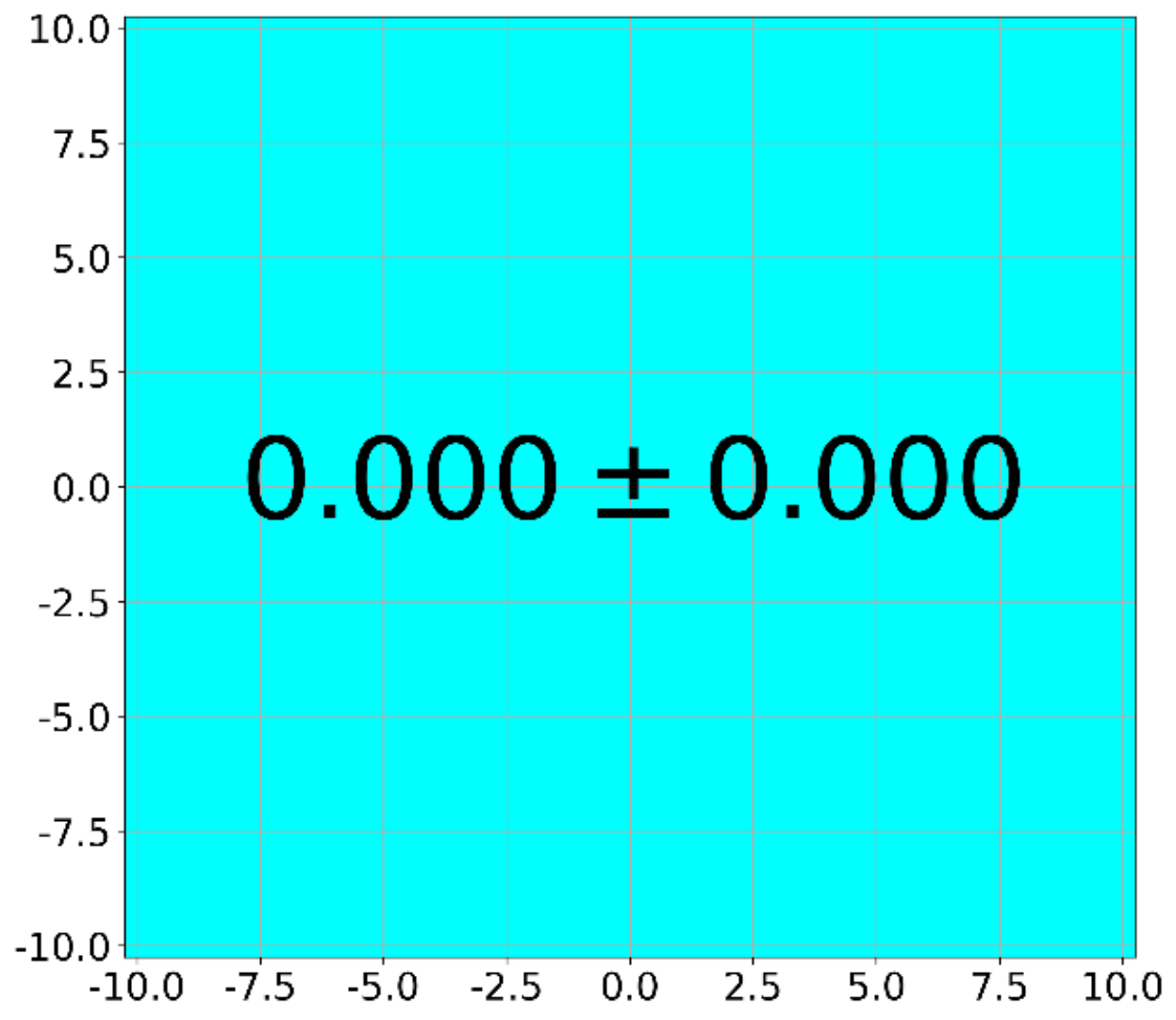}
\end{subfigure}
\begin{subfigure}{0.170\textwidth}
\includegraphics[width=\textwidth]{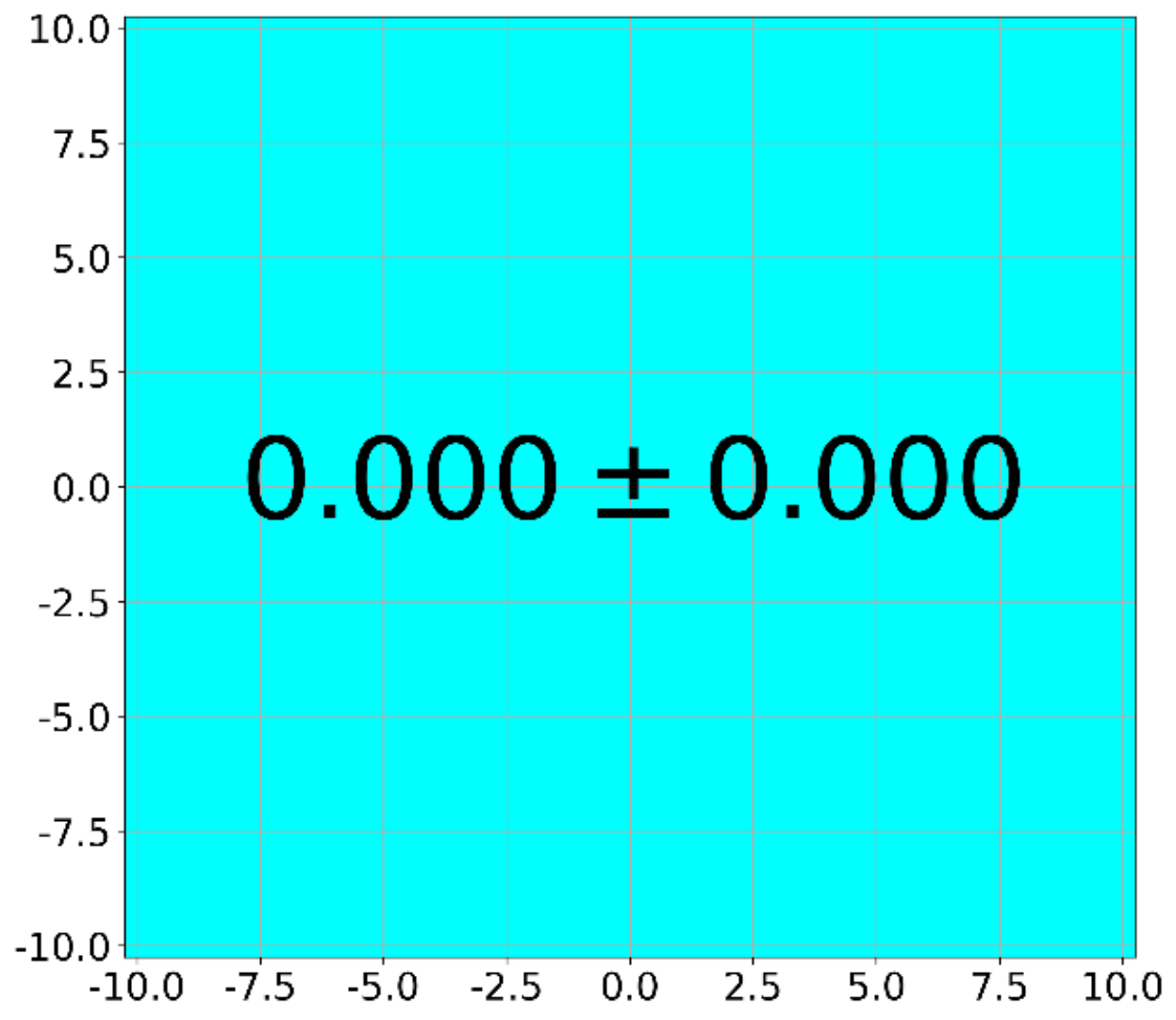}
\end{subfigure}
\begin{subfigure}{0.170\textwidth}
\includegraphics[width=\textwidth]{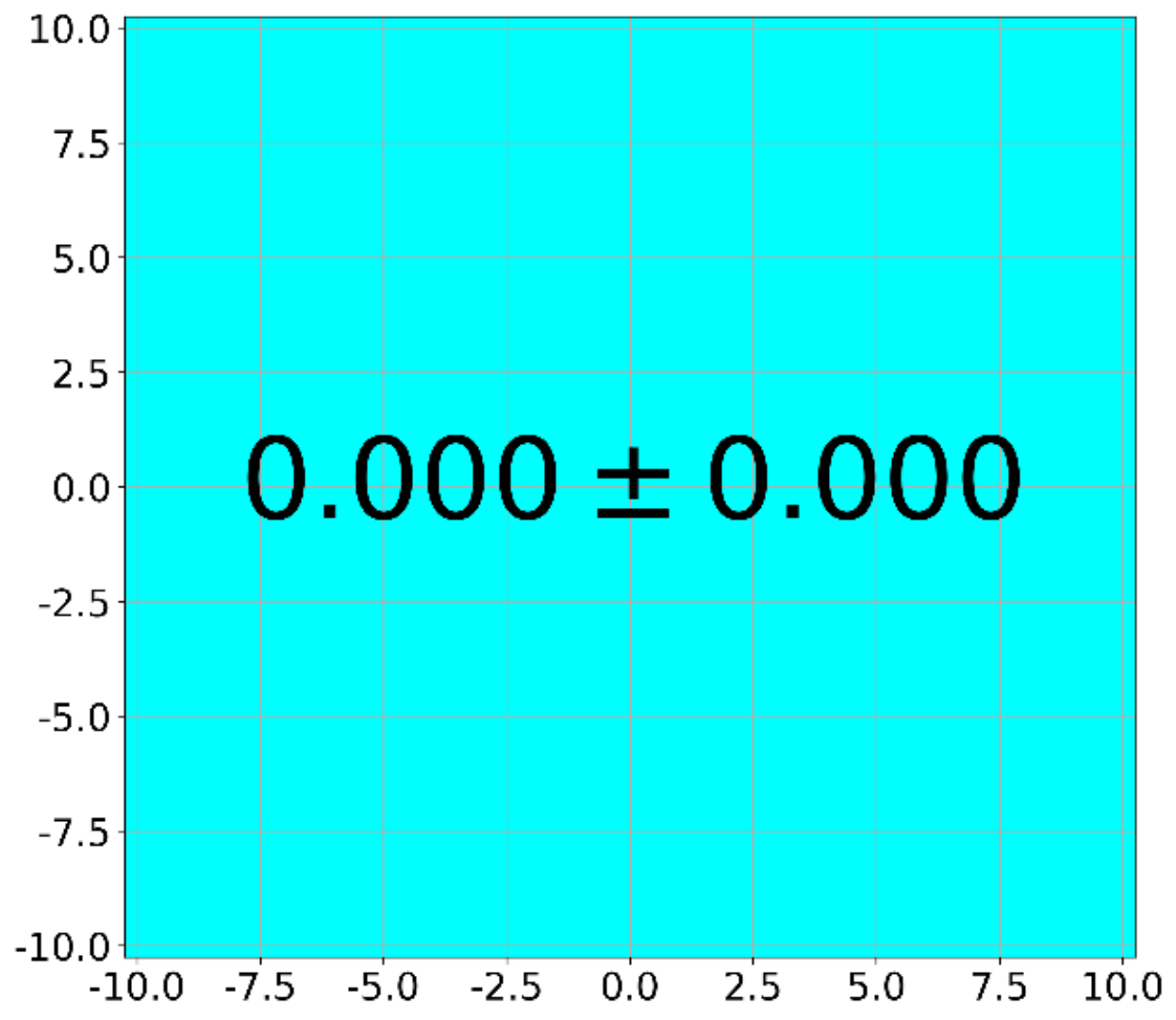}
\end{subfigure}
\begin{subfigure}{0.170\textwidth}
\includegraphics[width=\textwidth]{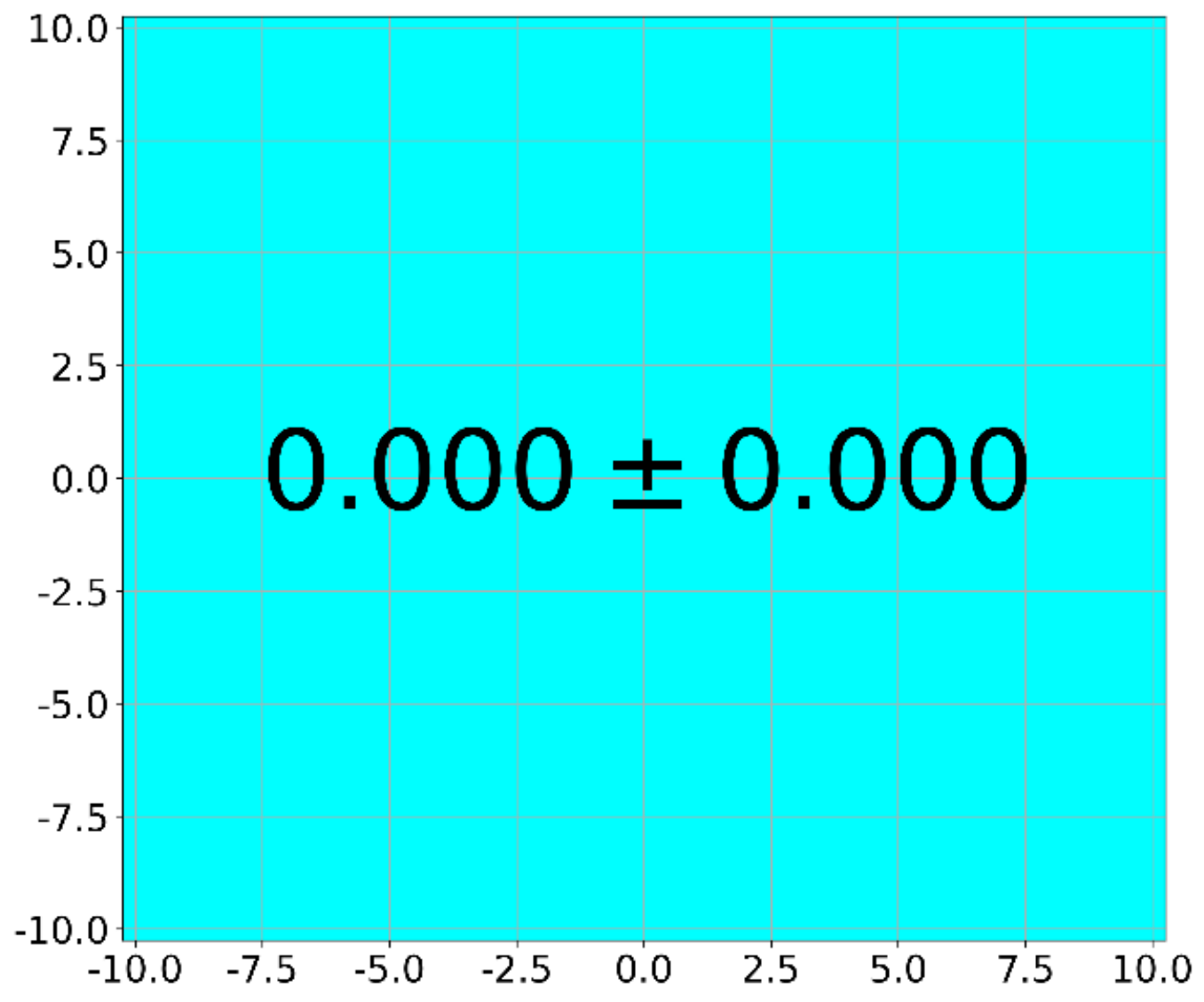}
\end{subfigure}

\begin{subfigure}{0.170\textwidth}
\includegraphics[width=\textwidth]{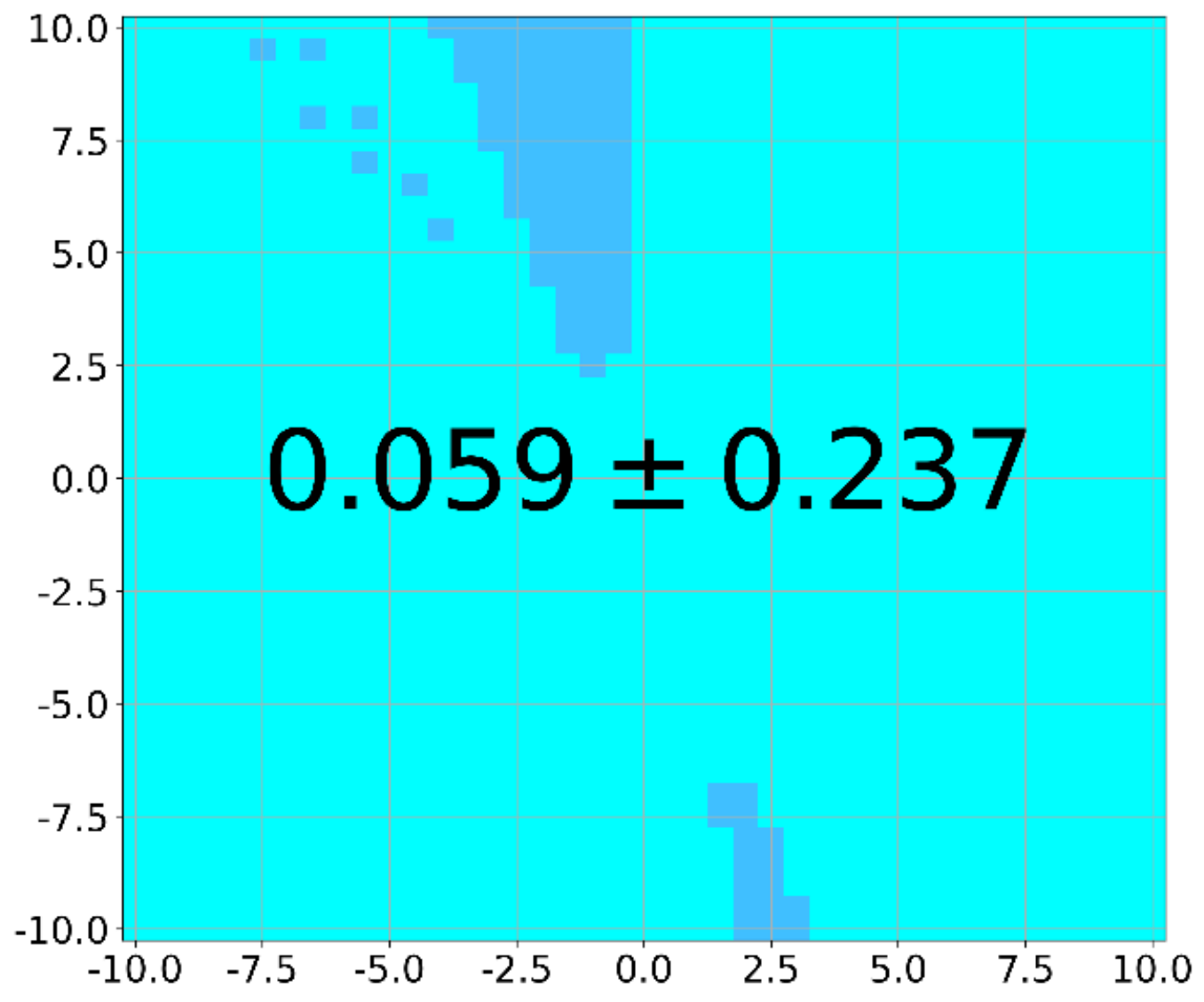}
\end{subfigure}
\begin{subfigure}{0.170\textwidth}
\includegraphics[width=\textwidth]{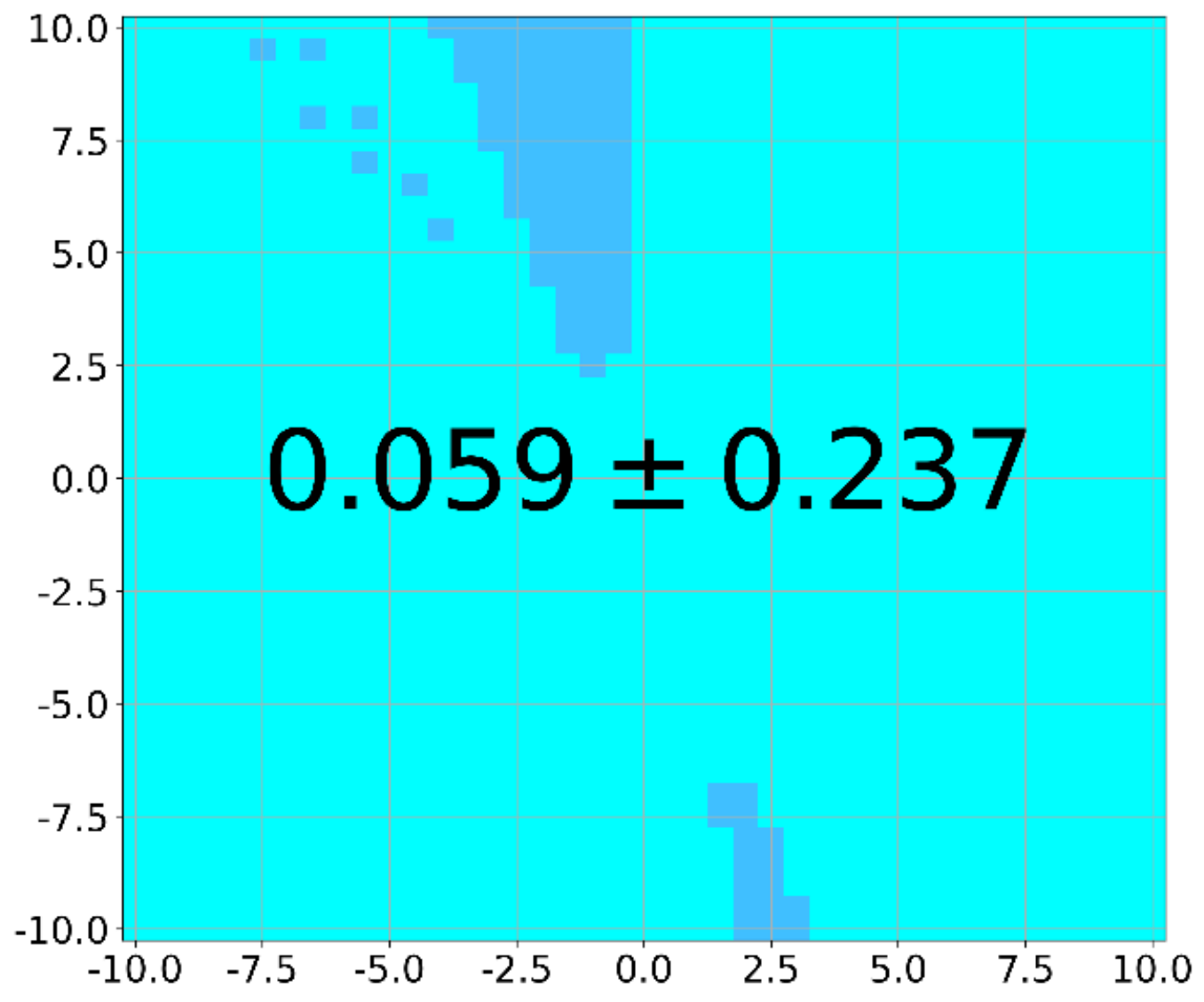}
\end{subfigure}
\begin{subfigure}{0.170\textwidth}
\includegraphics[width=\textwidth]{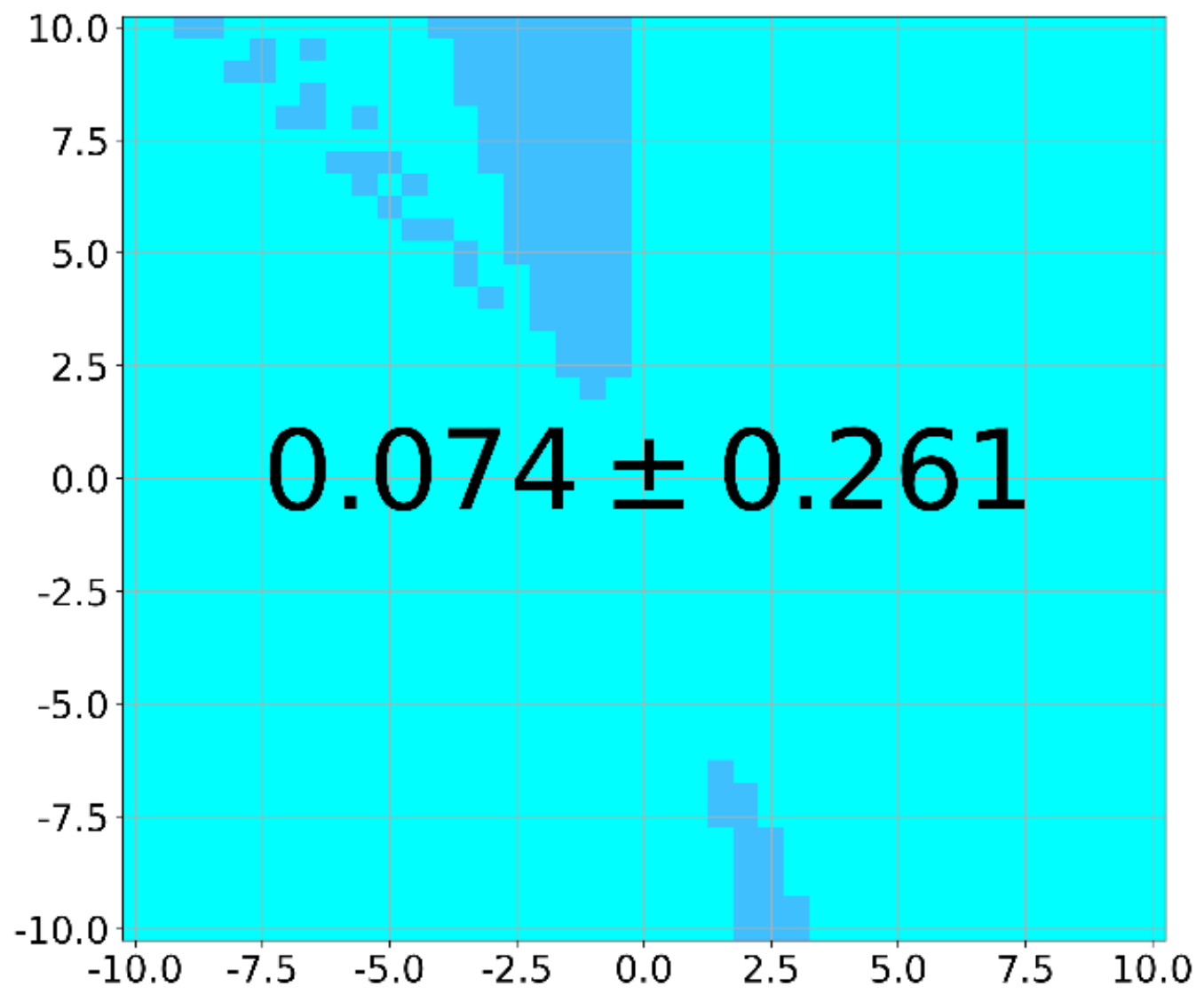}
\end{subfigure}
\begin{subfigure}{0.170\textwidth}
\includegraphics[width=\textwidth]{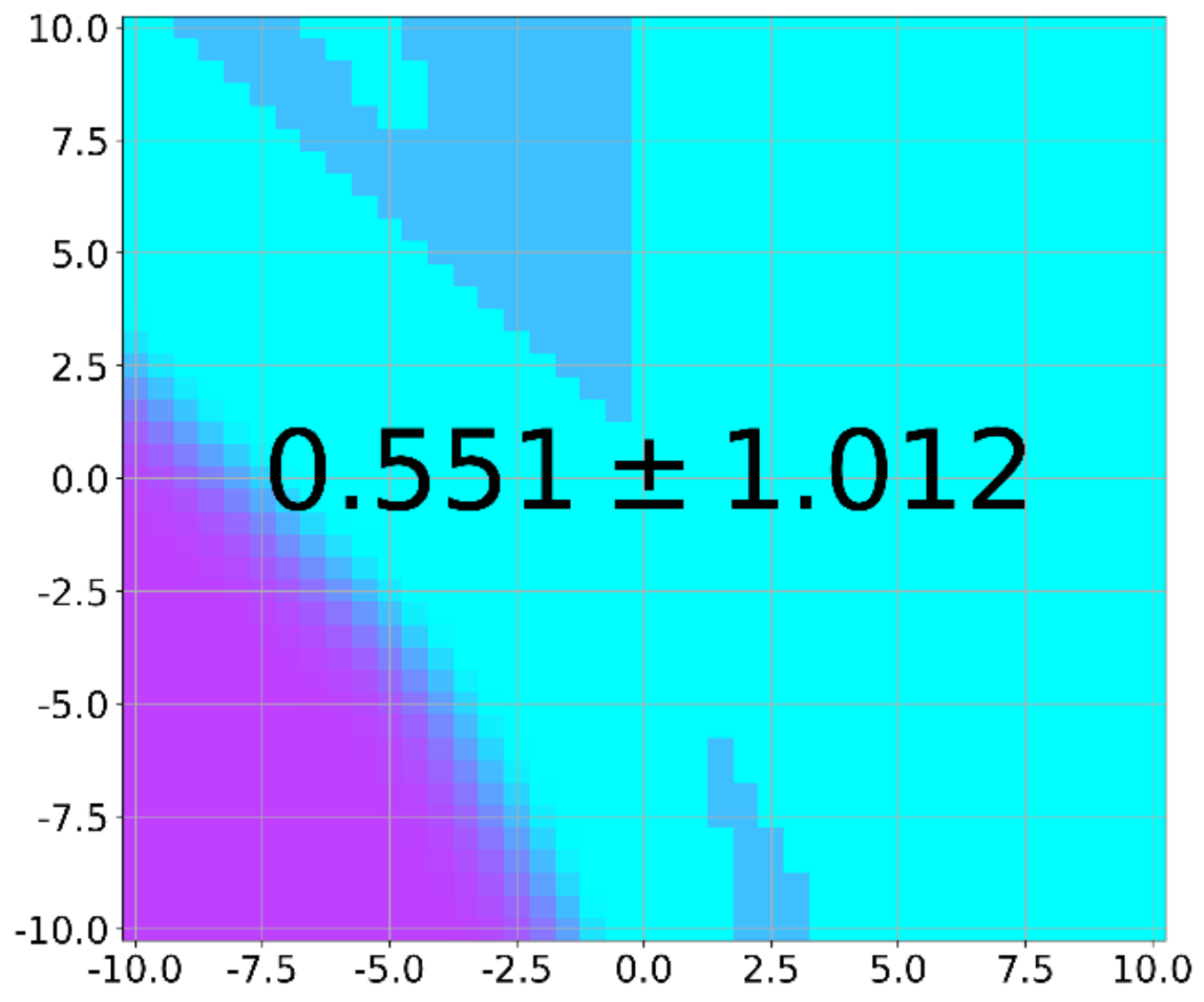}
\end{subfigure}
\begin{subfigure}{0.170\textwidth}
\includegraphics[width=\textwidth]{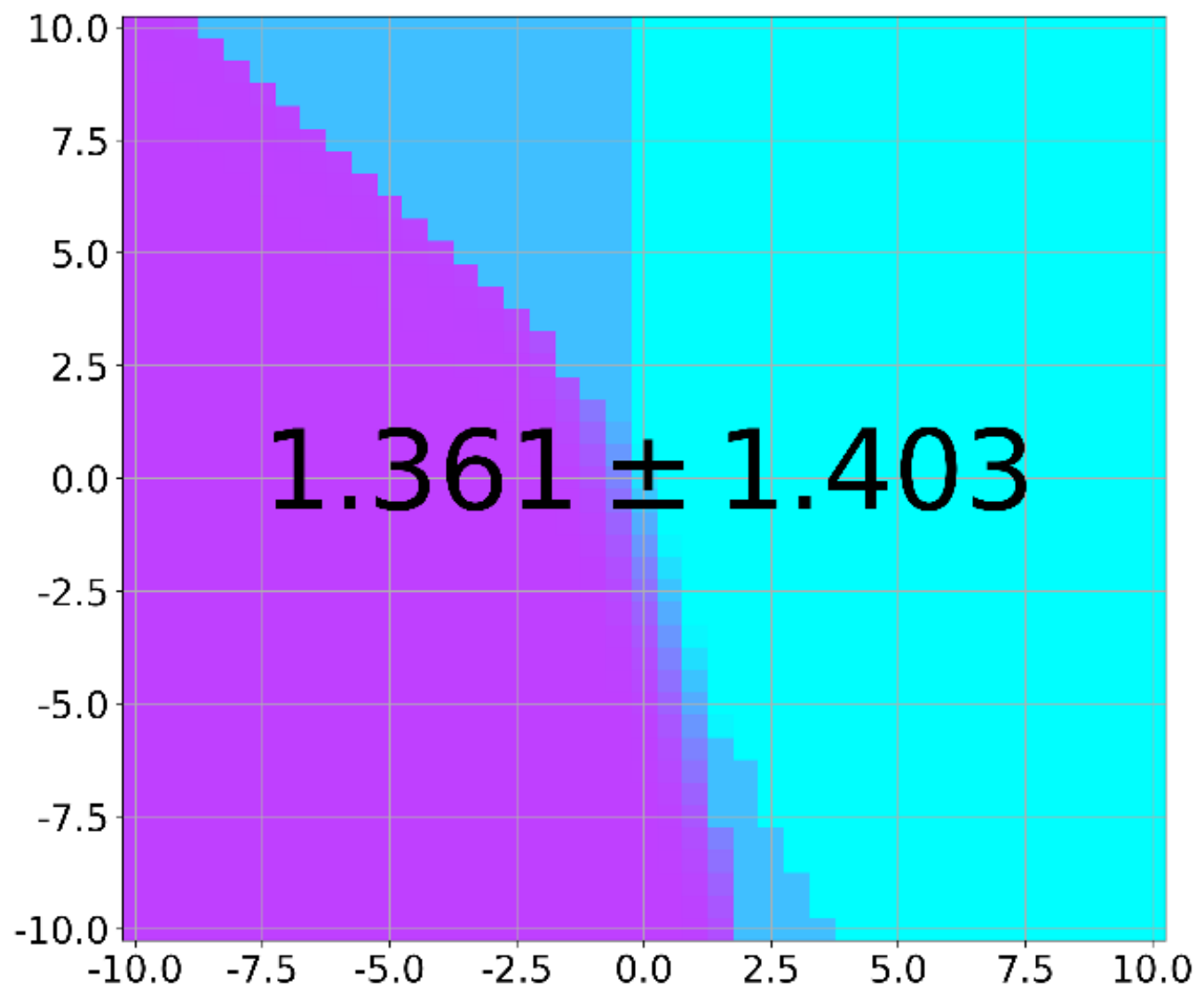}
\end{subfigure}
\begin{subfigure}{0.170\textwidth}
\includegraphics[width=\textwidth]{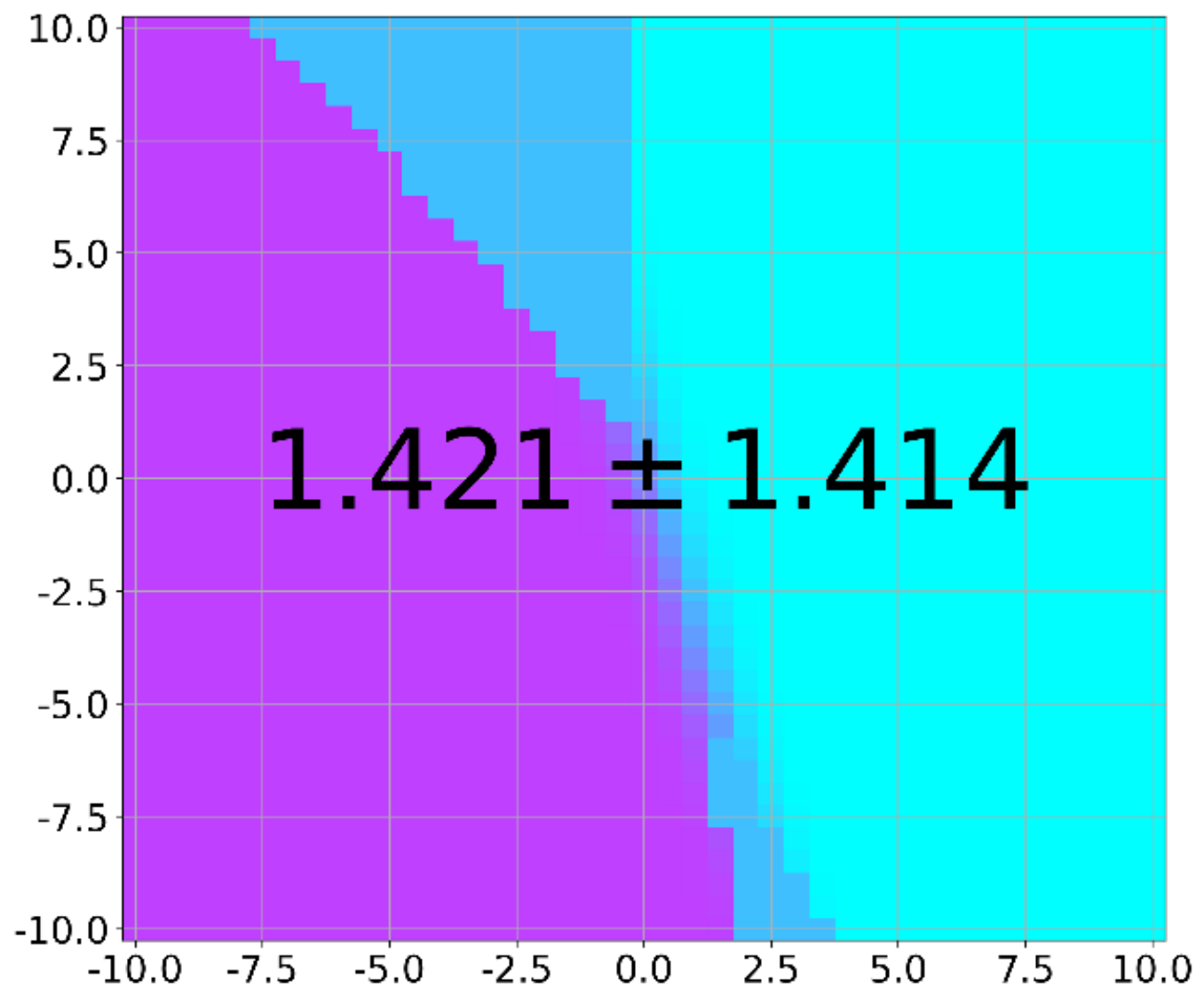}
\end{subfigure}
\begin{subfigure}{0.170\textwidth}
\includegraphics[width=\textwidth]{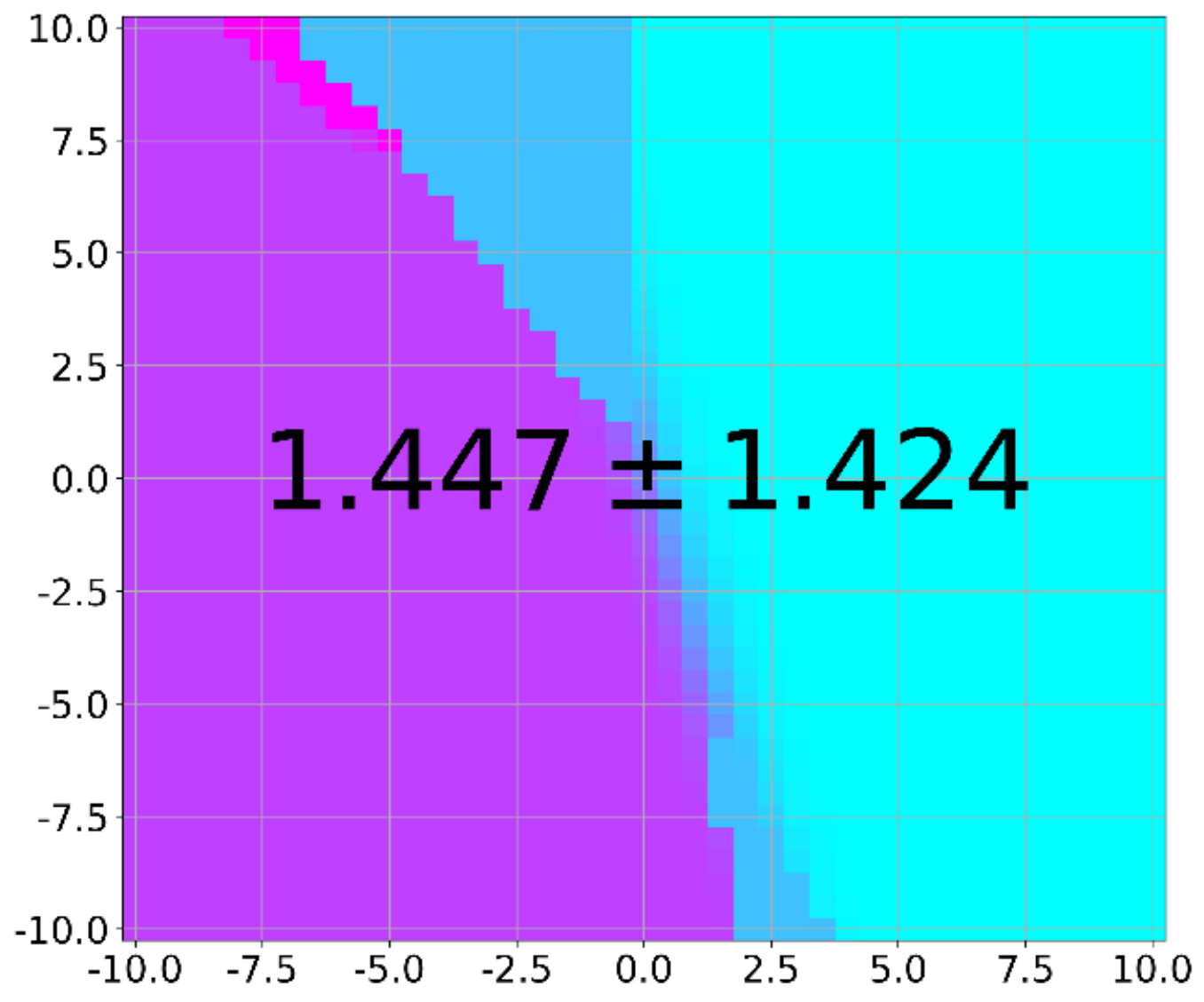}
\end{subfigure}

\begin{subfigure}{0.170\textwidth}
\includegraphics[width=\textwidth]{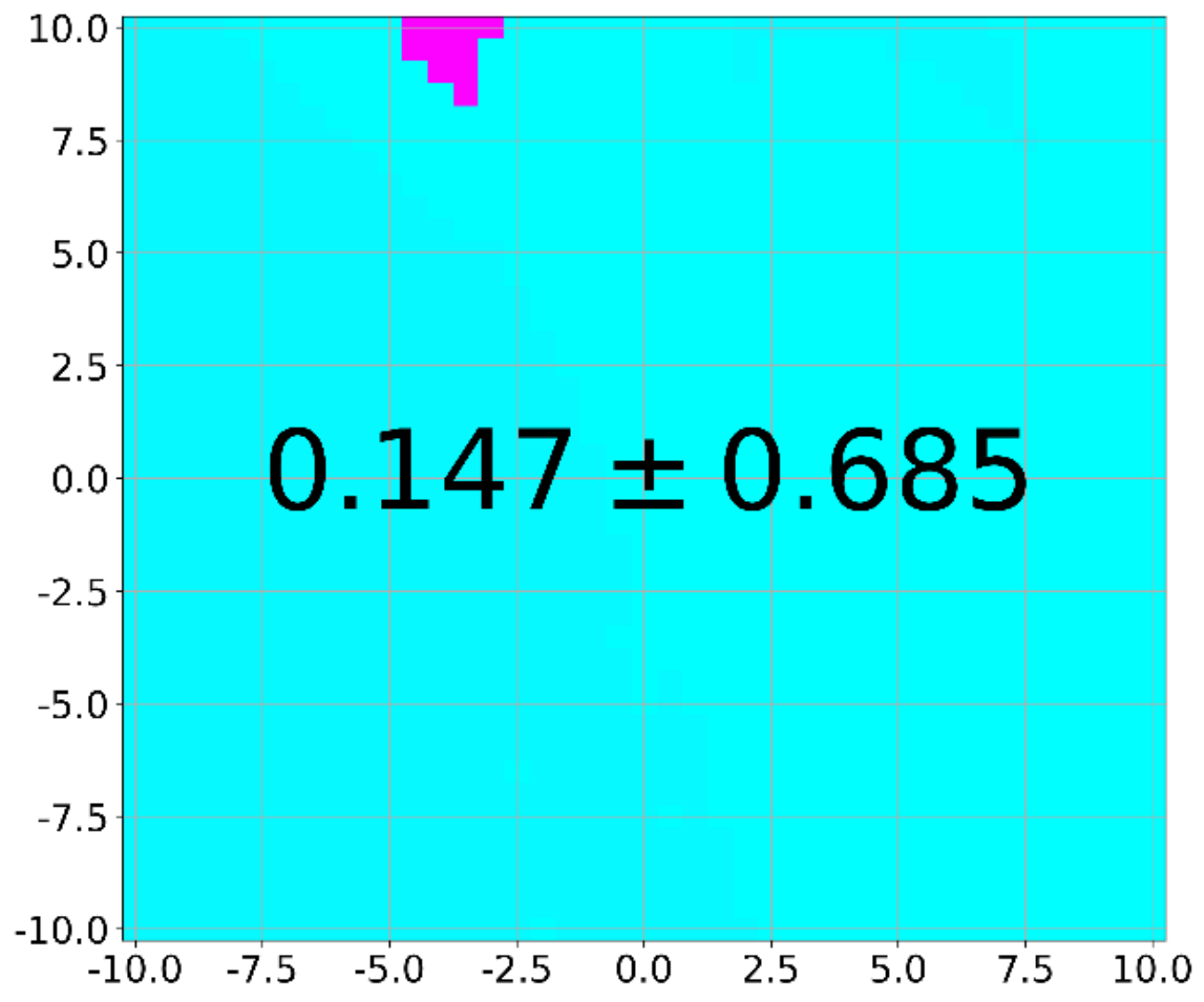}
\end{subfigure}
\begin{subfigure}{0.170\textwidth}
\includegraphics[width=\textwidth]{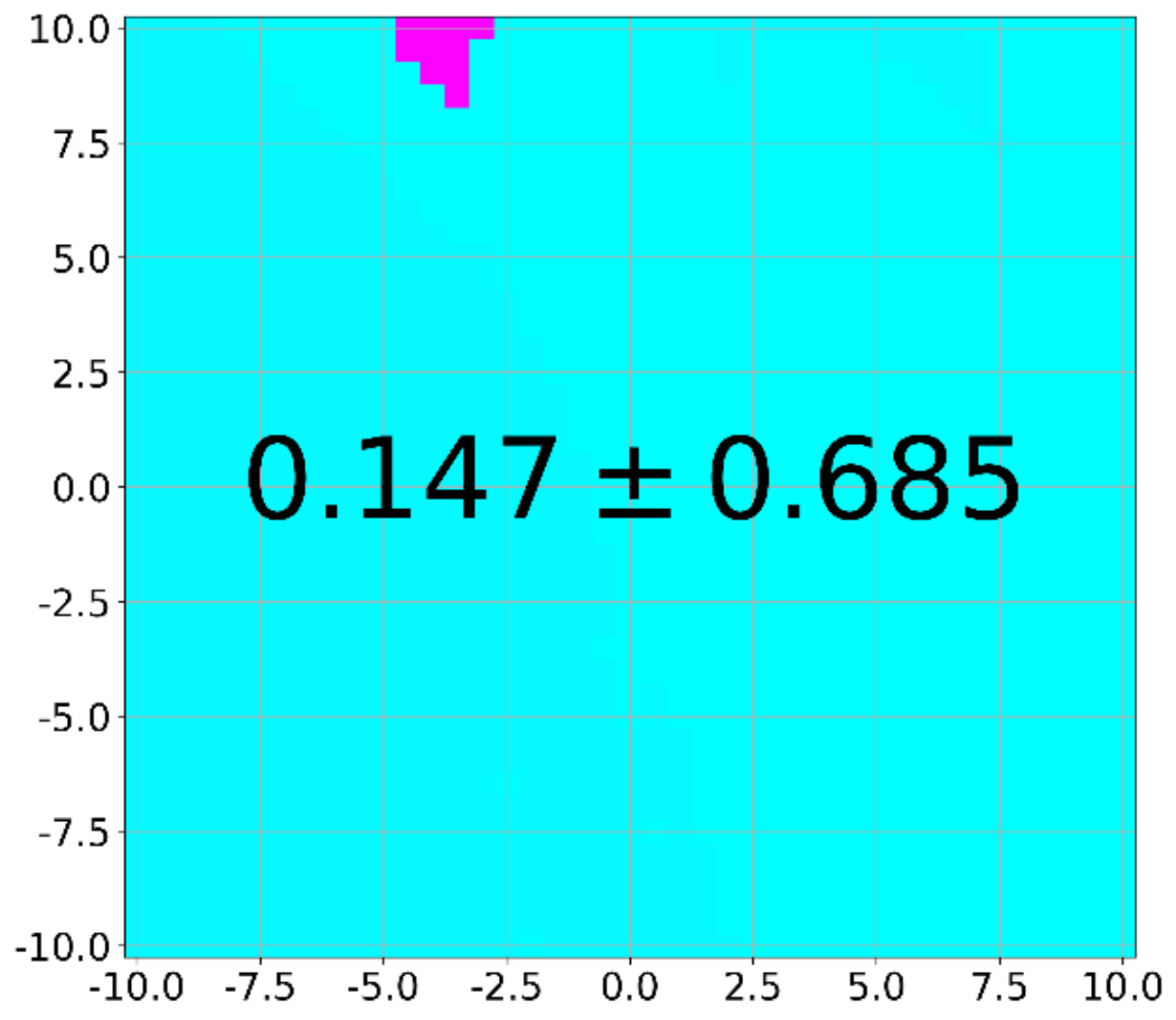}
\end{subfigure}
\begin{subfigure}{0.170\textwidth}
\includegraphics[width=\textwidth]{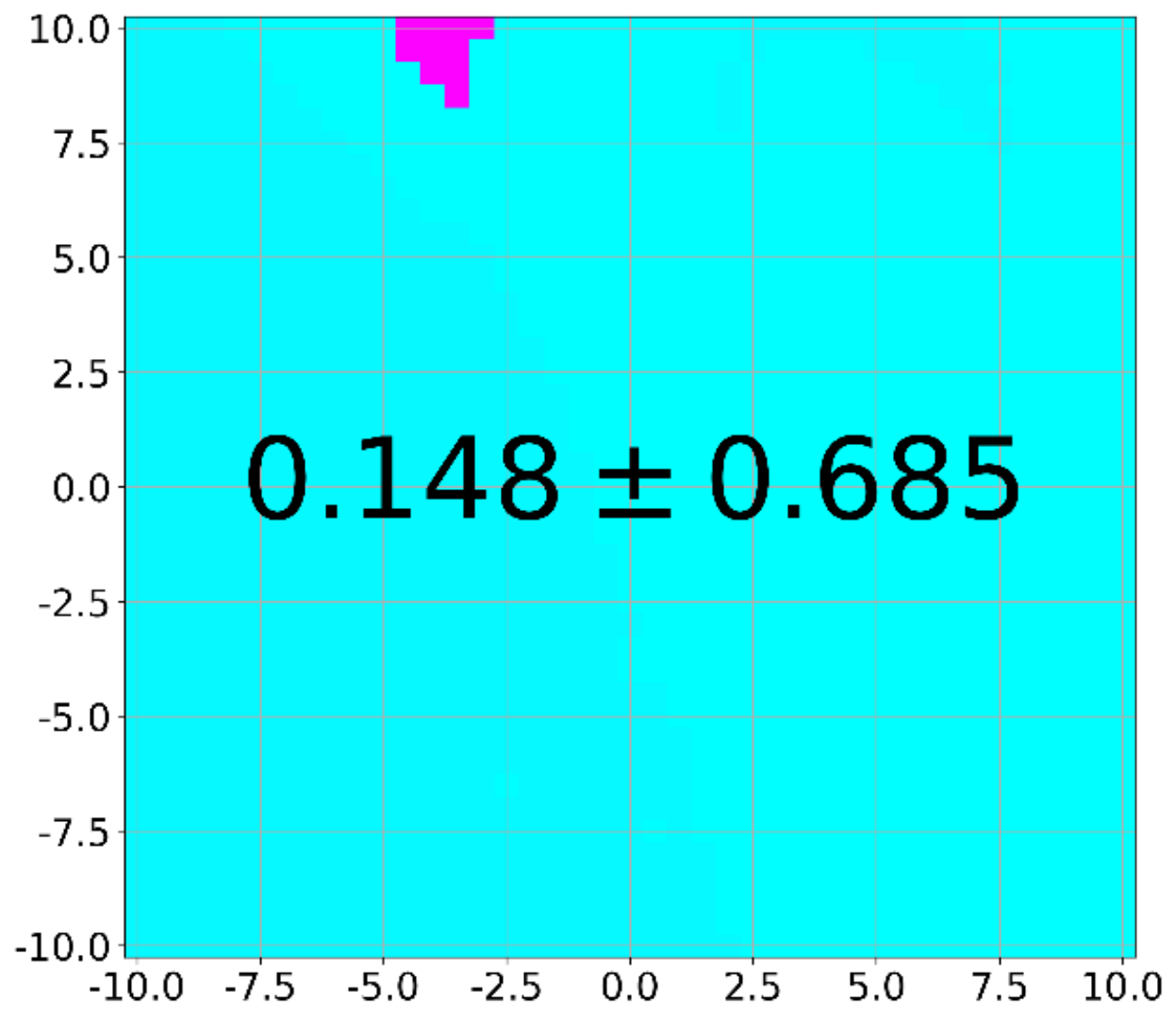}
\end{subfigure}
\begin{subfigure}{0.170\textwidth}
\includegraphics[width=\textwidth]{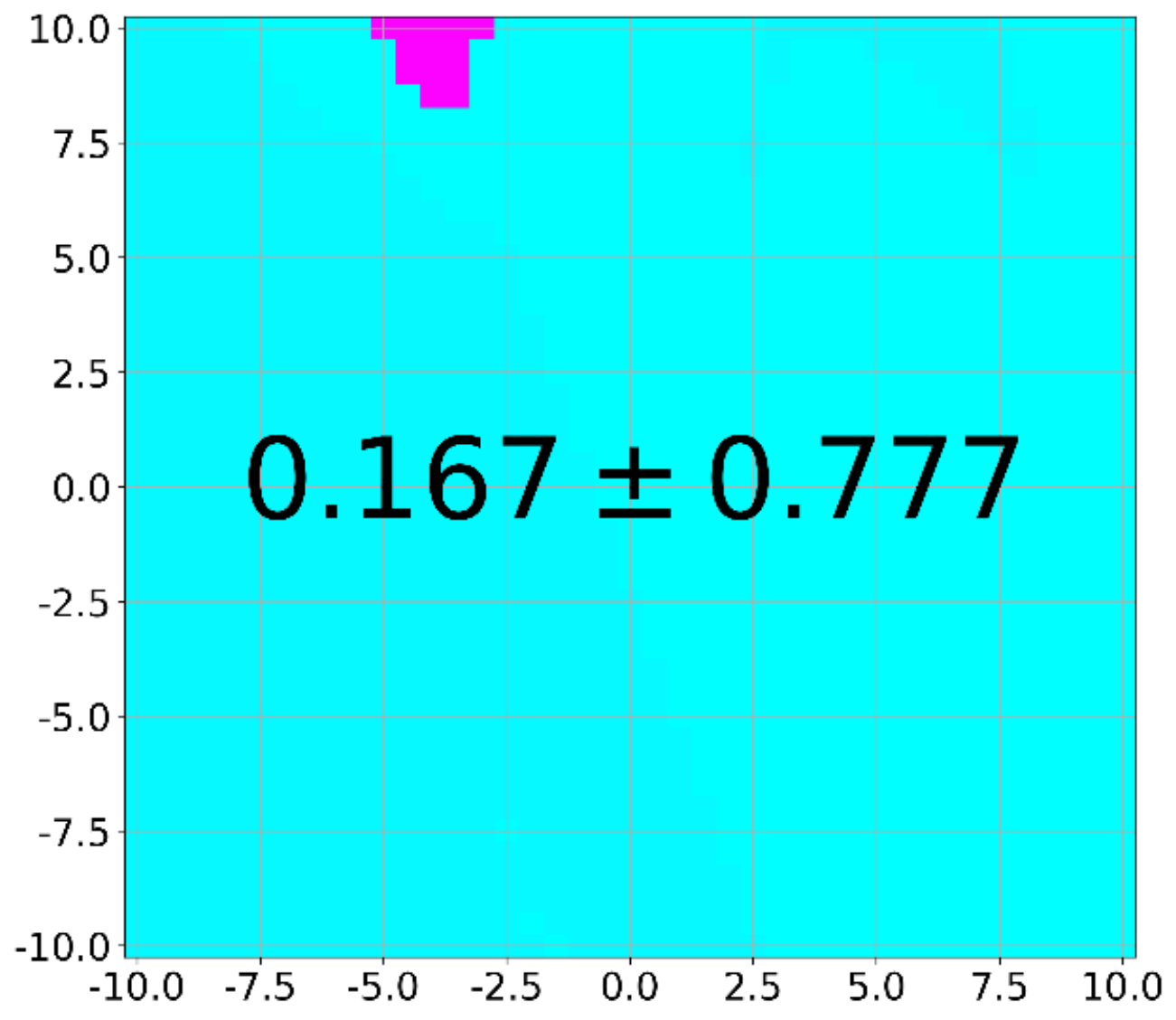}
\end{subfigure}
\begin{subfigure}{0.170\textwidth}
\includegraphics[width=\textwidth]{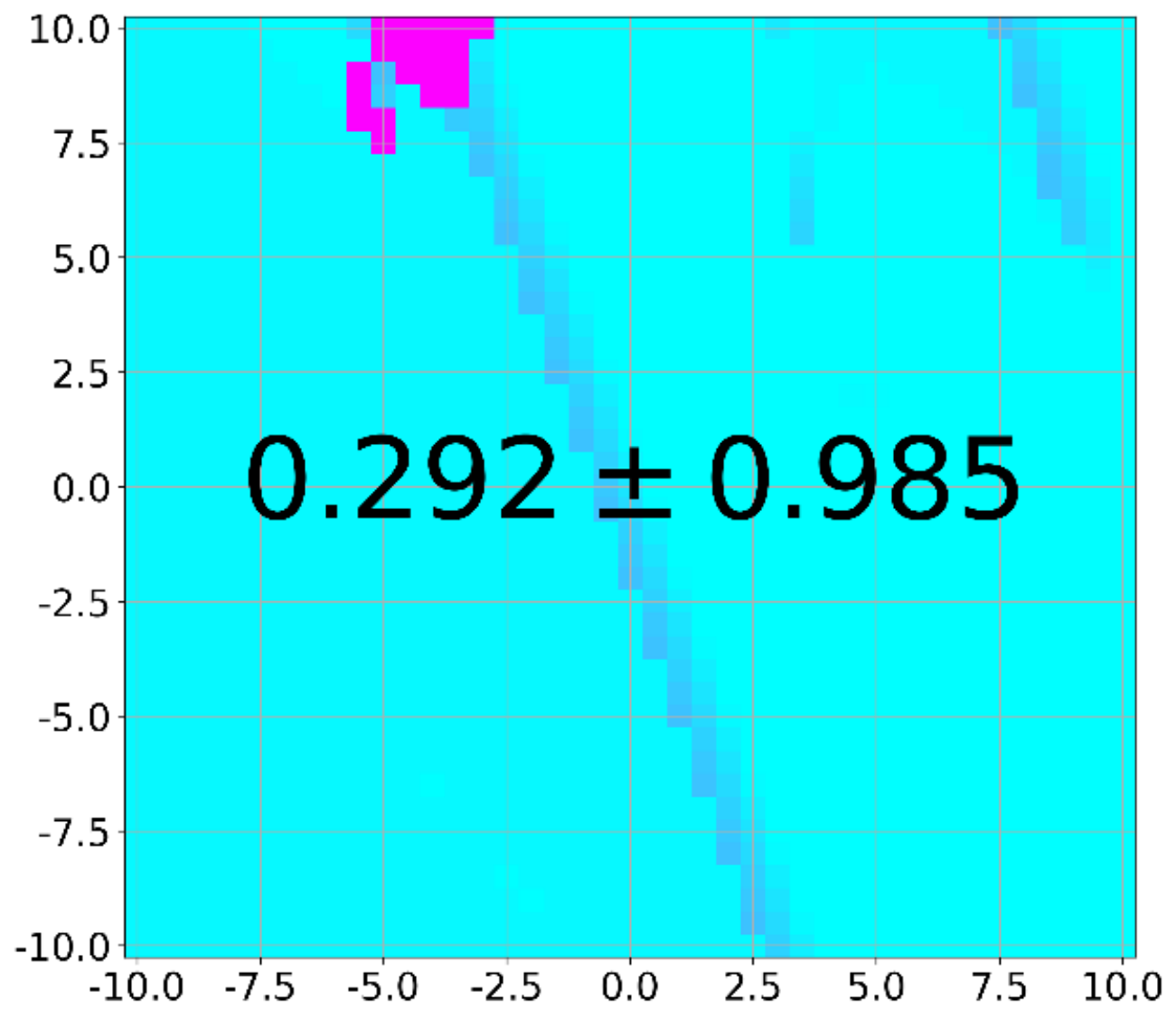}
\end{subfigure}
\begin{subfigure}{0.170\textwidth}
\includegraphics[width=\textwidth]{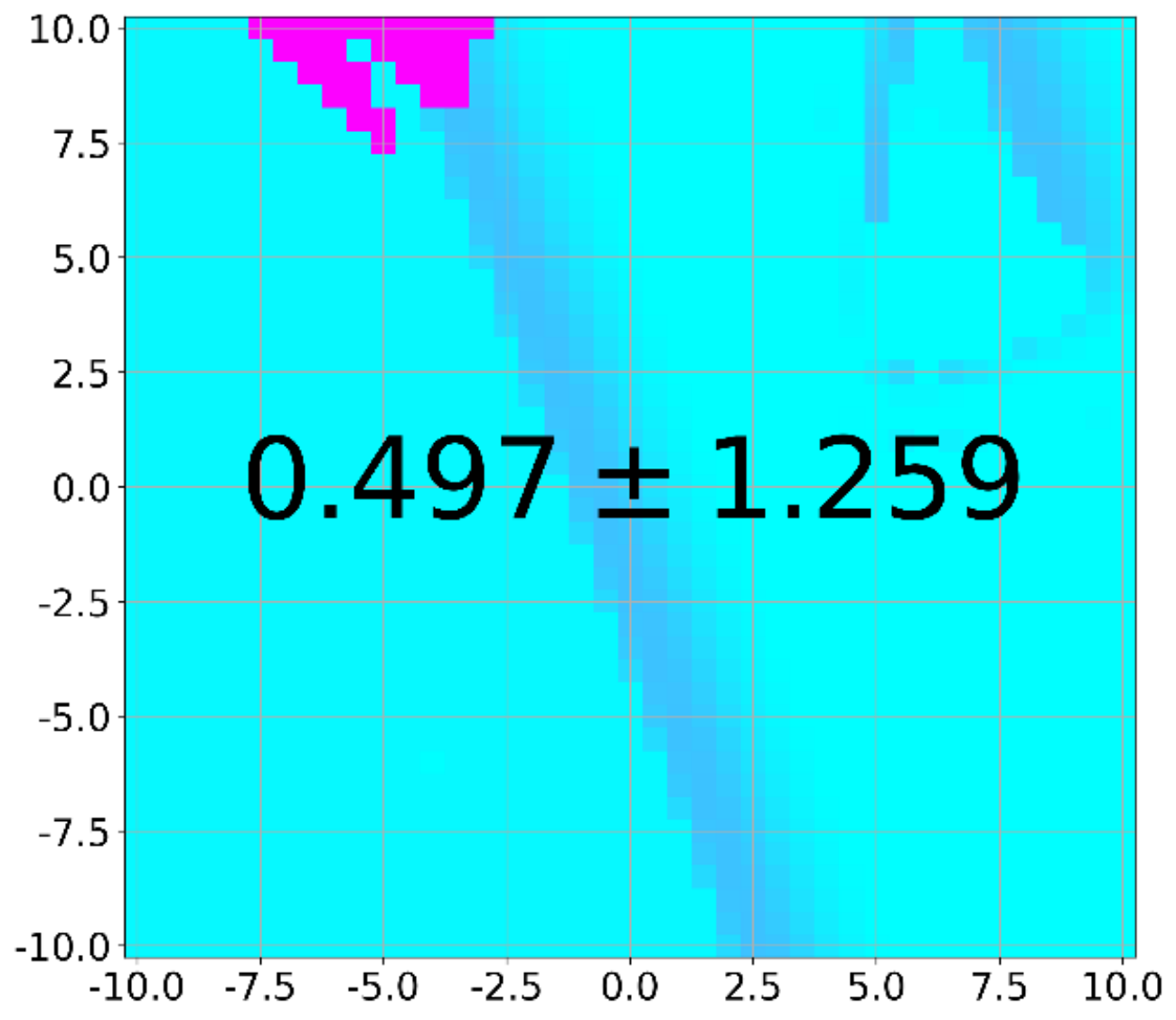}
\end{subfigure}
\begin{subfigure}{0.170\textwidth}
\includegraphics[width=\textwidth]{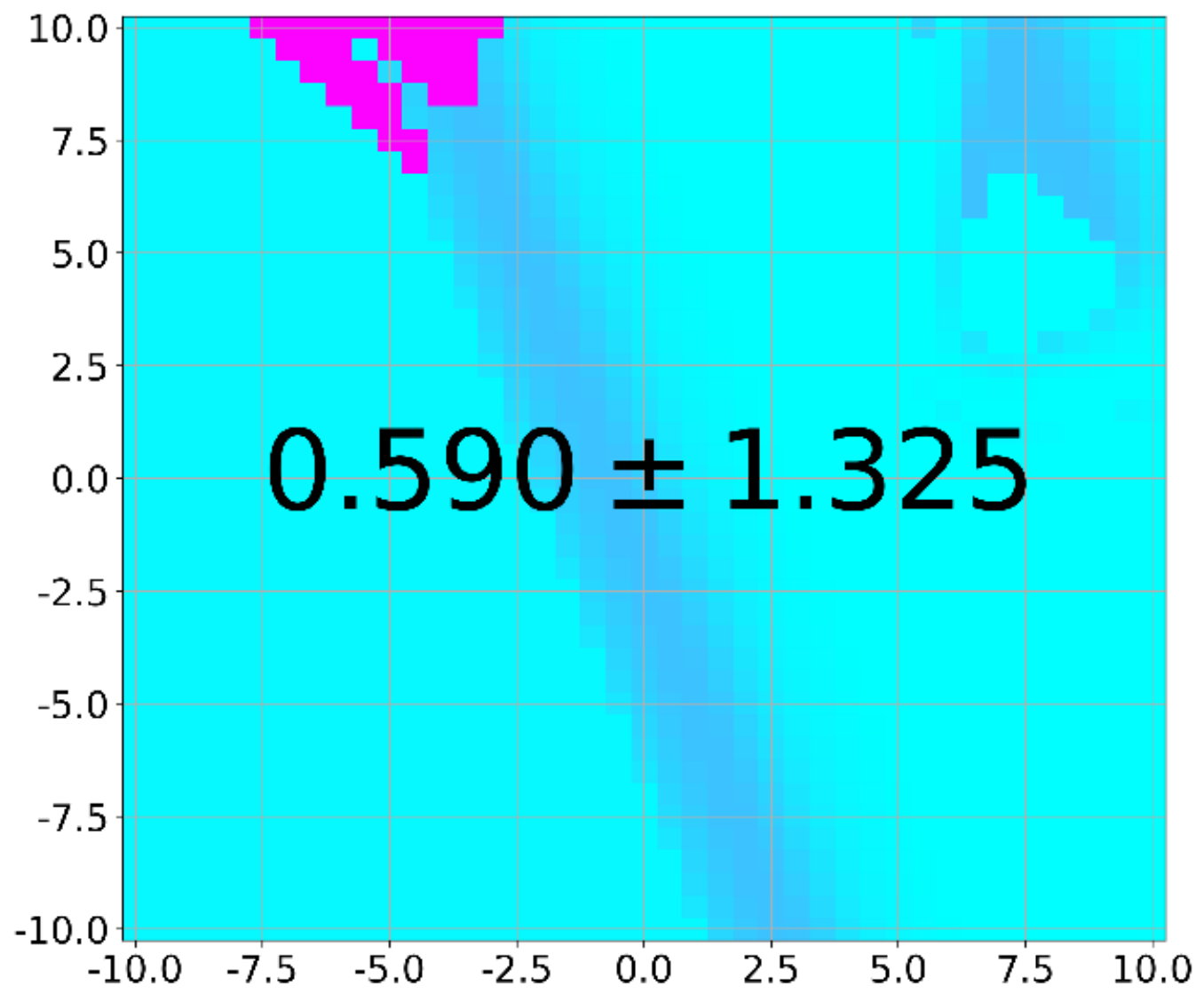}
\end{subfigure}

\begin{subfigure}{0.170\textwidth}
\includegraphics[width=\textwidth]{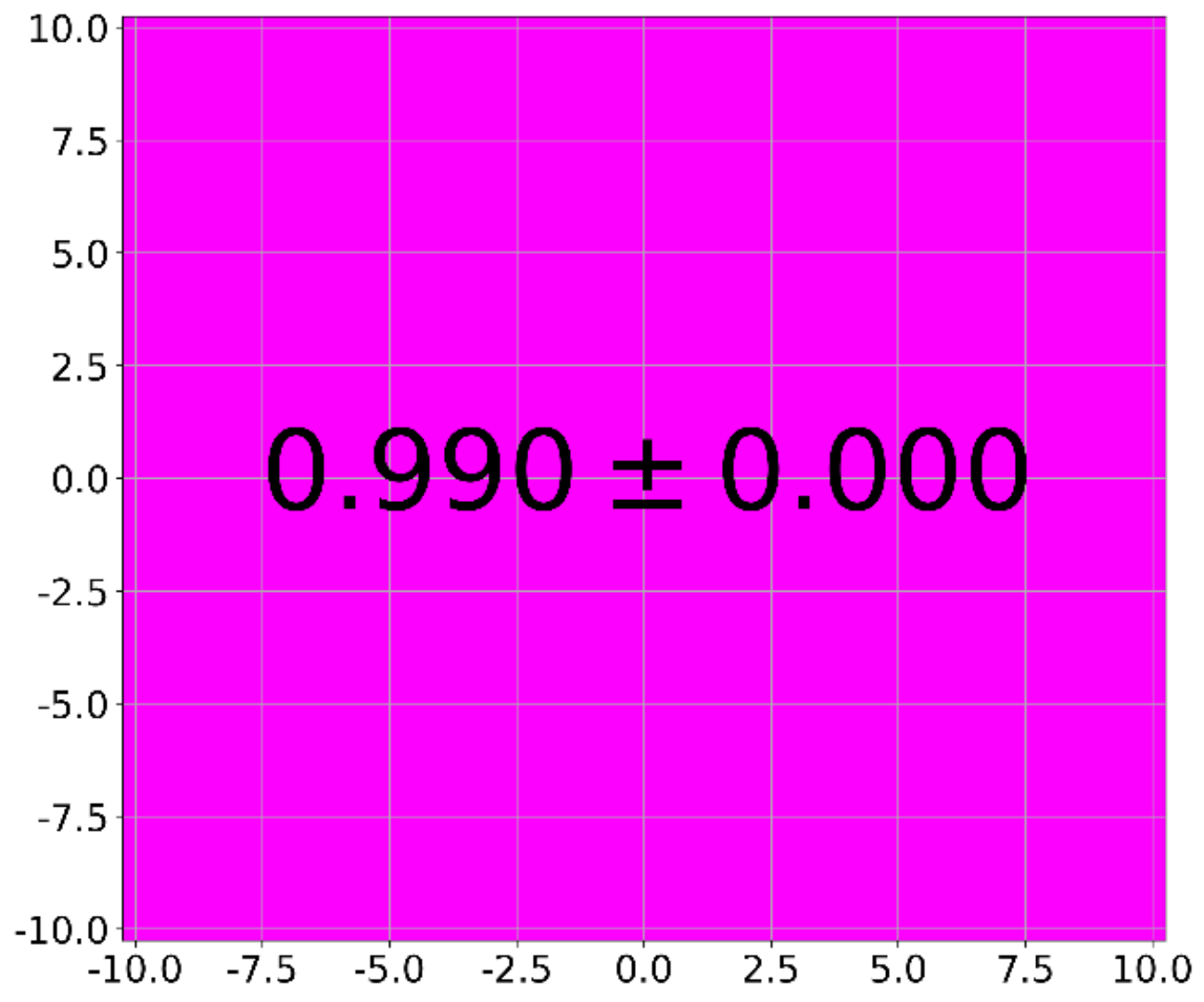}
\end{subfigure}
\begin{subfigure}{0.170\textwidth}
\includegraphics[width=\textwidth]{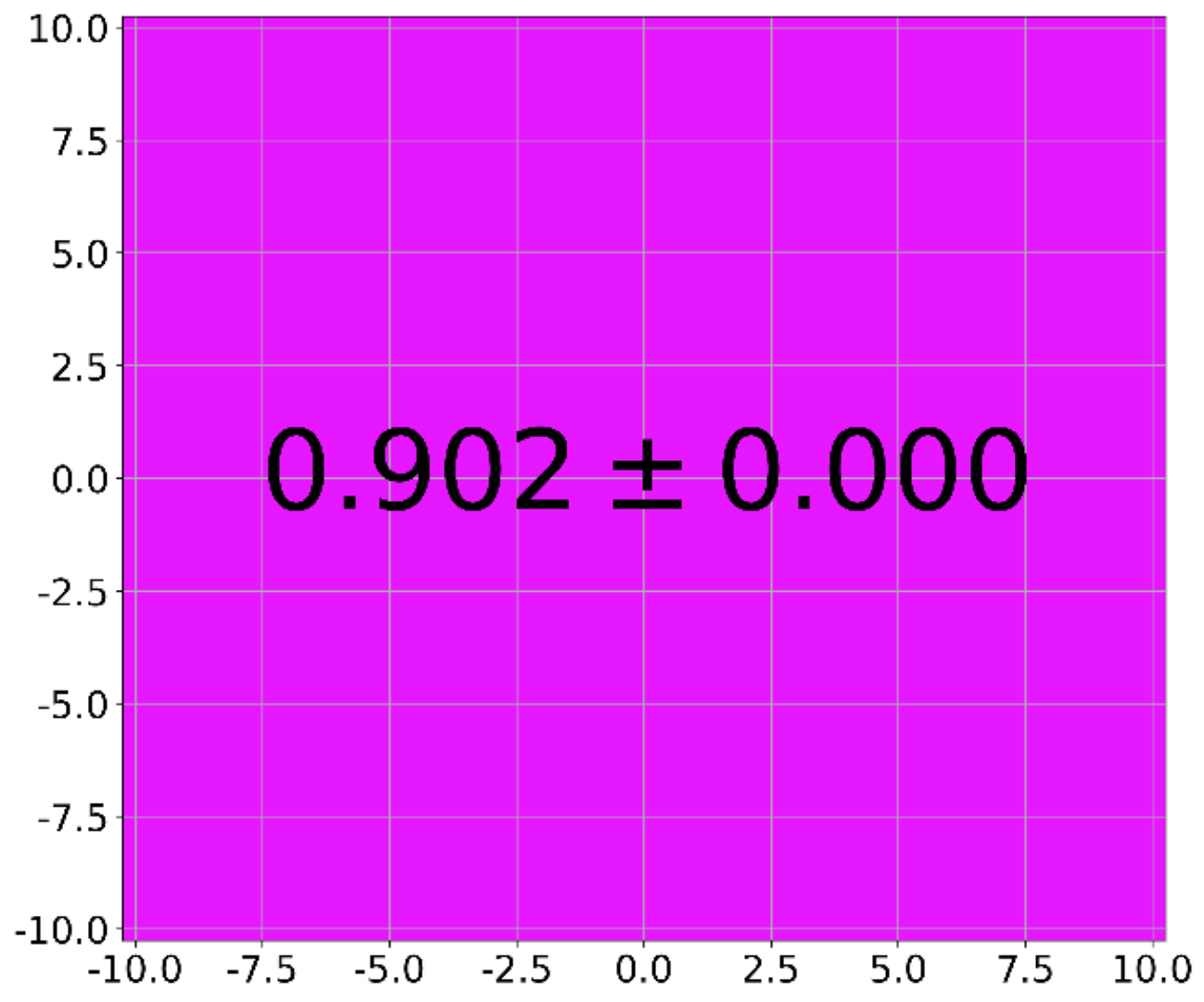}
\end{subfigure}
\begin{subfigure}{0.170\textwidth}
\includegraphics[width=\textwidth]{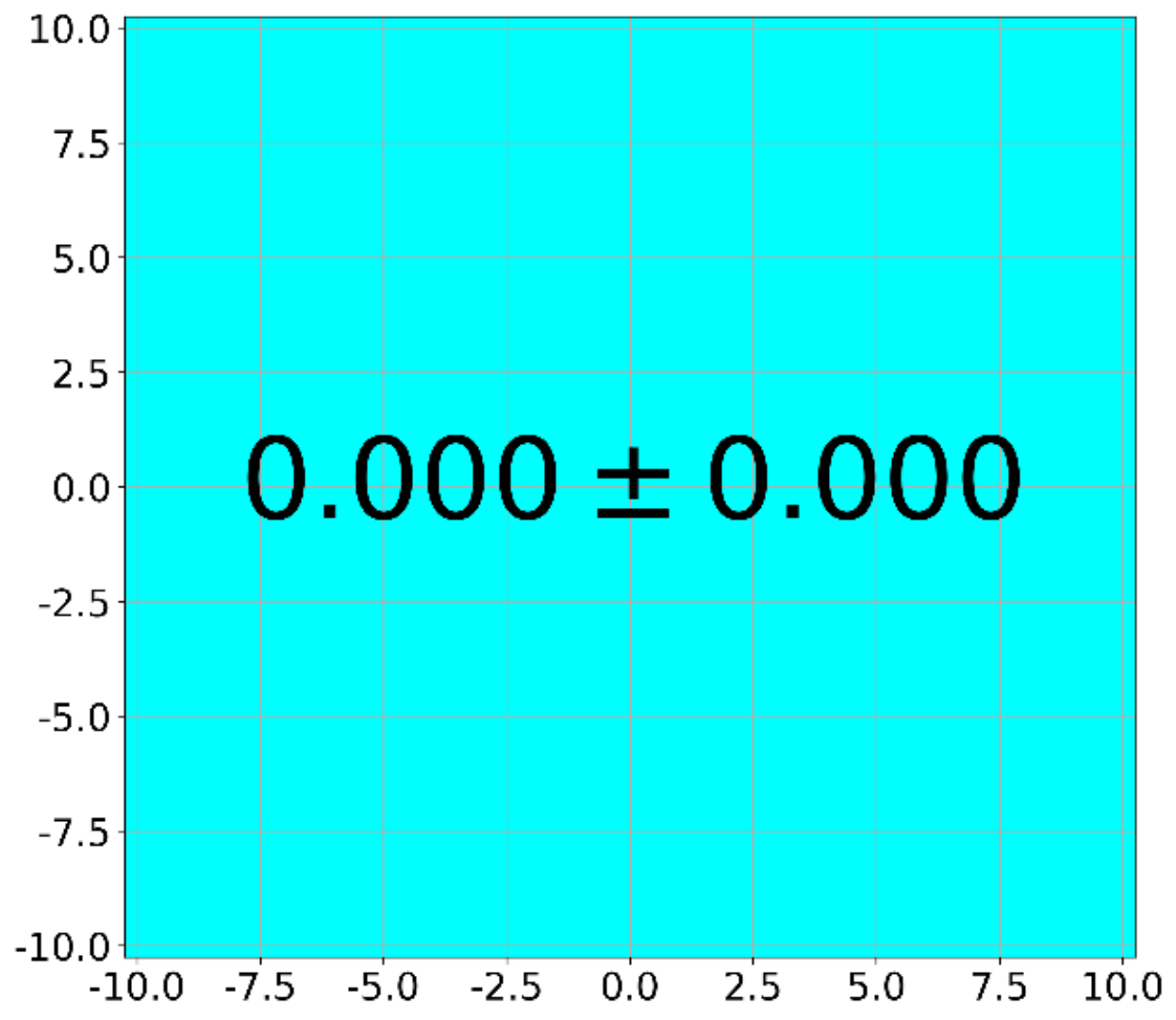}
\end{subfigure}
\begin{subfigure}{0.170\textwidth}
\includegraphics[width=\textwidth]{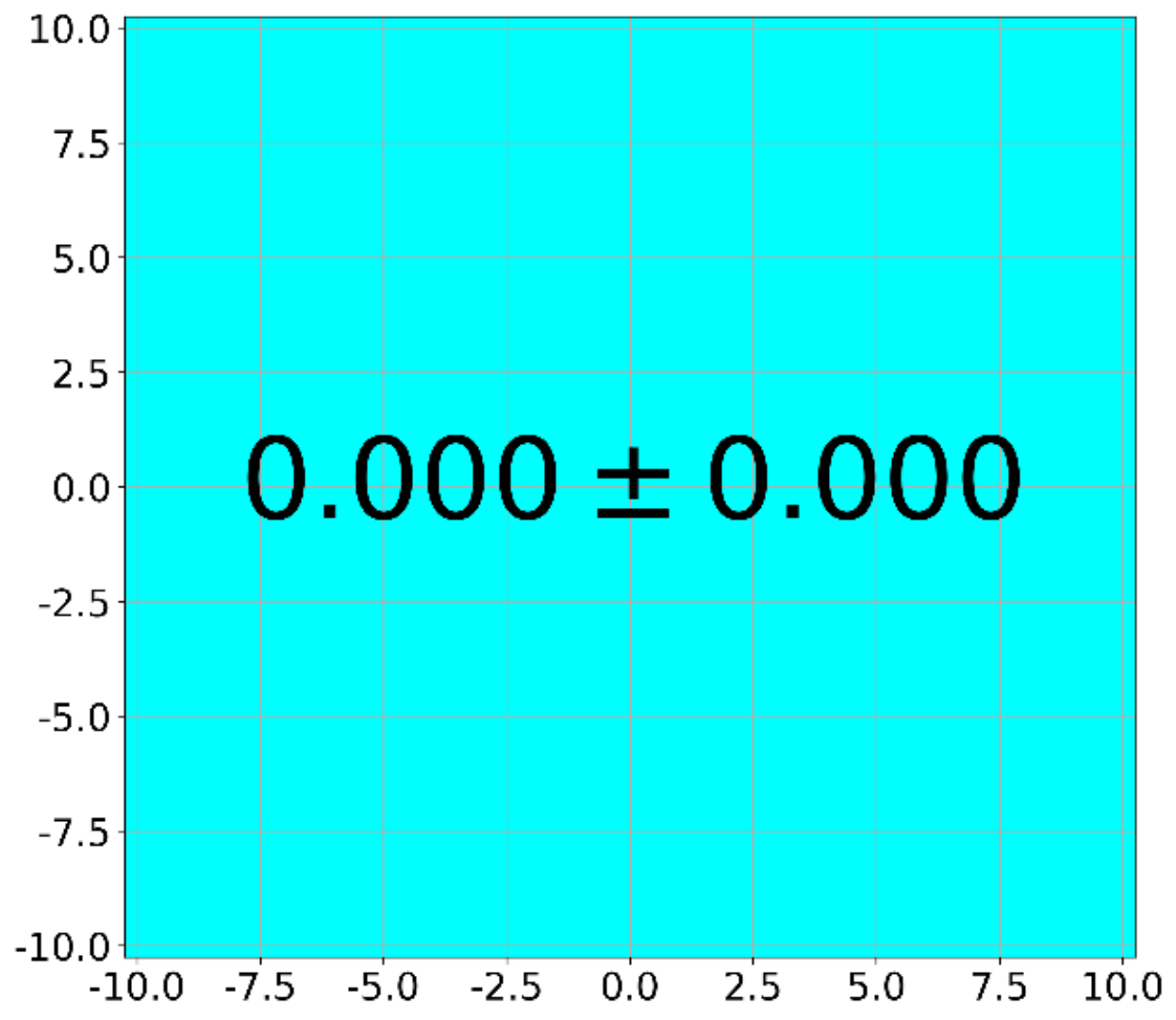}
\end{subfigure}
\begin{subfigure}{0.170\textwidth}
\includegraphics[width=\textwidth]{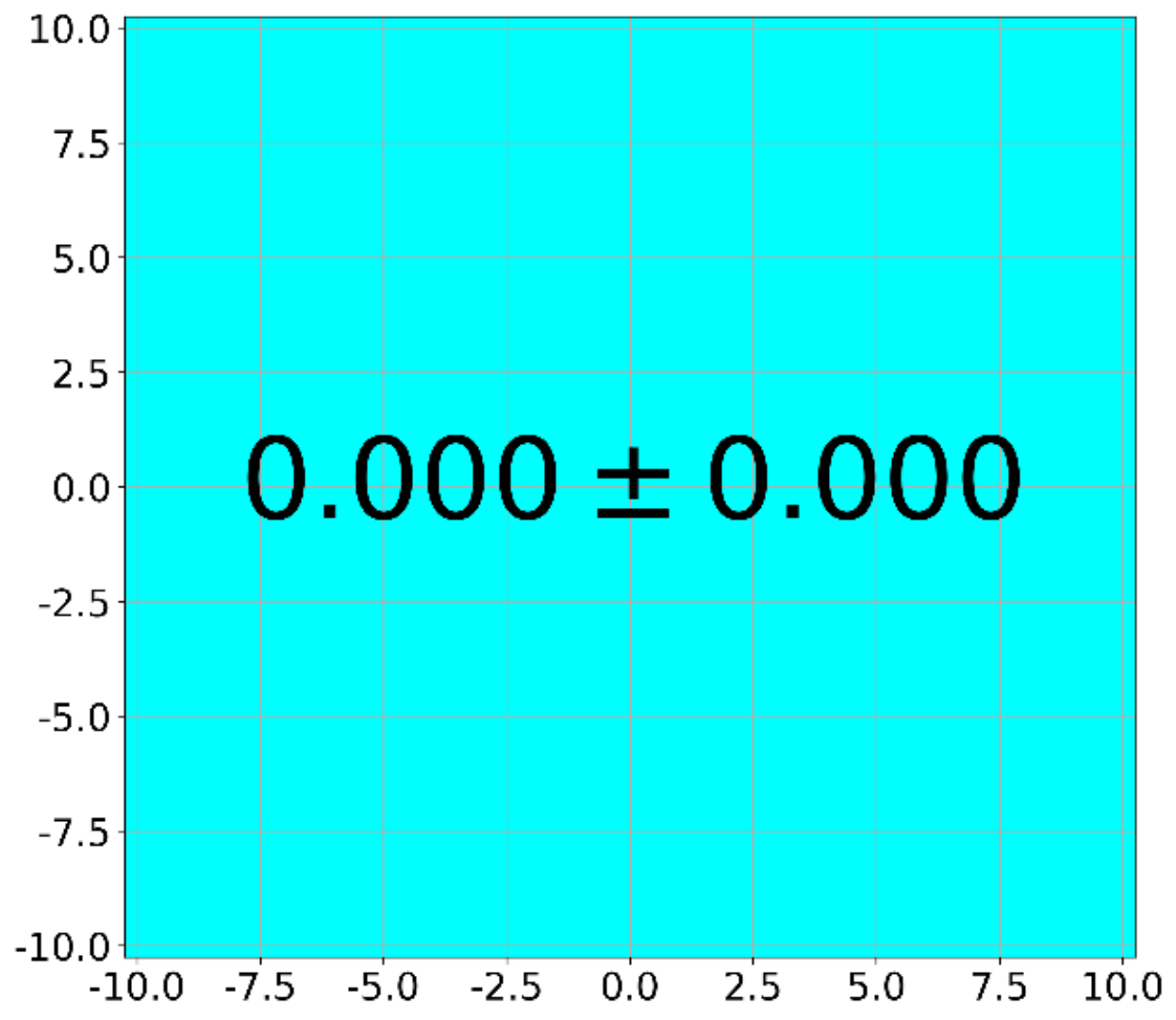}
\end{subfigure}
\begin{subfigure}{0.170\textwidth}
\includegraphics[width=\textwidth]{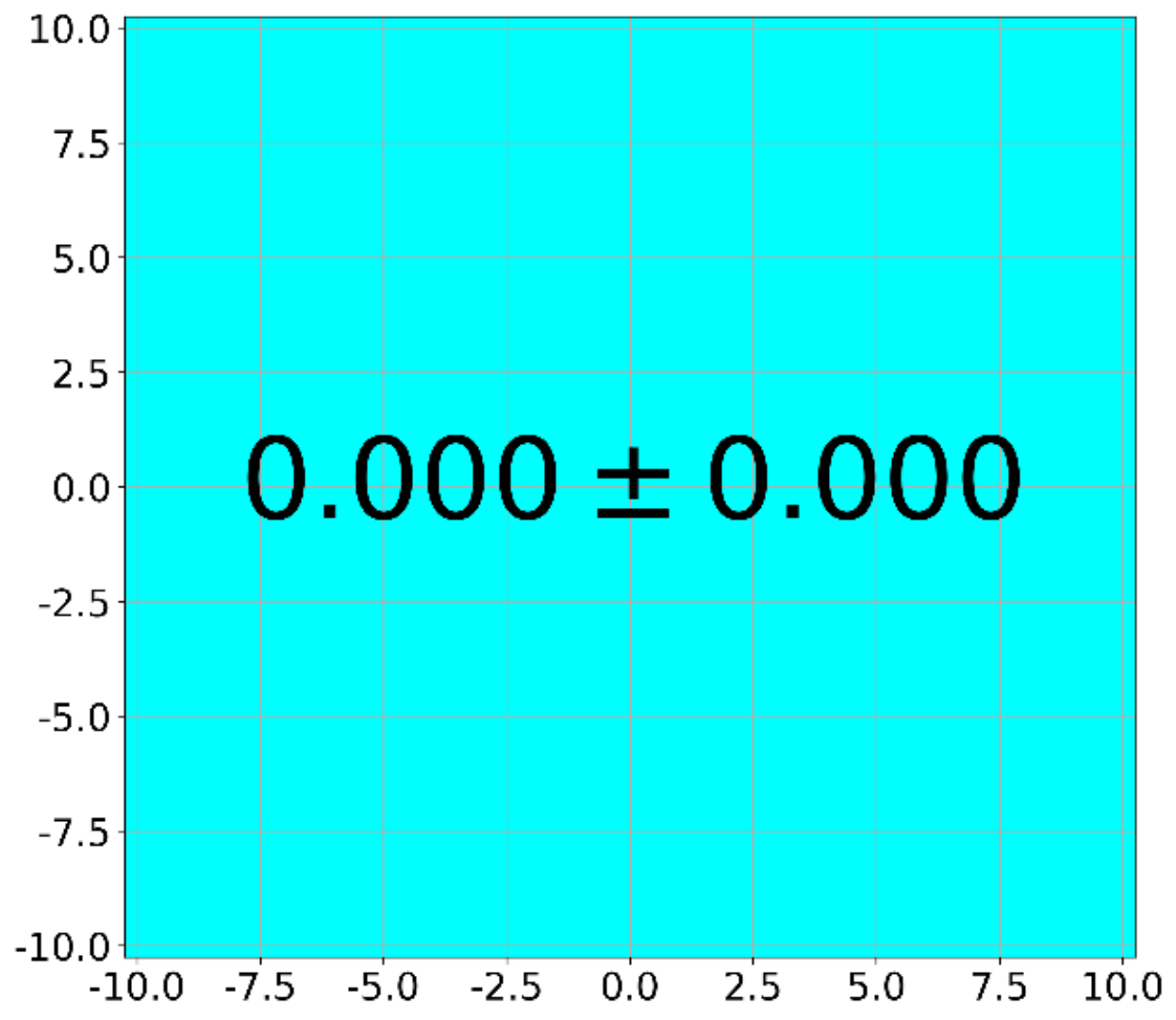}
\end{subfigure}
\begin{subfigure}{0.170\textwidth}
\includegraphics[width=\textwidth]{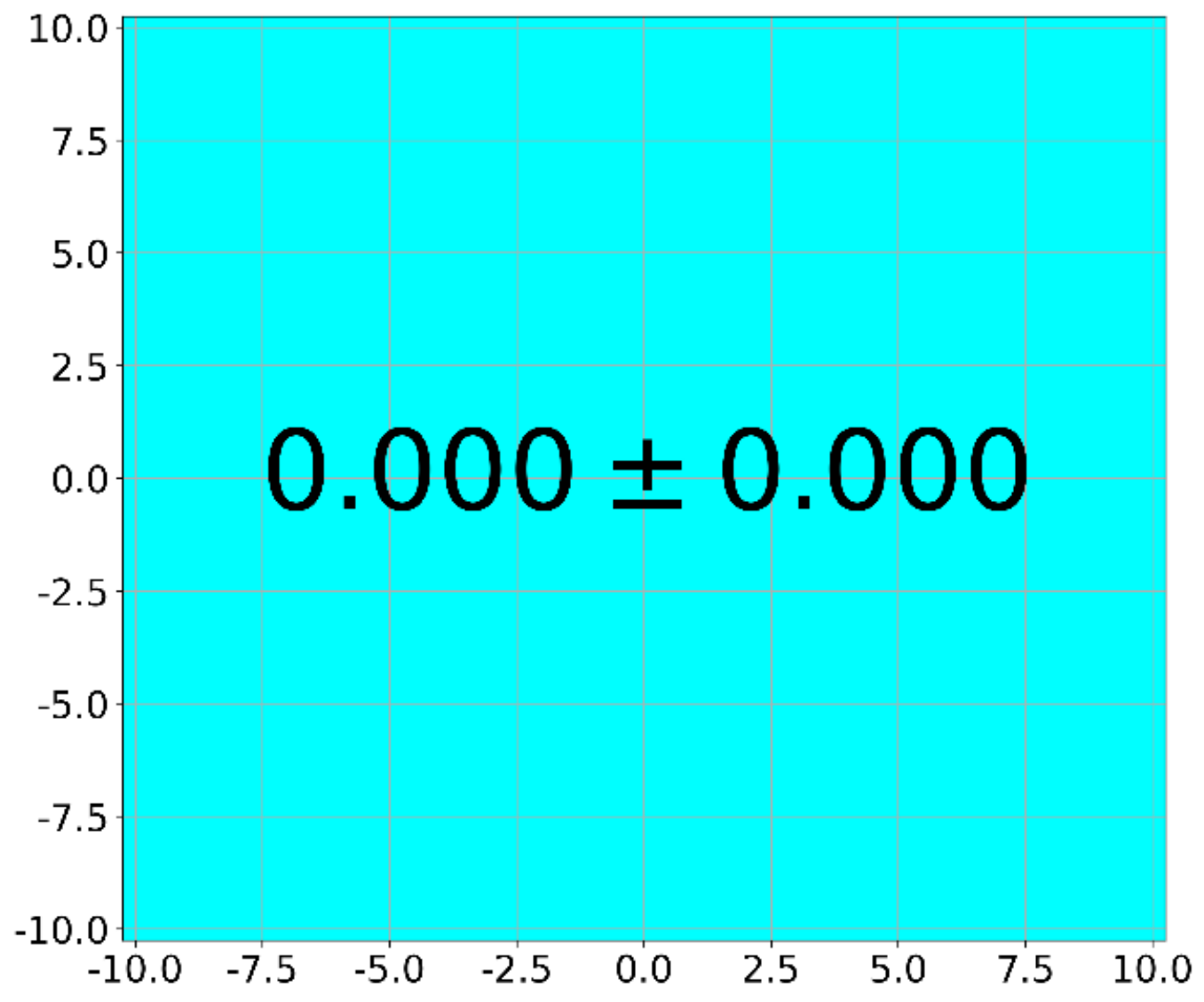}
\end{subfigure}

\begin{subfigure}{0.170\textwidth}
\includegraphics[width=\textwidth]{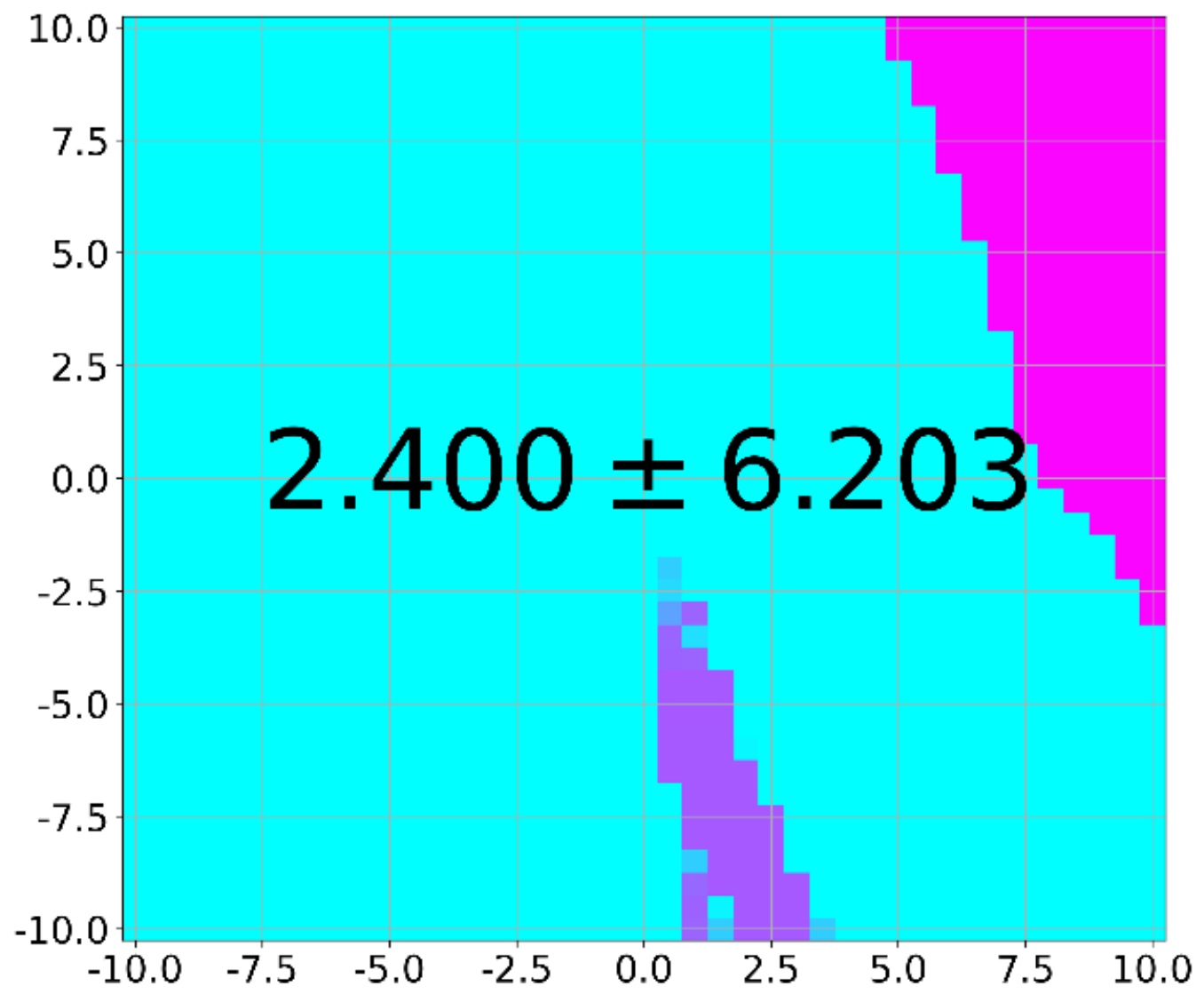}
\end{subfigure}
\begin{subfigure}{0.170\textwidth}
\includegraphics[width=\textwidth]{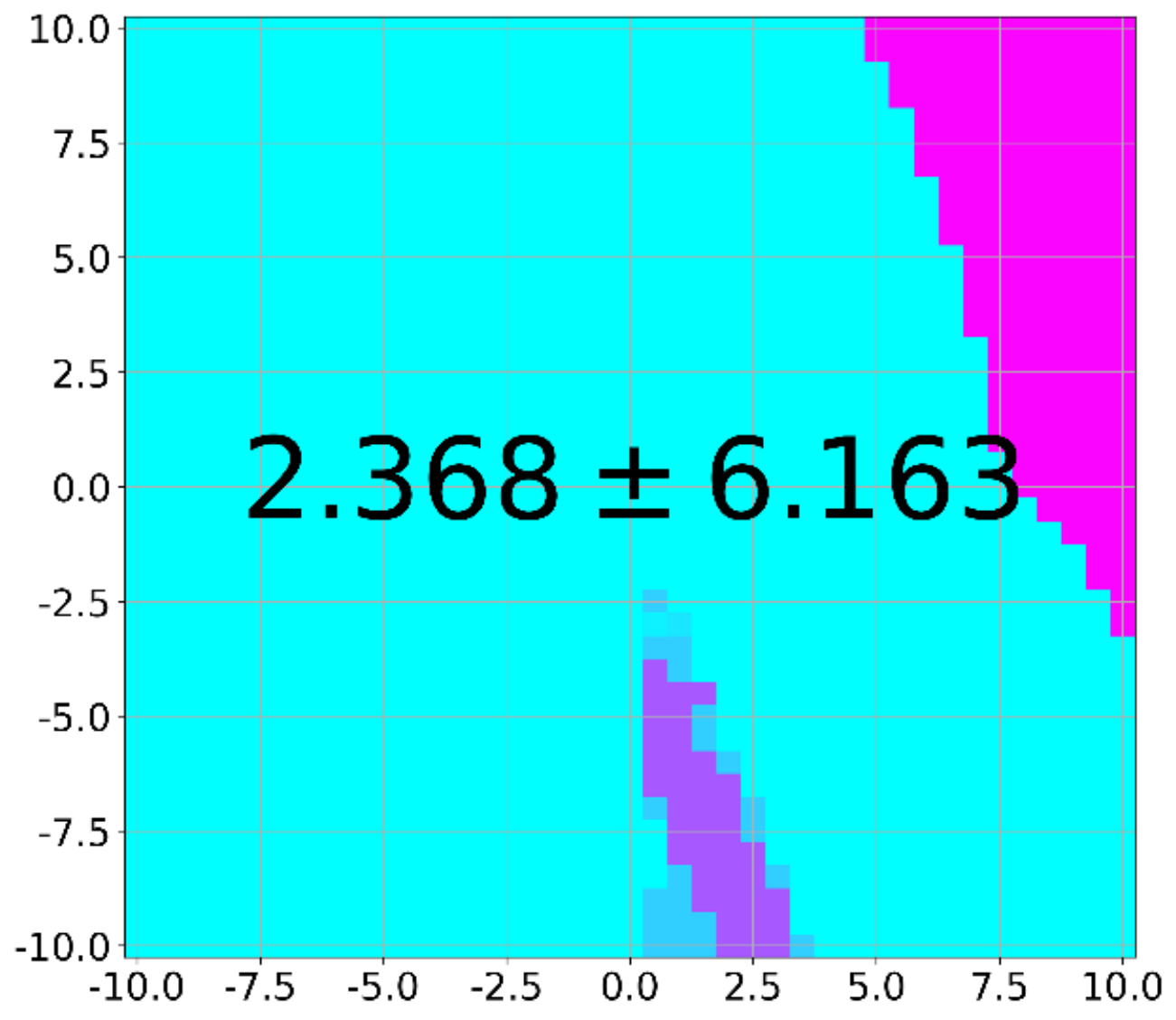}
\end{subfigure}
\begin{subfigure}{0.170\textwidth}
\includegraphics[width=\textwidth]{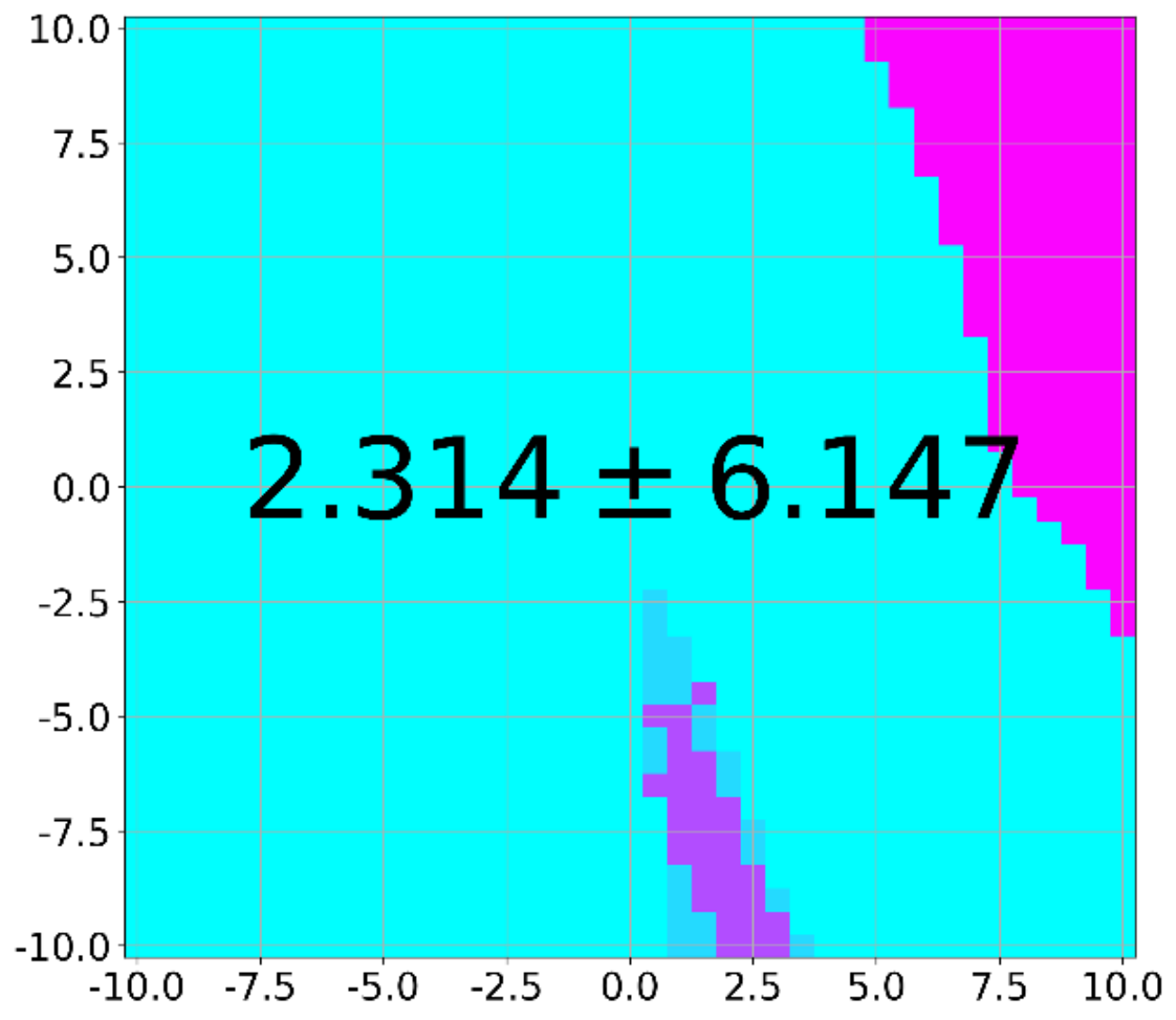}
\end{subfigure}
\begin{subfigure}{0.170\textwidth}
\includegraphics[width=\textwidth]{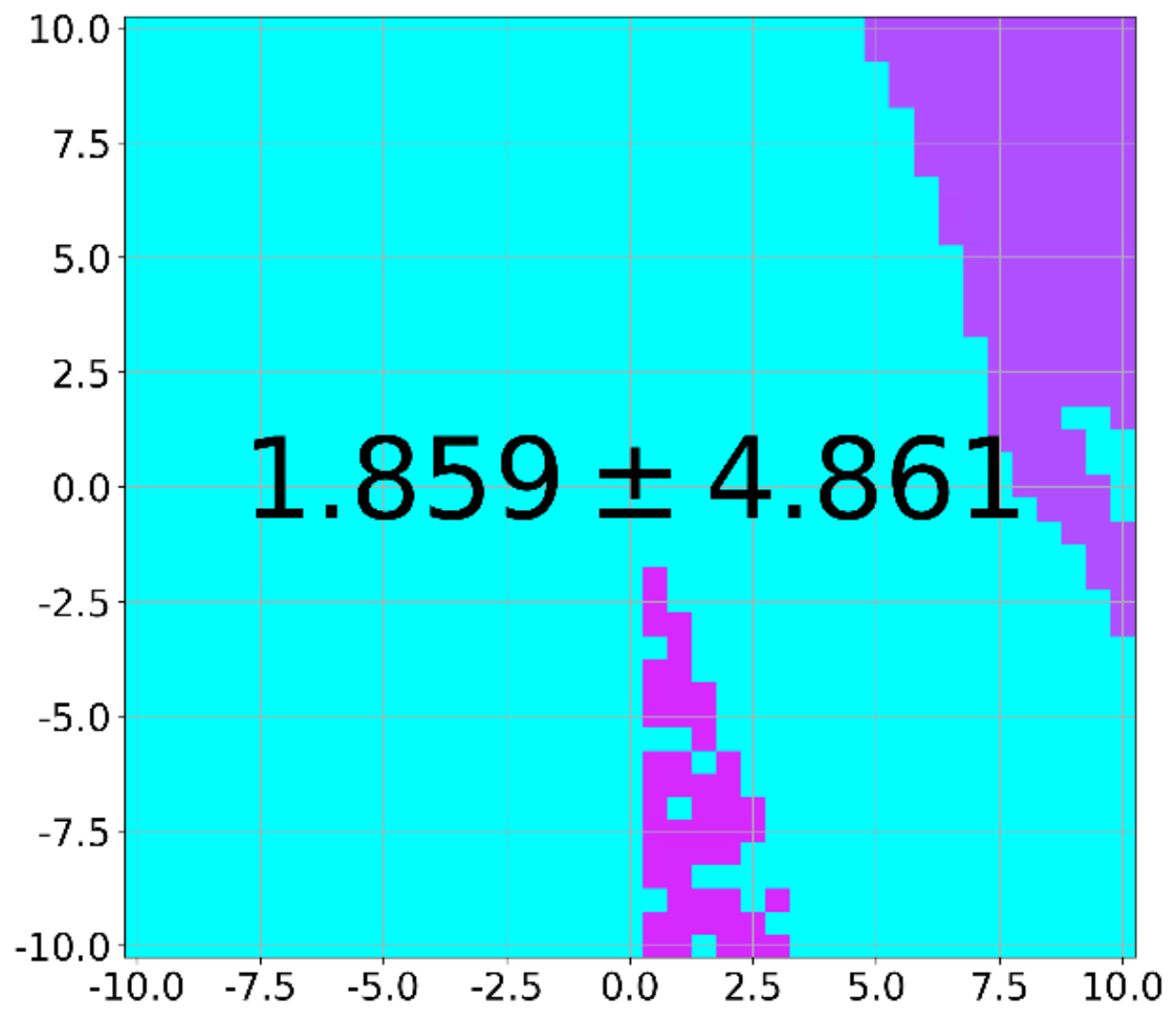}
\end{subfigure}
\begin{subfigure}{0.170\textwidth}
\includegraphics[width=\textwidth]{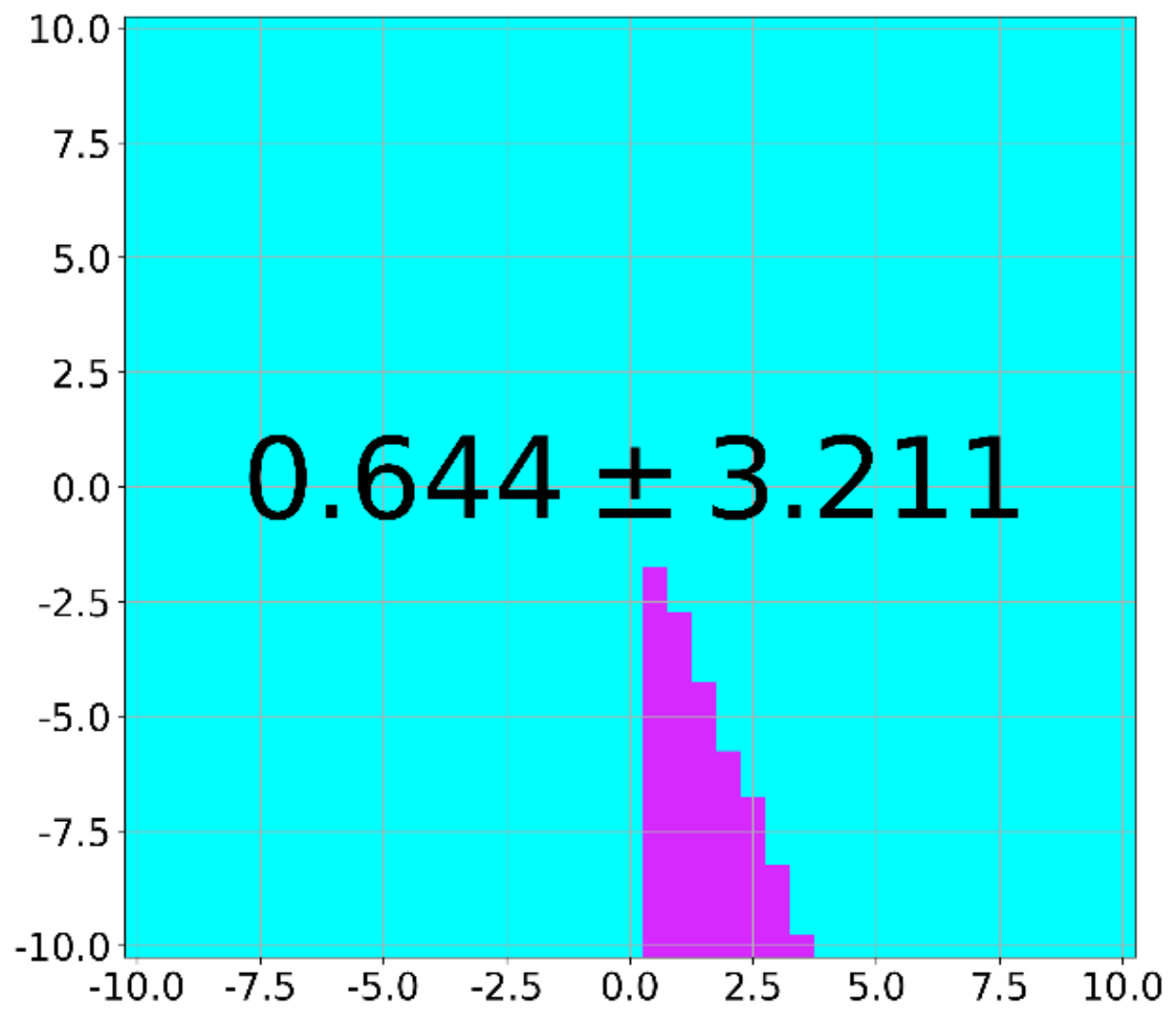}
\end{subfigure}
\begin{subfigure}{0.170\textwidth}
\includegraphics[width=\textwidth]{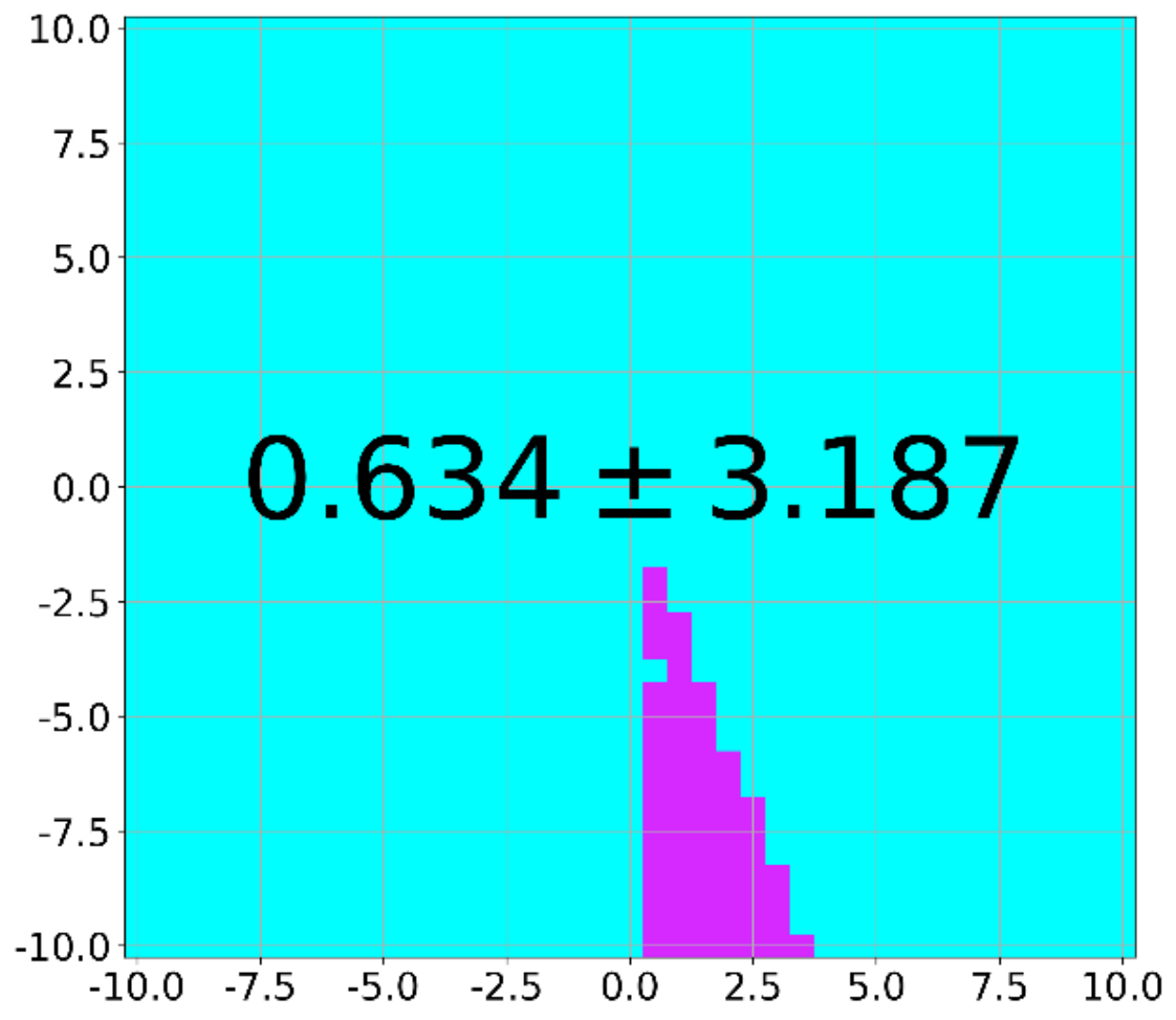}
\end{subfigure}
\begin{subfigure}{0.170\textwidth}
\includegraphics[width=\textwidth]{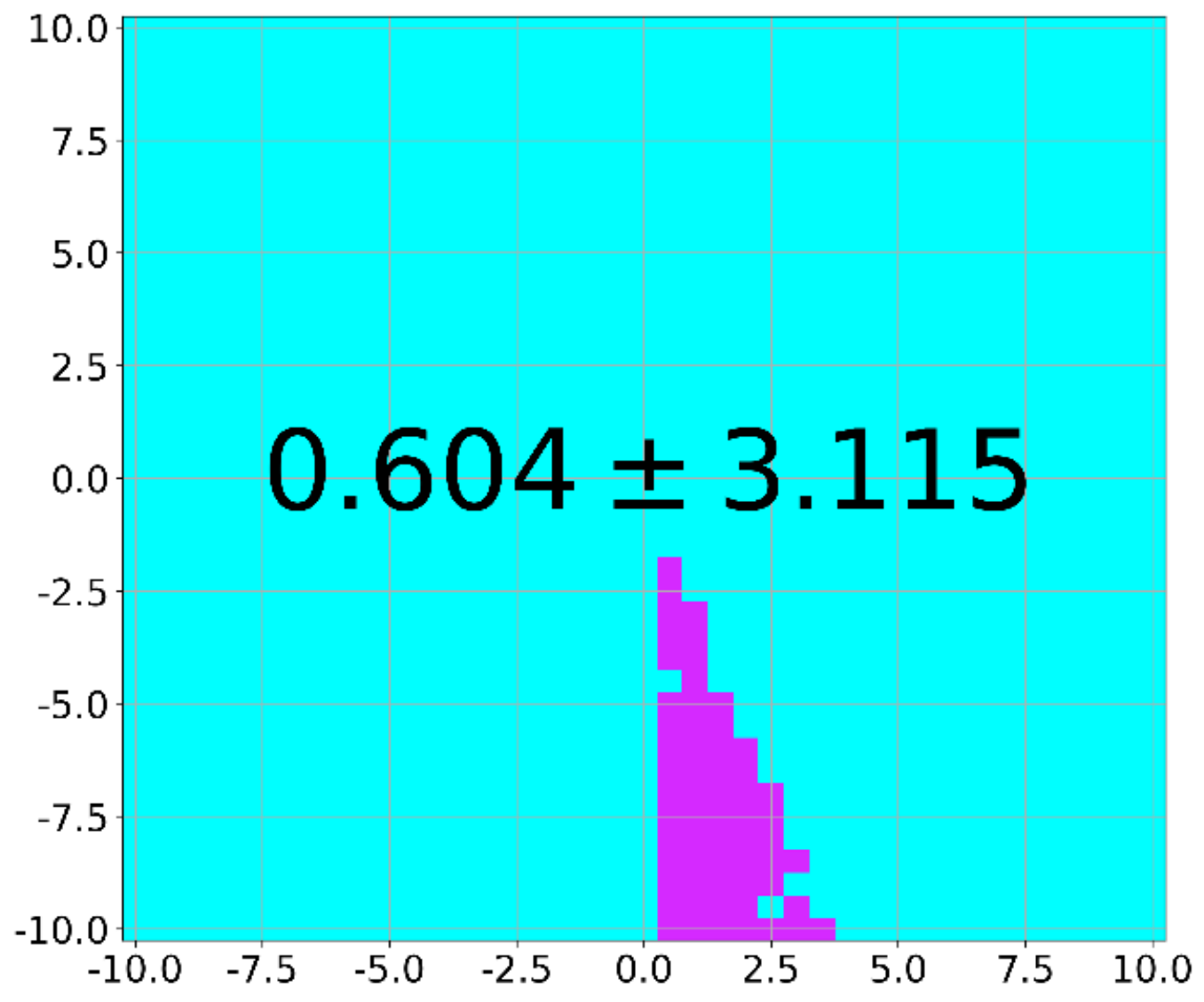}
\end{subfigure}

\begin{subfigure}{0.170\textwidth}
\includegraphics[width=\textwidth]{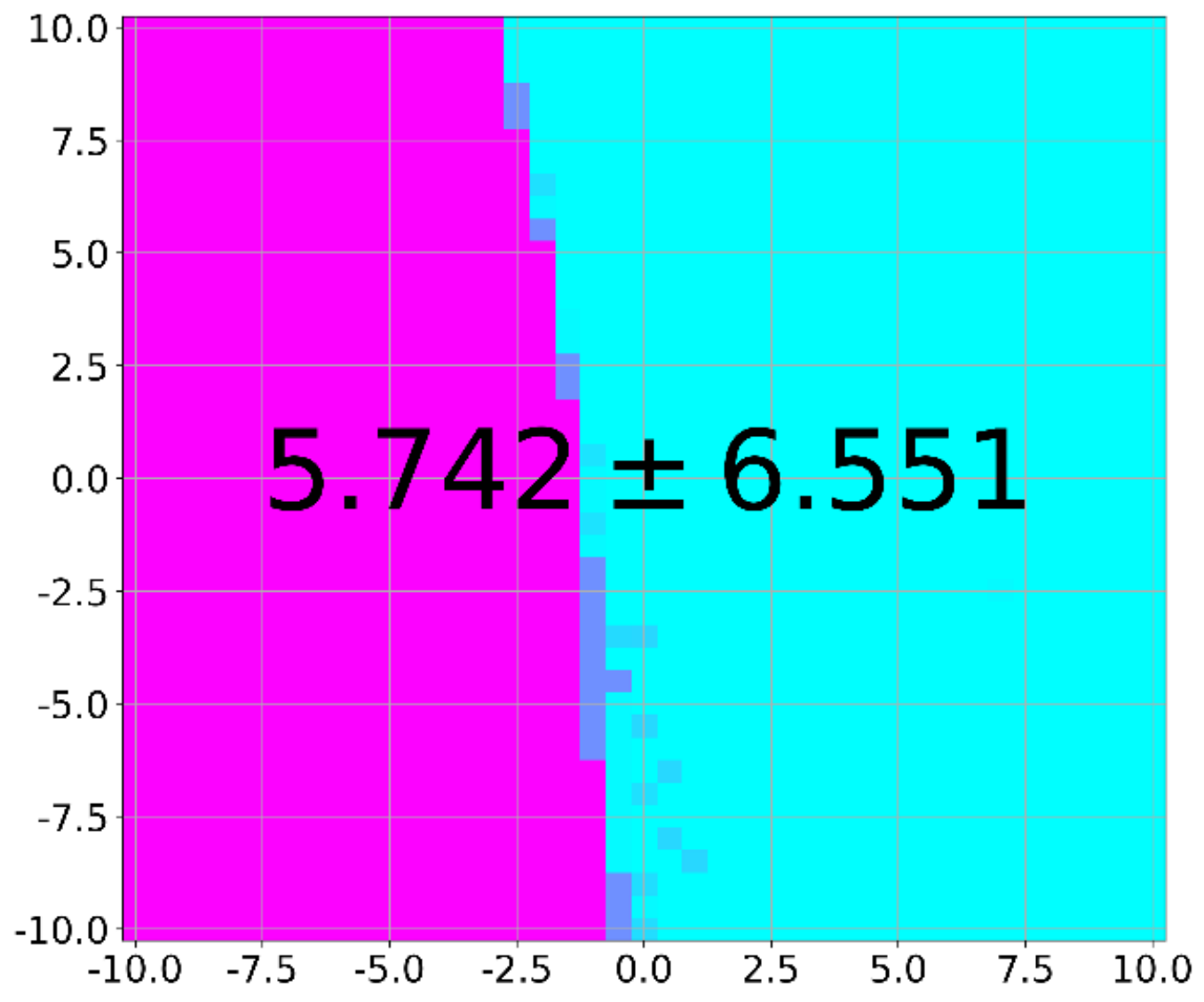}
\subcaption*{$\beta_0=0.01$}
\end{subfigure}
\begin{subfigure}{0.170\textwidth}
\includegraphics[width=\textwidth]{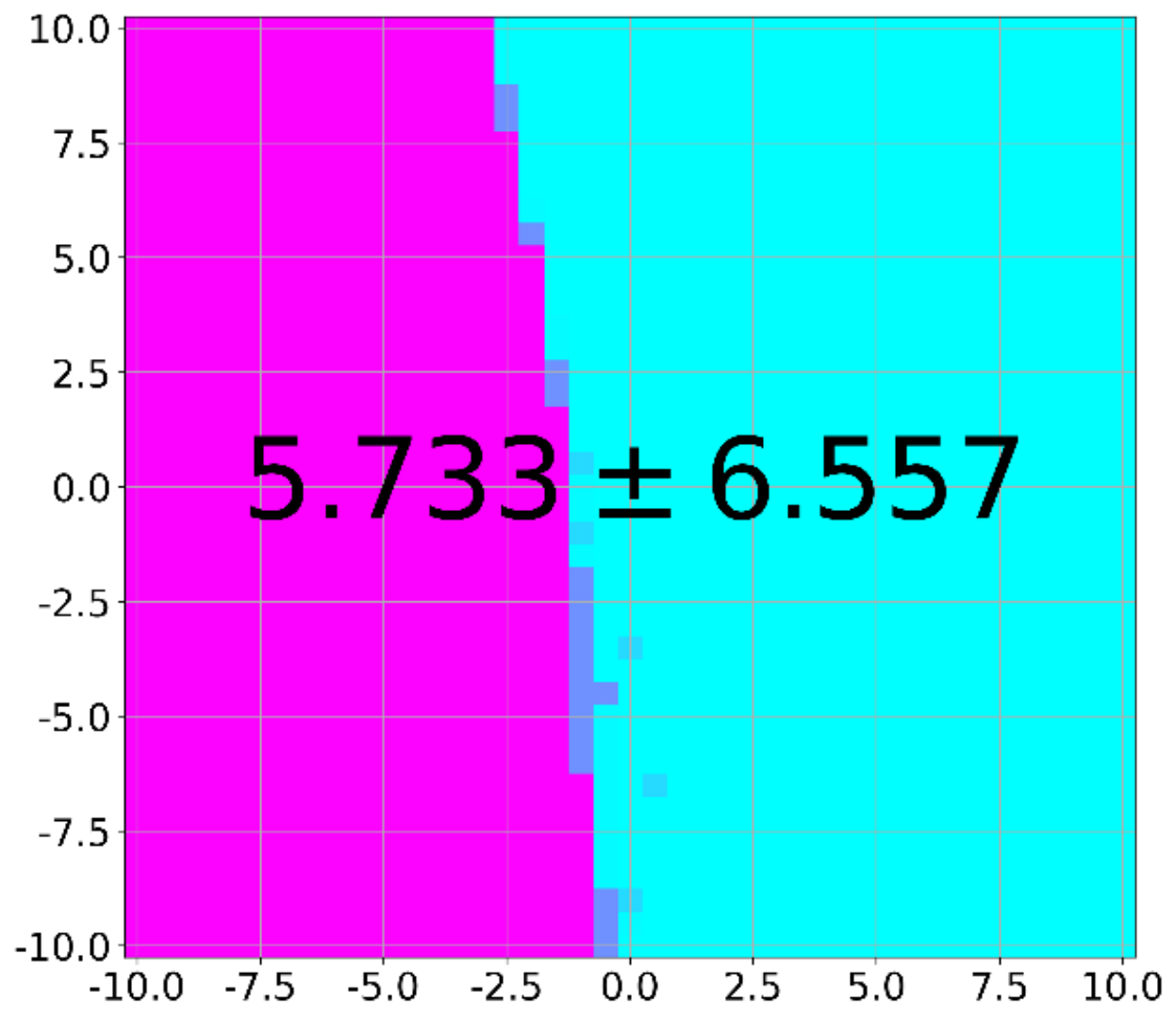}
\subcaption*{$\beta_0=0.1$}
\end{subfigure}
\begin{subfigure}{0.170\textwidth}
\includegraphics[width=\textwidth]{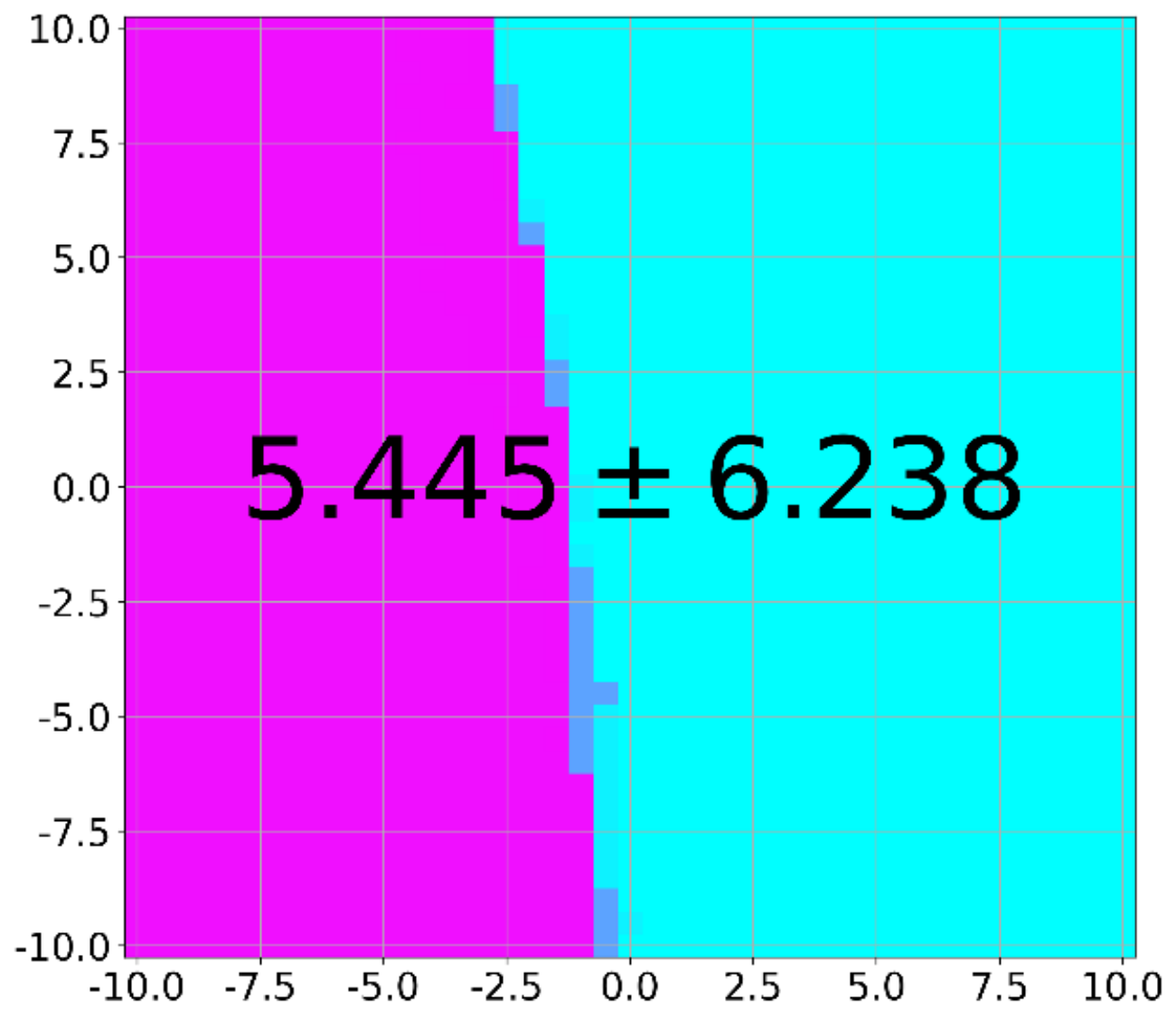}
\subcaption*{$\beta_0=1.0$}
\end{subfigure}
\begin{subfigure}{0.170\textwidth}
\includegraphics[width=\textwidth]{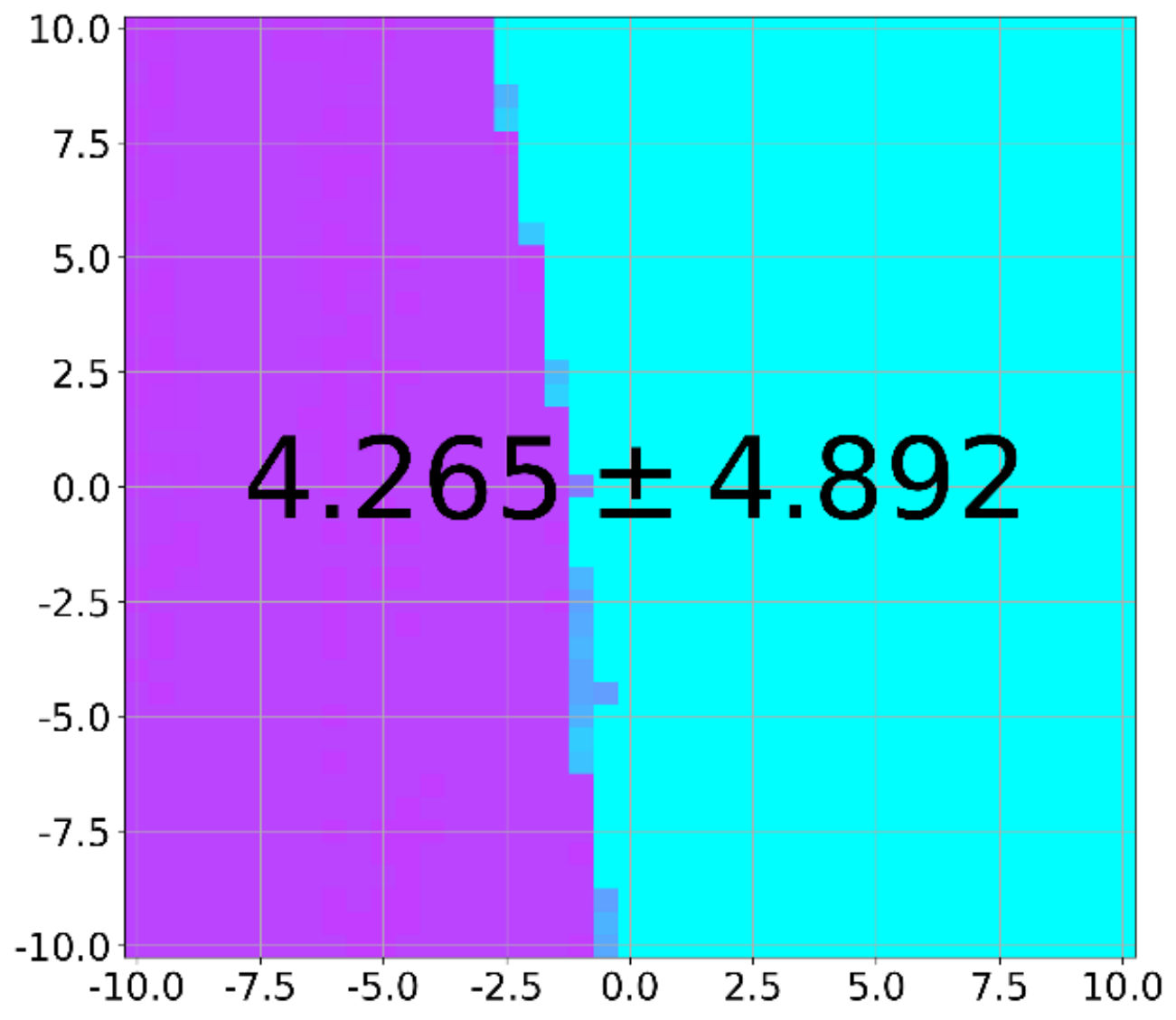}
\subcaption*{$\beta_0=10.0$}
\end{subfigure}
\begin{subfigure}{0.170\textwidth}
\includegraphics[width=\textwidth]{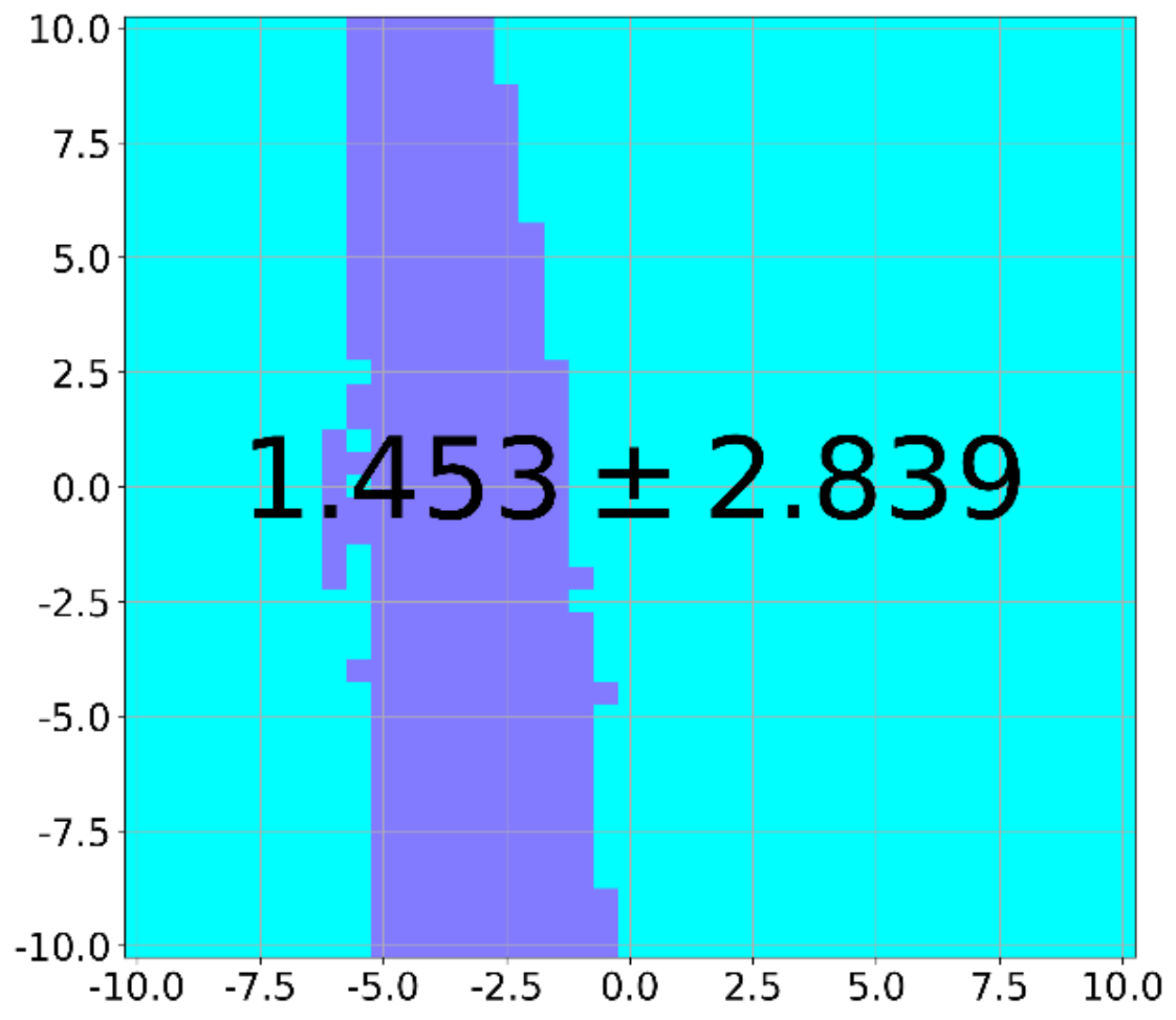}
\subcaption*{$\beta_0=100.0$}
\end{subfigure}
\begin{subfigure}{0.170\textwidth}
\includegraphics[width=\textwidth]{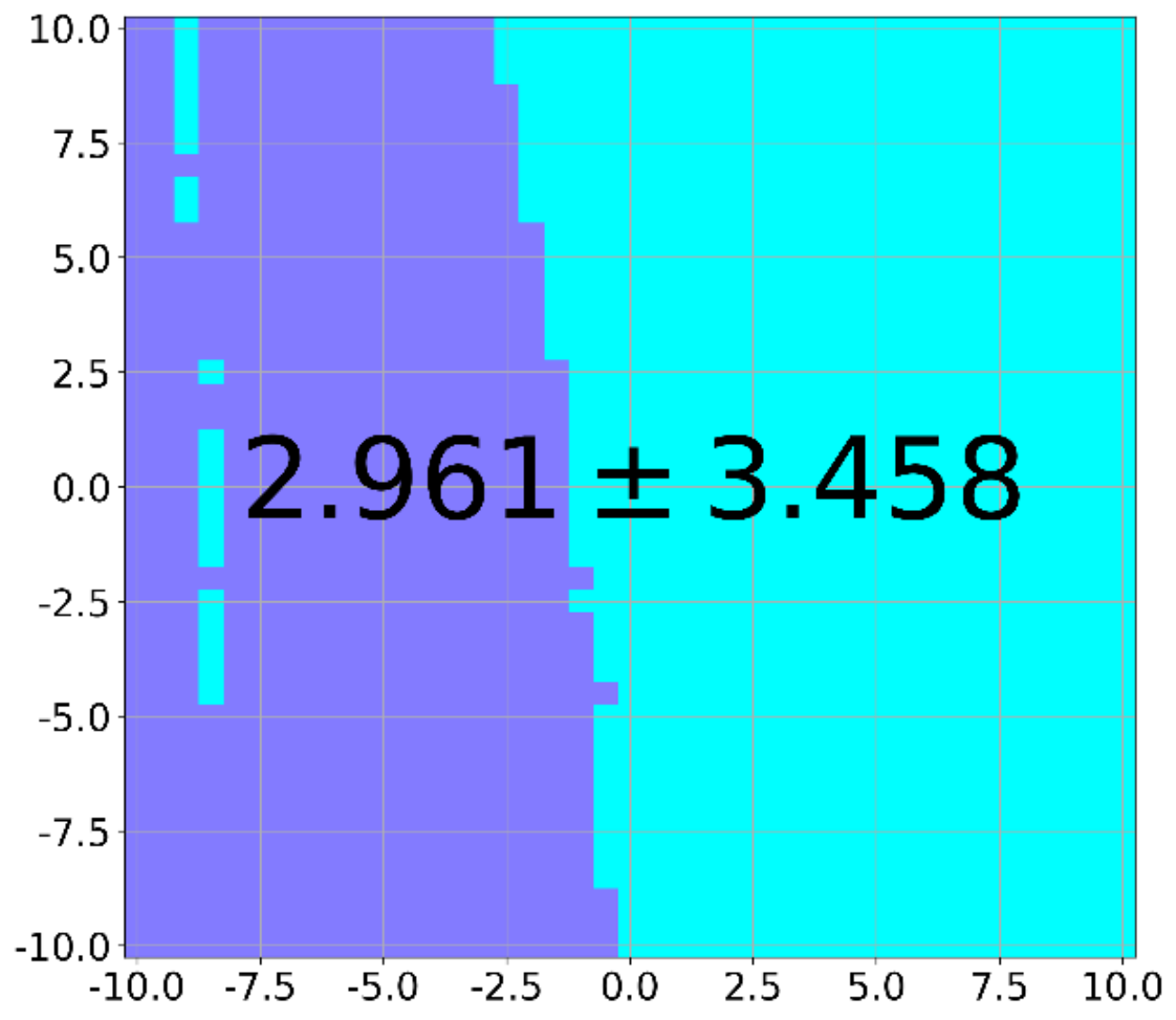}
\subcaption*{$\beta_0=1000.0$}
\end{subfigure}
\begin{subfigure}{0.170\textwidth}
\includegraphics[width=\textwidth]{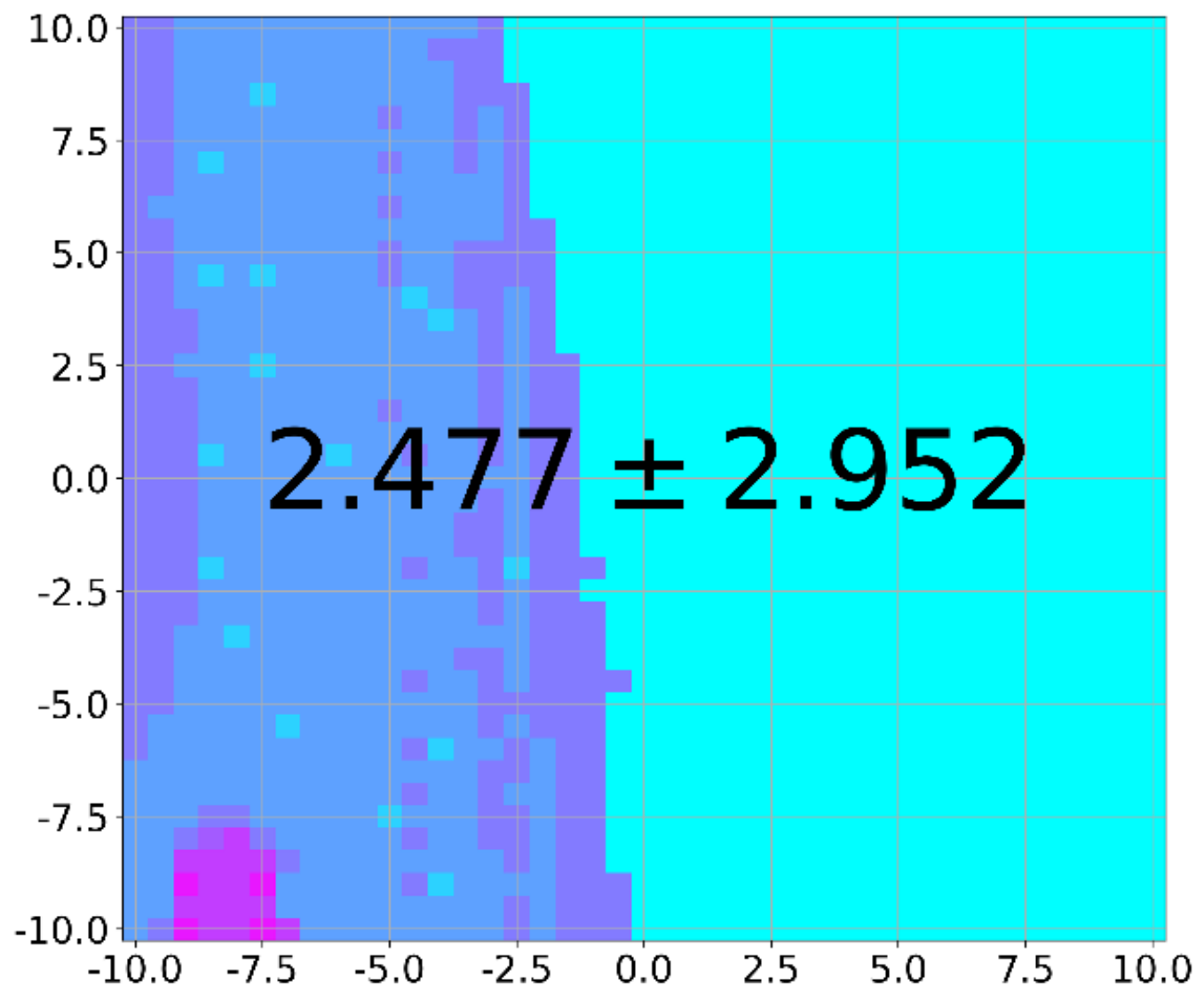}
\subcaption*{$\beta_0=10000.0$}
\end{subfigure}

\caption{Tuning the initial barrier parameter $\beta_0$ on
Env-A1, A2, A3, B1, B2, and B3 (top to bottom rows)
using the exact variant of \algref{alg:gainbiasbarrier_optim}.
Like in \figref{fig:optim_cmp}, each subplot shows the absolute difference (low:blue to high:magenta)
between the bias of the nBw-optimal deterministic policy and of the resulting randomized policy,
where the 2D-coordinate indicates the initial policy parameter value.
In each subplot, the numbers in the middle indicate the mean and standard deviation
across $1681 (=41^2)$ grid values.
Here, the lowest absolute bias difference (blue) is desirable.
For experimental setup, see \secref{sec:xprmt_setup_nbwpg}.
}
\label{fig:barrierparam_env_a_b}
\end{figure}
\end{landscape}

\section{Related works} \label{sec:rwork_nbwpg}

\begin{table}[t]
\centering
\caption{Model-free RL works that target nBw-optimal policies in unichain MDPs.
This table lists two criteria: $\gamma$-discounted and $n$-discount optimality,
as well as two algorithmic styles: value- and policy-iteration methods.
Here, $\gammabw$ denotes the so-called Blackwell's discount factor
as in \eqref{equ:bwoptim_v}.}
\label{tbl:rwork}
\begin{tabular}{lll}
\toprule
\multicolumn{1}{c}{\textbf{Styles \textbackslash\ Criteria}}
& \multicolumn{1}{c}{$\gamma$-discounted with $\gamma \in [\gammabw, 1)$}
& \multicolumn{1}{c}{$n$-discount with $n \ge 0$} \\
\midrule
Value iteration &
\cite{schneckenreither_2020_avgrew} &
\cite{mahadevan_1996_sensitivedo} \\
Policy iteration &
None &
Our work \\
\bottomrule
\end{tabular}
\end{table}

In order to classify the existing literature, we need to review two approaches
to obtaining nBw-optimal policies in unichain MDPs.
\textbf{First} is through $n$-discount optimality criterion with $n \ge 0$.
Here, a~policy $\pi_n^*$ for $n = -1, 0, 1, \ldots$, is $n$-discount optimal if
\begin{equation*}
\lim_{\gamma \uparrow 1} \frac{1}{(1 - \gamma)^n}
    \Big( v_\gamma(\pi_n^*, s) - v_\gamma(\pi, s) \Big) \ge 0,
\qquad \forall \pi \in \piset{S}, \forall s \in \setname{S},\
\text{and $v_\gamma(\pi, s)$ defined in \eqref{equ:disrew_v}}.
\end{equation*}
A policy is said to be ($n=\infty$)-discount optimal if
it is $n$-discount optimal for all $n \ge -1$.
\textbf{Second} is through $\gamma$-discounted optimality criterion with
a discount factor $\gamma$ that lies in the so-called Blackwell's interval.
That is, $\gamma \in [\gammabw, 1)$, where
$\gammabw \eqdef \sup_{s \in \setname{S}} \gammabw(s)$.
Here, $\gammabw(s)$ comes from the definition of Blackwell optimality:
a policy $\pistarbw$ is Blackwell optimal if for each $s \in \setname{S}$,
there exists a $\gammabw(s)$ such that
\begin{equation}
v_\gamma(\pistarbw, s) \ge v_\gamma(\pi, s),
\qquad \text{for}\ \gammabw(s) \le \gamma < 1, \text{and for all}\ \pi \in \piset{S}.
\label{equ:bwoptim_v}
\end{equation}
Intuitively, the Blackwell optimality claims that upon considering sufficiently
far into the future via $\gamma \ge \gammabw$, there is no policy better
than the Blackwell optimal policies \citep{denis_2019_bwopt}.

The above two approaches are based on the fact
that ($n=y$)-discount optimality implies ($n=x$)-discount optimality for all $y > x$,
\emph{and} on the equivalency between the nearly-Blackwell (nBw, bias)
and $(n=0)$-discount optimality criteria, as well as
between the Blackwell and $(n=\infty)$-discount optimality criteria
\citep[\thm{10.1.5}]{puterman_1994_mdp}.

Since $\gammabw$ is generally unknown in RL, the main challenge in obtaining
nBw-optimal policies through $\gamma$-discounted optimality is
to specify the discount factor~$\gamma$.
It should not be too far from~1, otherwise it is likely that $\gamma < \gammabw$.
Simultaneously, $\gamma$ should not be too close to~1 because
a higher $\gamma$ typically leads to slower convergence \citep{even-dar_2004_qlr}.
Moreover, as $\gamma$ approaches 1, the induced $\gamma$-discounted optimality yields
optimal policies similar to those of gain optimality
(which is underselective in unichain MDPs).
This implies that determining $\gamma$ is more difficult in unichain than
recurrent MDPs, for which the gain optimality (equivalently, ($n=-1$)-discount optimality)
is already the most selective.
We therefore do not discuss works that manipulate $\gamma$
towards achieving Blackwell optimality but are limited to recurrent MDPs.

\tblref{tbl:rwork} shows model-free RL works that target nBw-optimal policies
in unichain MDPs.
It classifies the works based on
whether they optimize $\gamma$-discounted or $n$-discount optimality criteria,
as well as whether they follow value- or policy-iteration algorithmic styles.
We found a relatively few works on this area.
The closest to our work is that of \cite{mahadevan_1996_sensitivedo},
then the work of \cite{schneckenreither_2020_avgrew}.
Both use tabular policy representation, which can be interpreted as
a special case of policy parameterization used in this work.
For discussion on \citep{mahadevan_1996_sensitivedo}, refer to \secref{sec:intro}.

\cite{schneckenreither_2020_avgrew} introduced a novel notion of
discounted state values that are adjusted by the gain.
The action selection utilizes an $\epsilon$-sensitive lexicographic order that operates
in two gain-adjusted discounted state-values with two different discount factors,
\ie $0.5 \le \gamma_0 <  \gamma_1 \le 1$.
Here, those two discount factors $\gamma_0, \gamma_1$ and
such a comparison measure $\epsilon$ are hyperparameters.

We are not aware of any work based on policy-iteration that
deliberately aims for Blackwell optimality via the $\gamma$-discounted approach.
There are (many) works that fine-tune $\gamma$,
for instance, those by \cite{zahavy_2020_stdrl, paul_2019_hoof, xu_2018_mgrad}.
Their tuning metrics however are not clearly motivated by Blackwell optimality
such that no concept of Blackwell's interval is involved.
Hence, we argue that they are beyond our current scope.

\section{Conclusions, limitations and future works} \label{sec:conclude_nbwpg}

We propose a policy gradient method that can obtain
nearly-Blackwell (bias) optimal policies.
It extends the existing average-reward methods in that
it performs further selection among
the resulting (approximately) average-reward (gain) optimal policies.
This is carried out through optimizing a function of
the bias and an RL-specific logarithmic barrier.
The corresponding stochastic optimization utilizes a novel expression that
enables sampling for natural gradients of the bias.
Experiment results validate the viability of our proposed method,
which can be regarded as a first step towards attaining $n$-discount optimality
through RL for $n \ge 0$.

Next, we identify some limitations of our proposed method.
They are described along with possible follow-up works that address them
but are left for future works.

Obtaining \emph{unbiased} estimates of the gradient \eqref{equ:grad_bias}
and the Fisher matrix \eqref{equ:fisher_b_sampling} of the bias requires
some knowledge (or a satisfied assumption) about the mixing time.
That is, the process is mixing before or at the last timestep of an experiment-episode (trial).
It is likely useful therefore to have a policy gradient agent that can control
the magnitude of the mixing time, such as the one proposed by \citet{morimura_2014_pgrl}.
The mixing-time control can be implemented for example, in the form of regularization in
the gain and bias-barrier objective functions such that the policy iterate $\pi$
induces one-step transition matrix $\ppimat$ whose mixing time is low.
Note that the issue of sampling states from a stationary state distribution
(which appears after mixing) is also present in the average-reward case
in order to obtain unbiased gradient estimates.
In the discounted-reward case, this corresponds to sampling states
from the discounted state distribution \eqref{equ:disrew_v_vector}.

The proposed \algref{alg:gainbiasbarrier_optim} has not been analysed
in terms of its sample complexity.
The challenge comes from the fact that it is stochastic barrier optimization
with potentially \emph{biased} samples, for instance
due to Markovian (not independent) state samples and/or due to
sampling from a distribution that is not identical to the stationary state distribution.
Moreover, there is a finite number of visitations to transient states
in a single experiment-episode.\footnote{
    Having an infinite number of transient state samples (in theory)
    implies running an infinite number of experiment-episodes (trials).
    This is because a single experiment-episode contains a finite number of
    visitations to transient states (by the definition of transience and recurrence).
}
It is therefore imperative to be able to derive a finite sample complexity
(non-asymptotic convergence) of the proposed method.
Apart from the analysis, the sample complexity may be reduced in several ways,
including
a) iterative techniques for computing the inverse of the Fisher matrix,
b) action-independent baseline substraction for the gradients, and
c) action-value function approximators for the gradients (as in the following passage).

Our proposed \algref{alg:gainbiasbarrier_optim} also has not been
incorporated with any function approximator for estimating the action values,
which are used in the bias gradients \eqref{equ:grad_bias}.
Particularly here, the function approximator is required to
estimate the policy values from both transient and recurrent states
(note that typical approximators are designed to estimate policy values
only from recurrent states).

\afterpage{
\clearpage%
\bibliography{main}
\bibliographystyle{apalike}

\clearpage%
}

\end{document}